\documentclass[10pt, twocolumn]{article}

\usepackage[top=0.75in,bottom=1in,left=0.75in,right=0.75in,columnsep=0.25in]{geometry}
\usepackage{times}
\usepackage[T1]{fontenc}

\usepackage[american]{babel}

\usepackage{natbib} 
\usepackage{mathtools} 
\usepackage{booktabs} 
\usepackage{tikz} 
\usepackage{subcaption}

\usepackage[nohints]{minitoc}
\usepackage{hyperref}
\usepackage{amsmath,amsfonts,amssymb,bbm,bm}
\usepackage{dsfont} 
\usepackage{xcolor}
\usepackage{amsthm}
\usepackage{thmtools}
\usepackage{mathtools}

\definecolor{niceRed}{RGB}{190,38,38}
\definecolor{niceYellow}{HTML}{f5b400}
\definecolor{blueGrotto}{HTML}{059DC0}
\definecolor{royalBlue}{HTML}{057DCD}
\definecolor{navyBlue}{HTML}{0B579C}
\definecolor{limeGreen}{HTML}{81B622}
\definecolor{nicePurple}{HTML}{9c27b0}
\definecolor{lightRoyalBlue}{HTML}{def2ff}
\definecolor{gold}{HTML}{ffa300}
\definecolor{frameblue}{RGB}{0,123,255}     
\definecolor{framebg}{RGB}{241,244,247}     
\definecolor{shadecolor}{gray}{0.90}

\usepackage{enumitem}
\usepackage{mdframed} 
\usepackage{amsthm}
\usepackage{thmtools}
\definecolor{shadecolor}{gray}{0.90}
\declaretheoremstyle[
headfont=\normalfont\bfseries,
notefont=\mdseries, notebraces={(}{)},
bodyfont=\normalfont,
postheadspace=0.5em,
spaceabove=0.5em,
spacebelow=0.5em,
mdframed={
 skipabove=8pt,
 skipbelow=8pt,
 hidealllines=true,
 backgroundcolor={shadecolor},
 innerleftmargin=4pt,
 innerrightmargin=4pt}
]{shaded}
\declaretheoremstyle[
  bodyfont=\normalfont  
]{nonitalic}
\newmdenv[
    backgroundcolor=framebg,
    skipabove=8pt,
    skipbelow=8pt,
    roundcorner=5pt,
    linewidth=0pt,
    innertopmargin=4pt
]{myframe}

\declaretheorem[within=section]{style}
\declaretheorem[style=shaded,sibling=style]{definition}
\declaretheorem[sibling=style]{main theorem}

\declaretheorem[style=shaded,sibling=style]{assumption}

\declaretheorem[style=nonitalic,sibling=style]{lemma}

\declaretheorem[style=nonitalic,sibling=style, numbered=no]{remark}
\declaretheorem[style=nonitalic,sibling=style, numbered=no]{proof sketch}

\usepackage[capitalize,nameinlink]{cleveref}
\crefname{assumption}{assumption}{assumptions}
\Crefname{assumption}{Assumption}{Assumptions}
\crefname{lemma}{lemma}{lemmas}
\Crefname{lemma}{Lemma}{Lemmas}
\crefname{theorem}{theorem}{theorems}
\Crefname{theorem}{Theorem}{Theorems}
\crefname{main theorem}{main theorem}{main theorems}
\Crefname{main theorem}{Main theorem}{Main theorems}
\crefname{definition}{definition}{definitions}
\Crefname{definition}{Definition}{Definitions}
\crefname{corollary}{corollary}{corollaries}
\Crefname{corollary}{Corollary}{Corollaries}
\crefname{proposition}{proposition}{propositions}
\Crefname{proposition}{Proposition}{Propositions}
\crefname{fact}{fact}{facts}
\Crefname{fact}{Fact}{Facts}
\crefname{identity}{identity}{identities}
\Crefname{identity}{Identity}{Identities}
\crefname{function}{function}{functions}
\Crefname{function}{Function}{Functions}

\newcommand{\R}{\mathbb{R}}

\renewcommand{\Pr}{\mathbb{P}}
\newcommand{\E}{\mathbb{E}}

\newcommand{\Var}{\mathrm{Var}}

\newcommand{\Ent}{\mathrm{Ent}}

\newcommand{\calA}{\mathcal{A}}
\newcommand{\calB}{\mathcal{B}}
\newcommand{\calC}{\mathcal{C}}
\newcommand{\calD}{\mathcal{D}}

\newcommand{\calF}{\mathcal{F}}
\newcommand{\calG}{\mathcal{G}}

\newcommand{\calI}{\mathcal{I}}

\newcommand{\calO}{\mathcal{O}}

\newcommand{\calS}{\mathcal{S}}

\newcommand{\calZ}{\mathcal{Z}}

\newcommand{\bone}{\boldsymbol{1}}

\def\<{\langle}
\def\>{\rangle}

\newcommand{\softmax}{\mathrm{softmax}}

\newcommand{\diag}{\mathrm{diag}}

\newcommand{\pr}{\mathbf{P}}     

\newcommand{\bw}{\mathbf{w}} 
\newcommand{\bu}{\mathbf{u}} 
\newcommand{\btu}{\mathbf{\tilde{u}}} 
\newcommand{\bbu}{\mathbf{\bar{u}}} 
\newcommand{\bff}{\mathbf{f}} 
\newcommand{\bg}{\mathbf{g}} 
\newcommand{\bh}{\mathbf{h}} 
\newcommand{\bp}{\mathbf{p}} 
\newcommand{\bq}{\mathbf{q}} 
\newcommand{\br}{\mathbf{r}} 
\newcommand{\bx}{\mathbf{x}} 
\newcommand{\by}{\mathbf{y}} 
\newcommand{\bv}{\mathbf{v}} 
\newcommand{\bzeta}{\boldsymbol{\zeta}} 
\newcommand{\bxi}{\boldsymbol{\xi}} 
\newcommand{\bz}{\mathbf{z}} 
\newcommand{\be}{\mathbf{e}} 
\newcommand{\bd}{\mathbf{d}} 
\newcommand{\bdelta}{\boldsymbol{\delta}} 
\newcommand{\ba}{\boldsymbol{a}} 
\newcommand{\bb}{\boldsymbol{b}} 
\newcommand{\bA}{\mathbf{A}} 
\newcommand{\bB}{\mathbf{B}} 
\newcommand{\bS}{\mathbf{S}} 

\newcommand{\mathd}{\mathrm{d}}
\newcommand{\mathe}{\mathrm{e}}

\newcommand{\ind}{\mathrel{\perp\!\!\!\perp}} 
\newcommand{\1}{\mathds{1}} 

\hypersetup{colorlinks=true,linkcolor=royalBlue,citecolor=royalBlue,urlcolor=navyBlue}
\usepackage{algorithm}
\usepackage{algorithmic}
\renewcommand{\algorithmiccomment}[1]{\hfill {\color{gray} // #1}}
\makeatletter
\newcommand{\PreComment}[1]{%
  \item[] 
  {\color{gray} // #1}%
}
\makeatother
\usepackage{float} 

\newfloat{function}{htbp}{lof}
\floatname{function}{Function}
\usepackage{multicol}
\usepackage[most]{tcolorbox}
\newtcolorbox{summarybox}[1][]{
    colback=gray!5, 
    colframe=black!75, 
    fonttitle=\bfseries,
    arc=3pt, 
    outer arc=3pt,
    left=10pt,
    right=10pt,
    top=10pt,
    bottom=10pt,
    boxrule=0.8pt,
    #1
}

\usepackage{amsmath}
\usepackage{amssymb}
\usepackage{amsthm}

\usepackage{titlesec}
\titleformat{\section}{\large\bfseries}{\thesection}{1em}{}
\titleformat{\subsection}{\normalsize\bfseries}{\thesubsection}{1em}{}
\titleformat{\subsubsection}{\normalsize\bfseries}{\thesubsubsection}{0.2em}{}
\titlespacing*{\subsubsection}{0pt}{1ex plus 1ex minus .2ex}{0.5ex plus .2ex}

\usepackage{enumitem}
\setlist{itemsep=2pt, topsep=4pt, parsep=0pt, partopsep=0pt, leftmargin=*}

\renewenvironment{abstract}
  {\centerline{\bf \large Abstract}\vspace{0.4ex}\begin{quote}\normalsize}
  {\end{quote}\vspace{0.4ex}}
\newenvironment{acknowledgements}
  {\section*{Acknowledgements}}
  {}
\date{} 

\begin{document}

\twocolumn[
  \begin{center}
    \rule{\textwidth}{3pt} 
    \vspace{0.05cm} 

    {\Large \bfseries First-Order Softmax Weighted Switching Gradient Method for \\ 
    Distributed Stochastic Minimax Optimization with Stochastic Constraints \par}

    \vspace{0.3cm} 
    \rule{\textwidth}{1.2pt} 
    \vspace{0.4cm} 
    
    {\normalsize 
      \textbf{Zhankun Luo}, \quad \textbf{Antesh Upadhyay}, \quad \textbf{Sang Bin Moon}, \quad \textbf{Abolfazl Hashemi} \\
      
      \vspace{0.3cm} 
      
      School of Electrical and Computer Engineering, Purdue University \\
      \texttt{\{luo333, aantesh, moon182, abolfazl\}@purdue.edu}
    }
    \vspace{1cm} 
  \end{center}
]

\begin{abstract}
This paper addresses the distributed stochastic minimax optimization problem subject to stochastic constraints. We propose a novel first-order \textit{Softmax-Weighted Switching Gradient} method tailored for federated learning. Under full client participation, our algorithm achieves the standard $\tilde{\mathcal{O}}(\epsilon^{-4})$ oracle complexity to satisfy a unified bound $\epsilon$ for both the optimality gap and feasibility tolerance. We extend our theoretical analysis to the practical partial participation regime by quantifying client sampling noise through a stochastic superiority assumption. Furthermore, by relaxing standard boundedness assumptions on the objective functions, we establish a strictly tighter lower bound for the softmax hyperparameter. We provide a unified error decomposition and establish a sharp $\mathcal{O}(\log\frac{1}{\delta})$ high-probability convergence guarantee. Ultimately, our framework demonstrates that a single-loop primal-only switching mechanism provides a stable alternative for optimizing worst-case client performance, effectively bypassing the hyperparameter sensitivity and convergence oscillations often encountered in traditional primal-dual or penalty-based approaches. We verify the efficacy of our algorithm via experiment on the Neyman-Pearson (NP) classification, fair classification, and federated safe reinforcement learning tasks.
\end{abstract}

\doparttoc[n] 
\faketableofcontents 

\section{Introduction}\label{sec:intro}


Federated Learning (FL) aims to solve distributed optimization problems of the form
\begin{equation}
\label{eq:fed_op_prb}
\min_{\mathbf{w} \in \Theta} \; \sum_{i=1}^{n} p_i f_i(\mathbf{w}),
\end{equation}
where $\Theta \subseteq \mathbb{R}^d$ is a compact convex set, $n$ is the number of clients, $f_i(\mathbf{w}) := \mathbb{E}_{\zeta \sim \mathcal{D}_i}[f_i(\mathbf{w},\zeta)]$ denotes the local expected loss at client $i$ and $p_i$ denote the probability weight of the client~\citep{mcmahan2017communication,kairouz2021advances}. Under statistical heterogeneity, where local distributions $\{\mathcal{D}_i\}_{i=1}^n$ are non-identical, this empirical risk minimization (ERM) objective inherently prioritizes average performance across clients~\citep{li2020federated,mohri2019agnostic}. As a result, the learned model is biased toward dominant client distributions and may exhibit severely degraded performance on underrepresented or hard clients~\citep{mohri2019agnostic,hashimoto2018fairness,li2019fair}.

To guarantee uniformly good performance across all devices, a powerful alternative is to frame the training process as a distributionally robust (or agnostic) optimization problem~\citep{mohri2019agnostic,deng2020distributionally,duchi2021learning}. Instead of minimizing the average loss, the algorithm minimizes the maximum expected loss over a global set of adversarial weights
\begin{equation}
\label{eq:afl_objective}
\min_{\mathbf{w} \in \Theta} \; \max_{\boldsymbol{\lambda} \in \Lambda} \; \sum_{i=1}^{n} \lambda_i f_i(\mathbf{w}),
\end{equation}
where 
$
\Lambda := \left\{ \boldsymbol{\lambda} \in \mathbb{R}_+^n \; : \; \sum_{i=1}^{n} \lambda_i = 1 \right\}
$
represents the probability weight assigned to each local client $i$. Intuitively, the inner maximization concentrates probability mass on the worst-performing clients. This recovers the equivalent minimax formulation \(\min_{\mathbf{w} \in \Theta} \; \max_{i \in \mathcal{I}} f_i(\mathbf{w})\) with $\mathcal{I}:=\{1, 2, \ldots, n\}$
which directly enforces robustness to client heterogeneity.

Existing minimax formulations in federated settings typically optimize this worst-case loss in isolation. However, in many practical deployments, models must simultaneously satisfy strict client-wise operational requirements, such as fairness mandates, safety limits, resource budgets, or regulatory thresholds~\citep{islamov2025safeef,upadhyay2026fedsgm}. Tracking $n$ distinct stochastic constraints, i.e., $g_i(\mathbf{w}) = \mathbb{E}_{\zeta \sim \mathcal{D}_i}[g_i(\mathbf{w},\zeta)] \le 0, \; \forall i \in \mathcal{I}$, which encode client-specific operational constraints, is prohibitively expensive in federated environments, as it requires maintaining and synchronizing $n$ distinct dual variables across a network with intermittent client availability. To circumvent this communication and memory bottleneck, these considerations naturally lead to a constrained worst-case formulation over the discrete maximum,
\begin{equation}
\label{eq:constrained_minimax_intro}
    \min_{\mathbf{w} \in \Theta} \max_{i \in \mathcal{I}} f_i(\mathbf{w})
    \quad \text{s.t.} \quad
    \max_{i \in \mathcal{I}} g_i(\mathbf{w}) \le 0,
\end{equation}
This stochastic minimax problem with stochastic constraints captures a strictly more challenging setting than the standard agnostic setting, as it requires controlling both worst-case performance and constraint violation across heterogeneous distributions without explicitly relying on separate dual variables. Solving \eqref{eq:constrained_minimax_intro} in FL environments raises several nontrivial theoretical and algorithmic challenges, including:

\paragraph{Non-smooth worst-case objective.}
The client-wise maximum $\max_{i \in \mathcal{I}} f_i(\mathbf{w})$ and $\max_{i \in \mathcal{I}} g_i(\mathbf{w})$ induces an inherently non-smooth objective and constraint landscape. In particular, when multiple clients attain similar worst-case losses or constraint violations, the subdifferential becomes highly sensitive to stochastic perturbations. Under noisy local estimates, the identity of the worst client may fluctuate across rounds, causing standard gradient-based methods to exhibit oscillatory behavior around the feasibility boundary.

\paragraph{Worst-Case constraints.}
$\max_{i \in \mathcal{I}} g_i(\mathbf{w}) \le 0$ is essential when hard operational requirements for \textit{every} client are non-negotiable, unlike average constraints. 
In federated control for safety-critical systems (e.g., robotic fleets~\citep{brohan2022rt}), a single agent violating its safety envelope causes irreversible damage regardless of the fleet's \textit{average} behavior. Maximizing the worst-client reward alone cannot control safety violations. Enforcing a global constraint $\max_{i \in \mathcal{I}} g_i(\mathbf{w}) \le 0$ compactly ensures \textit{all} client-level safety budgets are strictly respected without requiring the optimization algorithm to track separate client-level dual variables. 

\paragraph{Coupling of constraints with minimax optimization.}
While unconstrained agnostic FL \citep{mohri2019agnostic} can relax the discrete client maximum into a smooth probability simplex (as in \eqref{eq:afl_objective}) to employ standard stochastic saddle-point algorithms like SGDA, incorporating non-smooth constraints shatters this convenience. Standard constrained optimization relies on primal-dual or penalty-based approaches (e.g., ADMM), which require maintaining, tuning, and communicating dual variables. In federated networks, primal-dual methods suffer from severe ``dual drift'' and instability under stochastic gradients and partial client participation, as inactive clients cause their corresponding dual variables to become stale~\citep{wang2022fedadmm,sun2024fedpd}. Moreover, many existing constrained methods rely on deterministic gradients, bounded losses, or inner optimization subroutines, which are highly restrictive in large-scale stochastic federated settings.

Consequently, naive saddle-point reformulations yield degenerate adversarial dynamics, unstable dual updates, and communication overhead. This necessitates a new algorithmic approach to constrained federated and distributed minimax optimization that avoids the pitfalls of dual-variable synchronization while guaranteeing strict worst-case constraint satisfaction.

To address these issues, we propose a stochastic \emph{Softmax-Weighted Switching Gradient} method for distributed and federated stochastic minimax problem with stochastic constraints. The key idea is to replace the non-smooth hard maximum with a smooth, temperature-controlled Softmax approximation that generates smooth adversarial weights over participating clients,
\begin{equation}
\mathbf{p}_k = \operatorname{softmax}(\alpha \mathbf{f}(\mathbf{w}_k)),
\end{equation}
where $\mathbf{f}(\mathbf{w}_k) := (f_1(\mathbf{w}_k), \dots, f_n(\mathbf{w}_k))$ and $\alpha \geq 0$ controls the approximation tightness, similarly for $\bg(\bw_k)=(g_1(\mathbf{w}_k), \dots, g_n(\mathbf{w}_k))$. This formulation stabilizes the gradient landscape while preserving sensitivity to worst-case clients. Crucially, we couple this smooth minimax approximation with a first-order switching mechanism~\citep{polyak1967general,upadhyay2025optimization}: when the estimated global constraint violation is within a prescribed tolerance, the algorithm prioritizes worst-case objective minimization; otherwise, it adaptively redirects updates toward reducing constraint violations. This design eliminates the need for explicit dual variables, inner optimization loops, or deterministic gradient access, making the method fully compatible with stochastic first-order oracles, multiple local updates, and partial client participation.

From a theoretical perspective, our framework departs from existing robust and meta-learning approaches in several important ways. Unlike downstream adaptation and constrained meta-learning methods that rely on centralized optimization or restrictive boundedness assumptions on the (loss) functions \citep{wang2023task}, we analyze a fully stochastic federated setting with heterogeneous client distributions and stochastic constraint evaluations. Furthermore, our method operates as a single-loop first-order algorithm without requiring the solution of auxiliary optimization problems at each round, in sharp contrast to many primal-dual or ERM-oracle-based minimax methods. These distinctions make our approach particularly suitable for large-scale, communication-constrained federated systems where robustness, feasibility, and scalability must be addressed simultaneously.
Please see Appendix~\ref{sup:related} for a detailed discussion of related work.

\subsection{Contributions} 

In this paper, we present the \textit{Softmax-Weighted Switching Gradient} method to solve distributed stochastic optimization problems with stochastic constraints (formally defined in \cref{eq:opt_constrained}, Section~\ref{sec:problem_setup}). Our core contributions are summarized as follows:

\begin{itemize}
    \item \textbf{Novel Constrained Minimax Framework:} We propose a single-loop, first-order algorithm that solves stochastic constrained minimax problems in FL without explicit dual variables (Section~\ref{sec:algo}), achieving the canonical $\tilde{\mathcal{O}}(\epsilon^{-4})$ oracle complexity for stochastic constrained setting~\citep{lan2020algorithmsstochastic}. This fundamentally bypasses the ``dual drift'' and instability issues prevalent in heterogeneous federated networks.
    
    \item \textbf{Relaxation of Boundedness Assumptions:} Building upon foundational softmax-based minimax approaches for deterministic, centralized, and unconstrained settings~\citep{wang2023task}, our theoretical analysis successfully relaxes the requirement for strictly bounded objective functions. This advancement allows us to establish a tighter, more generalized lower bound for the softmax hyperparameter $\alpha$ (Section~\ref{sec:theory}), yielding improved theoretical guarantees that apply broadly, including in purely centralized environments.
    
    \item \textbf{Unified Error Decomposition for General FL Settings:} We establish rigorous high-probability convergence guarantees under practical federated constraints, explicitly accommodating multiple local updates and partial client participation. Our analysis cleanly decouples the optimality gap and feasibility tolerance into three distinct sources: optimization error, stochastic estimation error, and client sampling error (\Cref{box:err_decompose}).
    
    \item \textbf{Empirical Validation:} Finally, we evaluate the robustness of our approach through diverse empirical trials, including the NP classification and fair classification tasks. By benchmarking against traditional primal-dual and penalty-based methods, we offer important practical observations regarding the reliability of primal-only switching methods in the face of client heterogeneity
    (Section~\ref{sec:experiments}).
\end{itemize}

\section{Problem Setup}\label{sec:problem_setup}
In this section, we formally state the setup 
of a distributed stochastic minimax optimization problem with stochastic constraints,
and introduce notations for theoretical analysis.

\textbf{Problem Formulation.} 
Consider a set of $n$ clients indexed by $i \in \mathcal{I}=\{1, 2, \dots, n\}$, where each client is associated with a local data distribution $\mathcal{D}_i$ on the sample space $\mathcal{Z}$. We define two vector-valued functions, $\mathbf{f}: \Theta \times \mathcal{Z} \to \mathbb{R}^{n}$ and $\mathbf{g}: \Theta \times \mathcal{Z} \to \mathbb{R}^{n}$. 
Specifically, for each client $i$, let $f_i(\mathbf{w}, \zeta) = [\mathbf{f}(\mathbf{w}, \zeta)]_i$ and $g_i(\mathbf{w}, \zeta) = [\mathbf{g}(\mathbf{w}, \zeta)]_i$ denote the local objective and constraint values for a sample $\zeta \sim \mathcal{D}_i$. 
The corresponding local expectations are given by $f_i(\mathbf{w}) = \mathbb{E}_{\zeta\sim \mathcal{D}_i}[f_i(\mathbf{w}, \zeta)]$ and $g_i(\mathbf{w}) = \mathbb{E}_{\zeta\sim \mathcal{D}_i}[g_i(\mathbf{w}, \zeta)]$.
The expectation-constrained minimax optimization problem is then formulated as:
\begin{equation}\tag{\(\dagger\)}\label{eq:opt_constrained}
\begin{aligned}
    & \min_{\mathbf{w} \in \Theta} F(\mathbf{w}) = \min_{\mathbf{w} \in \Theta} \max_{i \in \mathcal{I}} [\bff(\mathbf{w})]_i = \min_{\mathbf{w} \in \Theta} \max_{i \in \mathcal{I}} f_i(\mathbf{w}) \\
    & \text{s.t.} \quad G(\mathbf{w}):= \max_{i\in\mathcal{I}} [\bg(\mathbf{w})]_i = \max_{i \in \mathcal{I}} g_i(\mathbf{w}) \leq 0
\end{aligned}
\end{equation}
In the above problem, we minimize the maximum of the objective functions \(F(\bw):=\max_{i \in \mathcal{I}} f_i(\bw)\) while all the constraints are less than or equal to 0, akin to \(G(\bw):=\max_{i \in \mathcal{I}} g_i(\bw)\leq 0\).
We assume the optimal solution \(\mathbf{w}^*\) exists and the optimal value is \(F(\mathbf{w}^*)\) with \(G(\mathbf{w}^*) \leq 0\).
\begin{equation}\tag{\(\ast\)}\label{eq:opt_sol}
\mathbf{w}^* \in \arg \min_{\mathbf{w} \in \Theta} F(\mathbf{w}) \quad \text{s.t.} \quad G(\mathbf{w}) \leq 0
\end{equation}

\textbf{Notations.} 
For the brevity of notations, the Jacobian matrices of the expectation functions \(\bff(\bw), \bg(\bw)\) can be written as \(\nabla \bff(\mathbf{w}) = (\nabla f_1(\mathbf{w}), \nabla f_2(\mathbf{w}), \ldots, \nabla f_{n}(\mathbf{w}))^\top\) and \(\nabla \bg(\mathbf{w}) = (\nabla g_1(\mathbf{w}), \nabla g_2(\mathbf{w}), \ldots, \nabla g_{n}(\mathbf{w}))^\top\) respectively,
where \(\nabla f_i(\bw) \in \partial f_i(\bw)\) and \(\nabla g_i(\bw) \in \partial g_i(\bw)\) are subgradients of \(f_i(\bw)\) and \(g_i(\bw)\) 
under \Cref{assumption:convexity} on convexity of \(f_i(\bw)\) and \(g_i(\bw)\).

To evaluate the function values of the objective and constraint functions \(\bff(\bw), \bg(\bw)\), 
we take batches of data samples 
\(\bxi^{(i)}=(\xi^{(i)}_1, \ldots, \xi^{(i)}_{B_\zeta})\stackrel{\text{i.i.d.}}{\sim} \calD_i\),
drawn from the distribution \(\calD_i\) to approximate 
the expectations \([\bff(\bw)]_i=f_i(\bw)=\E_{\zeta\sim \calD_i}[f_i(\bw, \zeta)]\) and \([\bg(\bw)]_i=g_i(\bw)=\E_{\zeta\sim \calD_i}[g_i(\bw, \zeta)]\)
for \(i\in \calI\) using the batches of data samples \(\bxi^{(i)}\) as follows:
\begin{equation}\label{eq:approx_function_values}
\begin{aligned}
    & f_i(\bw, \bxi^{(i)}) := \frac{1}{B_\zeta}\sum\limits_{s=1}^{B_\zeta} f_i(\bw, \xi^{(i)}_{s}), \\
    & g_i(\bw, \bxi^{(i)}) := \frac{1}{B_\zeta}\sum\limits_{s=1}^{B_\zeta} g_i(\bw, \xi^{(i)}_{s}).
\end{aligned}
\end{equation}
We write \(\bxi = (\bxi^{(i)})_{i\in\calI}\) and \(\bff(\bw, \bxi) = (f_i(\bw, \bxi^{(i)}))_{i\in\calI}\),
\(\bg(\bw, \bxi) = (g_i(\bw, \bxi^{(i)}))_{i\in\calI}\) to represent the batches of data samples 
and the corresponding approximated function values of the objective and constraint functions respectively.
Similarly, we can evaluate the gradients of the objective and constraint functions \(\bff(\bw), \bg(\bw)\) 
using batches of data samples \(\bzeta^{(i)}=(\zeta^{(i)}_1, \ldots, \zeta^{(i)}_{B_g})\stackrel{\text{i.i.d.}}{\sim} \calD_i\)
drawn from the distribution \(\calD_i\) as follows:
\begin{equation}\label{eq:approx_gradients}
\begin{aligned}
    & \nabla f_i(\bw, \bzeta^{(i)}) := \frac{1}{B_g}\sum\limits_{s=1}^{B_g} \nabla f_i(\bw, \zeta^{(i)}_{s}), \\
    & \nabla g_i(\bw, \bzeta^{(i)}) := \frac{1}{B_g}\sum\limits_{s=1}^{B_g} \nabla g_i(\bw, \zeta^{(i)}_{s}).
\end{aligned}
\end{equation}
Correspondingly, we write \(\bzeta = (\bzeta^{(i)})_{i\in\calI}\) and \(\nabla \bff(\bw, \bzeta) = (\nabla f_i(\bw, \bzeta^{(i)}))_{i\in\calI}\),
\(\nabla \bg(\bw, \bzeta) = (\nabla g_i(\bw, \bzeta^{(i)}))_{i\in\calI}\) to represent the batches of data samples 
used to evaluate the gradients, and the approximated gradients of the objective and constraint functions.

Regarding the minimization problem of the maximum function value \(F(\bw):=\max_{i \in \mathcal{I}} f_i(\bw)\) 
under the constraint \(G(\bw):=\max_{i \in \mathcal{I}} g_i(\bw)\leq 0\).
We can approximate the maximum function value \(F(\bw)\) and the constraint \(G(\bw)\) 
using the batches of data samples \(\bxi\) 
such that \(F(\bw, \bxi) = \max_{i \in \mathcal{I}} f_i(\bw, \bxi^{(i)})\) and \(G(\bw, \bxi) = \max_{i \in \mathcal{I}} g_i(\bw, \bxi^{(i)})\).

\section{Algorithm}\label{sec:algo}
In this section, we propose a \textit{Softmax-Weighted Switching Gradient} method for solving the constrained minimax optimization problem in~\eqref{eq:opt_constrained} (Algorithm~\ref{alg:switching_gd_softmax_fedpartial}).

\textbf{Basic Switching Strategy.} 
The switching strategy implements a simple mechanism~\citep{polyak1967general,nesterov2018lectures} based on the maximum constraint violation at the \(k\)-th iteration of global round over all clients \(i\in \calI\),
\begin{equation}
G(\mathbf{w}_k,\bxi_k)  
= \max_{i \in \mathcal{I}} g_i(\mathbf{w}_k, \bxi^{(i)}_{k}). 
\end{equation}
When constraints are satisfied such that \(G(\mathbf{w}_k,\bxi_k) \leq \epsilon\) at iteration \(k\), the algorithm focuses on minimizing the objective function using approximated gradients \(\nabla \bff(\bw_k, \bzeta_k)\) computed over the data batches \(\bzeta_k\). Otherwise, it prioritizes feasibility by updating the constraint gradients \(\nabla \bg(\bw_k, \bzeta_k)\).

\begin{algorithm}[t]
\caption{(\textit{Softmax-Weighted Switching Gradient})\\Federated Learning with Partial Participation ($m\leq n$)\\
(Local Update Steps \(E\geq1\))}
\label{alg:switching_gd_softmax_fedpartial}
\begin{algorithmic}[1] 
\STATE \textbf{Input:} Initial parameters $\mathbf{w}_0$, global/local step size $(\eta,\gamma)$, 
threshold $\lambda$, softmax hyperparameter $\alpha$, size of subsets $m(\leq n)$
\STATE $\mathcal{S} \gets \emptyset$ \COMMENT{Initialize set of constraint-satisfied iterations}
\FOR{global round $k \in [K]$}
\PreComment{Sample subsets with cardinality $m$ from $\calI$ uniformly and independently}
\PreComment{$\calC_{m}(\calI) = \{A \subseteq \calI \mid |A| = m\}$}
\STATE \(\calI_k\sim \text{Unif}(\calC_{m}(\calI))\)
    \STATE \textbf{broadcast} $\mathbf{w}_k$ to clients $\calI_k$
    \FOR{each client $i \in \calI_k$ \textbf{in parallel}}
        \PreComment{Samples for Function Value Evaluation}
        \STATE $\bxi_k^{(i)} \equiv ([\bxi_k^{(i)}]_s)_{s=1}^{B_\zeta} \stackrel{\text{i.i.d.}}{\sim} \calD_i$ 
        \STATE $[\bff(\mathbf{w}_k, \bxi_k)]_i \gets f_i(\mathbf{w}_k, \bxi_k^{(i)})$ \COMMENT{See \Cref{eq:approx_function_values}}
        \STATE $[\bg(\mathbf{w}_k, \bxi_k)]_i \gets g_i(\mathbf{w}_k, \bxi_k^{(i)})$ \COMMENT{See \Cref{eq:approx_function_values}}
    \ENDFOR
    \STATE \textbf{collect} $[\bff(\mathbf{w}_k, \bxi_k)]_i, [\bg(\mathbf{w}_k, \bxi_k)]_i$ from client $i\in \calI_k$\\
    \STATE $\bp_k \gets \softmax_{\calI_k}(\alpha \bff(\mathbf{w}_k, \bxi_k))$ \COMMENT{Softmax weights}
    \STATE $\bq_k \gets \softmax_{\calI_k}(\alpha \bg(\mathbf{w}_k, \bxi_k))$ \COMMENT{See \Cref{eq:masked_softmax}}
    \IF{${\color{brown}G_k(\mathbf{w}_k, \bxi_k; \calI_k) \equiv\langle \bq_k, \bg(\mathbf{w}_k, \bxi_k) \rangle} 
    \leq {\color{royalBlue}\lambda}$}
        \STATE $\mathcal{S} \gets \mathcal{S} \cup \{k\}$ \COMMENT{Add iteration to set}
    \ENDIF
    \STATE \textbf{broadcast} $\1_k \equiv \1_{G_k(\mathbf{w}_k, \bxi_k; \calI_k) \leq \lambda}$ 
    to clients $\calI_k$
    \FOR{each client $i \in \calI_k$ \textbf{in parallel}}
        \PreComment{\Cref{function:local_solver} to compute the update direction}
        \STATE $\bu_k^{(i)} \gets \texttt{LocalSolver}(\mathbf{w}_k, \1_k, \gamma; i)$
    \ENDFOR
    \STATE \textbf{collect} $\bu_k^{(i)}$ from each client $i\in \calI_k$
    \STATE $\bu_k \gets \sum_{i \in \calI_k} (\1_k [\bp_k]_i + (1-\1_k)[\bq_k]_i)\bu_k^{(i)}$
    \STATE $\mathbf{w}_{k+1} \gets \mathbf{w}_k - \eta \bu_k$ \COMMENT{Update parameters}
\ENDFOR
\STATE $\overline{\mathbf{w}}_K \gets \frac{1}{|\mathcal{S}|} \sum_{k \in \mathcal{S}} \mathbf{w}_k$ \COMMENT{Averaged solution}
\STATE \textbf{Output:} Optimized parameters $\overline{\mathbf{w}}_K$
\end{algorithmic}
\end{algorithm}

In a federated setting, evaluating these updates requires local client participation. We denote \([K]:=\{0,1,\ldots,K-1\}\) as the index set for global rounds, and \([E]:=\{0,1,\ldots,E-1\}\) for local update iterations. For each global round \(k\in [K]\), clients \(i\in \calI\) compute either the objective gradient \(\nabla f_i(\bw_k, \bzeta_{k,\tau}^{(i)})\) or the constraint gradient \(\nabla g_i(\bw_k, \bzeta_{k,\tau}^{(i)})\) in parallel over \(E\) local steps using their respective batches.


\begin{function}[t]
\caption{Local Solver to compute local update direction
}
\label{function:local_solver}
\begin{algorithmic}[1]
\STATE \textbf{function} \texttt{LocalSolver}$(\mathbf{w}_k, \1_k, \gamma; i)$ \\
    \STATE $\bw_{k,0}^{(i)} \gets \bw_k$ \COMMENT{Initialize local parameters}
    \FOR{local update step $\tau \in[E]$}
        \PreComment{Samples for Local Gradient Evaluation}
        \STATE $\bzeta_{k,\tau}^{(i)} \equiv ([\bzeta_{k,\tau}^{(i)}]_s)_{s=1}^{B_g} \stackrel{\text{i.i.d.}}{\sim} \calD_i$ 
        \PreComment{Local Gradient Evaluation}
        \IF{$\1_k = 1$} 
            \STATE $\bv_{k,\tau}^{(i)} \gets \nabla f_i(\bw_{k,\tau}^{(i)}, \bzeta_{k,\tau}^{(i)})$ \COMMENT{See \Cref{eq:approx_gradients}}
        \ELSE 
            \STATE $\bv_{k,\tau}^{(i)} \gets \nabla g_i(\bw_{k,\tau}^{(i)}, \bzeta_{k,\tau}^{(i)})$ \COMMENT{See \Cref{eq:approx_gradients}}
        \ENDIF
        \STATE $\bw_{k,\tau+1}^{(i)} \gets \bw_{k,\tau}^{(i)} - \gamma \bv_{k,\tau}^{(i)}$ \COMMENT{Update local parameters}
    \ENDFOR
    \STATE \textbf{return} $\frac{\bw_{k,0}^{(i)}-\bw_{k,E}^{(i)}}{\gamma E}$ \COMMENT{Local update direction \(\bu_k^{(i)}\)}
\STATE \textbf{end function}
\end{algorithmic}
\end{function}

\textbf{Softmax-Weighted Constraint Evaluation.} 
Building upon the basic strategy, we introduce a softmax-weighted constraint evaluation. Rather than strictly tracking the single worst-case client, this approach evaluates constraints through a softmax-weighted combination of clients. This provides an approximation of the maximum function, which stabilizes the constraint evaluation against noisy local estimates and smoothly distributes the weights across near-worst-case clients. This evaluation is formulated as follows,
\begin{equation*}
    \begin{aligned}
    G_k(\mathbf{w}_k,\bxi_k) &:= \langle \softmax(\alpha \bg(\mathbf{w}_k,\bxi_k)), \bg(\mathbf{w}_k,\bxi_k) \rangle\\
    &= \frac{\sum_{i \in \mathcal{I}} \exp(\alpha g_i(\mathbf{w}_k,\bxi_k^{(i)})) g_i(\mathbf{w}_k,\bxi_k^{(i)})}{\sum_{i' \in \mathcal{I}} \exp(\alpha g_{i'}(\mathbf{w}_k,\bxi_k^{(i')}))}.
    \end{aligned}
\end{equation*}
Instead of checking the hard constraint violation \(G(\mathbf{w}_k) \leq \epsilon\), 
we check if \(G_k(\mathbf{w}_k,\bxi_k) \leq \lambda\) is satisfied at iteration \(k\). 
$\lambda$ is a tunable threshold parameter in the practical setting $\lambda \gets \frac{\epsilon}{1+A^{-1}}$ for any constant $A \geq 1$. 
This tightened tolerance accounts for the approximation gap between the softmax mean and the true maximum. Instead of relying on the gradient of a single worst-case $\nabla f_{i^\ast}(\bw)$ or $\nabla g_{i^\ast}(\bw)$, we assign smooth weights to each client $i$ using $\softmax$ based on their approximated function values $\bp_k = \softmax(\alpha \bff(\bw_k, \bxi_k)), \space
    \bq_k = \softmax(\alpha \bg(\bw_k, \bxi_k))$.

\textbf{Full Participation.} 
We consider the full participation scenario where all clients participate in the optimization process. In this setting, the algorithm operates through a sequence of broadcasting, local updating, and global aggregation at each global round $k \in [K]$.

First, the server broadcasts the global parameters $\bw_k$ to all clients $i \in \mathcal{I}$. The system evaluates the constraint violation $G_k(\bw_k, \bxi_k)$ to determine the global switching indicator $\1_k = \1_{G_k(\bw_k, \bxi_k) \le \lambda}$.

Next, clients perform $E$ local update iterations in parallel by initializing their local model as $\bw_{k,0}^{(i)} = \bw_k$. If the constraint is satisfied ($\1_k = 1$), clients update their parameters using the objective gradient $\nabla f_i(\bw_{k,\tau}^{(i)}, \bzeta_{k,\tau}^{(i)})$. Otherwise ($\1_k = 0$), they use the constraint gradient $\nabla g_i(\bw_{k,\tau}^{(i)}, \bzeta_{k,\tau}^{(i)})$. Using a local step size $\gamma$, this process repeats for $\tau \in [E]$, after which each client computes its normalized local updates $\bu_k^{(i)} = (\bw_{k,0}^{(i)} - \bw_{k,E}^{(i)})/(\gamma E)$ and returns it to the server.

Finally, the server aggregates these local updates to perform the global update. It computes the softmax weights $\bp_k$ or $\bq_k$ based on the approximated function values, and updates the global parameters with a global step size $\eta$,
\begin{equation}\label{eq:global_update}
\bw_{k+1} = \bw_k - \eta \sum_{i\in \calI} (\1_k[\bp_k]_i + [1-\1_k][\bq_k]_i) \bu_k^{(i)}.
\end{equation}

\textbf{Partial Participation.} 
In practical federated learning deployments, only a subset of clients participate in the optimization process during each global round. We denote the selected set of participating clients at iteration $k$ as $\calI_k \subseteq \calI$, with a fixed cardinality $|\calI_k| = m \leq n$. We assume these subsets are sampled uniformly and independently, such that $(\calI_k)_{k\in \mathbb{Z}_{\geq 0}} \stackrel{\text{i.i.d.}}{\sim} \text{Unif}(\mathcal{C}_m(\mathcal{I}))$, where $\calC_m(\calI) = \{A \subseteq \calI \mid |A| = m\}$.

To accommodate partial participation, we must restrict the softmax probability mass strictly to the participating clients. We achieve this by introducing a masked softmax operator associated with the subset $\calI_k$. Letting $\bone_{\calI_k}$ denote the indicator vector of $\calI_k$ (where $[\bone_{\calI_k}]_i = 1$ if $i \in \calI_k$ and $0$ otherwise), the masked softmax is defined as

\begin{equation}\label{eq:masked_softmax}
    \softmax_{\calI_k}(\bv) := \frac{\bone_{\calI_k}\odot \exp(\bv)}{\bone_{\calI_k}^\top \exp(\bv)}.
\end{equation}

Because only clients in $\calI_k$ participate at round $k$, the server only collects function values and gradients from this active subset. For concise notation, we denote the localized function evaluations as $\bff(\bw_k, \bxi_k)_{\calI_k} =[f_i(\bw_k, \bxi_k^{(i)})]_{i\in\calI_k}$ and $\bg(\bw_k, \bxi_k)_{\calI_k}=[g_i(\bw_k, \bxi_k^{(i)})]_{i\in\calI_k}$. Furthermore, the maximum function values over the participating subset are 

\begin{equation}\label{eq:max_function_values_subsets}
    \begin{aligned}
    F(\bw; \calI_k) & := \max_{i\in \calI_k} [\bff(\bw)]_i = \max_{i\in \calI_k} f_i(\bw), \\
    G(\bw; \calI_k) & := \max_{i\in \calI_k} [\bg(\bw)]_i = \max_{i\in \calI_k} g_i(\bw). \\
    \end{aligned}
\end{equation}

Instead of evaluating the global constraint criteria over all clients, we rely on the participating clients to determine feasibility. We evaluate the subset constraint $G_k(\bw_k, \bxi_k; \calI_k) \leq \lambda$ and encode this strategy in the switching indicator $\1_k = \1_{G_k(\bw_k, \bxi_k; \calI_k) \leq \lambda}$. This subset constraint is computed using the masked softmax function:
\begin{equation}\label{eq:constraint_criteria_subsets}
    \begin{aligned}
    G_k(\bw_k, \bxi_k; \calI_k) &:= \langle \softmax_{\calI_k}(\alpha \bg(\bw_k, \bxi_k)), \bg(\bw_k, \bxi_k) \rangle\\
    & =  \frac{\sum_{i \in \calI_k} \exp(\alpha g_i(\bw_k, \bxi_k^{(i)})) g_i(\bw_k, \bxi_k^{(i)})}{\sum_{i' \in \calI_k} \exp(\alpha g_{i'}(\bw_k, \bxi_k^{(i')}))}.
    \end{aligned}
\end{equation}

Similarly, the server computes the client update weights over the selected subset using the masked softmax, defined as $\bp_k=\softmax_{\calI_k}(\alpha \bff(\bw_k, \bxi_k))$ and $\bq_k =\softmax_{\calI_k}(\alpha \bg(\bw_k, \bxi_k))$. By restricting the evaluation to $\calI_k$, the generalization of the global objective and constraint is feasible only if the clients share certain structural proximity. As we establish in Section~\ref{sec:theory}, this approximation remains theoretically sound under some regularity assumption, provided the participating subset adequately captures the information fo the worst-case client.

\section{Theoretical Analysis}\label{sec:theory}
\begin{table*}[!t]
    \centering
    \caption{Comparison of iteration, sample, and total oracle complexities under high-probability guarantees.}
    \label{tab:oracle_complexity}
    \renewcommand{\arraystretch}{1.15}
    \begin{tabular}{@{}lccc@{}}
        \toprule
        Method & Iteration complexity ($K$) & Sample size ($B_\zeta$) & Total oracle complexity \\
        \midrule
        \citealp{lan2020algorithmsstochastic} & $\mathcal{O}(\epsilon^{-2}\log^2\frac{1}{\delta})$ & $\mathcal{O}(\epsilon^{-2}\log^2\frac{1}{\delta})$ & $\mathcal{O}(\epsilon^{-4}\log^4\frac{1}{\delta})$ \\
        Ours (\Cref{theorem:convergence_gd_softmax_lipschitz}) & $\mathcal{O}(\epsilon^{-2}\log\frac{1}{\delta})$ & $\mathcal{O}(\epsilon^{-2}\log\frac{1}{\epsilon\delta})$ & $\mathcal{O}(\epsilon^{-4}\log\frac{1}{\delta}\log\frac{1}{\epsilon\delta})$ \\
        \bottomrule
    \end{tabular}
\end{table*}


In this section, we analyze our proposed \cref{alg:switching_gd_softmax_fedpartial} and provide a comprehensive theoretical analysis. We detail the computational complexity and provide guidance for the selection of the softmax hyperparameter $\alpha$.
First, we begin with the analysis of a simplified version of the proposed algorithm under full participation ($m=n$) and a single local update per global round ($E=1$) (see \Cref{alg:switching_gd_softmax} in Appendix~\ref{sup:algo}). Our analysis then proceeds to the more general full participation case ($m=n$) with multiple local updates ($E \geq 1$) per global round (see \Cref{alg:switching_gd_softmax_fedfull} in Appendix~\ref{sup:algo}). Finally, we analyze the general scenario of \Cref{alg:switching_gd_softmax_fedpartial} with partial participation ($m \le n$) and multiple local updates ($E \ge 1$). 
We formally state our assumptions.


\begin{assumption}[Convexity of functions]\label{assumption:convexity}
    \(f_i(\bw)\) and \(g_i(\bw)\) are convex with respect to \(\bw\)
    for all \(i\in \calI, \bw \in \Theta\).
\end{assumption}

\begin{assumption}[Diameter of the parameter space]\label{assumption:diameter_parameter_space}
    The diameter of the parameter space \(\Theta\) is bounded by \(D\), i.e., \(\|\bw_1-\bw_2\| \leq D\) for all \(\bw_1, \bw_2 \in \Theta\).
\end{assumption}

\begin{assumption}[Lipschitz Continuity]\label{assumption:lipschitz_continuity}
    All components of \(\bff\) and \(\bg\) are \(L\)-Lipschitz continuous, i.e., there exist constants \(L>0\) such that
    \begin{equation*}
        \begin{aligned}
        |f_i(\bw_1) - f_i(\bw_2)| \leq L \|\bw_1-\bw_2\|, \\
        |g_i(\bw_1) - g_i(\bw_2)| \leq L \|\bw_1-\bw_2\|,
        \end{aligned}
    \end{equation*}
    for all \(\bw_1, \bw_2 \in \Theta\) and \(i \in \mathcal{I}\).
\end{assumption}

In the following assumptions, 
we state the sub-guassianity of noise for the approximation of function values and the stochastic gradients, and thereby establish high probablity ganrantees with these assumptions.

\begin{assumption}[Sub-Gaussianity of Stochastic Estimates]\label{assumption:subgaussianity_stochastic_estimates}
    All components of the stochastic estimates \(\bff(\bw, \zeta), \bg(\bw, \zeta)\) are sub-Gaussian with variance proxy \(\sigma_\zeta^2\) such that
    \(\E_{\zeta \sim \calD_i}[f_i(\bw, \zeta)] = f_i(\bw), \E_{\zeta \sim \calD_i}[g_i(\bw, \zeta)] = g_i(\bw)\) and
    \begin{eqnarray*}
        \E_{\zeta \sim \calD_i}\left[\exp([f_i(\bw, \zeta)-f_i(\bw)]^2/\sigma_\zeta^2)\right] \leq 2, \\
        \E_{\zeta \sim \calD_i}\left[\exp([g_i(\bw, \zeta)-g_i(\bw)]^2/\sigma_\zeta^2)\right] \leq 2,
    \end{eqnarray*}
    for all \(i\in \calI\) and \(\bw \in \Theta\).
\end{assumption}

\begin{assumption}[Sub-Gaussianity of Stochastic Gradients]\label{assumption:subgaussianity_stochastic_gradients}
    The stochastic gradients are sub-Gaussian with variance proxy \(\sigma_g^2\) such that 
    \(\E_{\zeta \sim \calD_i}[\nabla f_i(\bw, \zeta)] = \nabla f_i(\bw), \E_{\zeta \sim \calD_i}[\nabla g_i(\bw, \zeta)] = \nabla g_i(\bw)\) and
    \begin{eqnarray*}
        \E_{\zeta \sim \calD_i}\left[\exp(\|\nabla f_i(\bw, \zeta)-\nabla f_i(\bw)\|^2/\sigma_g^2)\right] \leq 2, \\
        \E_{\zeta \sim \calD_i}\left[\exp(\|\nabla g_i(\bw, \zeta)-\nabla g_i(\bw)\|^2/\sigma_g^2)\right] \leq 2,
    \end{eqnarray*}
    for all \(i\in \calI\) and \(\bw \in \Theta\), where 
    \(\nabla f_i(\bw) \in \partial f_i(\bw)\), \(\nabla g_i(\bw) \in \partial g_i(\bw)\) 
    are subgradients of \(f_i(\bw), g_i(\bw)\).
\end{assumption}

\paragraph{Full participation and single local update.} 

We begin by analyzing a simplified case with full participation ($m=n$) and a single local update ($E=1$). Here, the "effective variance" $\bar{\sigma}_g^2 := \tfrac{\sigma_g^2/B_g}{L^2 E}$, representing the relative gradient estimation error, simplifies to $\tfrac{\sigma_g^2/B_g}{L^2}$.

\begin{myframe}
\begin{main theorem}[Convergence Guarantee of~\cref{alg:switching_gd_softmax} (special case of ~\cref{alg:switching_gd_softmax_fedpartial} with $m=n,E=1$)]\label{theorem:convergence_gd_softmax_lipschitz}
    Consider the optimization function \(F\) and the constraint function \(G\) as defined in \cref{eq:opt_constrained},
    and the optimal solution \(\mathbf{w}^*\) defined in \cref{eq:opt_sol}.
    Suppose~\Cref{assumption:convexity,assumption:lipschitz_continuity,assumption:diameter_parameter_space,assumption:subgaussianity_stochastic_estimates,assumption:subgaussianity_stochastic_gradients} hold.
    Consider running \cref{alg:switching_gd_softmax}, 
    with step size \(\eta=\gamma=\frac{D}{2L\sqrt{K}}\), 
    tolerance \(\epsilon=\epsilon' + 4\sigma_\zeta\sqrt{\frac{2\ln (12K n/\delta) }{B_\zeta}}\)
    and \(\epsilon' = \frac{2DL}{\sqrt{K}}\left[1 + \bar{\sigma}_g^2 \left(3 + \frac{8\ln\frac{8}{\delta}}{K}\right)
    + \bar{\sigma}_g \sqrt{8\ln \frac{8}{\delta}} \right]\), and threshold \(\lambda = \frac{\epsilon}{2}\), softmax hyperparameter \(\alpha\)
    satisfying \(\alpha \geq \frac{2\ln n}{\epsilon'}\), 
    where \(\bar{\sigma}_g^2 := \frac{\sigma_g^2/B_g}{L^2}\).
    Then, \cref{alg:switching_gd_softmax} finds a solution \(\overline{\mathbf{w}}_K\) with probability at least \(1 - \delta\) for some \(\delta \in (0, 1)\) such that:
    \[
    F(\overline{\mathbf{w}}_K) - F(\mathbf{w}^*) \leq \epsilon,\quad G(\overline{\mathbf{w}}_K) \leq \epsilon.
    \]
\end{main theorem}
\end{myframe}

\begin{remark}\label{remark:oracle_complexity}
    As established in Equation~(2.39) and Theorem~11 of \citet{lan2020algorithmsstochastic}, their method requires an iteration complexity of $K = \mathcal{O}(\epsilon^{-2}\log^2\frac{1}{\delta})$ and a per-iteration sample size of $B_\zeta = \mathcal{O}(\epsilon^{-2}\log^2\frac{1}{\delta})$ for evaluating function values. (Note that \citet{lan2020algorithmsstochastic} use $N$ for the number of iterations and $J$ for the sample size; see page~5 of their work.) Consequently, their total oracle complexity is $\mathcal{O}(\epsilon^{-4}\log^4\frac{1}{\delta})$.
    In contrast, for our method, the unified tolerance $\epsilon$ comprises two primary error sources: the optimization error and the estimation error. To bound these errors and achieve an $\epsilon$-accurate solution with high probability $1-\delta$, our algorithm requires an iteration complexity of $K = \mathcal{O}(\epsilon^{-2} \log\frac{1}{\delta})$ and a sample complexity of $B_\zeta = \mathcal{O}(\epsilon^{-2}\log\frac{1}{\epsilon\delta})$. Multiplying these yields our total oracle complexity of $\mathcal{O}(\epsilon^{-4}\log\frac{1}{\delta}\log\frac{1}{\epsilon\delta})$. This strictly improves the dependency on the high-probability parameters compared to prior work; see \Cref{tab:oracle_complexity} for a summary.
\end{remark}
\begin{remark}\label{remark:alpha_bound}
    While the problem formulation in \citet{wang2023task} is $\min_{\mathbf{w}} \max_i [\mathbf{f}(\mathbf{w})]_i = \min_{\mathbf{w}} \max_i f_i(\mathbf{w})$, their approach focuses on unconstrained optimization. Therefore, they did not introduce a switching gradient method—which uses either the gradients of the objectives $\nabla \mathbf{f}$ or the gradients of the constraints $\nabla \mathbf{g}$ to update $\mathbf{w}$—that is suitable for~\eqref{eq:opt_constrained}.
    Furthermore, \citet{wang2023task} is in a centralized, deterministic setting and require the boundedness assumption $0 < f_i(\mathbf{w}) < B$ to establish their lower bound $\alpha \gtrsim \frac{1}{\epsilon'}(\ln n + \ln \frac{1}{\epsilon'} + \ln B)$. In contrast, we address a complicated constrained optimization setting with a switching gradient method and eliminate this boundedness requirement on $f_i$. Our analysis provides high-probability convergence guarantees for a federated, stochastic setting and establishes a tighter lower bound of $\alpha \gtrsim \frac{\ln n}{\epsilon'}$. This strictly improves upon prior work, as our bound depends only on the optimization error $\epsilon'$ and the number of clients $n$.
\end{remark}

\paragraph{Full participation and multiple local updates.} 
We then generalize our analysis to multiple local updates ($E \geq 1$).

\begin{myframe}
\begin{main theorem}[Convergence Guarantee of~\Cref{alg:switching_gd_softmax_fedfull} (special case of~\Cref{alg:switching_gd_softmax_fedpartial} with $m=n, E\geq 1$)]\label{theorem:convergence_gd_softmax_fedfull_lipschitz}
    Consider the optimization function \(F\) and the constraint function \(G\) as defined in \cref{eq:opt_constrained},
    and the optimal solution \(\mathbf{w}^*\) defined in \cref{eq:opt_sol}.
    Suppose~\Cref{assumption:convexity,assumption:lipschitz_continuity,assumption:diameter_parameter_space,assumption:subgaussianity_stochastic_estimates,assumption:subgaussianity_stochastic_gradients} hold.
    Consider running \cref{alg:switching_gd_softmax_fedfull} with $m=n$, global step size \(\eta=\frac{D}{L\sqrt{8K}}\), 
    local step size \(\gamma=\frac{D}{LE\sqrt{8K}}\), 
    tolerance \(\epsilon=\epsilon' + 4 \sigma_\zeta\sqrt{\frac{2\ln (12K n/\delta) }{B_\zeta}}\) 
    and \(\epsilon' = \frac{DL}{\sqrt{K/32}}\left[1 + 2\bar{\sigma}_g^2 (3 + \frac{8\ln\frac{8}{\delta}}{K})
    + \bar{\sigma}_g \sqrt{8\ln \frac{8}{\delta}}(1+ \frac{E}{\sqrt{6K}}) \right]\)
    and threshold \(\lambda = \frac{\epsilon}{2}\),
    softmax hyperparameter \(\alpha\)
    satisfying \(\alpha \geq \frac{2\ln n}{\epsilon'}\), 
    where \(\bar{\sigma}_g^2 := \frac{\sigma_g^2/B_g}{L^2 E}\).
    Then, \cref{alg:switching_gd_softmax_fedfull} finds a solution \(\overline{\mathbf{w}}_K\) with probability at least \(1 - \delta\) for some \(\delta \in (0, 1)\) such that:
    \[
    F(\overline{\mathbf{w}}_K) - F(\mathbf{w}^*) \leq \epsilon,\quad G(\overline{\mathbf{w}}_K) \leq \epsilon.
    \]
\end{main theorem}
\end{myframe}

\begin{remark}
In the more general scenario where the number of local updates $E$ is greater than $1$, we can still achieve an optimization error of $\epsilon' \asymp \frac{DL}{\sqrt{K}}(1+\bar{\sigma}_g^2)$ provided that $E \lesssim \sqrt{K}$. In particular, when the number of global iterations is $K = \mathcal{O}(\epsilon^{-2})$, we select the number of local updates to be $E = \mathcal{O}(\epsilon^{-1})$. To ensure that our optimization error maintains the classical $\mathcal{O}(DL/\sqrt{K})$ convergence rate, the ``effective variance'' $\bar{\sigma}_g^2 := \frac{\sigma_g^2 / B_g}{E L^2}$ must be relatively small ($\bar{\sigma}_g^2 \lesssim 1$). This requirement implies that the total stochastic gradient complexity per global round, $B_g \cdot E$, must be larger than the inverse square of the Signal-to-Noise Ratio (SNR), namely $B_g \cdot E \gtrsim (L / \sigma_g)^{-2}$.
\end{remark}

\paragraph{Partial participation and multiple local update.} 
The goal of parameter updates under partial participation is to generalize to unseen clients using information from a sampled subset. This generalization is feasible only if the clients share certain structural proximity. To formalize this, we assume the function values from clients are concentrated near their maximum. To quantify this concentration and establish high-probability guarantees, we introduce a key concept from probability theory.

\begin{definition}[Stochastic Superiority via First-Order Stochastic Dominance (FSD), see \citet{shaked2007stochastic}]
Let $X$ and $Y$ be two random variables. We say $X$ is \textit{stochastically superior} to $Y$ in the sense of First-Order Stochastic Dominance (FSD), denoted by $X \succeq_{st} Y$ or $Y \preceq_{st} X$, if:
\begin{equation*}
    \Pr(X \geq t) \geq \Pr(Y \geq t) \quad \text{for all } t \in \mathbb{R}.
\end{equation*}
\end{definition}

\begin{remark}
Consider a discrete random variable $X$ taking values from the set $\bx = (x_1, \ldots, x_n)$ with equal probability $1/n$, and let $U \sim \text{Unif}[0, \sigma]$ for some $\sigma > 0$. We aim to stochastically upper bound the relative difference $D := \max \bx - X$ by $U$, such that $D \preceq_{st} U$, which is defined as, $\Pr(\max \bx - X \geq t) \leq \Pr(U \geq t), \ \forall t \in \mathbb{R}.$
\end{remark}

\begin{assumption}[Uniformly Bounded Relative Gap]\label{assumption:uniform_cdf_bound}
Let \(i\) be an index chosen uniformly at random from \(\calI\),
and let the relative differences \(D_{\bff(\bw)}:=F(\bw)-f_i(\bw), 
D_{\bg(\bw)}:= G(\bw)-g_i(\bw)\). We assume that 
\[D_{\bff(\bw)}\preceq_{st} U,\quad D_{\bg(\bw)}\preceq_{st} U\] 
for \(U\sim \text{Unif}[0, \sigma]\) with some \(\sigma >0\) and any \(\bw \in \Theta\).
\end{assumption}

\begin{remark}
By applying the aforementioned assumptions to the relative gaps $D_{\mathbf{f}(\mathbf{w})}$ and $D_{\mathbf{g}(\mathbf{w})}$, we establish that a uniform random variable $U$ is \textit{stochastically superior} to these gaps. Consequently, $D_{\mathbf{f}(\mathbf{w})}$ and $D_{\mathbf{g}(\mathbf{w})}$ are uniformly upper-bounded in a probabilistic sense. By leveraging this characterization of stochastic superiority, we can extend our analysis from full participation to partial participation, thereby establishing high-probability convergence guarantees (\Cref{theorem:convergence_gd_softmax_fedpartial_lipschitz}) for our proposed \Cref{alg:switching_gd_softmax_fedpartial}.
Standard assumptions (e.g., bounded variance and sub-Gaussianity) are insufficient for maximum objectives because the expected gap grows unboundedly with the number of clients. However, in a simple setting, a valid stochastic upper bound $U$ exists whenever the error distribution has bounded support; see \Cref{supsub:limitations} for more details on why this
stochastic control is fundamental in large-scale federated learning. Accordingly, we can relax \Cref{assumption:uniform_cdf_bound} so that the relative gaps can be dominated by \emph{any} nonnegative random variable $U$, rather than requiring $U \sim \mathrm{Unif}[0,\sigma]$. This yields generalized high-probability bounds for a broad class of tail behaviors. We can further generalize \Cref{theorem:convergence_gd_softmax_fedpartial_lipschitz} by replacing the original sampling-error bound with an exact generalized bound expressed through quantile functions; see \Cref{supsub:general_bound} for derivations and examples.
\end{remark}





\begin{myframe}
\begin{main theorem}[Convergence Guarantee of~\cref{alg:switching_gd_softmax_fedpartial}]\label{theorem:convergence_gd_softmax_fedpartial_lipschitz}
    Consider the optimization function \(F\) and the constraint function \(G\) as defined in \cref{eq:opt_constrained},
    and the optimal solution \(\mathbf{w}^*\) defined in \cref{eq:opt_sol}.
    Suppose~\Cref{assumption:convexity,assumption:lipschitz_continuity,assumption:diameter_parameter_space,assumption:subgaussianity_stochastic_estimates,assumption:subgaussianity_stochastic_gradients,assumption:uniform_cdf_bound} hold.
    Consider running \cref{alg:switching_gd_softmax_fedpartial} with
    global step size \(\eta=\frac{D}{L\sqrt{8K}}\), local step size \(\gamma=\frac{D}{LE\sqrt{8K}}\), 
    tolerance \(\epsilon=\epsilon' + 4 \sigma_\zeta\sqrt{\frac{2\ln (24K m/\delta) }{B_\zeta}}
    + \frac{4 \sigma}{|\ln\left(1-r\right)|n}\ln \frac{32}{\delta}\) 
    and \(\epsilon' = \frac{DL}{\sqrt{K/32}}\left[1 + 2\bar{\sigma}_g^2 (3 + \frac{8\ln\frac{8}{\delta}}{K})
    + \bar{\sigma}_g \sqrt{8\ln \frac{8}{\delta}}(1+ \frac{E}{\sqrt{6K}}) \right]\)
    and threshold \(\lambda = \frac{\epsilon}{2}\),
    softmax hyperparameter \(\alpha\)
    satisfying \(\alpha \geq \frac{2\ln m}{\epsilon'}\), 
    where \(\bar{\sigma}_g^2 := \frac{\sigma_g^2/B_g}{L^2 E}, r := \frac{m}{n}\),
    and \(\kappa := \frac{|\calS|}{K} \in(0, 1]\) is a constraint-satisfied ratio.
    Then, \cref{alg:switching_gd_softmax_fedpartial} finds a solution \(\overline{\mathbf{w}}_K\) with probability at least \(1 - \delta\) for some \(\delta \in (0, 1)\) such that:
    \[
        F(\overline{\mathbf{w}}_K) - F(\mathbf{w}^*) \leq \epsilon,\quad
        G(\overline{\mathbf{w}}_K) \leq \epsilon + \frac{4 \sigma \ln \frac{1}{2\kappa}}{|\ln\left(1-r\right)|n}.
    \]
\end{main theorem}
\end{myframe}

\begin{remark}
Compared to full participation, 
partial participation adds a sampling error of $\mathcal{O}(\frac{\sigma}{|\ln(1-r)|n})$, 
where $r := m/n$ is the participation ratio. 
As $r \to 1$, $|\ln(1-r)| \to \infty$, 
causing this term to vanish and recovering full-participation guarantees. 
For small $r$, 
then $|\ln(1-r)| \approx r$ implies $|\ln(1-r)|n \approx m$, 
yielding a sampling error of 
$\mathcal{O}(\sigma/m)$.
Our method requires transmitting only $\mathcal{O}(1)$ scalar function estimates per client, not $\mathcal{O}(d)$ model vectors, easily piggybacking on standard updates. In practical partial-participation settings, the masked softmax evaluates only over the $m$ selected clients ($m \leq n$), reducing per-round arithmetic cost to $\mathcal{O}(m)$. Consequently, the scalability factor is the statistical tradeoff between participation rate $r:=m/n$ and worst-client coverage, explicitly bounded by our theoretical sampling error $\propto\frac{1}{|\log(1-r)|n}$.
\end{remark}

\begin{summarybox}\label{box:err_decompose}
\textbf{Error Decomposition}

The optimality gap / feasibility tolerance $\epsilon$ is composed of three distinct error terms:
\begin{equation*}
    \epsilon \asymp \underbrace{\frac{DL}{\sqrt{K}}(1+ \bar{\sigma}_g^2)}_{\text{optimization error } \epsilon'} + \underbrace{\vphantom{\frac{DL}{\sqrt{K}}}\frac{\sigma_\zeta}{\sqrt{B_\zeta}}}_{\text{estimation error}} + \underbrace{\vphantom{\frac{DL}{\sqrt{K}}}\frac{\sigma}{|\ln(1-r)|n}}_{\text{sampling error}}
\end{equation*}
where $r := m/n$ is the participation ratio, and the remaining parameters are defined as:
\begin{itemize}
    \item $\bar{\sigma}_g^2 := \frac{\sigma_g^2/B_g}{L^2 E}$ is \textbf{effective variance} in stochastic gradient approximation during optimization process, scaled by batch size $B_g$, number of local updates $E$, and $L$-Lipschitz constant.
    \item $\sigma_\zeta$ is associated with the \textbf{estimation} of expectation function values using a finite batch of $B_\zeta$ samples.
    \item $\sigma$ quantifies the client \textbf{sampling noise} arising from the heterogeneity of function values across the population of $n$ clients.
\end{itemize}

This bound is achieved with a smaller softmax hyperparameter $\alpha \gtrsim \frac{\ln m}{\epsilon'}$, compared to $\frac{\ln n}{\epsilon'}$ required in the full participation case ($m=n$).
\end{summarybox}

\section{Experiments}\label{sec:experiments}
In this section, we evaluate our algorithm on classical stochastic constrained optimization problems across three datasets. We begin with the classical convex setting of Neyman-Pearson (NP) classification on the breast cancer dataset \citep{breast_cancer_wisconsin_dataset}. We then extend our numerical analysis to the non-convex setting of fair classification using deep neural networks. 
To demonstrate concrete use cases and broader evaluation beyond tabular datasets, we introduce a federated safe reinforcement learning experiment using a heterogeneous Constrained Markov Decision Process (CMDP) CartPole environment.
Detailed experimental setups and hyperparameters are presented in Appendix~\ref{sup:exp},
and the code is available at \url{https://github.com/sangbinM/SoftmaxSGM}.

\textbf{Neyman Pearson Classification}
NP Classification involves a constrained optimization problem where the objective is to minimize the empirical loss on the majority class while bounding the minority class loss below a specific threshold. Translating this to the formulation in \cref{eq:opt_constrained}, we define the objective and constraint for client $i$ as $f_i(\mathbf{w}):=\frac{1}{m_{i,0}}\sum_{x\in\mathcal{D}_i^{(0)}}\phi(\mathbf{w};(x,0))$ and $g_i(\mathbf{w}):=\frac{1}{m_{i,1}}\sum_{x\in\mathcal{D}_i^{(1)}}\phi(\mathbf{w};(x,1))$, respectively. Here, $\mathcal{D}_i^{(0)}$ and $\mathcal{D}_i^{(1)}$ denote the local datasets for class-0 and class-1, with respective sizes $m_{i,0}$ and $m_{i,1}$, and $\phi$ represents the binary logistic loss. Across experiments, solid lines represent mean results across five seeds; shaded regions indicate variance.
\begin{figure}[!t]
    \centering
    \includegraphics[width=\linewidth]{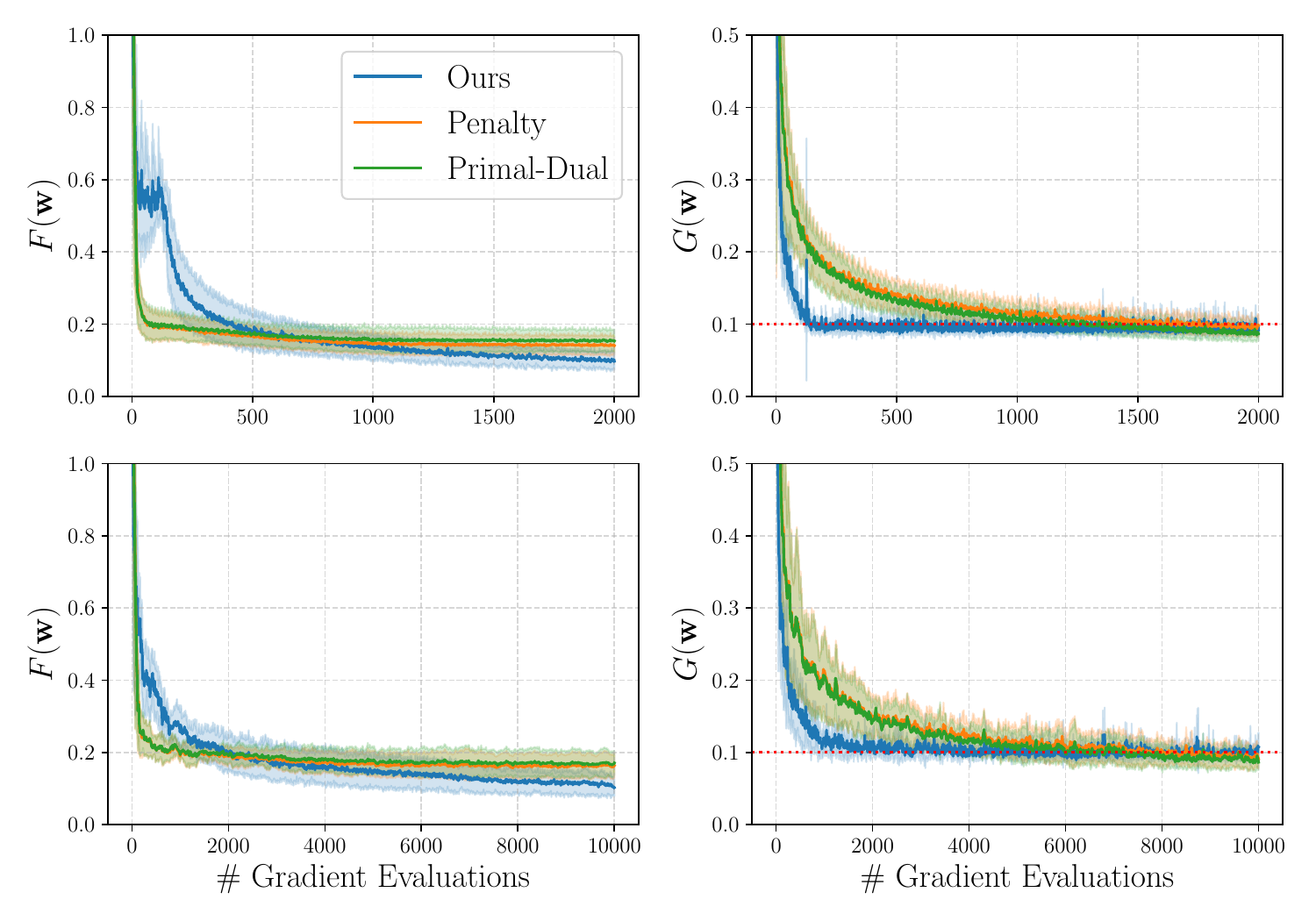}
    \caption{\textbf{NP classification.} Objective $F(\mathbf{w}_k)$ and constraint $G(\mathbf{w}_k)$ vs. gradient evaluations. Comparisons against penalty and primal-dual baselines under full participation ($E=1, m=n$; top) and partial participation ($E=5, \frac{m}{n}=0.5$; bottom). Red dashed line: tolerance ($\epsilon$).}
    \label{fig:NP_classification_main}
\end{figure}
As demonstrated in \Cref{fig:NP_classification_main}, our algorithm rapidly achieves constraint feasibility ($G(\mathbf{w}) \le \epsilon$) while consistently minimizing the worst-case objective $F(\mathbf{w})$ in both settings. Furthermore, compared to the penalty-based and primal-dual baselines, our approach secures a lower objective value for a comparable level of constraint satisfaction.
\begin{figure}[!t]
    \centering
    \includegraphics[width=\linewidth]{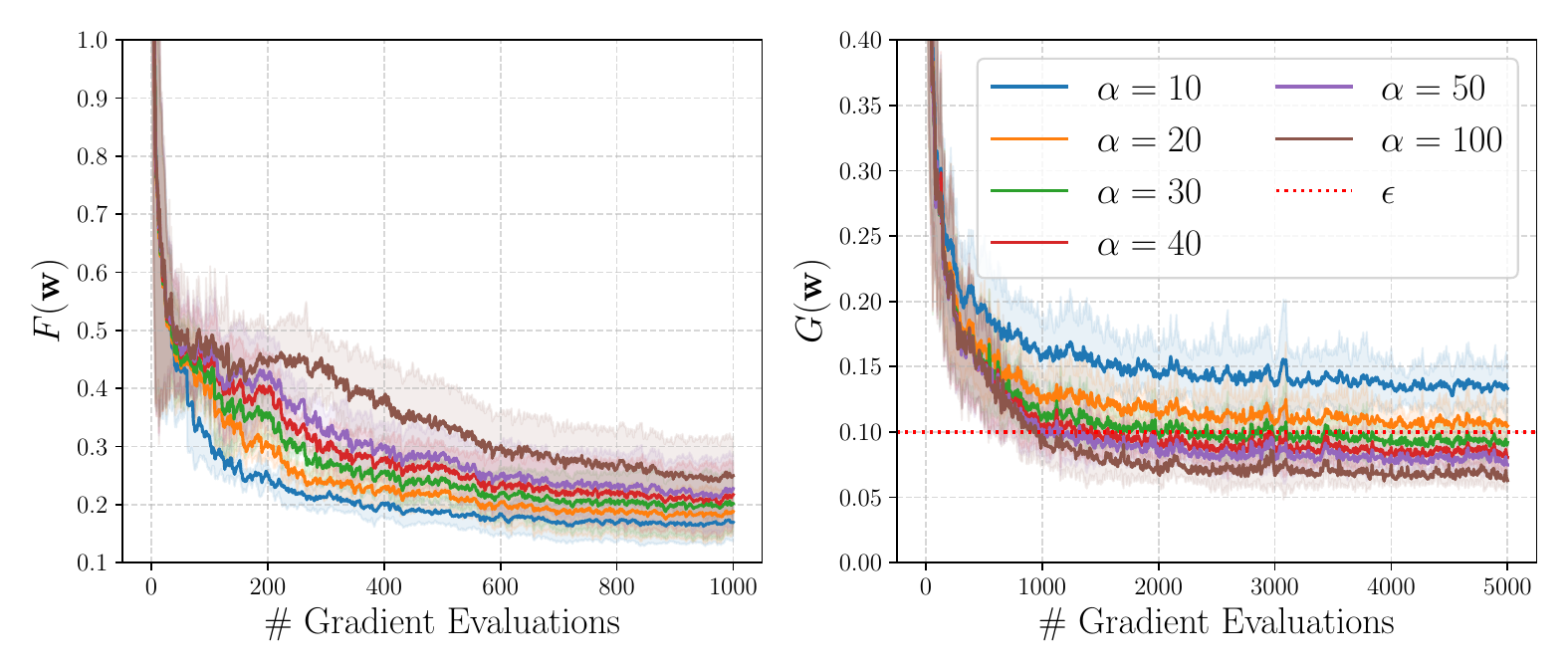}
    \caption{$\alpha$-\textbf{sensitivity.} Impact of temperature $\alpha$. High $\alpha$ approximates the hard $\max$ operator, while low $\alpha$ smooths the objective toward a simple average.}
    \label{fig:alpha_sensitivity}
\end{figure}
We further validate the theoretical efficacy of our approach by varying the softmax parameter ($\alpha$), as illustrated in \Cref{fig:alpha_sensitivity}. As $\alpha\rightarrow0$, the softmax approximation relaxes into an average over the clients' local values; conversely, as $\alpha\rightarrow\infty$, it recovers the discrete maximum over clients. Consequently, with a lower $\alpha$, the algorithm assigns more uniform weights across clients, easing the satisfaction of the smoothed constraint and prioritizing objective minimization. However, this uniformity inherently fails to enforce strict feasibility with respect to the true worst-case constraint.

\textbf{Fair Classification.}
We formulate the fair classification task as the minimization of the binary cross-entropy (BCE) loss subject to a demographic parity constraint. Mapping this to the constrained minimax formulation in \cref{eq:opt_constrained}, each client evaluates the local objective and constraint defined respectively as $f_i(\mathbf{w}):=\frac{1}{m_i}\sum_{(x,y)\in\mathcal{D}_i}\ell_\text{BCE}(\pi(x;\mathbf{w}),y)$, $g_i(\mathbf{w}):=\left|\frac{1}{m_{i,p}}\sum_{x\in\mathcal{D}_{i,p}}\pi(x;\mathbf{w})-\frac{1}{m_{i,u}}\sum_{x\in\mathcal{D}_{i,u}}\pi(x;\mathbf{w})\right|$, where $\pi(x;\mathbf{w})$ denotes the model's positive prediction probability. The sets $\mathcal{D}_{i,p}$ and $\mathcal{D}_{i,u}$ represent the protected and unprotected subgroups on client $i$, with corresponding cardinalities $m_{i,p}$ and $m_{i,u}$. For this task, we employ a deep neural network, which renders the optimization landscape highly non-convex and non-smooth. We conduct experiments using the Adult income dataset \citep{kohavi1996adult}.
\begin{figure}[!t]
    \centering
    \includegraphics[width=\linewidth]{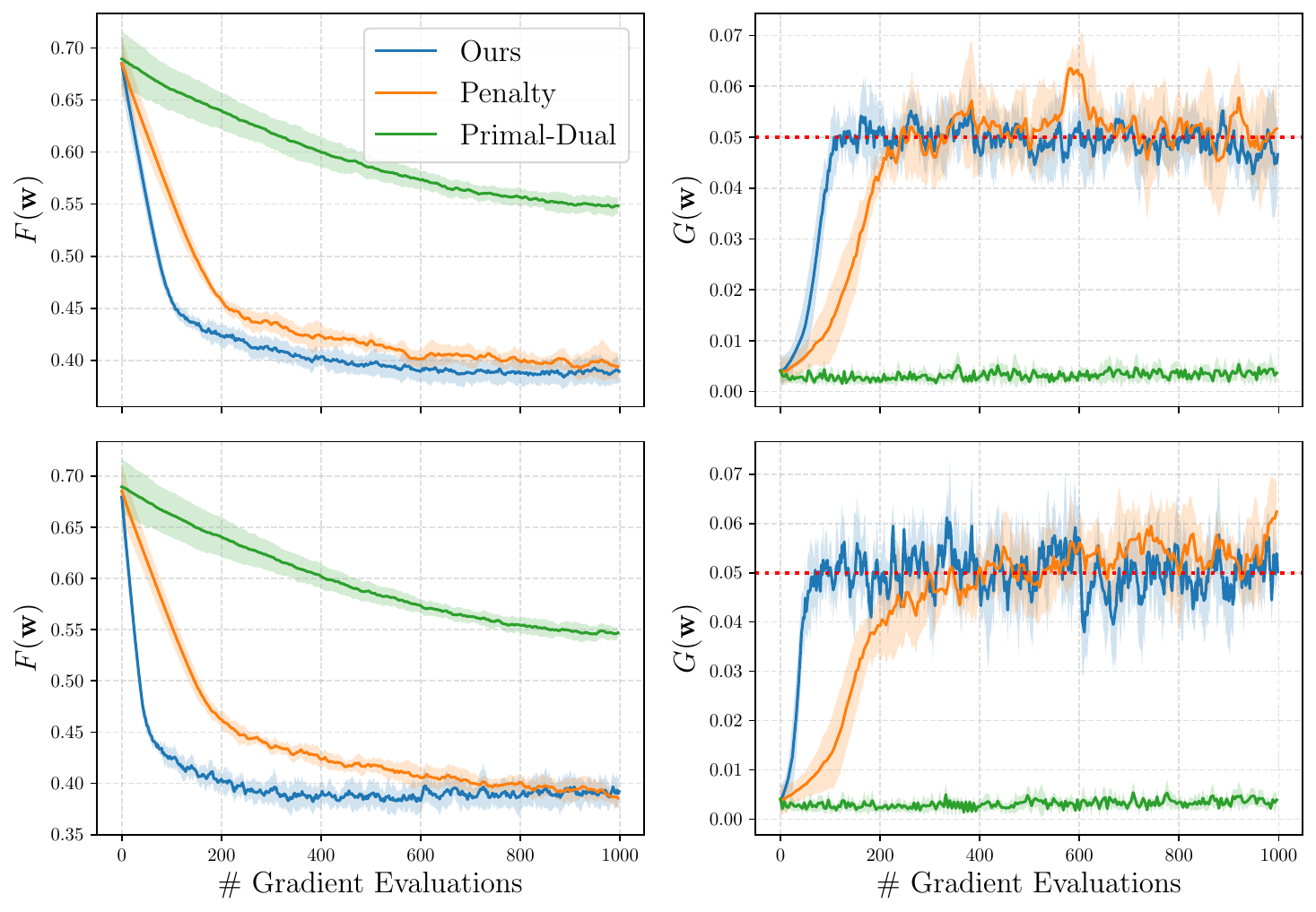}
    \caption{\textbf{Fair classification.} 
    Comparisons against penalty and primal-dual baselines. 
    Top: full participation ($E=1, m=n$). 
    Bottom: partial participation ($E=2, \frac{m}{n}=0.5$).}
    \label{fig:fair_classification}
\end{figure}
We compare our method against the penalty-based and primal-dual baselines. As illustrated in \Cref{fig:fair_classification}, our algorithm demonstrates accelerated convergence with respect to cumulative gradient evaluations. Furthermore, while penalty-based and primal-dual methods require meticulous tuning of the penalty parameter and dual step size to ensure stability and feasibility, our approach achieves highly competitive performance using a static, default value of $\alpha=1$.

\textbf{Federated Safe RL (Constrained MDP).}
We build on TRPO~\citep{schulman2015trust} and compare our \textit{Softmax-Weighted Switching Gradient} (Softmax SGM) method against Parallel CRPO~\citep{xu2021crpo}, which solves the average-case constrained problem. Softmax SGM controls conservatism via the softmax parameter $\alpha$; a larger $\alpha$ strictly enforces robust, worst-case optimization.
As shown in \Cref{fig:federated_safe_rl}, Softmax SGM outperforms Parallel CRPO in return maximization and constraint satisfaction. By targeting the maximum value problem, our approach ensures the most vulnerable clients remain within their safety envelopes.

\begin{figure}[!t]
    \centering
    \includegraphics[width=\linewidth]{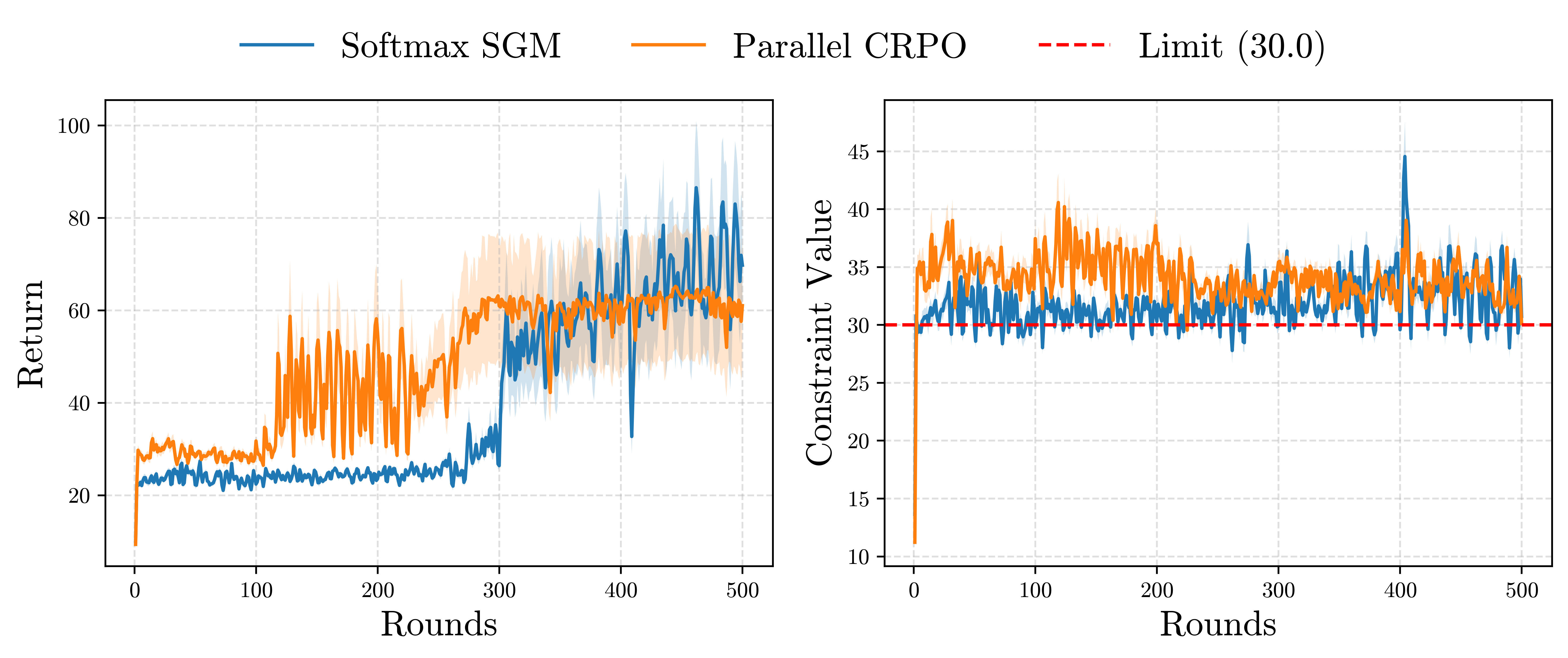}
    \caption{\textbf{Federated safe RL.} Worst-case return and constraint violations for Softmax SGM vs. Parallel CRPO.}
    \label{fig:federated_safe_rl}
\end{figure}

\section{Conclusion}\label{sec:conclusion}
In this paper, we introduce a novel primal-only first-order algorithm for solving constrained stochastic minimax optimization problems in federated environments. We theoretically establish that our method achieves the standard convergence rates without relying on explicit dual variables or strict functional boundedness assumptions. Our unified error decomposition successfully decouples optimization dynamics, stochastic estimation variance, and client sampling noise, offering guidelines for hyperparameter selection. Empirical results across Neyman-Pearson, fair classification, and federated safe reinforcement learning confirm the method's stability and practical edge. Future work may explore incorporating momentum-based variance reduction and extending this framework to weakly convex objectives~\citep{huang2023oracle}.


\begin{acknowledgements} 

This work was supported in part by NSF DMS-2502560 and CNS-2313109. The authors would like to thank the anonymous reviewers for their valuable comments and advice.
\end{acknowledgements}
\bibliography{ref_softmax_switchgd}

\onecolumn 
\newpage

\begin{center}
    \rule{\textwidth}{3pt} 
    \vspace{0.05cm} 

    {\Large \bfseries 
    Supplementary Materials: \\
    First-Order Softmax Weighted Switching Gradient Method for \\ 
    Distributed Stochastic Minimax Optimization with Stochastic Constraints \par}

    \vspace{0.3cm} 
    \rule{\textwidth}{1.2pt} 
    \vspace{0.4cm} 
\end{center}

\appendix
\part{}
\parttoc

\vspace{0.5cm}
\section*{Appendices}
We organize the Appendices as follows:
\begin{itemize}
    \item Appendix~\ref{sup:algo}: We provide foundational baseline algorithms that establish the core switching logic under full participation ($m=n$), covering both the single local update ($E=1$) and multiple local update ($E \ge 1$) regimes.
    \item Appendix~\ref{sup:lemma}: We present the technical lemmas required to establish the convergence results for our proposed framework.
    \item Appendix~\ref{sup:switch}: We provide the proof of \Cref{theorem:convergence_gd_softmax_lipschitz}, establishing the convergence guarantee for the foundational switching strategy (\Cref{alg:switching_gd_softmax}) with $E=1$ and full participation.
    \item Appendix~\ref{sup:fedfull}: We provide the proof of \Cref{theorem:convergence_gd_softmax_fedfull_lipschitz} for the convergence of \Cref{alg:switching_gd_softmax_fedfull} under the federated setting with multiple local updates ($E \ge 1$) and full participation ($m=n$).
    \item Appendix~\ref{sup:fedpart}: We provide the proof of \Cref{theorem:convergence_gd_softmax_fedpartial_lipschitz}, which characterizes the convergence of our primary algorithm, \Cref{alg:switching_gd_softmax_fedpartial}, in the general case of partial participation ($m \le n$).
    \item Appendix~\ref{sup:general}: We discuss the limitations of assumptions, provide a general upper bound for the sampling error under the relaxed Assumption~\ref{assumption:uniform_cdf_bound}, and elaborate on the threshold selection for the constraint criterion.
    \item Appendix~\ref{sup:tradeoff}: We provide a simple example to illustrate the trade-off between the adaptivity and the stability for the selection of the softmax hyperparameter $\alpha$.
    \item Appendix~\ref{sup:exp}: We provide the details of experimental settings and additional empirical results.
    \item Appendix~\ref{sup:related}: We discuss related work on the problems we are studying in this work.
\end{itemize}

\newpage
\section{Algorithms}\label{sup:algo}




\begin{multicols}{2}
\begin{algorithm}[H]
\caption{(\textit{Softmax-Weighted Switching Gradient})\\Full Participation ($m= n$),
(Local Update Steps \(E=1\))}
\label{alg:switching_gd_softmax}
\begin{algorithmic}[1] 
\STATE \textbf{Input:} Initial parameters $\mathbf{w}_0$, step size $\eta$, 
threshold $\lambda$, softmax hyperparameter $\alpha$
\STATE $\mathcal{S} \gets \emptyset$ \COMMENT{Initialize set of constraint-satisfied iterations}
\FOR{$k \in [K]$}
    \PreComment{Samples for Function Value Evaluation}
    \STATE $\bxi_k \equiv (\bxi_k^{(i)})_{i\in\calI},\bxi_k^{(i)} \equiv ([\bxi_k^{(i)}]_s)_{s=1}^{B_\zeta} \stackrel{\text{i.i.d.}}{\sim} \calD_i, \forall i \in \mathcal{I}$ 
    \PreComment{Samples for Gradient Evaluation}
    \STATE $\bzeta_k \equiv (\bzeta_k^{(i)})_{i\in\calI},\bzeta_k^{(i)} \equiv ([\bzeta_k^{(i)}]_s)_{s=1}^{B_g} \stackrel{\text{i.i.d.}}{\sim} \calD_i, \forall i \in \mathcal{I}$ 
    
    \STATE $\bp_k \gets \softmax(\alpha \bff(\mathbf{w}_k, \bxi_k))$ \COMMENT{Softmax weights}
    \STATE $\bq_k \gets \softmax(\alpha \bg(\mathbf{w}_k, \bxi_k))$ \COMMENT{See \Cref{eq:approx_function_values}}
    \IF{${\color{brown}G_k(\mathbf{w}_k, \bxi_k) \equiv\langle \bq_k, \bg(\mathbf{w}_k, \bxi_k) \rangle} 
    \leq {\color{royalBlue}\lambda}$}
        \STATE $\mathcal{S} \gets \mathcal{S} \cup \{k\}$ \COMMENT{Add iteration to set}
        \STATE $\bu_k \gets (\nabla \bff(\mathbf{w}_k, \bzeta_k))^\top \bp_k$ \COMMENT{Update direction}
    \ELSE
        \STATE $\bu_k \gets (\nabla \bg(\mathbf{w}_k, \bzeta_k))^\top \bq_k$ \COMMENT{See \Cref{eq:approx_gradients}}
    \ENDIF
    \STATE $\mathbf{w}_{k+1} \gets \mathbf{w}_k - \eta \mathbf{u}_k$ \COMMENT{Update parameters}
\ENDFOR
\STATE $\overline{\mathbf{w}}_K \gets \frac{1}{|\mathcal{S}|} \sum_{k \in \mathcal{S}} \mathbf{w}_k$ \COMMENT{Compute averaged solution}
\STATE \textbf{Output:} Optimized parameters $\overline{\mathbf{w}}_K$
\end{algorithmic}
\end{algorithm}



\begin{algorithm}[H]
\caption{(\textit{Softmax-Weighted Switching Gradient})\\Federated Learning with Full Participation ($m= n$)\\
(Local Update Steps \(E\geq1\))}
\label{alg:switching_gd_softmax_fedfull}
\begin{algorithmic}[1] 
\STATE \textbf{Input:} Initial parameters $\mathbf{w}_0$, global/local step size $(\eta,\gamma)$, 
threshold $\lambda$, softmax hyperparameter $\alpha$
\STATE $\mathcal{S} \gets \emptyset$ \COMMENT{Initialize set of constraint-satisfied iterations}
\FOR{global round $k \in [K]$}
    \STATE \textbf{broadcast} $\mathbf{w}_k$ to all clients $\calI$
    \FOR{each client $i \in \mathcal{I}$ \textbf{in parallel}}
        \PreComment{Samples for Function Value Evaluation}
        \STATE $\bxi_k^{(i)} \equiv ([\bxi_k^{(i)}]_s)_{s=1}^{B_\zeta} \stackrel{\text{i.i.d.}}{\sim} \calD_i$ 
        \STATE $[\bff(\mathbf{w}_k, \bxi_k)]_i \gets f_i(\mathbf{w}_k, \bxi_k^{(i)})$ \COMMENT{See \Cref{eq:approx_function_values}}
        \STATE $[\bg(\mathbf{w}_k, \bxi_k)]_i \gets g_i(\mathbf{w}_k, \bxi_k^{(i)})$ \COMMENT{See \Cref{eq:approx_function_values}}

    \ENDFOR
    \STATE \textbf{collect} $\bff(\mathbf{w}_k, \bxi_k), \bg(\mathbf{w}_k, \bxi_k)$ from all clients $\calI$\\
    \STATE $\bp_k \gets \softmax(\alpha \bff(\mathbf{w}_k, \bxi_k))$ 
    \STATE $\bq_k \gets \softmax(\alpha \bg(\mathbf{w}_k, \bxi_k))$ \COMMENT{Softmax weights}
    \IF{${\color{brown}G_k(\mathbf{w}_k, \bxi_k) \equiv\langle \bq_k, \bg(\mathbf{w}_k, \bxi_k) \rangle} 
    \leq {\color{royalBlue}\lambda}$}
        \STATE $\mathcal{S} \gets \mathcal{S} \cup \{k\}$ \COMMENT{Add iteration to set}
    \ENDIF
    \STATE \textbf{broadcast} $\1_k \equiv \1_{G_k(\mathbf{w}_k, \bxi_k) \leq \lambda}$ to all clients $\calI$
    \FOR{each client $i \in \mathcal{I}$ \textbf{in parallel}}
        \PreComment{\Cref{function:local_solver} to compute the update direction}
        \STATE $\bu_k^{(i)} \gets \texttt{LocalSolver}(\mathbf{w}_k, \1_k, \gamma; i)$
    \ENDFOR
    \STATE \textbf{collect} $\bu_k^{(i)}$ from each client $i\in \mathcal{I}$
    \STATE $\bu_k \gets \sum_{i \in \mathcal{I}} (\1_k [\bp_k]_i + (1-\1_k)[\bq_k]_i)\bu_k^{(i)}$
    \STATE $\mathbf{w}_{k+1} \gets \mathbf{w}_k - \eta \bu_k$ \COMMENT{Update parameters}
\ENDFOR
\STATE $\overline{\mathbf{w}}_K \gets \frac{1}{|\mathcal{S}|} \sum_{k \in \mathcal{S}} \mathbf{w}_k$ \COMMENT{Averaged solution}
\STATE \textbf{Output:} Optimized parameters $\overline{\mathbf{w}}_K$
\end{algorithmic}
\end{algorithm}

\end{multicols}

To provide a rigorous foundation for our proposed framework, we first analyze two baseline variants in the Appendix: \Cref{alg:switching_gd_softmax}, which establishes the core switching logic with single local update \(E=1\) under full participation \(m=n\), and \Cref{alg:switching_gd_softmax_fedfull}, which extends this logic to multiple local steps \(E\geq 1\) with full participation \(m=n\). These baseline analyses serve as the theoretical stepping stones for our primary theoretical contribution presented in the main text for  \Cref{alg:switching_gd_softmax_fedpartial}, which addresses the more general scenario of partial participation \(m\leq n\) with multiple local updates \(E\geq 1\).

\textbf{Softmax Weighting}

Softmax weighting is a standard technique derived from entropy-regularized optimization (see Lemma~4, p.~139 of \citet{nesterov2005smooth} and Section~4 of \citet{beck2012smoothing}) and was not first introduced by \citet{wang2023task}. Let $\mathbf{f}_k$ be the estimated function values at iteration $k$. The softmax weights $\mathbf{q}_k \in \Delta_n$ on the probability simplex are induced by the following entropy-regularized maximization:
$$
\mathbf{q}_k = \arg\max_{\mathbf{q} \in \Delta_n}\; \langle \mathbf{q}, \mathbf{f}_k \rangle + \frac{1}{\alpha} \mathsf{Ent}(\mathbf{q}).
$$
This formulation naturally interpolates between two extremes. As $\alpha \to \infty$, the entropy term $\frac{1}{\alpha}\mathsf{Ent}(\mathbf{q})$ vanishes, forcing the weights to pick the maximum component among $\mathbf{f}_k$, consistent with $\max_{1 \leq i \leq n} [\mathbf{f}_k]_i = \max_{\mathbf{q} \in \Delta_n} \langle \mathbf{q}, \mathbf{f}_k \rangle$ for worst-case analysis. Conversely, as $\alpha \to 0$, the Shannon entropy term dominates and reaches its maximum $\mathsf{Ent}(\mathbf{q}) \leq \mathsf{Ent}(\mathbf{1}/n) = \ln n$ at the uniform distribution $\mathbf{q} = \mathbf{1}/n$, representing the average case.
\newpage
\section{Lemmas Used in Proofs}\label{sup:lemma}
\begin{lemma}[Three-point Bregman Divergence Identity, see equation (4.1) on page 297 of~\citep{bubeck2015convex}]\label{lemma:three_point_bregman_divergence_identity}
    Let \(\psi\) be a convex function, then for Bregman divergence \(D_{\psi}[\bx||\bx'] := \psi(\bx) - \psi(\bx') - \langle \nabla \psi(\bx'), \bx - \bx'\rangle \geq 0\) with any \(\bx, \bx'\) in the domain of \(\psi\), 
    the following identity holds for any three points \(\bx, \bx', \hat{\bx}\) in the domain of \(\psi\):
    \[
     \langle \nabla \psi(\bx) - \nabla \psi(\hat{\bx}), \hat{\bx}-\bx' \rangle  = D_{\psi}[\bx'||\bx] - D_{\psi}[\bx'||\hat{\bx}] - D_{\psi}[\hat{\bx}||\bx] 
    \]
\end{lemma}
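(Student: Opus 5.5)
The identity is purely algebraic, so I would prove it by expanding the three Bregman divergences on the right-hand side from the definition and checking that everything cancels except the claimed inner product. Convexity of \(\psi\) is used only for the side remark \(D_{\psi}\geq 0\), via the first-order characterization \(\psi(\bx)\geq \psi(\bx')+\langle \nabla\psi(\bx'),\bx-\bx'\rangle\). The identity itself needs only that \(\psi\) is differentiable at \(\bx\) and \(\hat{\bx}\), which I would assume so that \(\nabla\psi\) is well defined there.

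\textbf{Step 1: expand.} From the definition,
\begin{align*}
D_{\psi}[\bx'||\bx] &= \psi(\bx')-\psi(\bx)-\langle \nabla\psi(\bx),\bx'-\bx\rangle,\\
D_{\psi}[\bx'||\hat{\bx}] &= \psi(\bx')-\psi(\hat{\bx})-\langle \nabla\psi(\hat{\bx}),\bx'-\hat{\bx}\rangle,\\
D_{\psi}[\hat{\bx}||\bx] &= \psi(\hat{\bx})-\psi(\bx)-\langle \nabla\psi(\bx),\hat{\bx}-\bx\rangle.
\end{align*}

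\textbf{Step 2: combine.} Form the first line minus the second minus the third. The function values cancel in pairs: \(\psi(\bx')\) against \(-\psi(\bx')\), \(-\psi(\bx)\) against \(+\psi(\bx)\), and \(+\psi(\hat{\bx})\) against \(-\psi(\hat{\bx})\).

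\textbf{Step 3: collect the linear terms.} What remains is
\[
-\langle \nabla\psi(\bx),\bx'-\bx\rangle+\langle \nabla\psi(\bx),\hat{\bx}-\bx\rangle+\langle \nabla\psi(\hat{\bx}),\bx'-\hat{\bx}\rangle .
\]
The two terms involving \(\nabla\psi(\bx)\) combine by linearity into \(\langle \nabla\psi(\bx),\hat{\bx}-\bx'\rangle\). The last term equals \(-\langle \nabla\psi(\hat{\bx}),\hat{\bx}-\bx'\rangle\). Their sum is \(\langle \nabla\psi(\bx)-\nabla\psi(\hat{\bx}),\hat{\bx}-\bx'\rangle\), which is the left-hand side.

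There is no genuine obstacle here. The only point that needs care is the sign bookkeeping in Step 3, in particular keeping the argument orders \(\bx'-\bx\) versus \(\hat{\bx}-\bx'\) straight. I would therefore carry out the regrouping explicitly rather than by inspection.
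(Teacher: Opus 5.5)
Your expansion is correct, including the sign bookkeeping in Step 3, and it is the standard verification behind the cited identity; the paper gives no proof of its own beyond the reference to Bubeck's equation (4.1). One small sharpening: you do not need differentiability, because the computation is purely algebraic in the values \(\psi(\cdot)\) and the vectors \(\nabla\psi(\cdot)\), so it holds verbatim for any fixed subgradient selection \(\nabla\psi(\bx)\in\partial\psi(\bx)\). That is how the paper applies it to the possibly nonsmooth \(f_i, g_i\), and \(D_\psi\geq 0\) then follows from the subgradient inequality.
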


\begin{lemma}[Polarization Identity]\label{lemma:polarization}
    The update direction \(\mathbf{u}_k\) and the parameters \(\mathbf{w}_k, \mathbf{w}_{k+1}\) in the update rule \(\bw_{k+1} = \bw_k - \eta \mathbf{u}_k\) with step size \(\eta>0\) satisfies the following identity:
    \begin{equation*}
        \langle \mathbf{u}_k, \mathbf{w}_{k} - \mathbf{w}^\ast \rangle
         = \frac{1}{2\eta} \left( \eta^2\|\bu_{k}\|^2 + \|\mathbf{w}_k - \mathbf{w}^\ast\|^2 - \|\mathbf{w}_{k+1} - \mathbf{w}^\ast\|^2 \right) 
    \end{equation*}
\end{lemma}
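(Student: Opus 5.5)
The polarization identity is a pure algebraic expansion of a squared norm. My plan is to substitute the update rule and expand.

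From $\bw_{k+1} = \bw_k - \eta\bu_k$ I get $\bw_{k+1}-\bw^\ast = (\bw_k-\bw^\ast) - \eta\bu_k$. Expanding the squared Euclidean norm gives
\[
\|\bw_{k+1}-\bw^\ast\|^2 = \|\bw_k-\bw^\ast\|^2 - 2\eta\langle \bu_k, \bw_k-\bw^\ast\rangle + \eta^2\|\bu_k\|^2 .
\]
Next I isolate the inner product term:
\[
2\eta\langle \bu_k,\bw_k-\bw^\ast\rangle = \eta^2\|\bu_k\|^2 + \|\bw_k-\bw^\ast\|^2 - \|\bw_{k+1}-\bw^\ast\|^2 .
\]
Since $\eta>0$, dividing both sides by $2\eta$ yields exactly the claimed identity.

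Alternatively, one could invoke \Cref{lemma:three_point_bregman_divergence_identity} with $\psi=\frac12\|\cdot\|^2$. In that case $D_\psi[\bx\|\bx'] = \frac12\|\bx-\bx'\|^2$ and $\nabla\psi$ is the identity. Taking $\bx=\bw_k$, $\hat\bx=\bw_{k+1}$, $\bx'=\bw^\ast$ gives $\langle \eta\bu_k, \bw_{k+1}-\bw^\ast\rangle$ on the left-hand side. Adding $\eta^2\|\bu_k\|^2$ converts $\bw_{k+1}$ to $\bw_k$ inside the inner product, and the result is the same identity. I would use the direct expansion, since it is shorter.

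There is no real obstacle. The only points requiring care are sign bookkeeping and the fact that the identity holds deterministically for any realization of $\bu_k$. No projection onto $\Theta$ is involved in the stated update, so no nonexpansiveness argument is needed.
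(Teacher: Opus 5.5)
Your proof is correct: expanding $\|\bw_{k+1}-\bw^\ast\|^2=\|(\bw_k-\bw^\ast)-\eta\bu_k\|^2$ and dividing by $2\eta>0$ gives the identity exactly, and your alternative Bregman route with $\psi=\frac{1}{2}\|\cdot\|^2$ also checks out. The paper states this lemma without a proof, implicitly as exactly this elementary expansion, so your direct argument is the intended one.
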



\begin{lemma}[Properties of Weighted Functions]\label{lemma:properties_of_weighted_functions}
    \(F(\bw)\), \(G(\bw)\) and \(F(\bw; \calI_k), G(\bw, \calI_k)\) are:
    \begin{itemize}
        \item Convex on \(\Theta\), if all components of \(\bff(\bw)\) and \(\bg(\bw)\) are convex on \(\Theta\).
        \item \(L\)-Lipschitz continuous on \(\Theta\), if all components of \(\bff(\bw)\) and \(\bg(\bw)\) are \(L\)-Lipschitz continuous on \(\Theta\).
    \end{itemize}
\end{lemma}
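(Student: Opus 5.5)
We prove the two bullets of \Cref{lemma:properties_of_weighted_functions} directly from the fact that $F(\bw)=\max_{i\in\calI} f_i(\bw)$, $G(\bw)=\max_{i\in\calI} g_i(\bw)$, and their subset versions $F(\bw;\calI_k)=\max_{i\in\calI_k} f_i(\bw)$, $G(\bw;\calI_k)=\max_{i\in\calI_k} g_i(\bw)$ are pointwise maxima of finitely many functions. We therefore argue once for a generic $h(\bw)=\max_{i\in A} h_i(\bw)$ with $A\subseteq\calI$ nonempty and each $h_i\in\{f_i\}$ or $\{g_i\}$. Each of the four functions is then a special case, with $A=\calI$ or $A=\calI_k$, the latter nonempty since $m\ge 1$.

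\textbf{Convexity.} Fix $\bw_1,\bw_2\in\Theta$ and $t\in[0,1]$, so that $t\bw_1+(1-t)\bw_2\in\Theta$ by convexity of $\Theta$. For each $i\in A$, \Cref{assumption:convexity} gives
\[
h_i(t\bw_1+(1-t)\bw_2)\le t h_i(\bw_1)+(1-t)h_i(\bw_2)\le t\,h(\bw_1)+(1-t)\,h(\bw_2).
\]
Taking the maximum over $i\in A$ on the left yields $h(t\bw_1+(1-t)\bw_2)\le t\,h(\bw_1)+(1-t)\,h(\bw_2)$, so $h$ is convex. Equivalently, the epigraph of $h$ is the intersection of the convex epigraphs of the $h_i$.

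\textbf{Lipschitz continuity.} Let $i^\star\in\arg\max_{i\in A} h_i(\bw_1)$, which exists since $A$ is finite. Using $h(\bw_2)\ge h_{i^\star}(\bw_2)$ and \Cref{assumption:lipschitz_continuity},
\[
h(\bw_1)-h(\bw_2)\le h_{i^\star}(\bw_1)-h_{i^\star}(\bw_2)\le L\|\bw_1-\bw_2\|.
\]
Exchanging the roles of $\bw_1$ and $\bw_2$ gives the reverse inequality, hence $|h(\bw_1)-h(\bw_2)|\le L\|\bw_1-\bw_2\|$.

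The only point needing care is that $A$ is finite and nonempty, so the maximum is attained and finite. Nonemptiness holds for $\calI$ and for every sampled $\calI_k$ because $|\calI_k|=m\ge 1$.
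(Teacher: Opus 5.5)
Your proof is correct. The paper states this lemma without proof, implicitly relying on the standard fact that a pointwise maximum of finitely many convex (respectively $L$-Lipschitz) functions over a nonempty index set is convex (respectively $L$-Lipschitz), and your two arguments are exactly that fact, including the needed checks that $\Theta$ is convex (given in the problem setup) and that $\calI_k$ is nonempty because $m\ge 1$.
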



\begin{lemma}[Properties of Softmax Mean]\label{lemma:softmax_mean}
    Let the softmax mean be defined as
    \[
    m(\bx, \alpha) := \langle \softmax(\alpha \bx), \bx \rangle
    = \frac{\bx^\top \exp(\alpha \bx)}{\bone^\top \exp(\alpha \bx)}
    \]
    where \(\bx=([\bx]_i)_{i\in\calI} = (x_1, \ldots,x_{n}) \in \mathbb{R}^{n}\) and \(\alpha \geq 0\).
    The softmax mean \(m(\bx, \alpha)\) with hyperparameter \(\alpha\) satisfies:
    \begin{eqnarray*}
        m(\bx + C\bone, \alpha) &=&  m(\bx, \alpha) + C, \quad \forall C\in \mathbb{R}\\
        m(C \bx, \alpha) &=& C \cdot m(\bx, C \alpha) , \quad \forall C \in \mathbb{R}_{\geq 0}\\
        m(\bx, \alpha') &\geq& m(\bx, \alpha),\quad \text{for } \alpha' \geq \alpha
    \end{eqnarray*}
    Moreover, the following inequality holds:
    \[
    0 \leq \max_{i\in \calI} [\bx]_i - m(\bx, \alpha)
    = \lim_{\alpha \to \infty} m(\bx, \alpha) - m(\bx, \alpha) < k, 
    \quad \text{for } \alpha \geq \underline{\alpha} := \frac{\ln n}{k}, k>0
    \]
\end{lemma}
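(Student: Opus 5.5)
The plan is to prove the four algebraic properties directly from the definition $m(\bx,\alpha)=\sum_i x_i e^{\alpha x_i}/\sum_j e^{\alpha x_j}$, and then obtain the approximation bound from a variational (entropy) characterization of the softmax.

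\textbf{Shift and scaling.} For the shift property, note that $\softmax(\alpha(\bx+C\bone))=\softmax(\alpha\bx)$, because the common factor $e^{\alpha C}$ cancels between numerator and denominator. Hence
\[
m(\bx+C\bone,\alpha)=\langle \softmax(\alpha\bx),\bx\rangle + C\langle \softmax(\alpha\bx),\bone\rangle = m(\bx,\alpha)+C .
\]
For scaling with $C\ge 0$, we have $\softmax(\alpha C\bx)=\softmax((C\alpha)\bx)$. Pairing this with $C\bx$ gives $C\,m(\bx,C\alpha)$.

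\textbf{Monotonicity in $\alpha$.} Write $\bq_\alpha=\softmax(\alpha\bx)$ and differentiate:
\[
\partial_\alpha m(\bx,\alpha)=\sum_i [\bq_\alpha]_i x_i^2-\Big(\sum_i [\bq_\alpha]_i x_i\Big)^2=\Var_{\bq_\alpha}(X)\ge 0 .
\]
So $m$ is nondecreasing in $\alpha$. Its limit as $\alpha\to\infty$ is $\max_i x_i$, since $\bq_\alpha$ concentrates uniformly on the argmax set. Together with monotonicity, this gives $0\le \max_i x_i - m(\bx,\alpha)$ and the stated equality with the limit.

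\textbf{Upper bound (main step).} This is the only nontrivial part. I would use the log-sum-exp function $\mathrm{lse}_\alpha(\bx)=\frac1\alpha\ln\sum_i e^{\alpha x_i}$ together with the Gibbs variational identity
\[
\mathrm{lse}_\alpha(\bx)=\langle \bq_\alpha,\bx\rangle+\tfrac1\alpha \Ent(\bq_\alpha),
\]
which is exactly the entropy-regularized problem recalled in the appendix. Two standard facts then finish the argument:
\begin{itemize}
\item $\mathrm{lse}_\alpha(\bx)\ge \max_i x_i$, since one term of the sum is $e^{\alpha\max_i x_i}$.
\item $\Ent(\bq_\alpha)\le \ln n$.
\end{itemize}
Combining them,
\[
\max_i x_i - m(\bx,\alpha)\le \mathrm{lse}_\alpha(\bx)-m(\bx,\alpha)=\tfrac1\alpha\Ent(\bq_\alpha)\le \tfrac{\ln n}{\alpha}.
\]
For $\alpha\ge \underline\alpha=\ln n/k$ this yields $\le k$.

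\textbf{Strictness.} The inequality is strict unless both bounds are tight simultaneously. Equality $\Ent(\bq_\alpha)=\ln n$ requires $\bq_\alpha$ uniform, i.e.\ all $x_i$ equal (for $\alpha>0$). In that case the gap is $0<k$. Otherwise $\Ent(\bq_\alpha)<\ln n$ and the bound is already strict.

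The boundary cases $n=1$ and $\alpha=0$ need a separate remark. If $n=1$ the gap is identically $0$. If $\alpha=0$ is only allowed when $\ln n=0$, the same observation applies.
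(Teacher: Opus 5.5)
Your proposal is correct and follows essentially the same route as the paper. The variance formula for $\partial_\alpha m(\bx,\alpha)$ gives monotonicity. Your Gibbs identity is exactly the paper's $\Ent(\bq_\alpha)=\ln\sum_i e^{\alpha x_i}-\alpha\, m(\bx,\alpha)$, and combining it with $\Ent(\bq_\alpha)\le\ln n$ and $\ln\sum_i e^{\alpha x_i}\ge\alpha\max_i x_i$ gives $\max_i x_i-m(\bx,\alpha)\le\ln n/\alpha\le k$.

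The only difference is where strictness comes from, and both versions are valid. The paper uses that $\ln\sum_i e^{\alpha x_i}>\alpha\max_i x_i$, which holds automatically once $n>1$ because the other exponentials are positive. You instead use that $\Ent(\bq_\alpha)<\ln n$ unless all $x_i$ coincide.
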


\begin{lemma}[Properties of Masked Softmax Mean]\label{lemma:masked_softmax_mean}
    Let the masked softmax mean with a nonempty subset \(\calI' \subseteq \calI\) be
    \[
    m_{\calI'}(\bx, \alpha) := \langle \softmax_{\calI'}(\alpha \bx), \bx \rangle
    = \frac{\bx^\top (\bone_{\calI'}\odot \exp(\alpha \bx))}{\bone_{\calI'}^\top \exp(\alpha \bx)}
    = \frac{\bx_{\calI'}^\top \exp(\alpha \bx_{\calI'})}{\bone^\top \exp(\alpha \bx_{\calI'})} 
    = \langle \softmax(\alpha \bx_{\calI'}), \bx_{\calI'} \rangle = m(\bx_{\calI'}, \alpha)
    \]
    where \(\bx=([\bx]_i)_{i\in\calI} = (x_1, \ldots,x_{n}) \in \mathbb{R}^{n}, \bx_{\calI'} = ([\bx]_i)_{i\in\calI'} = (x_{i})_{i\in\calI'} \in \mathbb{R}^{m}\), \(\alpha \geq 0\) and the masked softmax operator \(\softmax_{\calI'}\) is defined in \cref{eq:masked_softmax}.
    The masked softmax mean \(m_{\calI'}(\bx, \alpha)\) with hyperparameter \(\alpha\) satisfies:
    \begin{eqnarray*}
        m_{\calI'}(\bx + C\bone, \alpha) &=&  m_{\calI'}(\bx, \alpha) + C, \quad \forall C\in \mathbb{R}\\
        m_{\calI'}(C \bx, \alpha) &=& C \cdot m_{\calI'}(\bx, C \alpha) , \quad \forall C \in \mathbb{R}_{\geq 0}\\
        m_{\calI'}(\bx, \alpha') &\geq& m_{\calI'}(\bx, \alpha),\quad \text{for } \alpha' \geq \alpha
    \end{eqnarray*}
    Moreover, the following inequality holds:
    \[
    0 \leq \max_{i\in \calI'} [\bx]_i - m_{\calI'}(\bx, \alpha)
    = \lim_{\alpha \to \infty} m_{\calI'}(\bx, \alpha) - m_{\calI'}(\bx, \alpha) < k, 
    \quad \text{for } \alpha \geq \underline{\alpha} := \frac{\ln m}{k}, k>0
    \]
\end{lemma}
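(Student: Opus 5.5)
The statement is \Cref{lemma:masked_softmax_mean}. I would reduce it entirely to \Cref{lemma:softmax_mean} by proving the chain of equalities in the definition first. By \cref{eq:masked_softmax}, the vector $\bone_{\calI'}\odot\exp(\alpha\bx)$ vanishes outside $\calI'$. Hence $\bx^\top(\bone_{\calI'}\odot\exp(\alpha\bx))=\sum_{i\in\calI'}x_i e^{\alpha x_i}=\bx_{\calI'}^\top\exp(\alpha\bx_{\calI'})$, and likewise $\bone_{\calI'}^\top\exp(\alpha\bx)=\bone^\top\exp(\alpha\bx_{\calI'})$. This gives $m_{\calI'}(\bx,\alpha)=m(\bx_{\calI'},\alpha)$, where the right-hand side is the unmasked softmax mean on $\mathbb{R}^{m}$ with $m=|\calI'|$; the denominator is positive since $\calI'\neq\emptyset$.

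With this identity, each property transfers by applying \Cref{lemma:softmax_mean} with $n$ replaced by $m$.
\begin{itemize}
\item \textbf{Translation and scaling.} Restriction commutes with these operations: $(\bx+C\bone)_{\calI'}=\bx_{\calI'}+C\bone$ and $(C\bx)_{\calI'}=C\bx_{\calI'}$. The translation identity and the scaling identity $m(C\bx_{\calI'},\alpha)=C\,m(\bx_{\calI'},C\alpha)$ therefore follow directly.
\item \textbf{Monotonicity in $\alpha$.} This is inherited directly from \Cref{lemma:softmax_mean}.
\item \textbf{Limit.} We have $\lim_{\alpha\to\infty}m(\bx_{\calI'},\alpha)=\max_{i\in\calI'}x_i$, because the weights concentrate on the maximizers and the mean of tied maxima equals the maximum.
\end{itemize}

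For the gap bound, applying \Cref{lemma:softmax_mean} to $\bx_{\calI'}\in\mathbb{R}^m$ gives $\max_{i\in\calI'}x_i-m(\bx_{\calI'},\alpha)<k$ whenever $\alpha\ge\ln m/k$. For completeness, I would recall why this holds, since it is the only nontrivial step. Shift so that $M=\max_{i\in\calI'}x_i$, and write the gap as $\Delta=M-m$.

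Using the entropy-regularized characterization of softmax, the weights $\bq=\softmax(\alpha\bx_{\calI'})$ maximize $\langle\bq,\bx_{\calI'}\rangle+\frac1\alpha\Ent(\bq)$ over $\Delta_m$. Comparing with the vertex at the argmax, which has zero entropy, gives
\[
m(\bx_{\calI'},\alpha)+\tfrac1\alpha\Ent(\bq)\ge M .
\]
Since $\Ent(\bq)\le\ln m$, it follows that $\Delta\le\ln m/\alpha\le k$.

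The main obstacle is making this inequality strict, and handling $m=1$. If $m=1$, the gap is $0<k$. If $m\ge2$, equality would require both of the following:
\begin{itemize}
\item $\Ent(\bq)=\ln m$, i.e.\ $\bq$ is uniform, which forces all $x_i$ equal and hence gap $0$;
\item the vertex attains the optimum, which contradicts uniqueness of the strictly concave maximizer unless $\bq$ is that vertex, which again gives gap $0$.
\end{itemize}
In every case the gap is strictly below $k$. The lower bound $0\le\Delta$ holds because a convex combination never exceeds the maximum.
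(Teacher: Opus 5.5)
Your proposal is correct and follows the paper's route. The paper gives no separate argument for this lemma beyond the identity $m_{\calI'}(\bx,\alpha)=m(\bx_{\calI'},\alpha)$ built into its statement, so every property is inherited from \Cref{lemma:softmax_mean} with $n$ replaced by $m$, exactly as you do. Your gap bound is the same entropy argument as the paper's proof of \Cref{lemma:softmax_mean}: the paper uses the identity $\Ent[\bp]=\ln\sum_{i}e^{\alpha x_i}-\alpha\, m(\bx,\alpha)$ together with the strict inequality $\ln\sum_i e^{\alpha x_i}>\alpha\max_i x_i$, while you use the Gibbs variational principle and the equality case of $\Ent\le\ln m$; your first bullet alone already gives strictness.
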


\begin{lemma}[Deviation Bound of Softmax Mean]\label{lemma:deviation_bound_softmax_mean}
    Let \(\bx=([\bx]_i)_{i\in\calI} = (x_1, \ldots,x_{n}) \in \mathbb{R}^{n}, 
    \bdelta=([\bdelta]_i)_{i\in\calI} = (\delta_1, \ldots, \delta_n) \in \mathbb{R}^{n}\) 
    and \(\alpha \geq 0\), then the deviation bound of \(\max_{i \in \calI} [\bx]_i - \langle \softmax(\alpha [\bx+\bdelta]), \bx \rangle\) satisfies:
    \[
    \max_{i \in \calI} [\bx]_i - \langle \softmax(\alpha [\bx+\bdelta]), \bx \rangle 
    \leq 2\|\bdelta\|_\infty + k, \quad \text{for } \alpha \geq \underline{\alpha} := \frac{\ln n}{k}, k>0
    \]
    \[
    m(\bx + \bdelta, \alpha) - \langle \softmax(\alpha [\bx+\bdelta]), \bx \rangle 
    \leq \|\bdelta\|_\infty.
    \]
\end{lemma}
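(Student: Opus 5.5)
The plan is to reduce both inequalities to the single observation that the softmax vector of the perturbed input lies in the probability simplex. I would then apply \Cref{lemma:softmax_mean} to the perturbed vector \(\bx+\bdelta\). Throughout, write \(\bs := \softmax(\alpha[\bx+\bdelta]) \in \Delta_n\), so that \(s_i \geq 0\) and \(\sum_{i\in\calI} s_i = 1\). By definition of the softmax mean, \(m(\bx+\bdelta,\alpha) = \langle \bs, \bx+\bdelta\rangle\).

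For the second inequality I would expand directly:
\[
m(\bx+\bdelta,\alpha) - \langle \bs, \bx\rangle = \langle \bs, \bdelta\rangle = \sum_{i\in\calI} s_i \delta_i \leq \Big(\sum_{i\in\calI} s_i\Big)\|\bdelta\|_\infty = \|\bdelta\|_\infty .
\]
This is just Hölder's inequality with the \(\ell_1\) norm of a probability vector.

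For the first inequality I would decompose the gap into three pieces:
\[
\max_{i} x_i - \langle \bs,\bx\rangle = \Big[\max_i x_i - \max_i (x_i+\delta_i)\Big] + \Big[\max_i (x_i+\delta_i) - m(\bx+\bdelta,\alpha)\Big] + \Big[m(\bx+\bdelta,\alpha) - \langle \bs,\bx\rangle\Big].
\]
I would bound each bracket separately.
\begin{itemize}
\item The first bracket is at most \(\|\bdelta\|_\infty\). If \(i^\ast\) attains \(\max_i x_i\), then \(\max_i (x_i+\delta_i) \geq x_{i^\ast}+\delta_{i^\ast} \geq x_{i^\ast} - \|\bdelta\|_\infty\).
\item The second bracket is strictly less than \(k\) whenever \(\alpha \geq \ln n / k\). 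This follows from the final statement of \Cref{lemma:softmax_mean} applied to the vector \(\bx+\bdelta\); that bound is uniform in the input vector, which is the key point.
\item The third bracket is at most \(\|\bdelta\|_\infty\) by the second inequality already proved.
\end{itemize}
Summing the three bounds gives \(2\|\bdelta\|_\infty + k\).

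There is no real obstacle here. The only step needing care is confirming that \Cref{lemma:softmax_mean} gives a bound that depends only on \(n\) and \(\alpha\), not on the entries of the input vector, so that it can be invoked on the perturbed vector \(\bx+\bdelta\). That uniformity is exactly what the lemma states. It is also what later allows \(\bdelta\) to be the stochastic estimation error without any boundedness assumption on the \(f_i\).
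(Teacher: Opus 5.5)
Your proof is correct and takes essentially the same route as the paper: it also applies \Cref{lemma:softmax_mean} to the perturbed vector \(\bx+\bdelta\), uses \(\max_i(x_i+\delta_i) \geq \max_i x_i + \min_i \delta_i\), and bounds \(\langle \softmax(\alpha[\bx+\bdelta]), \bdelta\rangle\) by \(\max_i \delta_i\), which amounts to your three-bracket decomposition. The only cosmetic difference is that the paper first combines the two perturbation terms into \(\max_i\delta_i - \min_i\delta_i\) and then bounds that by \(2\|\bdelta\|_\infty\), whereas you bound each term by \(\|\bdelta\|_\infty\) separately.
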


\begin{proof}[Proof for \cref{lemma:softmax_mean}]
    The first two properties are straightforward to verify by the definition of softmax mean.
    For the third non-decreasing property, it is due to the non-negativity of the partial derivative with respect to \(\alpha\). 
    By letting \(\bp(\alpha) := \softmax(\alpha \bx)\), then:
    \[
    \frac{\partial m(\bx, \alpha)}{\partial \alpha} = \bx^\top(\diag(\bp(\alpha)) - \bp(\alpha)\bp(\alpha)^\top)\bx =\Var_{\bp(\alpha)}[\bx] \geq 0
    \]
    The left hand side of thelast inequality follows from the definition of softmax mean.
    \[
    \max_{i\in \calI} [\bx]_i = \langle \bp(\alpha), \max_{i\in \calI}[\bx]_i \bone\rangle \geq \langle \bp(\alpha), \bx\rangle = m(\bx, \alpha)
    \]
    Let's focus on the right hand side of the last inequality. 
    Since the inequality holds when \(n=1\), we only need to prove the inequality for \(n>1\).
    Letting \(\bp(\alpha) := \softmax(\alpha \bx)\), then by applying Jensen's inequality to the concave function \(\ln (\cdot)\),
    we derive the upper bound of the entropy of \(\bp(\alpha)\):
    \[
    \Ent[\bp(\alpha)]:=
    \sum_{i\in \calI} [\bp(\alpha)]_i \ln\frac{1}{[\bp(\alpha)]_i} = \E_{\bp(\alpha)} \ln\frac{1}{\bp(\alpha)} 
    \leq \ln \E_{\bp(\alpha)} \frac{1}{\bp(\alpha)} = \ln n
    \]
    By the definition of softmax mean \(m(\bx, \alpha) = \E_{\bp(\alpha)} [\bx]\), then for the case \(n>1\), we have:
    \[
    \Ent[\bp(\alpha)] = - \E_{\bp(\alpha)}[\alpha \bx] + \ln\sum_{i\in \calI} \exp(\alpha [\bx]_i) > -\alpha\cdot m(\bx, \alpha) + \alpha \max_{i\in \calI} [\bx]_i 
    \]
    Combining these two inequalities, when \(\alpha \geq \frac{\ln n}{k}> 0\) for some \(k>0, n>1\), we have:
    \[
    \max_{i\in \calI} [\bx]_i - m(\bx, \alpha) < \frac{\ln n}{\alpha} \leq k
    \]
\end{proof}

\begin{proof}[Proof for \cref{lemma:deviation_bound_softmax_mean}]
    Let \(\bp(\alpha):= \softmax(\alpha(\bx+\bdelta))\) and \(\bz:= \bx+\bdelta\),
    by using the last inequality in \cref{lemma:softmax_mean}, we have:
    \[
    m(\bz, \alpha)=
    \langle \bp(\alpha), \bz\rangle
    \ge
    \max_{i\in\calI} [\bz]_i - \frac{\ln n}{\alpha}.
    \]
    Substituting \(\bz = \bx+\bdelta\), and using \(\max_{i\in\calI} [\bz]_i - \min_{i\in\calI} [\bdelta]_i \geq \max_{i\in\in\calI}[\bx]_i\) gives
    \[
    \langle \bp(\alpha), \bx+\bdelta\rangle
    \ge
    \max_{i\in\calI}[\bx]_i  +  \min_{i\in\calI}[\bdelta]_i
    -
    \frac{\ln n}{\alpha}
    \]
    Rearranging and using \(\langle \bp(\alpha), \bdelta\rangle - \min_{i\in\calI}[\bdelta]
    \leq \max_{i\in\calI} [\bdelta]_i - \min_{i\in\calI}[\bdelta]_i \leq 2\|\bdelta\|_\infty\)
    and \(\alpha \geq \frac{\ln n}{k}, k>0\) yields
    \[
    \max_{i\in\calI} [\bx]_i
    -
    \langle \bp(\alpha), \bx\rangle
    \leq
    2\|\bdelta\|_\infty
    +
    \frac{\ln n}{\alpha} \leq 2\|\bdelta\|_\infty + k.
    \]
    This completes the proof of the first inequality. 
    Regarding the second inequality, we again apply the definition of softmax mean and the inequality \(\langle \bp(\alpha), \bdelta \rangle \leq \max_{i\in\calI} [\bdelta]_i \leq \|\bdelta\|_\infty\) and \(\alpha \geq \frac{\ln n}{k}, k>0\) to get:
    \[
    m(\bx + \bdelta, \alpha) - \langle \bp(\alpha), \bx \rangle
    = \langle \bp(\alpha), \bdelta \rangle
    \leq \|\bdelta\|_\infty.
    \]
    This completes the proof of the second inequality.
\end{proof}

\begin{lemma}[Subgaussianity]\label{lemma:subgaussianity}
    Suppose a random variable \(X\) such that 
    \(\E[X]=0\) and \(\E[\exp(X^2)]\leq 2\), then \(Z\) is 1-subgaussian, i.e.,
    \[
    \ln\E[\exp(\lambda X)] \leq \frac{\lambda^2}{2} \quad \forall \lambda \in \mathbb{R}
    \]
\end{lemma}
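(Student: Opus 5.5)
The plan is to split according to the size of $|\lambda|$: a quadratic Young-type inequality handles large $|\lambda|$, and a moment expansion handles small $|\lambda|$. Throughout, the only information used about $X$ is $\E[X]=0$ and $\E[\exp(X^2)]\le 2$, which we expand as $\sum_{k\ge 1}\E[X^{2k}]/k!\le 1$. Two consequences will be used repeatedly. First, $\E[X^{2k}]\le k!$ for every $k\ge1$. Second, by Jensen's inequality $\exp(\E[X^2])\le \E[\exp(X^2)]\le 2$, so $\E[X^2]\le \ln 2$. The second bound is what makes the leading coefficient compatible with the target constant $\tfrac12$.

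\emph{Large $|\lambda|$.} Use the pointwise inequality $\lambda x\le \lambda^2/4+x^2$. Taking expectations gives $\E[e^{\lambda X}]\le e^{\lambda^2/4}\,\E[e^{X^2}]\le 2e^{\lambda^2/4}$. This is at most $e^{\lambda^2/2}$ as soon as $\lambda^2\ge 4\ln 2$. More generally, the family $\lambda x\le c\lambda^2+x^2/(4c)$ can be used with $c\ge 1/4$: Hölder/Jensen gives $\E[e^{X^2/(4c)}]\le 2^{1/(4c)}$, and hence
\[
\ln\E[e^{\lambda X}]\le c\lambda^2+\tfrac{\ln 2}{4c}.
\]
Optimizing over $c$ pushes the threshold down. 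The best threshold obtainable from this family is $\lambda_0^2=2\ln 2\approx 1.386$, attained at $c=1/4$ with $\ln 2/(4c)=\ln 2$. Either way the large-$\lambda$ regime is closed cleanly; I will record the threshold $\lambda_0^2\approx 1.386$.

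\emph{Small $|\lambda|$.} Expand $\E[e^{\lambda X}]=1+\sum_{j\ge2}\lambda^j\E[X^j]/j!$. The linear term vanishes because $\E X=0$. Each odd power is absorbed into its even neighbours by $|\lambda x|^{2k+1}\le \tfrac12\big(\lambda^{2k}x^{2k}+\lambda^{2k+2}x^{2k+2}\big)$. This yields
\[
\E[e^{\lambda X}]\le 1+\sum_{k\ge1} c_k\,\lambda^{2k}\,\E[X^{2k}]
\]
with explicit coefficients $c_k=\frac{1}{(2k)!}+\frac{1}{2(2k+1)!}+\frac{1}{2(2k-1)!}\mathbf{1}_{k\ge2}$. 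The $k=1$ term will be bounded using $\E[X^2]\le\ln 2$, and the terms with $k\ge2$ using $\E[X^{2k}]\le k!$. Alternatively, for the tail I will use the weighted constraint $\sum_k \E[X^{2k}]/k!\le 1$ to move mass optimally. The goal is to show the resulting series is at most $e^{\lambda^2/2}=\sum_k \lambda^{2k}/(2^kk!)$ for $\lambda^2\le\lambda_0^2$.

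The main obstacle is the intermediate range $1\lesssim\lambda^2\le \lambda_0^2$, where neither estimate has much slack. The $k=1$ coefficient is $c_1\ln 2\approx 0.58\cdot 0.69\approx 0.40<1/2$, which is fine, but the $k\ge2$ coefficients bounded via $k!$ are lossy. If the crude bound fails there, I would sharpen it in two ways. First, I would replace the absorption of odd moments by the exact extremal problem: maximize $\E[e^{\lambda X}]$ subject to the two moment constraints. This is a linear program over distributions, so it is attained on at most three atoms, reducing the check to a finite-dimensional inequality. Second, I would verify that reduced inequality on the compact interval $\lambda^2\in[1,\lambda_0^2]$ by monotonicity in $\lambda$. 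Combining the two regimes gives $\ln\E[e^{\lambda X}]\le\lambda^2/2$ for all $\lambda\in\R$, which is the statement of \cref{lemma:subgaussianity}.
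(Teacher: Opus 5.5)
There is a genuine gap: the intermediate range of $\lambda$ is never actually handled, and that range is larger than you think.

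First, the large-$|\lambda|$ threshold is miscomputed. At $c=1/4$ your bound reads $\lambda^2/4+\ln 2$, and $\lambda^2/4+\ln 2\le\lambda^2/2$ holds iff $\lambda^2\ge 4\ln 2\approx 2.77$. For $c>1/4$ the threshold $\ln 2/\bigl(4c(\tfrac12-c)\bigr)$ is only larger. The choice $c<1/4$ is unavailable, because then $\E[e^{X^2/(4c)}]$ is uncontrolled. So optimizing over $c$ gains nothing, and the threshold is $4\ln 2$, not $2\ln 2$.

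Second, your crude small-$|\lambda|$ bound, which uses $\E X^2\le\ln 2$ and $\E X^{2k}\le k!$ separately, already fails at $\lambda=1$. It gives about $1+0.404+0.258+0.034+0.003\approx 1.70>e^{1/2}\approx 1.65$, and it holds only for roughly $\lambda^2\lesssim 0.67$. For $\lambda^2\in(0.67,\,4\ln 2)$ you offer only a plan: reduce to a three-atom extremal problem and ``verify by monotonicity in $\lambda$''. That reduced problem is a nonconvex optimization over atoms and weights for each $\lambda$, and no monotone quantity is identified, so this step is not a proof.

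The paper needs no case split. Cauchy--Schwarz applied to the two series $e^{\lambda^2/2}-1=\sum_{k\ge1}(\lambda/\sqrt2)^{2k}/k!$ and $e^{X^2}-1=\sum_{k\ge1}X^{2k}/k!$ gives $(e^{\lambda^2/2}-1)(e^{X^2}-1)\ge(e^{|\lambda X|/\sqrt2}-1)^2$. A termwise comparison of power series shows $(e^{|u|/\sqrt2}-1)^2\ge e^{|u|}-|u|-1\ge e^{u}-u-1$. Hence $e^{\lambda X}\le 1+\lambda X+(e^{\lambda^2/2}-1)(e^{X^2}-1)$ pointwise. Taking expectations with $\E X=0$ and $\E e^{X^2}\le 2$ yields $\E e^{\lambda X}\le e^{\lambda^2/2}$ for every $\lambda$ at once.

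If you prefer your two-regime structure, it can be repaired by using the \emph{joint} budget $\sum_{k\ge1}\E X^{2k}/k!\le 1$ instead of separate moment bounds. For $s=\lambda^2\le 4\ln 2$ one checks $c_k\,k!\,s^k\le\frac{31}{120}s^2$ for all $k\ge2$, with equality at $k=2$. Writing $m_2=\E X^2$, this gives $\sum_{k\ge2}c_k s^k\E X^{2k}\le\frac{31}{120}s^2(1-m_2)$. The resulting bound $\frac{7}{12}s\,m_2+\frac{31}{120}s^2(1-m_2)$ is affine in $m_2\in[0,\ln 2]$. At both endpoints it is at most $s/2+s^2/8\le e^{s/2}-1$ whenever $s\le 15/4$, and this overlaps the corrected large-$|\lambda|$ regime $s\ge 4\ln 2$.
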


\begin{remark} Suppose a random vector \(\bz\) such that \(\E[\bz \mid \calF] = \vec{0}\) 
    and \(\E[\exp(\|\bz\|^2/\sigma^2)\mid \calF]\leq 2\), then \(\bz\) is \(\sigma^2\)-subgaussian, i.e.,
    \[
    \ln \E[\exp(\lambda \langle \be, \bz \rangle)\mid \calF] \leq \frac{\lambda^2 \sigma^2}{2} \quad \forall \be\in \mathbb{S}^{d-1}, \forall \lambda \in \mathbb{R}
    \]
    since by letting \(X = \|z\|/\sigma\) and using the above lemma, we have:
    \[
    \ln \E[\exp(\lambda \langle \be, \bz \rangle)\mid \calF] 
    \leq \ln \E[\exp(\lambda \|\bz\|)\mid \calF] 
    = \ln \E[\exp(\lambda \sigma \|\bz\|/\sigma)\mid \calF] 
    = \ln \E[\exp((\lambda\sigma) X)\mid \calF] \leq \frac{\lambda^2 \sigma^2}{2}
    \]
\end{remark}

\begin{lemma}[Average of Subgaussian Random Vectors]\label{lemma:average_subgaussian_random_vectors}
    Suppose random vectors \((\bz_\tau)_{\tau\in[E]}\) and a filtration \((\mathcal{F}_\tau)_{\tau\in\mathbb{Z}_{\geq0}}\)
    are such that \(\bz_\tau\) is \(\mathcal{F}_{\tau+1}\)-measurable and 
    \(\E[\bz_\tau \mid \mathcal{F}_\tau] = \vec{0}, \E[\exp(\|\bz_\tau\|^2)\mid \calF_\tau] \leq 2\) for all \(\tau\in[E]\),
    then the average of the random vectors \( \bz := \frac{1}{2\sqrt{E}} \sum_{\tau\in[E]} \bz_\tau\) 
    satisfies \(\E[\bz \mid \calF_0] = \vec{0}\) and the following subgaussian tail bound:
    \[
    \E[\exp(\|\bz\|^2)\mid \calF_0] 
    = \E\left[ \exp\left(\frac{1}{4E} \left\|\sum_{\tau\in[E]} \bz_\tau\right\|^2\right) \mid \calF_0\right] \leq 2
    \]
\end{lemma}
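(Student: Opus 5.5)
The plan has two parts: a routine mean-zero identity and a dimension-free tail bound that I then integrate.

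\textbf{Mean zero.} Each $\bz_\tau$ is $\calF_{\tau+1}$-measurable with $\E[\bz_\tau\mid\calF_\tau]=\vec 0$. Since $\calF_0\subseteq\calF_\tau$, the tower property gives $\E[\bz_\tau\mid\calF_0]=\E[\E[\bz_\tau\mid\calF_\tau]\mid\calF_0]=\vec 0$. Summing over $\tau$ gives $\E[\bz\mid\calF_0]=\vec 0$.

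\textbf{Strategy for the tail bound.} The difficulty is that I need a bound on $\E[\exp(\|\bS_E\|^2/(4E))\mid\calF_0]$, where $\bS_E:=\sum_{\tau\in[E]}\bz_\tau$, and it must not depend on the dimension $d$.
\begin{itemize}
\item The naive bound $\|\bS_E\|^2\le E\sum_\tau\|\bz_\tau\|^2$ fails: it only yields a product of $E$ factors $2^{1/4}$.
\item Gaussian linearization, $\exp(\|\bv\|^2/2)=\E_{\bg}\exp(\langle\bg,\bv\rangle)$, combined with the directional sub-Gaussian bound in the Remark after \Cref{lemma:subgaussianity}, brings in the factor $(1-c)^{-d/2}$.
\end{itemize}
So both approaches that ignore the martingale structure are off the table. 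I will instead exploit the martingale structure through a dimension-free Hilbert-space martingale inequality of Pinelis type. The plan has three steps.

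\textbf{Step 1: conditional increment bound.} From $\E[\exp(\|\bz_\tau\|^2)\mid\calF_\tau]\le 2$ and \Cref{lemma:subgaussianity}, applied via its Remark with $\sigma=1$, I get
\begin{equation*}
\E[\exp(\lambda\langle\be,\bz_\tau\rangle)\mid\calF_\tau]\le e^{\lambda^2/2}\quad\text{for all unit vectors }\be .
\end{equation*}
I also record the second-moment consequence $\E[\|\bz_\tau\|^2\mid\calF_\tau]\le\ln 2$, which follows from Jensen's inequality.

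\textbf{Step 2: supermartingale and tail bound.} Consider the process $M_t=\cosh(\lambda\|\bS_t\|)\,e^{-c\lambda^2 t}$, with $c$ chosen so that $M_t$ is a supermartingale. This uses the smoothness of $\bx\mapsto\cosh(\lambda\|\bx\|)$ in a Hilbert space. Taking expectations and applying Markov's inequality with an optimized $\lambda$ yields
\begin{equation*}
\Pr(\|\bS_E\|\ge t\mid\calF_0)\le 2\exp\!\left(-\frac{t^2}{2cE}\right).
\end{equation*}

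\textbf{Step 3: integrate the tail.} Using the layer-cake formula with $a=\tfrac{1}{4E}$,
\begin{equation*}
\E[\exp(a\|\bS_E\|^2)\mid\calF_0]=1+\int_0^\infty 2at\,e^{at^2}\Pr(\|\bS_E\|\ge t\mid\calF_0)\,\mathd t\le 1+\frac{2a}{\frac{1}{2cE}-a}.
\end{equation*}
With $a=\tfrac{1}{4E}$ the right-hand side equals $1+\frac{2c}{2-c}$, which is at most $2$ whenever $c\le 2/3$.

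\textbf{Main obstacle: constants.} The entire difficulty is in the constants, specifically achieving a supermartingale constant $c\le 2/3$ from the normalization $\E\exp(\|\bz_\tau\|^2)\le 2$. The natural constants from Step 1 suggest $c\approx 1/2$, but the $\cosh$ argument may lose a factor. If it does, my first fallback is to work directly with the conditional second moment bound $\ln 2$ and a Bernstein-type refinement of the $\cosh$ supermartingale. If the constants still do not close, I would accept the weaker scaling $\tfrac{1}{C\sqrt E}$ for some absolute constant $C$ and track that constant into the theorems.
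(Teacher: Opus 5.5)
Your mean-zero step is fine, and you are right to reject both the naive bound and Gaussian linearization of the vector $\bS_E$. The gap is in Steps 2--3: the route ``$\cosh$ supermartingale, then Chernoff tail, then layer-cake integration'' cannot deliver the constant $2$ at scale $\tfrac{1}{4E}$.

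\textbf{The constants do not close.} If $\cosh(\lambda\|\bS_t\|)\mathe^{-c\lambda^2 t}$ is a supermartingale, optimizing Markov gives $2\exp(-t^2/(4cE))$, not $2\exp(-t^2/(2cE))$. The hypothesis does support $c=\tfrac12$ in that supermartingale convention. The needed input, however, is not the directional bound of Step 1 but a norm quantity. It comes from the Hilbert-space Taylor bound $\E[\cosh(\lambda\|\bx+\bz\|)]\le\cosh(\lambda\|\bx\|)\bigl(1+\E[\mathe^{\lambda\|\bz\|}-1-\lambda\|\bz\|]\bigr)$ for mean-zero $\bz$. Combine it with the pointwise inequality $\mathe^{\lambda\|\bz\|}-1-\lambda\|\bz\|\le(\mathe^{\lambda^2/2}-1)(\mathe^{\|\bz\|^2}-1)$, which is contained in the proof of \cref{lemma:subgaussianity}. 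The resulting tail is $2\mathe^{-t^2/(2E)}$, i.e.\ $c=1$ in your tail convention, and your Step 3 formula then returns $1+\frac{2c}{2-c}=3$.

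\textbf{No refinement can fix Step 3.} Neither a sharper supermartingale nor a Bernstein-type refinement helps. Take increments $\bz_\tau=\pm\sqrt{\ln 2}\,\be$ with independent fair signs; they satisfy the hypotheses with $\E[\mathe^{\|\bz_\tau\|^2}]=2$. By the CLT, $\Pr(\|\bS_E\|\ge x\sqrt{E})\to 2\bar\Phi(x/\sqrt{\ln 2})$, where $\bar\Phi$ is the standard normal tail. This exceeds $2\mathe^{-x^2/(2c)}$ for large $x$ whenever $c<\ln 2$. Since $\ln 2\approx 0.693>\tfrac23$, the tail your Step 3 requires is false in general. The loss is intrinsic to converting an mgf bound into a tail (the prefactor $2$, the discarded Gaussian polynomial factor) and integrating back. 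Your last fallback proves a strictly weaker lemma and would change the constants in all three main theorems.

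\textbf{The fix.} The missing idea is to bound the exponential-square moment directly rather than through a tail. Within your framework this is easy: for scalar $r$, $\mathe^{ar^2}=\E_{\lambda\sim\mathcal{N}(0,2a)}[\cosh(\lambda r)]$. Iterating the one-step bound gives $\E[\cosh(\lambda\|\bS_E\|)\mid\calF_0]\le\mathe^{E\lambda^2/2}$ for all real $\lambda$, by evenness. Tonelli then yields $\E[\mathe^{\|\bS_E\|^2/(4E)}\mid\calF_0]\le\E_{\lambda\sim\mathcal{N}(0,1/(2E))}[\mathe^{E\lambda^2/2}]=\sqrt2$. This is Gaussian linearization of the scalar $\|\bS_E\|$, hence dimension-free.

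\textbf{Comparison with the paper.} The paper gets the same $\sqrt 2$ more elementarily, without using smoothness of $\cosh(\lambda\|\cdot\|)$. It tracks $\exp(\lambda_\tau\|\bS_\tau\|^2)$ itself and expands $\|\bS_{\tau+1}\|^2=\|\bS_\tau\|^2+2\langle\bS_\tau,\bz_\tau\rangle+\|\bz_\tau\|^2$. It then applies H\"older with exponents $1/\lambda_{\tau+1}$ and $1/(1-\lambda_{\tau+1})$. This separates $\E[\mathe^{\|\bz_\tau\|^2}\mid\calF_\tau]^{\lambda_{\tau+1}}\le 2^{\lambda_{\tau+1}}$ from a one-dimensional mgf along the $\calF_\tau$-measurable direction $\bS_\tau$. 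The result is the backward recursion $\lambda_\tau=\lambda_{\tau+1}\frac{1+\lambda_{\tau+1}}{1-\lambda_{\tau+1}}$ from $\lambda_E=\frac{1}{4E}$, and the estimate $\sum_{\tau}\lambda_\tau<\frac{\ln 2}{2}$ closes the argument.
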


\begin{remark}[Remark 1 for \cref{lemma:average_subgaussian_random_vectors}]
Let \((\calF_s)_{s=1}^B\) be a filtration, and \(\calD\)
be a distribution on a measurable space \(\calZ\).
Let a sequence \((\zeta_s)_{s=1}^B\stackrel{\text{i.i.d.}}{\sim} \calD\) be drawn from \(\calD\) independently and identically,
and is independent of \(\calF_0\), i.e., \((\zeta_s)_{s=1}^B\ind \calF_0\).
The filtration is generated sequentially such that \(\calF_s = \sigma(\calF_{s-1}, \zeta_s)\),
and \(\bw_0 \in \Theta\) is \(\calF_0\)-measurable.
Then for a measurable function \(\bh: \Theta \times \calZ \to \mathbb{R}^{d}\), 
assume that for any deterministic \(\bw \in \Theta\), 
we have \(\E_{\zeta \sim \calD}[\bh(\bw, \zeta)] = \vec{0}\)
and \(\E_{\zeta \sim \calD}[\exp(\|\bh(\bw, \zeta)\|^2)] \leq 2\).
Then for \(\bz := \frac{1}{2\sqrt{B}} \sum_{s=1}^B \bh(\bw_0, \zeta_s)\), 
it satisfies \(\E[\bz \mid \calF_0] = \vec{0}\) and a subgaussian bound:
\[
\E[\exp(\|\bz\|^2)\mid \calF_0] 
\leq 2
\]
\end{remark}

\begin{remark}[Remark 2 for \cref{lemma:average_subgaussian_random_vectors}]
    Let \((\calF_\tau)_{\tau\in[E]}\) be a filtration, and \(\calD'=\calD^{\otimes B}\)
    be a distribution on a measurable space \(\calZ'=\calZ^B\).
    Let a sequence \((\bzeta_\tau)_{\tau\in[E]}\stackrel{\text{i.i.d.}}{\sim} \calD'\) be drawn from \(\calD'\) independently and identically,
    and is independent of \(\calF_{0}\), i.e., \((\bzeta_\tau)_{\tau\in[E]}\ind \calF_{0}\).
    The filtration is generated sequentially such that \(\calF_{\tau} = \sigma(\calF_{\tau-1}, \bzeta_{\tau-1})\),
    and \(\bw_\tau \in \Theta\) is \(\calF_{\tau-1}\)-measurable.
    Then for a measurable function \(\bh': \Theta \times \calZ' \to \mathbb{R}^{d}\), 
    assume that for any deterministic \(\bw \in \Theta\), 
    \(\E_{\bzeta \sim \calD'}[\bh'(\bw, \bzeta)] = \vec{0}\)
    and \(\E_{\bzeta \sim \calD'}[\exp(\|\bh'(\bw, \bzeta)\|^2)] \leq 2\).
    Then \(\bz := \frac{1}{2\sqrt{E}} \sum_{\tau\in[E]} \bh'(\bw_\tau, \bzeta_\tau)\), 
    satisfies \(\E[\bz \mid \calF_0] = \vec{0}\) and a subgaussian bound:
    \[
    \E[\exp(\|\bz\|^2)\mid \calF_0] \leq 2
    \]
\end{remark}

\begin{proof}[Proof for \cref{lemma:subgaussianity}]
We show these elementary inqualities \((\exp(|u|/\sqrt{2})-1)^2 \geq \exp|u|-|u|-1 \geq \exp(u) -u -1\) hold for any \(u\in \mathbb{R}\), since
\[
(\exp(|u|/\sqrt{2})-1)^2 -(\exp|u|-|u|-1) = \sum_{k= 2}^\infty \frac{(2^{k/2}+1)(2^{k/2}-2)}{2^{k/2}k!}|u|^k \geq 0
\]
\[
\exp|u|-|u|-1 = \exp(u) -u -1 + 2(\sinh|u|-|u|)\1_{u<0} \geq \exp(u) -u -1
\]
Then by Cauchy-Schwarz inequality, we have:
\begin{eqnarray*}
(\exp(\lambda^2/2)-1)(\exp(X^2)-1)
&=&\left(\sum_{k\geq 1} \frac{[\lambda/\sqrt{2}]^{2k}}{k!}\right) \left( \sum_{k\geq 1} \frac{X^{2k}}{k!} \right)
\geq\left(\sum_{k\geq 1} \frac{(|\lambda X|/\sqrt{2})^k}{k!}\right)^2
= \left(\mathe^{\frac{|\lambda X|}{\sqrt{2}}}-1\right)^2\\
&\geq& \exp(\lambda X)-\lambda X -1
\end{eqnarray*}
By taking the expectation on both sides, and using \(\E[X]=0\) and \(\E[\exp(X^2)] \leq 2\), we have:
\[
\E[\exp(\lambda X)] \leq (\exp(\lambda^2/2)-1)(\E[\exp(X^2)]-1) +\lambda\E[X] +1 \leq
\exp(\lambda^2/2)
\]
\end{proof}

\begin{proof}[Proof for \cref{lemma:average_subgaussian_random_vectors}]
We introduce the partial sum of random vectors \(\bS_\tau = \sum_{\tau'\in[\tau]} \bz_{\tau'}\) for all \(\tau\in[E+1]\) , 
which is \(\mathcal{F}_\tau\)-measurable, 
then \(\bS_0 = \vec{0}\) and \(\bS_{E} = \sum_{\tau\in[E]} \bz_\tau\). 
By using \(\E[\exp(\|\bz_\tau\|^2)\mid \calF_\tau] \leq 2\) and applying \cref{lemma:subgaussianity}, 
for \(\calF_\tau\)-measurable \(\bS_\tau\) and any \(\lambda \in \mathbb{R}\):
\[
\E[\exp(\lambda \langle \bS_\tau, \bz_\tau\rangle)\mid \calF_\tau] 
\leq \exp\left(\frac{\lambda^2 \|\bS_\tau\|^2}{2}\right)
\]
Consider such a quantity 
\(\exp(\lambda_\tau \|\bS_\tau\|^2)\) with a sequence \((\lambda_\tau)_{\tau=1}^E\) of positive numbers, 
we will select \(\lambda_\tau\) wisely to establish a recursive relationship 
between \(\exp(\lambda_\tau \|\bS_\tau\|^2)\) and \(\exp(\lambda_{\tau+1} \|\bS_{\tau+1}\|^2)\).

From the defintion of \(\bS_\tau\) and using \(\bS_\tau\) is \(\calF_\tau\)-measurable, we have:
\[
\E[\exp(\lambda_{\tau+1} \|\bS_{\tau+1}\|^2) \mid \calF_\tau] 
\leq \exp(\lambda_{\tau+1}\|\bS_{\tau}\|^2)
\E[\exp(\lambda_{\tau+1} \|\bz_\tau\|^2 + 2\lambda_{\tau+1} \langle \bS_\tau, \bz_{\tau}\rangle) \mid \calF_\tau]
\]
Then applying Hölder's inequality \(\E[|XY|\mid \calF] \leq \E[|X|^p\mid \calF]^{\frac{1}{p}} \E[|Y|^q\mid \calF]^{\frac{1}{q}}\) with \(p=1/\lambda_{\tau+1}\) and \(q=1/(1-\lambda_{\tau+1})\), 
to upper bound the conditional expectation by relating it to 
\(\E[\exp(\|\bz_\tau\|^2)] \leq 2\) and \(\E[\exp(\lambda \langle \bS_\tau, \bz_\tau\rangle) \mid \calF_\tau] \leq \exp(\frac{\lambda^2 \|\bS_\tau\|^2}{2})\).
\begin{eqnarray*}
\E[\exp(\lambda_{\tau+1} \|\bz_\tau\|^2 + 2\lambda_{\tau+1} \langle \bS_\tau, \bz_{\tau}\rangle) \mid \calF_\tau]
&\leq& \E[\exp(\|\bz_\tau\|^2) \mid \calF_\tau]^{\lambda_{\tau+1}} \E\left[\exp\left(\frac{2\lambda_{\tau+1}}{1-\lambda_{\tau+1}} \langle \bS_\tau, \bz_{\tau}\rangle\right) \mid \calF_\tau\right]^{1-\lambda_{\tau+1}}\\
& \leq & 2^{\lambda_{\tau+1}} \exp\left((1-\lambda_{\tau+1})\cdot\frac{1}{2}\left(\frac{2\lambda_{\tau+1}}{1-\lambda_{\tau+1}}\right)^2 
\|\bS_\tau\|^2\right)
\end{eqnarray*}
By selecting the sequence \((\lambda_\tau)_{\tau=1}^E\) such that
the final value \(\lambda_{E} = \frac{1}{4E}\) and satisfies the recursive relationship:
\[
\lambda_{\tau} := \lambda_{\tau+1} + (1-\lambda_{\tau+1})\cdot\frac{1}{2}\left(\frac{2\lambda_{\tau+1}}{1-\lambda_{\tau+1}}\right)^2
= \lambda_{\tau+1} \cdot \frac{1+\lambda_{\tau+1}}{1-\lambda_{\tau+1}}
\]

then we can establish the recursive relationship for \(M_\tau := \exp(\lambda_\tau \|\bS_\tau\|^2 - \sum_{\tau'=1}^{\tau} \lambda_{\tau'})\)
with \(M_0 = \exp(0) =1\):
\[
\E[\exp(\lambda_{\tau+1} \|\bS_{\tau+1}\|^2 - \sum_{\tau'=1}^{\tau+1} \lambda_{\tau'}) \mid \calF_\tau] \leq 
\exp(\lambda_\tau \|\bS_\tau\|^2) \cdot 2^{\lambda_{\tau+1}} \exp(- \sum_{\tau'=1}^{\tau+1} \lambda_{\tau'})
\leq \exp(\lambda_\tau \|\bS_\tau\|^2 - \sum_{\tau'=1}^{\tau} \lambda_{\tau'}) 
\]
From the above recursive relationship, we show that \(M_\tau\) is a supermartingale, 
namely \(\E[M_{\tau+1} \mid \calF_\tau] \leq M_\tau\).
Then by the tower property of conditional expectation (Theorem 4.2.4 on page 189 of~\cite{durrett2019probability}) and the fact that \(M_0 = 1\), we have:
\[
\E[M_E\mid \calF_0] \leq \E[M_0\mid \calF_0] = M_0 = 1, \quad \lambda_E = \frac{1}{4E}
\implies 
\E\left[ \exp\left(\frac{1}{4E} \|\bS_E\|^2\right)\mid\calF_0\right] \leq \exp\left(\sum_{\tau=1}^E \lambda_\tau\right)
\]

To upperbound \(\sum_{\tau=1}^E \lambda_\tau\), we introduce \(a_\tau := \lambda_{E+1-\tau}\) with initial value \(a_1 = \lambda_E = \frac{1}{4E}\) 
and use the recursive relation:
\[
\frac{1}{a_\tau} = \frac{1}{a_1} +
\sum_{\tau'=1}^{\tau-1}\left(\frac{1}{a_{\tau'+1}} - \frac{1}{a_{\tau'}}\right)
= 4E-\sum_{\tau'=1}^{\tau-1}\frac{2}{1+a_{\tau'}} > 4E-2(\tau-1)
\]
Therefore, we have:
\[
\sum_{\tau=1}^E \lambda_\tau = \sum_{\tau=1}^E a_\tau < \frac{1}{2}\sum_{\tau=1}^E \frac{1}{E+(E+1-\tau)}
= \frac{1}{2}\sum_{\tau=1}^E \frac{1}{E+\tau} < \frac{1}{2} \int_0^E \frac{\mathd \tau}{\tau+E} = \frac{\ln 2}{2}
\]
By selecting \(\bz := \frac{1}{2\sqrt{E}} \bS_E = \frac{1}{2\sqrt{E}} \sum_{\tau\in[E]} \bz_\tau\), we have:
\[
\E\left[\exp(\|\bz\|^2)\mid \calF_0\right] = \E\left[\exp\left(\frac{1}{4E} \|\bS_E\|^2\right) \mid \calF_0\right] \leq \exp\left(\frac{\ln 2}{2}\right)
=\sqrt{2} < 2
\]

\end{proof}

\begin{lemma}[Freezing Lemma: Conditional Expectation with Independent Random Variable]\label{lemma:conditional_expectation_independent_rv}
    Let \(X\) and \(Z\) be random variables on a probability space \((\Omega, \mathcal{F}, \Pr)\). Assume:
    \(X\) is \(\mathcal{G}\)-measurable, where \(\mathcal{G}\) is a sub-\(\sigma\)-algebra of \(\mathcal{F}\).
    \(Z\) is independent of \(\mathcal{G}\). For any measurable function \(h\) satisfying \(\E[|h(X, Z)|] < \infty\), then it holds almost surely that for fixed value \(x\):
    \[
    \E[h(X, Z) \mid \mathcal{G}] = \E[h(x, Z)]_{x = X}
    \]
\end{lemma}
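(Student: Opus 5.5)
The statement to prove is the Freezing Lemma (\cref{lemma:conditional_expectation_independent_rv}). I will use the standard measure-theoretic route: first verify it for product indicator functions, then extend by a monotone class argument, and finally pass to integrable $h$ by truncation.

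Define $\phi(x) := \E[h(x,Z)]$ for those $x$ where this is finite. I have to show that $\phi(X)$ is $\mathcal{G}$-measurable and that $\E[h(X,Z)\1_A] = \E[\phi(X)\1_A]$ for every $A\in\mathcal{G}$.

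\textbf{Step 1 (rectangles).} Take $h(x,z) = \1_{B}(x)\1_{C}(z)$ with $B,C$ measurable. Then $\phi(x) = \1_B(x)\Pr(Z\in C)$, which is measurable, so $\phi(X)$ is $\mathcal{G}$-measurable. For $A\in\mathcal{G}$, the event $A\cap\{X\in B\}$ lies in $\mathcal{G}$ and is therefore independent of $Z$. Hence
\[
\E[\1_B(X)\1_C(Z)\1_A] = \Pr(A\cap\{X\in B\})\Pr(Z\in C) = \E[\phi(X)\1_A].
\]

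\textbf{Step 2 (monotone class).} Let $\mathcal{H}$ be the class of bounded measurable $h$ for which $\phi$ is measurable and the identity holds. By linearity, $\mathcal{H}$ is a vector space. It contains the constants, and it is closed under bounded monotone limits: monotone convergence applies both inside $\phi$ and to the two sides of the identity. The rectangles form a $\pi$-system generating the product $\sigma$-algebra, so the functional monotone class theorem gives that $\mathcal{H}$ contains all bounded product-measurable $h$. Measurability of $\phi$ in this step is exactly Fubini--Tonelli measurability of sections.

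\textbf{Step 3 (integrable $h$).} For nonnegative $h$, apply Step 2 to $h\wedge N$ and let $N\to\infty$ using monotone convergence, which gives the identity with values in $[0,\infty]$. For general $h$, split $h=h^+-h^-$. From $\E|h(X,Z)|<\infty$ and the nonnegative case,
\[
\E[\phi_{|h|}(X)] = \E|h(X,Z)| < \infty,
\]
so $\phi_{|h|}(X)<\infty$ almost surely. On that event $\phi = \phi_{h^+}-\phi_{h^-}$ is well defined, and subtracting the two nonnegative identities gives the claim almost surely.

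\textbf{Main obstacle.} The main difficulty is measure-theoretic bookkeeping rather than any analysis. One must ensure $\phi(X)$ is $\mathcal{G}$-measurable, which requires $\phi$ to be Borel in $x$. One must also handle the null set where $\E[h(x,Z)]$ is undefined in the integrable case. Both issues are settled by Tonelli's theorem together with the finiteness argument in Step 3.
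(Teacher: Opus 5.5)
Your proposal is correct and follows the same route as the paper: verify the identity for product indicators $\1_B(x)\1_C(z)$ using independence of $Z$ from $\mathcal{G}$, extend by a monotone class argument, and finish with the split $h = h^+ - h^-$. Your version is more careful than the paper's at two points: you invoke the functional monotone class theorem on the $\pi$-system of rectangles, where the paper claims that linearity alone reaches ``all simple functions'' on the product space, and you handle explicitly the measurability of $\phi$ and the null set on which $\E[h(x,Z)]$ is undefined.
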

\begin{remark}
This lemma is known as the ``Freezing Lemma'',
and can be extended from random variables \(X\) and \(Z\) to random vectors \(\bx\) and \(\bz\)
with a measurable function \(h\) satisfies \(\E[|h(\bx, \bz)|] < \infty\) such that
\[
\E[h(\bx, \bz) \mid \mathcal{G}] = \E[h(\cdot, \bz)]_{\cdot = \bx}
\]
\end{remark}

\begin{proof}[Proof of \cref{lemma:conditional_expectation_independent_rv}]
    We first establish the theorem for the case of indicator function, from which the general result follows by using the \textbf{monotone class argument} (see example of ``Standard Machine'' procedure in proof of Theorem 4.1.14 on page 184 of \cite{durrett2019probability}).
    Let's begin with \(h(x, z) = \1_A(x) \cdot \1_C(z)\), where \(A\) and \(C\) are Borel sets on the real line. Then \(\E[h(x, Z)]_{x = X} = \1_A(X) \cdot \E[\1_C(Z)]\) is \(\mathcal{G}\)-measurable, since \(X\) is \(\mathcal{G}\)-measurable and \(\1_A(X)\) is also \(\mathcal{G}\)-measurable, therefore it is  \(\mathcal{G}\)-measurable as a constant multiple of a \(\mathcal{G}\)-measurable variable.
    For any set of outcomes \(B \in \mathcal{G}\), suppose that \(\E[h(X, Z) \cdot \1_B] = \E[ \E[h(x, Z)]_{x = X}\cdot \1_B]\) holds, then it implies \(\E[h(X, Z) \mid \mathcal{G}] = \E[h(x, Z)]_{x = X}\) by the definition of conditional expectation on page 178 of \cite{durrett2019probability}.
    The two sides are identical for any \(B \in \mathcal{G}\) as shown below by using that \(Z\) is independent of \(\mathcal{G}\) and \(X\) is \(\mathcal{G}\)-measurable (see also pages 38-39 and page 184 of \cite{durrett2019probability}).
    \[
    \E[h(X, Z) \cdot \1_B] = \E[\1_A(X) \cdot \1_C(Z) \cdot \1_B] = \E[\1_A(X) \cdot \1_B] \cdot \E[\1_C(Z)] = \E[ \E[h(x, Z)]_{x = X}\cdot \1_B]
    \]
    Using \textbf{monotone class argument}, the above result is extended to all general measurable functions \(h\) with \(\E[|h(X, Z)|] < \infty\). This monotone class argument involves three steps:
    (1) the linearity of conditional expectation generalizes the property from indicator functions to all simple functions (finite linear combinations of indicators);
    (2) the Monotone Convergence Theorem (Theorem 1.5.7 on page 23 of \cite{durrett2019probability}) then extends it to all non-negative measurable functions;
    (3) the decomposition of measurable function into its positive and negative parts (\(h = h^+ - h^-\)) covers all measurable functions satisfying \(\E[|h(X, Z)|] < \infty\).
\end{proof}

\begin{proof}[Proof of Remark 1 for \cref{lemma:average_subgaussian_random_vectors}]
As \(\bw_0 \in \Theta\) is \(\calF_0\)-measurable, and \(\calF_s = \sigma(\calF_{s-1}, \zeta_s)\), then we have \(\calF_{s-1}\subseteq \calF_s\) and therefore \(\bw_0\) is \(\calF_s\)-measurable for all \(s=1,2,\cdots,B\).
As \(\calF_{s} = \sigma(\calF_{s-1}, \zeta_s)\), then \(\zeta_s\) is \(\calF_s\)-measurable,
and therfore \(\bz_s := \bh(\bw_0, \zeta_s)\) is \(\calF_s\)-measurable.
Since \(\zeta_s\) is independent of \(\calF_0\) and all prior \(\zeta_1, ..., \zeta_{s-1}\),
then \(\zeta_s\) is independent of \(\calF_{s-1}= \sigma(\calF_0, \zeta_1, ..., \zeta_{s-1})\).
Therefore, by applying the Freezing Lemma \cref{lemma:conditional_expectation_independent_rv}, we have:
\[
\E[\bz_s \mid \calF_{s-1}] = \E[\bh(\bw_0, \zeta_s) \mid \calF_{s-1}] = 
\E[\bh(\bw, \zeta_s)]_{\bw = \bw_0} = \E_{\zeta \sim \calD}[\bh(\bw, \zeta)]_{\bw = \bw_0} = \vec{0}
\]
\[
\E[\exp(\|\bz_s\|^2) \mid \calF_{s-1}] = 
\E[\exp(\|\bh(\bw, \zeta_s)\|^2)]_{\bw = \bw_0} = \E_{\zeta \sim \calD}[\exp(\|\bh(\bw, \zeta)\|^2)]_{\bw = \bw_0} \leq 2
\]
Given \(\E[\bz_s \mid \calF_{s-1}] = \vec{0}\) and \(\E[\exp(\|\bz_s\|^2) \mid \calF_{s-1}] \leq 2\),
then by applying the subgaussian tail bound \cref{lemma:average_subgaussian_random_vectors}
to \(\bz := \frac{1}{2\sqrt{B}} \sum_{s=1}^B \bz_s = \frac{1}{2\sqrt{B}} \sum_{s=1}^B \bh(\bw_0, \zeta_s)\), we have
\(\E[\bz \mid \calF_0] = \vec{0}\) and \(\E[\exp(\|\bz\|^2) \mid \calF_0] \leq 2\).
\end{proof}

\begin{proof}[Proof of Remark 2 for \cref{lemma:average_subgaussian_random_vectors}]
Since \(\bw_\tau\) is \(\calF_\tau\)-measurable,
and \(\bzeta_\tau\) is \(\calF_{\tau+1}\)-measurable since \(\calF_{\tau+1} = \sigma(\calF_{\tau}, \bzeta_\tau)\).
Therefore, \(\bz_\tau := \bh'(\bw_\tau, \bzeta_\tau)\) is \(\calF_{\tau+1}\)-measurable.
Since \(\bzeta_\tau\) is independent of \(\calF_0\) and all prior \(\bzeta_0, ..., \bzeta_{\tau-1}\),
then \(\bzeta_\tau\) is independent of \(\calF_{\tau}= \sigma(\calF_0, \bzeta_0, ..., \bzeta_{\tau-1})\).
Therefore, by applying the Freezing Lemma \cref{lemma:conditional_expectation_independent_rv}, we have:
\[
\E[\bz_\tau \mid \calF_{\tau}] = \E[\bh'(\bw_\tau, \bzeta_\tau) \mid \calF_{\tau}] = 
\E[\bh'(\bw, \bzeta_\tau)]_{\bw = \bw_\tau} = \E_{\bzeta \sim \calD'}[\bh'(\bw, \bzeta)]_{\bw = \bw_\tau} = \vec{0}
\]
\[
\E[\exp(\|\bz_\tau\|^2) \mid \calF_{\tau}] = 
\E[\exp(\|\bh'(\bw, \bzeta_\tau)\|^2)]_{\bw = \bw_\tau} = \E_{\bzeta \sim \calD'}[\exp(\|\bh'(\bw, \bzeta)\|^2)]_{\bw = \bw_\tau} \leq 2
\]
Given \(\E[\bz_\tau \mid \calF_{\tau}] = \vec{0}\) and \(\E[\exp(\|\bz_\tau\|^2) \mid \calF_{\tau}] \leq 2\),
then by applying the subgaussian tail bound \cref{lemma:average_subgaussian_random_vectors}
to \(\bz := \frac{1}{2\sqrt{E}} \sum_{\tau=0}^{E-1} \bz_\tau = \frac{1}{2\sqrt{E}} \sum_{\tau=0}^{E-1} \bh'(\bw_\tau, \bzeta_\tau)\), we have
\(\E[\bz \mid \calF_0] = \vec{0}\) and \(\E[\exp(\|\bz\|^2) \mid \calF_0] \leq 2\).
\end{proof}

\begin{lemma}[Maximal Inequality for Conditionally Sub-Gaussian Martingale Differences]\label{lemma:maximal_inequality_subgaussianity}
    Let a sequence of random vectors $(\mathbf{z}_k)_{k \in \mathbb{Z}_{\geq 0}}$ in $\mathbb{R}^{n}$ 
    and a sequence of indicator random variables $(\1_k)_{k \in \mathbb{Z}_{\geq 0}}$ 
    with $\1_k \in \{0, 1\}$ be adapted to a filtration $(\mathcal{F}_{k+1})_{k \in \mathbb{Z}_{\geq 0}}$ with $\mathcal{F}_0 = \{\emptyset, \Omega\}$, 
    namely $\bz_k, \1_k$ are $\mathcal{F}_{k+1}$-measurable. 
    Assume that the sequence satisfies the martingale difference property 
    $\mathbb{E}[\mathbf{z}_k \mid \mathcal{F}_k] = \vec{0}$ and 
    possesses a sub-Gaussian tail bound such that for any $\lambda \in \R$ and any $i \in \calI \equiv \{1, \cdots, n\}$:
    \[
    \ln \E\left[\exp(\lambda [\mathbf{z}_k]_i) \mid \mathcal{F}_k\right] \leq \frac{\lambda^2}{2}.
    \]
    Then for any $\delta \in (0, 1)$, the following inequality holds with probability at least $1 - \delta$:
    \[
    \sum_{k\in[K]} \|\bz_k\|_\infty \1_k  
    \leq \sqrt{2 \ln \frac{2Kn}{\delta} } \sum_{k\in[K]} \1_k.
    \]
\end{lemma}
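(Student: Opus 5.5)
The plan is a pointwise-in-$k$ bound followed by a union bound over all rounds and coordinates. The indicators $\1_k$ only select which terms enter the sum, so we never need the martingale structure jointly across $k$.

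First, fix $k \in [K]$ and a coordinate $i \in \calI$. By the tower property, the conditional sub-Gaussian bound $\ln \E[\exp(\lambda [\bz_k]_i)\mid\calF_k] \le \lambda^2/2$ also holds unconditionally: $\E[\exp(\lambda [\bz_k]_i)] \le \exp(\lambda^2/2)$. The Chernoff bound with $\lambda = t$ then gives
\[
\Pr([\bz_k]_i \ge t) \le \exp(-t^2/2).
\]
Applying the same argument to $-\lambda$ gives $\Pr(-[\bz_k]_i \ge t) \le \exp(-t^2/2)$. Hence $\Pr(|[\bz_k]_i| \ge t) \le 2\exp(-t^2/2)$.

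Next, take a union bound over the $n$ coordinates and the $K$ rounds:
\[
\Pr\Bigl(\exists\, k\in[K],\ \|\bz_k\|_\infty \ge t\Bigr) \le 2Kn\exp(-t^2/2).
\]
Set $t = \sqrt{2\ln(2Kn/\delta)}$, so the right-hand side equals $\delta$. With probability at least $1-\delta$ we then have $\|\bz_k\|_\infty < t$ simultaneously for all $k \in [K]$.

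On this event, since $\1_k \in \{0,1\}$ is nonnegative, each term satisfies $\|\bz_k\|_\infty \1_k \le t\,\1_k$. Summing over $k \in [K]$ yields the claimed inequality.

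No single step is a real obstacle. The one point needing care is that the $\1_k$ may depend on the $\bz_k$; in particular $\1_k$ is only $\calF_{k+1}$-measurable. This causes no problem here, because the bound holds uniformly over $k$ on one high-probability event, and the indicators only enter through $\1_k \le 1$ and nonnegativity. Measurability and adaptedness are therefore used only to make the conditional MGF bound meaningful. A martingale concentration argument would be needed only for a bound like $\sqrt{\sum_k \1_k}$, which is not what is claimed.
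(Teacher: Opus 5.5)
Your proof is correct and is essentially the paper's argument. Both get a two-sided Chernoff tail $2\exp(-t^2/2)$ for each coordinate from the conditional MGF bound and the tower property. Both union-bound over the $Kn$ pairs $(k,i)$ with $t=\sqrt{2\ln(2Kn/\delta)}$, and then bound the indicator-weighted sum by $t\sum_{k\in[K]}\1_k$; the paper phrases this last step as the weighted average being at most $\max_{k\in[K]}\|\bz_k\|_\infty$, while you bound it termwise.
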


\begin{proof}
    The above inequality holds if \(\sum_{k\in[K]} \1_k =0\), otherwise, we have \(\sum_{k\in[K]} \1_k > 0\).
    Noting that the maximum is not less than the average of \(\left\{ \|\bz_k\|_\infty \mid \1_k = 1, k\in[K] \right\}\),
    and the maximum whose index is in the subset \(\{k\mid \1_k = 1, k\in[K]\}\) is less than the maximum of the entire trajectory \([K]\):
    \[
        \sum_{k\in[K]} \|\bz_k\|_\infty \1_k \Bigg/ \sum_{k\in[K]} \1_k 
        \leq \max_{k: \1_k = 1, k\in[K]} \|\bz_k\|_\infty
        \leq \max_{k\in[K]} \|\bz_k\|_\infty 
        = \max_{k\in[K]} \max_{i\in \calI} |[\bz_k]_i|
    \]
The probability of \(\max_{k\in[K]} \|\bz_k\|_\infty  > t\) for some \(t \geq 0\) is bounded 
by the probability of all \(|[\bz_k]_i| >t\) for any \(i \in \calI\) and \(k\in[K]\),
then taking the union bound over all \(i \in \calI\) and \(k\in[K]\).
\[
\Pr\left(\max_{k\in[K]} \max_{i \in \calI} |[\bz_k]_i| > t\right) \leq
\Pr\left(|[\bz_k]_i| > t, \forall i \in \calI, k\in[K]\right) \leq
\sum_{k\in[K]} \sum_{i\in \calI} \Pr\left(|[\bz_k]_i| > t\right) 
\]
For any $t > 0$, by the Law of Total Expectation and the conditional Chernoff bound 
using the sub-Gaussian tail bound \(\ln \E\left[\exp(\lambda [\bz_k]_i) \mid \mathcal{F}_k\right] \leq \frac{\lambda^2 \nu}{2}\):
\[
    \Pr([\bz_k]_i > t) = \E[\Pr([\bz_k]_i > t \mid \mathcal{F}_k)] 
    \leq \E\left[\inf_{\lambda > 0} \exp\left(-\lambda t + \ln \E\left[\exp(\lambda [\bz_k]_i) \mid \mathcal{F}_k\right]\right)\right]
    \leq \exp\left(\inf_{\lambda > 0}-\lambda t + \frac{\lambda^2}{2}\right)
    = \mathe^{-\frac{t^2}{2}}.
\]
Similary, for any \(t > 0\), 
we have \(\Pr([\bz_k]_i < -t) \leq \exp\left( -\frac{t^2}{2} \right)\), and 
therefore, the probability of \(|[\bz_k]_i| > t\) is bounded by \(2 \exp\left( -\frac{t^2}{2} \right)\).
Setting the failure probability \(2Kn \exp(-t^2/2) = \delta\) and solving for \(t\) yields the high-probability threshold \(t = \sqrt{2 \ln (2Kn/\delta)}\). 
Since the ratio is bounded by \(\max_{k\in[K]} \|\bz_k\|_\infty\), it is bounded by \(t\) with probability at least \(1 - \delta\).
\[
\Pr\left( \frac{\sum\limits_{k\in[K]} \|\bz_k\|_\infty \1_k}{\sum\limits_{k\in[K]} \1_k} > \sqrt{2 \ln \frac{2Kn}{\delta} } \right) 
\leq \Pr\left( \max_{k\in[K]} \|\bz_k\|_\infty > \sqrt{2 \ln \frac{2Kn}{\delta} } \right)
\leq 2Kn \exp\left( -\frac{2 \ln \frac{2Kn}{\delta} }{2} \right)
= \delta.
\]
\end{proof}


\begin{lemma}[Upper Bound for Binomial Coefficient]\label{lemma:upperbound_binom}
    Let \(n\in\mathbb{Z}_+\) and \(n', m\in \mathbb{Z}_{\geq 0}\) such that \(n \geq n' \geq m\), then the following inequality holds:
    \[
    \frac{\binom{n'}{m}}{\binom{n}{m}} \leq \left( 1- \frac{m}{n}\right)^{n\left(1-\frac{n'}{n}\right)}
    \]
\end{lemma}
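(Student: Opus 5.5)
We need to prove that $\binom{n'}{m}/\binom{n}{m} \le (1-m/n)^{n-n'}$ for $n\ge n'\ge m\ge 0$.

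The plan is to telescope in $n'$. Write the ratio as a product of one-step ratios:
\[
\frac{\binom{n'}{m}}{\binom{n}{m}}=\prod_{j=n'+1}^{n}\frac{\binom{j-1}{m}}{\binom{j}{m}}=\prod_{j=n'+1}^{n}\frac{j-m}{j}=\prod_{j=n'+1}^{n}\Bigl(1-\frac{m}{j}\Bigr).
\]
This uses the identity $\binom{j-1}{m}=\frac{j-m}{j}\binom{j}{m}$, which is valid for $j\ge 1$ and $j\ge m$. Since $j\ge n'+1>m$, the identity applies to every factor and all factors are nonnegative. If $n'=n$ the product is empty and both sides equal $1$, so that case is trivial.

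Next, bound each factor by the smallest index denominator. For $j\le n$ we have $m/j\ge m/n$, hence $0\le 1-m/j\le 1-m/n$. There are exactly $n-n'$ factors, so the product is at most $(1-m/n)^{n-n'}=(1-m/n)^{n(1-n'/n)}$. This is the claimed bound.

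The degenerate cases need a check. If $m=0$, both sides equal $1$. If $m=n$, then $n'=n$ is forced, and under the convention $0^0=1$ both sides equal $1$. If $n'<m$ the numerator would vanish, but the hypothesis $n'\ge m$ excludes this case anyway.

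There is no real obstacle. The only care needed is the nonnegativity of the factors, which monotonicity of $x\mapsto x^{n-n'}$ requires, and the edge conventions for empty products and $0^0$.
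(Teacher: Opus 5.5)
Your proof is correct, and it takes a different route from the paper.

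\textbf{The two decompositions.} The paper writes the ratio as the $m$ factors $\prod_{m'=0}^{m-1}\bigl(1-\frac{n-n'}{n-m'}\bigr)$. It relaxes this via $1+t\le e^t$ to $\exp\bigl(-(n-n')\sum_{m'=0}^{m-1}\frac{1}{n-m'}\bigr)$, and then asserts $\sum_{m'=0}^{m-1}\frac{1}{n-m'}\ge\int_0^m\frac{dx}{n-x}=-\ln(1-\frac{m}{n})$. You instead telescope in $n'$, which gives the $n-n'$ factors $1-\frac{m}{j}$ for $n'<j\le n$, and you bound each factor by $1-\frac{m}{n}$. This needs no exponential relaxation and no integral comparison. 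It is also exact in the extremal cases: equality holds when $m=0$ or $n'\in\{n-1,n\}$.

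\textbf{The paper's integral step goes the wrong way.} The function $x\mapsto\frac{1}{n-x}$ is increasing, so the left-endpoint sum underestimates the integral. That is, $\sum_{m'=0}^{m-1}\frac{1}{n-m'}=\sum_{k=n-m+1}^{n}\frac{1}{k}\le\ln\frac{n}{n-m}$. The paper's intermediate bound is therefore weaker than the target, not stronger. Concretely, for $n=2$, $m=1$, $n'=1$ the ratio and the target both equal $\frac{1}{2}$, while the paper's intermediate quantity is $e^{-1/2}\approx 0.61$.

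\textbf{Why the paper's chain cannot be patched locally.} For $n'=n-1$ and $m\ge 1$, the lemma holds with equality, since both sides equal $1-\frac{m}{n}$. In that case every factor in the paper's product is positive and $1+t<e^t$ is strict, so the intermediate quantity strictly exceeds the target. No second step can then recover the bound.

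\textbf{How the routes relate.} The two product representations are linked by the identity $\binom{n-k}{m}/\binom{n}{m}=\binom{n-m}{k}/\binom{n}{k}$ with $k=n-n'$. Repairing the paper's argument essentially amounts to switching to your decomposition, where the per-factor comparison is immediate.
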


\begin{proof}
Rewriting the ratio of two binomial coefficients as a product, applying the elementary inequality \(1+t \leq \exp(t)\) for any \(t\in \mathbb{R}\),
and lowerbounding a summation with a integral such that \(\sum_{m'\in[m]}\frac{1}{n-m'} \geq \int_0^m \frac{1}{n-m'}\mathd m' = -\ln(1-\frac{m}{n})\).
\[
    \frac{\binom{n'}{m}}{\binom{n}{m}} = \prod_{m'\in[m]} \left(1 - \frac{n-n'}{n-m'}\right)
    \leq \exp\left(-(n-n') \sum_{m'\in[m]}\frac{1}{n-m'}\right)
    \leq \left( 1- \frac{m}{n}\right)^{n\left(1-\frac{n'}{n}\right)} 
\]
\end{proof}

The Assumption of Uniform CDF Bound is equivalent to imposing the restriction on $\bx \in \mathbb{R}^n$, 
\[
\frac{1}{n} \sum_{i \in \calI} \1\{\max_{i'} x_{i'} - x_i \geq t\} \leq \left(1 - \frac{t}{\sigma}\right)_+,\; \forall t \in \mathbb{R}.
\]
Let $x_{(1)} \leq x_{(2)} \leq \cdots \leq x_{(n)}$ denote the ordered elements of $\bx$. The condition above is equivalent to requiring that $\sigma$ bounds the scaled gaps of the order statistics $\max_{i \in \calI, i\neq n} \frac{x_{(n)} - x_{(i)}}{1 - (i-1)/n} \leq \sigma.$

\begin{lemma}[Conditional Exponential Tail Bound for Maximum of a Subset with a Uniform CDF Bound]\label{lemma:conditional_exponential_tail_bound}
Let \(\bx = (x_1, \ldots, x_{n})\in \mathbb{R}^{n}\), which satisfies the following condition (Uniform CDF Bound) with some \(\sigma>0\) and any \(t \in \mathbb{R}\):
\[
\frac{1}{n} \sum_{i\in \calI} \1\{\max_{i'\in \calI} [\bx]_{i'} - [\bx]_i \geq t\} \leq \left(1 - \frac{t}{\sigma}\right)_+
\]
A subset \(\calI'\) with fixed cardinality \(|\calI'| = m \in \mathbb{Z}_+\) is selected from \(\calI\) uniformly at random, which is independent of \(\bx\), namely \(\calI' \sim \text{Unif}(\calC_{m}(\calI))\) with \(\calC_{m}(\calI) := \{A \subseteq \calI: |A| = m\}\), and 
\[
\Pr(\calI' = A)= \Pr(\calI' = A\mid \bx) = \frac{1}{|\calC_{m}(\calI)|} = \frac{1}{\binom{n}{m}}, \quad \forall A \in \calC_{m}(\calI)
\]
Then the difference between the maximum \(\max_{i\in \calI} [\bx]_i\) over the entire set \(\calI\), and the maximum \(\max_{i\in \calI'} [\bx]_i\) over the subset \(\calI'\), is bounded as follows
when \(r:=\frac{m}{n} \neq 1\):
\[
    \Pr\left(\max_{i\in \calI} [\bx]_i - \max_{i\in \calI'} [\bx]_i\geq t \mid \bx \right)
    \leq \exp\left(-\frac{[-\ln(1-r)]n}{\sigma} t\right)
\] 
\[
    \E\left[\max_{i\in \calI} [\bx]_i - \max_{i\in \calI'} [\bx]_i \mid \bx\right]
    \leq \frac{\sigma}{[-\ln(1-r)]n} < \frac{\sigma}{rn} = \frac{\sigma}{m}
\]
Otherwise, if \(r:=\frac{m}{n} =1\), then \(\calI = \calI'\) and \(\max_{i\in \calI} [\bx]_i = \max_{i\in \calI'} [\bx]_i\).
\end{lemma}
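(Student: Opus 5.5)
The plan is to reduce the event that the subset maximum falls short by at least $t$ to the event that $\calI'$ misses every ``near-top'' index, and then to bound that probability with \Cref{lemma:upperbound_binom}. Fix $\bx$ and condition on it throughout. Since $\calI'$ is independent of $\bx$, the conditional law of $\calI'$ is still $\text{Unif}(\calC_m(\calI))$. For $t>0$, define the near-top set
\[
T_t := \{ i\in\calI : \max_{i'\in\calI}[\bx]_{i'} - [\bx]_i < t\}.
\]
The event $\max_{i\in\calI}[\bx]_i - \max_{i\in\calI'}[\bx]_i \ge t$ happens exactly when every $i\in\calI'$ has gap at least $t$, that is, when $\calI' \subseteq \calI\setminus T_t$. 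Write $n' := |\calI\setminus T_t|$. The conditional probability is then $\binom{n'}{m}/\binom{n}{m}$ when $n'\ge m$, and $0$ otherwise.

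The Uniform CDF Bound gives $n'/n \le (1-t/\sigma)_+$, so $1 - n'/n \ge \min(t/\sigma,1)$. Take first the case $n'\ge m$. \Cref{lemma:upperbound_binom} yields
\[
\frac{\binom{n'}{m}}{\binom{n}{m}} \le (1-r)^{n(1-n'/n)} = \exp\bigl(-[-\ln(1-r)]\,n\,(1-n'/n)\bigr) \le \exp\Bigl(-\frac{[-\ln(1-r)]n}{\sigma}t\Bigr).
\]
The last step uses $-\ln(1-r)>0$ together with $1-n'/n\ge t/\sigma$, and this requires $t\le\sigma$. The remaining situations are handled separately. If $t>\sigma$, then $n'=0<m$ because $m\ge1$, so the probability is $0$. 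If $n'<m$, the probability is again $0$. If $t\le 0$, the right-hand side is at least $1$. Hence the tail bound holds for every $t$ whenever $r\neq1$.

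For the expectation, the gap is nonnegative, so I will write it as the integral of its tail:
\[
\E[\,\cdot\mid\bx] = \int_0^\infty \Pr(\cdot\ge t\mid\bx)\,\mathd t \le \frac{\sigma}{[-\ln(1-r)]n}.
\]
The strict inequality follows from $-\ln(1-r) > r$ for $r\in(0,1)$, which gives $\sigma/([-\ln(1-r)]n) < \sigma/(rn) = \sigma/m$. The case $r=1$ is immediate, since then $\calI'=\calI$.

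The only delicate step is the bookkeeping that connects the CDF condition, which uses a non-strict inequality ``$\geq t$'', with the complement set $T_t$, and that handles the boundary cases $n'<m$ and $t>\sigma$, where \Cref{lemma:upperbound_binom} does not apply directly. To settle these, I will check that the gap indicator uses ``$\ge t$'' both in the event and in the assumption, so that $n'$ counts exactly the indices with gap $\ge t$.
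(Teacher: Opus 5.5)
Your proposal is correct and follows the paper's proof essentially step for step. You reduce the event to $\calI'$ lying inside the set of indices with gap at least $t$, write its conditional probability as $\binom{n'}{m}/\binom{n}{m}$, bound this with \Cref{lemma:upperbound_binom} and the Uniform CDF condition, and integrate the exponential tail to get the expectation. The paper additionally keeps the indicator $\1\{t\le(1-r)\sigma\}$, which you do not need.

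One side remark, since your argument relies on \Cref{lemma:upperbound_binom}. Its statement is true, but the sum-versus-integral comparison in the paper's proof of it goes the wrong way: the sum is at most the integral, not at least. You can instead justify it directly via $\binom{n'}{m}/\binom{n}{m}=\prod_{j=0}^{n-n'-1}\bigl(1-\tfrac{m}{n-j}\bigr)\le(1-r)^{n-n'}$.
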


\begin{proof}
Let's define the following set \(\calI_t\) for some \(t\geq 0\), then the condition can be rewritten as:
\[
\calI_t(\bx) := \{i\in \calI' \mid \max_{i'\in \calI} [\bx]_{i'} - [\bx]_i \geq t\}, \qquad \frac{|\calI_t(\bx)|}{n} \leq \left(1 - \frac{t}{\sigma}\right)_+
\]
Noting that the event \(\max_{i\in \calI} [\bx]_i - \max_{i\in \calI} [\bx]_i\geq t\) is equivalent to 
\(\max_{i'\in \calI} [\bx]_{i'} -[\bx]_i \geq t, \forall i\in \calI\), namely \(\calI \subseteq \calI_{t}(\bx)\).
Using \(\Pr(\calI = A\mid \bx) = \frac{1}{|\calC_{m}(\calI)|}\), 
\(\Pr(A\subseteq \calI_t(\bx)\mid \calI=A;\bx) = \1\{A\subseteq \calI_t(\bx)\}\), 
letting \(\calC_{m}(\calI_t(\bx)) := \{A \subseteq \calI_t(\bx) \mid |A|=m \}\).
\begin{eqnarray*}
    & &\Pr\left(\max_{i\in \calI} [\bx]_i - \max_{i\in \calI} [\bx]_i\geq t \mid \bx \right)
    = \Pr\left(\calI\subseteq \calI_t(\bx) \mid \bx\right)
    = \sum_{A\in \calC_m(\calI)}\Pr(A\subseteq \calI_t(\bx)\mid \calI=A;\bx) \Pr(\calI=A\mid\bx)\\
    &=& \frac{\sum_{A\in \calC_m(\calI)} \1\{A\subseteq \calI_t(\bx)\}}{|\calC_{m}(\calI)|} 
    = \frac{|\{A \subseteq \calI_t(\bx) \mid |A|=m \}|}{|\calC_{m}(\calI)|}
    = \frac{|\calC_{m}(\calI_t(\bx))|}{|\calC_{m}(\calI)|}
    = \frac{\binom{|\calI_t(\bx)|}{m}}{\binom{n}{m}}\1_{m\leq |\calI_t(\bx)|}
\end{eqnarray*}
Noting \(|\calI'|=m\), applying \cref{lemma:upperbound_binom} to upperbound the ratio of two binomial coefficients 
by substituting \(n'\gets |\calI_t(\bx)|\),
and using the condition such that \(\frac{|\calI_t(\bx)|}{n} \leq \left(1 - \frac{t}{\sigma}\right)_+\)
\[
    \Pr\left(\max_{i\in \calI} [\bx]_i - \max_{i\in \calI'} [\bx]_i\geq t \mid \bx \right)
    = \frac{\binom{|\calI_t(\bx)|}{|\calI'|}}{\binom{|\calI|}{|\calI'|}}\1_{|\calI'|\leq |\calI_t(\bx)|}
    \leq \left(1- \frac{|\calI'|}{|\calI|} \right)^{n\left[1-\left(1 - \frac{t}{\sigma}\right)_+\right]} \1\left\{\frac{|\calI'|}{|\calI|}\leq  \left(1 - \frac{t}{\sigma}\right)_+\right\}
\]
Therefore, by letting \(r:=\frac{m}{n}\), we have \(\Pr\left(\max_{i\in \calI} [\bx]_i - \max_{i\in \calI} [\bx]_i\geq t \mid \bx \right) =0\) when \(t\geq \sigma\).
For any \(t \geq0\), we have:
\[
    \Pr\left(\max_{i\in \calI} [\bx]_i - \max_{i\in \calI'} [\bx]_i\geq t \mid \bx \right)
    \leq (1-r)^{n\frac{t}{\sigma}} \1\{t\leq(1-r)\sigma\}
\]
When \(r := \frac{m}{n}=1\), then \(\Pr\left(\max_{i\in \calI} [\bx]_i - \max_{i\in \calI'} [\bx]_i\geq t \mid \bx \right) =\1\{t=0\}\) for \(t\geq 0\), and \(\max_{i\in \calI} [\bx]_i = \max_{i\in \calI'} [\bx]_i\).
If \(r := \frac{m}{n}\neq 1\), then the upperbound of expectation is established as follows.
\[
    \Pr\left(\max_{i\in \calI} [\bx]_i - \max_{i\in \calI'} [\bx]_i\geq t \mid \bx \right)
    \leq \exp\left(-\frac{[-\ln(1-r)]n}{\sigma} t\right)
\]
\[
    \E\left[\max_{i\in \calI} [\bx]_i - \max_{i\in \calI'} [\bx]_i \mid \bx\right]
    = \int_{t\geq 0}\Pr\left(\max_{i\in \calI} [\bx]_i - \max_{i\in \calI'} [\bx]_i \geq t\right)\mathd t
    \leq \frac{\sigma}{[-\ln(1-r)]n} < \frac{\sigma}{rn} = \frac{\sigma}{m}
\]
\end{proof}

\begin{lemma}[Conditional Exponential Tail Bound with Assumption of Uniformly Bounded Relative Gap]\label{lemma:conditional_exponential_tail_bound_with_uniform_cdf}

Let a filtration \((\calG_k)_{k\in \mathbb{Z}_{\geq 0}}\) be defined with \(\calG_0 = \{\emptyset, \Omega\}\) and 
\(\calG_k = \sigma(\calG_{k-1}, \calI_{k-1}, \calB_{k-1})\),
where a sequence of subsets \((\calI_k)_{k\in \mathbb{Z}_{\geq 0}} \stackrel{\text{i.i.d.}}{\sim} \text{Unif}(\mathcal{C}_m(\mathcal{I})) \) 
with \( \calC_m(\calI) = \{A \subseteq \calI \mid |A| = m\} \) for some \(0 < m \leq n\) and \(\calI \equiv \{1, \cdots, n\}\), 
are independently and identically sampled from \(\calC_m(\calI)\), and are independent of 
the collection of random sample batches \(\calB_k\) at any time \(k\).
Let \(\bw_k \in \Theta\) be \(\calG_k\)-measurable, 
and \(\bff,\bg: \Theta \to \mathbb{R}^{n}\) be two measurable functions,
and assume that \Cref{assumption:uniform_cdf_bound} of Uniformly Bounded Relative Gap holds for 
\(\bff(\bw), \bg(\bw)\) and their maximum values \(F(\bw)\equiv \max_{i\in \calI} [\bff(\bw)]_i, G(\bw)\equiv \max_{i\in \calI} [\bg(\bw)]_i\).
Then for \(F(\bw_k; \calI_k) \equiv \max_{i\in \calI_k} [\bff(\bw_k)]_i\) and \(G(\bw_k; \calI_k) \equiv \max_{i\in \calI_k} [\bg(\bw_k)]_i\),
there exist exponential tail bounds for the conditional probabilities of \(F(\bw_k) - F(\bw_k; \calI_k) \) and \(G(\bw_k) - G(\bw_k; \calI_k) \) as follows for any \( k \in \mathbb{Z}_{\geq 0}, t\in \mathbb{R}_{\geq 0} \):
\begin{eqnarray*}
    \Pr\left(F(\bw_k)-F(\bw_k; \calI_k) \geq t \mid \calG_k\right) &\leq& \exp\left(- \frac{[-\ln(1-m/n)]n}{\sigma} t\right),\\
    \Pr\left(G(\bw_k)-G(\bw_k; \calI_k) \geq t \mid \calG_k\right) &\leq& \exp\left(- \frac{[-\ln(1-m/n)]n}{\sigma} t\right).
\end{eqnarray*}
\end{lemma}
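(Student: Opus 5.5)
The plan is to reduce the statement to \cref{lemma:conditional_exponential_tail_bound} by conditioning on \(\calG_k\) and freezing \(\bw_k\) with \cref{lemma:conditional_expectation_independent_rv}. I treat \(F\); the argument for \(G\) is identical with \(\bg\) in place of \(\bff\).

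\textbf{Step 1: independence.} First I check that \(\calI_k\) is independent of \(\calG_k\). By construction \(\calG_k = \sigma(\calI_0,\ldots,\calI_{k-1},\calB_0,\ldots,\calB_{k-1})\). The subsets \((\calI_j)\) are i.i.d. and independent of the sample batches, so \(\calI_k\) is independent of this whole \(\sigma\)-algebra. In particular \(\Pr(\calI_k = A \mid \calG_k) = 1/\binom{n}{m}\) for every \(A\in\calC_m(\calI)\), and \(\bw_k\) is \(\calG_k\)-measurable by hypothesis.

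\textbf{Step 2: freezing.} Define \(h(\bw, A) := \1\{F(\bw) - \max_{i\in A}[\bff(\bw)]_i \geq t\}\). This is a bounded, jointly measurable function: \(A\) ranges over a finite set and \(\bff\) is measurable. Applying \cref{lemma:conditional_expectation_independent_rv} with \(X=\bw_k\), \(Z=\calI_k\) and \(\calG=\calG_k\) gives
\[
\Pr\left(F(\bw_k)-F(\bw_k;\calI_k)\geq t \mid \calG_k\right) = \Pr\left(F(\bw)-F(\bw;\calI')\geq t\right)\Big|_{\bw=\bw_k},
\]
where \(\calI'\sim\mathrm{Unif}(\calC_m(\calI))\) and \(\bw\) is deterministic.

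\textbf{Step 3: apply the subset lemma.} For a fixed \(\bw\), set \(\bx=\bff(\bw)\). \Cref{assumption:uniform_cdf_bound} states that \(D_{\bff(\bw)}\preceq_{st} U\) with \(U\sim\mathrm{Unif}[0,\sigma]\). Since \(i\) is uniform on \(\calI\), this reads \(\frac1n\sum_i \1\{\max_{i'}x_{i'}-x_i\geq t\}\leq \Pr(U\geq t)\). For \(t\geq 0\) we have \(\Pr(U\geq t)=(1-t/\sigma)_+\), which is exactly the Uniform CDF Bound hypothesis of \cref{lemma:conditional_exponential_tail_bound}.

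That lemma then yields \(\exp\bigl(-\frac{[-\ln(1-m/n)]n}{\sigma}t\bigr)\) when \(m<n\). When \(m=n\) we have \(-\ln(1-r)=+\infty\), so the right side is read as \(0\) for \(t>0\) and \(1\) at \(t=0\). The left side matches this, because \(\calI'=\calI\) forces the difference to be \(0\). Substituting \(\bw=\bw_k\) gives the claim.

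\textbf{Main obstacle.} The only delicate point is the measure-theoretic step. I need \(\calI_k\) to be genuinely independent of \(\calG_k\), and not merely of the past subsets, which requires the batches \(\calB_j\) for \(j<k\) to be independent of \(\calI_k\). I would state this as part of the algorithm's sampling model: the subsets are drawn independently of all data batches. I also need the freezing lemma to apply to a random element \(\calI_k\) of a finite set, which holds because such a set is trivially a measurable space. Everything else is a direct transcription of \cref{lemma:conditional_exponential_tail_bound}.
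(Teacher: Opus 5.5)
Your proposal is correct and matches the paper's proof. The paper likewise first applies \cref{lemma:conditional_exponential_tail_bound} to \(\bx = \bff(\bv)\) for a fixed \(\bv\). It then transfers that bound to the conditional probability using the independence of \(\calI_k\) from \(\calG_k\) and the freezing lemma (\cref{lemma:conditional_expectation_independent_rv}) with \(X \gets \bw_k\), \(Z \gets \calI_k\). Three things you add are details the paper leaves implicit: translating \(D_{\bff(\bw)} \preceq_{st} \mathrm{Unif}[0,\sigma]\) into the uniform-CDF hypothesis, handling \(m=n\), and noting that \(\calI_k\) must be jointly (not merely pairwise) independent of the past subsets and batches, which the paper only asserts.
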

    
    
\begin{proof}
Applying \Cref{assumption:uniform_cdf_bound}, for any fixed value \(\bv \in \Theta\), \(t \in \mathbb{R}_{\geq 0}\), \(k \in \mathbb{Z}_{\geq 0}\)
and subsets \(\calI_k \subseteq \calI\)
and substituting fixed values \(\bx \gets \bff(\bv), \bg(\bv)\) and \(\calI' \gets \calI_k\) in \cref{lemma:conditional_exponential_tail_bound}, respectively.
\[
\E[\1\{ F(\bv)-F(\bv; \calI_k) \geq t \}] 
=\Pr(F(\bv)-F(\bv; \calI_k) \geq t) \leq \exp\left(- \frac{[-\ln(1-m/n)]n}{\sigma} t\right)
\]
\[
\E[\1\{ G(\bv)-G(\bv; \calI_k) \geq t \}] 
=\Pr(G(\bv)-G(\bv; \calI_k) \geq t) \leq \exp\left(- \frac{[-\ln(1-m/n)]n}{\sigma} t\right)
\]
\(\calI_k\) is independent of \(\calG_k= \sigma(\calI_0, \calB_0, \ldots,\calI_{k-1}, \calB_{k-1})\), 
since \(\calI_k\) is indepedndent of prior \(\calI_0, \calB_0, \ldots,\calI_{k-1}, \calB_{k-1}\).
Since \(\bw_k\) is \(\calG_k\)-measurable, then we show the following identities 
by substituting fixed value \(x\gets \bv\),  \(\calG \gets \calG_k, X \gets \bw_k\), \(Z \gets \calI_k; h(X, Z) \gets \1\{ F(\bw_k)-F(\bw_k; \calI_k) \geq t \}, \1\{ G(\bw_k)-G(\bw_k; \calI_k) \geq t \}\) in \cref{lemma:conditional_expectation_independent_rv}, respectively.
\begin{eqnarray*}
    \E[ \1\{ F(\bw_k)-F(\bw_k; \calI_k) \geq t \} \mid \calG_k] &=& \E[ \1\{ F(\bv)-F(\bv; \calI_k) \geq t \} ]_{\bv = \bw_k} \\
    \E[ \1\{ G(\bw_k)-G(\bw_k; \calI_k) \geq t \} \mid \calG_k] &=& \E[ \1\{ G(\bv)-G(\bv; \calI_k) \geq t \} ]_{\bv = \bw_k}
\end{eqnarray*}
Combining the above identities, and the exponential bounds for any fixed value \(\bv\).
\begin{eqnarray*}
\Pr\left(F(\bw_k)-F(\bw_k; \calI_k) \geq t \mid \calG_k\right) &=&
\E[ \1\{ F(\bw_k)-F(\bw_k; \calI_k) \geq t \} \mid \calG_k] \leq \exp\left(- \frac{[-\ln(1-m/n)]n}{\sigma} t\right),\\\
\Pr\left(G(\bw_k)-G(\bw_k; \calI_k) \geq t \mid \calG_k\right) &=& 
\E[ \1\{ G(\bw_k)-G(\bw_k; \calI_k) \geq t \} \mid \calG_k] \leq \exp\left(- \frac{[-\ln(1-m/n)]n}{\sigma} t\right).
\end{eqnarray*}
\end{proof}

\begin{lemma}[Upper Bound for Sum of Random Variables with Conditional Exponential Tail]\label{lemma:upper_bound_average_rv_conditional_exponential_tail}
Let nonnegative random variables \((Y_k)_{k\in \mathbb{Z}_{\geq 0}}\)
 and a sequence of indicator random variables \((\1_k)_{k\in \mathbb{Z}_{\geq 0}}\) be adapted to a fitlaration 
\((\calG_{k+1})_{k\in \mathbb{Z}_{\geq -1}}\) with \(\calG_{0} = \{\emptyset, \Omega\}\), 
namely \(Y_k, \1_k\) are \(\calG_{k+1}\)-measurable, and have the following one sided conditional tail bound with some \(C>0\)
for any \(t \geq 0\) and any \(k\in \mathbb{Z}_{\geq 0}\):
\[
\Pr(Y_k \geq t \mid \calG_{k}) \leq \exp\left(-\frac{t}{C}\right)
\]
Then for \([K] \equiv\{0, 1, 2, \ldots, K-1\}\), with probability at least \(1 - \delta\) for some \(\delta \in (0, 1)\):
\[
\sum_{k\in[K]} Y_{k}  \1_k \leq C \ln\frac{K}{\delta} \sum_{k\in[K]} \1_k
\]
For \(\calS \equiv \{k\in[K] \mid \1_k = 1\}\), 
hence \(|\calS|=\sum_{k\in[K]} \1_k\), with a ratio \(\kappa := |\calS|/K \in[0, 1]\) 
and such a convention of \(\frac{1}{|\calS|}\cdot \sum_{k\in[K]} \1_k =1, \ln\frac{1}{|\calS|}\cdot \sum_{k\in[K]} \1_k = 0\)
when \(|\calS|=0\), 
the following inequalities hold:
\[
\sum_{k\in[K]} Y_{k}  \1_k 
\leq 2C \ln\frac{2/\kappa}{\delta} \sum_{k\in[K]} \1_k, \qquad
\sum_{k \in [K]} Y_{k}  \1_k 
\leq 2C \left(
    \ln\frac{1}{\delta} + \ln\frac{16}{\kappa} \sum_{k\in[K]} \1_k
    \right) 
\]
\end{lemma}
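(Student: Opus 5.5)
The plan is a union bound for the crude inequality, and a per-subset exponential supermartingale plus a union over subset sizes for the two $\kappa$-dependent inequalities.

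\textbf{Step 1: the crude bound.} By the tower property and the conditional tail bound,
$\Pr(Y_k \ge t) = \E[\Pr(Y_k\ge t\mid\calG_k)] \le e^{-t/C}$ for every $k$. A union bound over $k\in[K]$ with $t=C\ln\frac{K}{\delta}$ shows that $\max_{k\in[K]}Y_k\le C\ln\frac{K}{\delta}$ with probability at least $1-\delta$. Since $Y_k\ge 0$, we have $\sum_k Y_k\1_k\le \max_k Y_k\sum_k\1_k$, and this is the first inequality. This mirrors the proof of \cref{lemma:maximal_inequality_subgaussianity}.

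\textbf{Step 2: why $\ln K$ must be replaced.} The difficulty is that $\ln K$ has to become $\ln\frac{1}{\kappa}$, where $\kappa=|\calS|/K$ is random. Moreover, $\1_k$ is only $\calG_{k+1}$-measurable, so it is not predictable and cannot be pulled out of a conditional expectation.

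I would sidestep this by observing that $\sum_{k}Y_k\1_k$ is at most the sum of the $s:=|\calS|$ largest values among $Y_0,\dots,Y_{K-1}$. It therefore suffices to bound, simultaneously for all $s\in\{1,\dots,K\}$, the quantity $\max_{A\subseteq[K],|A|=s}\sum_{k\in A}Y_k$. The case $s=0$ is trivial by the stated convention.

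\textbf{Step 3: a fixed deterministic set $A$.} Integrating the tail bound gives $\E[e^{Y_k/(2C)}\mid\calG_k]\le 1+\int_0^\infty \tfrac{1}{2C}e^{t/(2C)}e^{-t/C}\,\mathd t=2$. Since $\1_{k\in A}$ is deterministic, the process $\exp\bigl(\sum_{k<j,k\in A}Y_k/(2C)-|A\cap[j]|\ln 2\bigr)$ is a supermartingale. Markov's inequality then yields
\[
\Pr\Bigl(\textstyle\sum_{k\in A}Y_k\ge t\Bigr)\le 2^{s}e^{-t/(2C)}.
\]

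\textbf{Step 4: union over subsets and sizes.} A union bound over the $\binom{K}{s}\le(eK/s)^s$ subsets of size $s$ and a failure budget $\delta_s=\delta 2^{-s}$ (summing to at most $\delta$) gives, with probability at least $1-\delta$ and simultaneously for all $s$,
\[
\max_{|A|=s}\textstyle\sum_{k\in A}Y_k\le 2C\bigl(\ln\tfrac1\delta+s\ln\tfrac{4eK}{s}\bigr).
\]
Substituting $s=|\calS|=\kappa K$ and using $4e<16$ gives the third inequality, $2C\bigl(\ln\frac1\delta+\ln\frac{16}{\kappa}\sum_k\1_k\bigr)$.

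\textbf{Step 5: the middle inequality.} I would rerun Step 4 with the budget allocated as $\delta_s=\delta/(2s^2)$, or equivalently divide the previous bound by $s$. After dividing by $s$, the term $\frac1s\ln\frac1\delta$ is at most $\ln\frac1\delta$, and the constant-and-$\ln s$ leftovers are absorbed into $\ln\frac{2}{\kappa\delta}$. This is the step I expect to be the main obstacle, since the precise constants $2$ and $16$ depend on careful bookkeeping in the union budget. If a direct allocation fails, I would fall back on Step 1 for small $K$ and on the top-$s$ bound otherwise, matching constants by elementary inequalities such as $\ln s\le s-1$.
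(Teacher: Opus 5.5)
\textbf{Steps 1--4 are sound; Step 5 has a genuine gap.} Your Steps 1--4 match the paper's proofs of the first and third inequalities. The paper unions over the events $\{\calS=A\}$ with budget $\delta/(j(j+1))$ and obtains the constant $2e^2<16$. Your top-$s$ formulation with budget $\delta 2^{-s}$ gives $4e<16$.

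The gap is that the middle inequality does not follow from the union-over-subsets bound, and reallocating the failure budget does not fix it. Dividing your Step 4 bound by $s=|\calS|$ gives $2C\bigl(\frac1s\ln\frac1\delta+\ln\frac{4e}{\kappa}\bigr)$ per selected index. The target is $2C\bigl(\ln\frac1\delta+\ln\frac2\kappa\bigr)$. The leftover $\ln(2e)$ is absorbed only if $(1-\frac1s)\ln\frac1\delta\ge\ln(2e)$, which is false for every $\delta>1/(2e)$.

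This loss is intrinsic to the method, not a matter of bookkeeping. Any threshold certified by a union over the $\binom{K}{s}$ subsets, with the factor $2^s$ from Step 3, is at least $2C\bigl(\ln\binom{K}{s}+s\ln 2\bigr)$. For $s=K/2$ this equals $2C\bigl(\tfrac32 K\ln 2-O(\ln K)\bigr)$. That exceeds the target $2C\cdot\tfrac K2\ln\frac4\delta$ for large $K$ whenever $\delta>\tfrac12$, before paying anything for $\delta_s$. The Step 1 fallback implies the middle inequality only when $|\calS|^2\delta\le 4K$, so it cannot cover $|\calS|$ of order $K$.

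\textbf{The missing idea.} The middle inequality needs no union bound at all, because it only controls an average over $\calS$. For $s\ge 1$, convexity of $\exp$ gives
\[
\exp\Bigl(\frac{1}{2Cs}\sum_{k\in\calS}Y_k\Bigr)\le\frac1s\sum_{k\in\calS}e^{Y_k/(2C)}\le\frac1s\sum_{k\in[K]}e^{Y_k/(2C)}.
\]
Your Step 3 bound and the tower property give $\E\bigl[\sum_{k\in[K]}e^{Y_k/(2C)}\bigr]\le 2K$. A single Markov inequality then yields $\sum_{k\in[K]}e^{Y_k/(2C)}\le 2K/\delta$ with probability at least $1-\delta$. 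On this one event, whatever the random set $\calS$ turns out to be,
\[
\sum_{k\in\calS}Y_k\le 2Cs\ln\frac{2K}{s\delta}=2C\ln\frac{2/\kappa}{\delta}\sum_{k\in[K]}\1_k.
\]
This is the paper's argument, with $\lambda=\tfrac12$.

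So the two $\kappa$-dependent inequalities genuinely need different proofs. The Jensen--Markov route pays $\ln\frac1\delta$ once per selected index but keeps the sharp per-index constant $\ln\frac2\kappa$. The subset union pays $\ln\frac1\delta$ only once overall, at the price of the entropy term $\ln\binom{K}{s}$.
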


\begin{remark}[Remark for \cref{lemma:upper_bound_average_rv_conditional_exponential_tail}]
By using the 2nd inequality and distributing the constant in logarithm, we have:
\[
\sum_{k\in[K]} Y_{k}  \1_k 
\leq 2C \ln\frac{4}{\delta} \sum_{k\in[K]} \1_k + 2C \ln\frac{1}{2\kappa} \sum_{k\in[K]} \1_k
\]
By using the 3rd inequality, we have the following bound (see its derivation after the proof of the 3rd inequality):
\[
\sum_{k\in[K]} Y_{k}  \1_k 
\leq 2C \ln\frac{4}{\delta} \sum_{k\in[K]} \1_k + 2C \ln\frac{4}{\delta} \cdot K
\]
\end{remark}

\begin{proof}[Proof of \cref{lemma:upper_bound_average_rv_conditional_exponential_tail}]
By letting \(Z_k := Y_k / C\), we have \(\Pr(Z_k\geq t \mid \calG_{k-1}) \leq \exp(-t)\) for any \(t \geq 0\) and any \(k\in \mathbb{Z}_{\geq 0}\).
We give the proofs for these three inequalities separately for \(Z_k\), then we have these inequalities for \(Y_k = C Z_k\) automatically.

\textbf{Proof of 1st inequality (using maximal inequality and union bound)}
The first inequliaty holds trivially if \(\sum_{k\in[K]} \1_k =0\), otherwise, we have \(\sum_{k\in[K]} \1_k > 0\),
then we have the following maximal inequality for nonnegative random variables \(Z_k\):
\[
\sum_{k\in[K]} Z_k \1_k \Bigg/ \sum_{k\in[K]} \1_k \leq \max_{k\in[K], \1_k = 1} Z_k 
\leq \max_{k\in[K]} Z_k 
\]
By using \(\Pr(Z_k \geq t) = \E[\E[\1_{Z_k \geq t} \mid \calG_{k}]] = \E[\Pr(Z_k \geq t \mid \calG_{k})] \leq\exp(-t), \forall t\geq 0\),
for some fixed \(t = \ln\frac{K}{\delta} \geq 0\), the probability of \(\max_{k\in[K]} Z_k \geq t\) is bounded by \(\delta\)
by taking the union bound over all \(k\in[K]\).
\[
\Pr\left(\sum_{k\in[K]} Z_k \1_k > t \sum_{k\in[K]} \1_k\right) 
\leq \Pr\left(\max_{k\in[K]} Z_k \geq t\right) = \Pr(Z_k \geq t, \forall k\in[K]) \leq \sum_{k\in[K]} \Pr(Z_k \geq t) = K \exp(-t) = \delta
\]
By substituting \(t = \ln\frac{K}{\delta}\), we complete the proof of the first inequality for \(Z_k\) with probability at least \(1 - \delta\)

\textbf{Proof of 2nd inequality (using AM-GM inequality and Markov's inequality)}
The second inequality holds trivially if \(|\calS|=\sum_{k\in[K]} \1_k =0\) under the convention of \(|\calS|\cdot \sum_{k\in[K]} \1_k =1, \ln\frac{1}{|\calS|}\cdot \sum_{k\in[K]} \1_k = 0\), 
otherwise, we have \(|\calS|=\sum_{k\in[K]} \1_k > 0\), then by applying the AM-GM inequality for some \(\lambda >0\), we have:
\[
\exp\left(\frac{\lambda}{|\calS|} \sum_{k \in[K]} Z_k \1_k\right) \leq \frac{1}{|\calS|}\sum_{k \in[K]} \exp(\lambda Z_k) \1_k \leq \frac{1}{|\calS|}\sum_{k \in[K]} \exp(\lambda Z_k)
\]
Noting that for any nonnegative random variable \(X\),
 \(\E[X\mid \calF] =\int_{0}^1 + \int_{1}^\infty \Pr(X>u\mid \calF) \mathd u\).
Since \(Z_k= Y_k/C\) is nonnegative, we have \(\Pr(\exp(\lambda Z_k) > u \mid \calG_{k-1}) =1\) for any \(u\in[0, 1], \lambda >0\),
and letting \(u=\exp(\lambda t)\) in the second integral and using the fact that \(\Pr(Z_k\geq t \mid \calG_{k-1}) \leq \exp(-t)\), 
we have the following inequality for any \(k\in \mathbb{Z}_{\geq 0}, \lambda \in (0, 1)\):
\[
\E[\exp(\lambda Z_k) \mid \calG_{k}] = \int_{0}^1 1 \mathd u
+ \int_{1}^\infty \Pr(\exp(\lambda Z_k) > \exp(\lambda t) \mid \calG_{k}) \mathd (\exp(\lambda t))
\leq 1 + \lambda \int_0^\infty \exp(-[1-\lambda]t) \mathd t = \frac{1}{1-\lambda}
\]
By tower property of conditional expectation, \(\E[\exp(\lambda Z_k)] = \E[\E[\exp(\lambda Z_k) \mid \calG_{k}]] \leq \frac{1}{1-\lambda}\),
then by applying Markov's inequality for \(\delta \in (0, 1), t := \frac{K}{(1-\lambda)\delta}\), we have the following bound:
\[
\Pr\left(\sum_{k\in[K]} \exp(\lambda Z_k) > t \right) \leq \E\left[\sum_{k\in[K]} \exp(\lambda Z_k) \right]/t \leq \frac{K}{(1-\lambda) t} = \delta
\]
Therefore, we have the following inequality with probability at least \(1 - \delta\) by substituting \(t:=\frac{K}{(1-\lambda)\delta}, \kappa:=|\calS|/K\):
\[
\sum_{k\in[K]} Z_k\1_k \leq \frac{|\calS|}{\lambda} \ln\left(\frac{1}{|\calS|} \sum_{k\in[K]} \exp(\lambda Z_k) \right)
\leq \frac{\ln (t/|\calS|)}{\lambda} \cdot |\calS|  = \frac{\ln \frac{2/\kappa}{\delta} - \ln(2(1-\lambda))}{\lambda} \sum_{k\in[K]} \1_k
\]
By selecting \(\lambda = \frac{1}{2}\), we complete the proof of the second inequality for \(Z_k\) with probability at least \(1 - \delta\).

\textbf{Proof of 3rd inequality (using supermartingale and combinatorics)}
The third inequality holds trivially if \(|\calS|=\sum_{k\in[K]} \1_k =0\) 
under the convention of \(|\calS|\cdot \sum_{k\in[K]} \1_k =1, \ln\frac{1}{|\calS|}\cdot \sum_{k\in[K]} \1_k = 0\),
otherwise, we have \(|\calS|=\sum_{k\in[K]} \1_k > 0\),
then for non-empty outcomes \(\calA_j\) of the random set \(\calS \subseteq [K]\) 
such that \(|\calS| = j\) for \(j\in \{1, 2, \ldots, K\}\),
\[
\calA_j := \{ A_j \subseteq [K] \mid A_j \neq \emptyset, |A_j| = j \}, \qquad |\calA_j| = \binom{K}{j}
\leq \left(\frac{\mathe K}{j}\right)^j = \exp\left(j \ln\frac{\mathe}{j/K}\right)
\]
For any deterministic set \(A_j\in \calA_j\), we introduce the corresponding \((M_k(A_j))_{k\in \{0, 1, \dots, K\}}\) with \(M_{0}(A_j) = \exp(0) =1,
M(A_j)\equiv M_K(A_j)\) and 
\(M_k(A_j) := \exp\left(\lambda \sum_{k'\in[k]} Z_{k'}\1_{k'\in A_j} + \ln(1-\lambda)\sum_{k'\in[k]} \1_{k'\in A_j} \right)\),
we have \(M_k(A_j)\) is a supermartingale because for any \(k\in [K]\), (see Theorem 4.2.4 on page 189 of~\cite{durrett2019probability}):
\[
\E[M_{k+1}(A_j) \mid \calG_k] =\left[(1-\lambda)\E[\exp(\lambda Z_{k}) \mid \calG_k]\right]^{\1_{k\in A_j}} \cdot M_k(A_j) \leq M_k(A_j)
\]
By using the tower property of conditional expectation, we have the following inequality for any \(k\in [K]\):
\[
\E[M(A_j)] = \E[M_K(A_j)] \leq \ldots \leq \E[M_{k+1}(A_j)] \leq \E[M_k(A_j)] \leq \E[M_0(A_j)] = M_0(A_j) = 1
\]

By applying the Markov's inequality for \(M(A_j)=\exp(\lambda \sum_{k\in A_j} Z_k + |A_j| \ln(1-\lambda))\) 
with some \(\delta_j \in (0, 1), t_j := \ln\frac{1}{\delta_j}\):
\[
\Pr\left(\lambda \sum_{k\in A_j} Z_k + |A_j| \ln(1-\lambda) > t_j\right) \leq \exp(-t_j) \E[M(A_j)] \leq \exp(-t_j) = \delta_j,
\quad \forall A_j\in \calA_j
\]
Therefore, by selecting \(\delta_j\) such that \(\delta/(j(j+1)) = |\calA_j|\delta_j = \binom{K}{j}\delta_j, \forall j\in \{1, 2, \ldots, K\}\)
and taking the union bound over all possible outcomes \(A_j\in \calA_j\), then for any fixed \(\lambda \in (0, 1)\):
\begin{eqnarray*}
& &\Pr\left( \lambda \sum_{k\in \calS} Z_k + |\calS| \ln(1-\lambda) > t_{|\calS|}, |\calS| = j\right) 
= \Pr\left(
    \bigcup_{A_j\in \calA_j} \left\{ \lambda \sum_{k\in A_j} Z_k + |A_j| \ln(1-\lambda) > t_j\right\}
    \cap \{\calS = A_j\} \right)\\
&\leq& \sum_{A_j\in\calA_j} \Pr\left( \lambda \sum_{k\in A_j} Z_k + |A_j| \ln(1-\lambda) > t_j\right) 
\leq |\calA_j| \cdot \delta_j = \frac{\delta}{j(j+1)} = \delta \left(\frac{1}{j} - \frac{1}{j+1}\right)
\end{eqnarray*}
Therefore, for any fixed \(\lambda \in (0, 1)\):
\[
\Pr\left(
    \lambda \sum_{k\in \calS} Z_k + |\calS| \ln(1-\lambda) > t_{|\calS|}
\right)
= \sum_{j=1}^K \Pr\left(
    \lambda \sum_{k\in \calS} Z_k + |\calS| \ln(1-\lambda) > t_{|\calS|}, |\calS| = j
\right)
\leq \sum_{j=1}^K \delta \left(\frac{1}{j} - \frac{1}{j+1}\right) < \delta
\]
By definitions of \(\calS, \kappa\equiv\frac{|\calS|}{K}\),
noting that \(t_{|\calS|} = \ln \frac{1}{\delta_{|\calS|}} 
= \ln\frac{1}{\delta}+\ln(|\calS|(1+|\calS|))+\ln|\calA_{|\calS|}|
\leq \ln\frac{1}{\delta}+|\calS|+|\calS|\ln \frac{\mathe}{\kappa}\) and 
\(|\calS|=\sum_{k\in[K]} \1_k\), 
we have the following inequality with probability at least \(1 - \delta\) for any fixed \(\lambda \in (0, 1)\):
\[
\sum_{k\in[K]} Z_k \1_k \leq \frac{1}{\lambda}
\left(t_{|\calS|} + \ln\frac{1}{1-\lambda} \sum_{k\in[K]} \1_k \right)
\leq \frac{1}{\lambda}\left(\ln\frac{1}{\delta} + \ln\frac{\mathe^2/(1-\lambda)}{\kappa} \sum_{k\in[K]} \1_k \right)
\]
By selecting \(\lambda = \frac{1}{2}\), 
then \(\mathe^2/(1-\lambda) = 2\mathe^2 < 16\), we complete the proof for \(Z_k\) with probability at least \(1 - \delta\).
\end{proof}

\begin{proof}[Proof of Remark for \cref{lemma:upper_bound_average_rv_conditional_exponential_tail}]
The latter bound in the remark holds trivially if \(\sum_{k\in[K]} \1_k =0\), 
otherwise, we have \(\sum_{k\in[K]} \1_k > 0\), and 
noting that \(\sum_{k\in[K]} \1_k + K = (1+\kappa) K, \sum_{k\in[K]} \1_k = \kappa K\) 
from the definition of \(\kappa\), the third inequality in \cref{lemma:upper_bound_average_rv_conditional_exponential_tail} becomes:
\[
\sum_{k\in[K]} Y_k \1_k \leq 
2C \left(\ln\frac{1}{\delta} + \ln\frac{16}{\kappa} \sum_{k\in[K]} \1_k \right)
= 2C\left(\frac{1}{(1+\kappa)K}\ln\frac{1}{\delta} + \frac{\kappa}{1+\kappa}\ln\frac{16}{\kappa}\right)
\left(\sum_{k\in[K]} \1_k + K\right)
\]
By using \(\frac{1}{(1+\kappa)K} \leq 1\) and \(\frac{\kappa}{1+\kappa} \ln\frac{16}{\kappa}
\leq \max_{\kappa\in[0, 1]} \frac{\kappa}{1+\kappa} \ln\frac{16}{\kappa} = \ln 4\), 
and noting \(\ln\frac{1}{\delta} + \ln 4 = \ln\frac{4}{\delta}\), 
we establish the latter bound in the remark:
\(
\sum_{k\in[K]} Y_k \1_k \leq 2C\ln\frac{4}{\delta} \sum_{k\in[K]} \1_k + 2C\ln\frac{4}{\delta} \cdot K
\).
\end{proof}

\newpage
\section{Proof for Result of Federated Learning with Full Participation}\label{sup:fedfull}



\begin{proof}[Proof for \Cref{theorem:convergence_gd_softmax_fedfull_lipschitz} of \Cref{alg:switching_gd_softmax_fedfull}]
    By applying \cref{lemma:polarization} to the update rule \(\bw_{k+1} = \bw_k - \eta \mathbf{u}_k\) with step size \(\eta>0\), we have
    \[
    \langle \mathbf{u}_k, \mathbf{w}_{k} - \mathbf{w}^\ast \rangle
         =  \frac{1}{2\eta} \left(\|\mathbf{w}_k - \mathbf{w}^\ast\|^2 - \|\mathbf{w}_{k+1} - \mathbf{w}^\ast\|^2 \right) 
         + \frac{\eta}{2} \|\bu_k\|^2
    \]
    For brevity, we write \(\1_k := \1_{G_k(\bw_k, \bxi_k) \leq \frac{\epsilon}{2}}\)
    with \(G_k(\bw_k, \bxi_k) = 
    m(\bg(\bw_k, \bxi_k), \alpha)\).
    The local direction \(\mathbf{u}_k\) is given by the following equation 
    and local updates are given by \(\bw_{k,\tau+1}^{(i)} = \bw_{k,\tau}^{(i)} - \gamma 
    \left[\nabla f_i (\bw_{k,\tau}^{(i)}, \bzeta_{k,\tau}^{(i)}) \1_k + \nabla g_i (\bw_{k,\tau}^{(i)}, \bzeta_{k,\tau}^{(i)}) [1-\1_k] \right]\):
    \begin{eqnarray*}
        \bu_k^{(i)}
        = \frac{\bw_{k, 0}^{(i)} - \bw_{k, E}^{(i)}}{\gamma E}
        = \frac{1}{E}\sum_{\tau\in [E]}
        \nabla f_i (\bw_{k,\tau}^{(i)}, \bzeta_{k,\tau}^{(i)})
        \1_k
        + \frac{1}{E}\sum_{\tau\in [E]}
        \nabla g_i (\bw_{k,\tau}^{(i)}, \bzeta_{k,\tau}^{(i)})
        [1-\1_k]
    \end{eqnarray*}
    where \(\bzeta_{k,\tau}^{(i)}\) is the sample at the \(\tau\)-th local update step of the \(k\)-th epoch for the \(i\)-th client,
    and \(\bw_{k, 0}^{(i)} = \bw_k\) is the initial local parameters for the \(i\)-th client.
    The direction \(\mathbf{u}_k\) is given by the following equation with 
    the brevity notations \(\bp_k := \softmax(\alpha\bff(\bw_k, \bxi_k))\) and \(\bq_k := \softmax(\alpha\bg(\bw_k, \bxi_k))\),
    and we introduce \(\br_k := \bp_k \1_k + \bq_k [1-\1_k]\):
    \begin{eqnarray*}
        \bu_k
        = \sum_{i\in \calI} ([\bp_k]_i \1_k + [\bq_k]_i [1-\1_k]) \bu_k^{(i)} = \sum_{i\in \calI} [\br_k]_i \bu_k^{(i)}
    \end{eqnarray*}
    To help the analaysis, we introduce the following notations:
    \begin{eqnarray*}
        \btu_k^{(i)} = \frac{1}{E}\sum_{\tau\in [E]}
        \nabla f_i (\bw_{k,\tau}^{(i)})
        \1_k
        + \frac{1}{E}\sum_{\tau\in [E]}
        \nabla g_i (\bw_{k,\tau}^{(i)})
        [1-\1_k],\quad
        \bbu_k^{(i)} = \nabla f_i (\bw_k) \1_k + \nabla g_i (\bw_k) [1-\1_k]
    \end{eqnarray*}

    \textbf{Step 1: Upperbound \(\eta \sum\limits_{k\in[K]} \|\bu_k\|^2\) and Lowerbound \(\sum\limits_{k\in[K]} \langle \mathbf{u}_k, \bw_k-\bw^\ast\rangle\)}

    Then we can decompose the direction as \(\bu_k^{(i)} 
    = \btu_k^{(i)} + (\bu_k^{(i)} - \btu_k^{(i)})
    = \bbu_k^{(i)} - (\bbu_k^{(i)} - \btu_k^{(i)}) - (\btu_k^{(i)} - \bu_k^{(i)})\).

    \textbf{1.1: upperbound of \(\eta \sum_{k\in[K]} \|\bu_k\|^2\)}

    Then by using \(\frac{1}{2}\|\bx - \by\|^2 \leq \|\bx\|^2 + \|\by\|^2, \E[X]^2 \leq \E[X^2]\) 
    and the assumption of Lipschitz continuity~\cref{assumption:lipschitz_continuity} (\(\|\nabla f_i(\bw)\| \leq L, \|\nabla g_i(\bw)\| \leq L\),
    therefore \(\|\btu_k^{(i)}\| \leq L, \left\| \sum_{i\in \calI} [\br_k]_i \btu_k^{(i)} \right\| \leq L\) 
    using \(\sum_{i\in \calI} [\br_k]_i = 1\) and all \([\br_k]_i \geq 0\)):
    \begin{eqnarray*}
        \frac{\eta}{2} \sum_{k\in[K]} \|\bu_k\|^2 
        = \frac{\eta}{2} \sum_{k\in[K]} \left\|
            \sum_{i\in \calI} [\br_k]_i \btu_k^{(i)}
            + \sum_{i\in \calI} [\br_k]_i (\bu_k^{(i)} - \btu_k^{(i)})
         \right\|^2
        \leq \eta L^2 K
        + \eta \sum_{k\in [K]} \sum_{i\in \calI} [\br_k]_i \|\bu_k^{(i)} - \btu_k^{(i)}\|^2
    \end{eqnarray*}

    \textbf{1.2: decomposition of inner product \(\sum_{k\in[K]} \langle \mathbf{u}_k, \bw_k-\bw^\ast\rangle\)}
    
    We decompose the inner product by
    introducing the Bregman divergence
    \(D_{f_i}[\bw'||\bw]:= f_i(\bw')-f_i(\bw)-\langle \nabla f_i(\bw), \bw'-\bw \rangle\geq 0,
    D_{g_i}[\bw'||\bw]:= g_i(\bw')-g_i(\bw)-\langle \nabla g_i(\bw), \bw'-\bw \rangle \geq 0\)
    for convex functions \(f_i\) and \(g_i\) from~\cref{assumption:convexity},
    therefore \( \langle \mathbf{u}_k, \bw_k-\bw^\ast\rangle = (f_i(\bw_k)-f_i(\bw^\ast)+D_{f_i}[\bw^\ast||\bw_k])\1_k + (g_i(\bw_k)-g_i(\bw^\ast)+D_{g_i}[\bw^\ast||\bw_k])[1-\1_k]\):
    \begin{eqnarray*}
        & &\sum_{k\in[K]} \langle \mathbf{u}_k, \bw_k-\bw^\ast\rangle\\
        & = &
        \sum_{k\in[K]} \sum_{i\in \calI} [\br_k]_i \langle \bbu_k^{(i)}, \bw_k-\bw^\ast\rangle 
       -\sum_{k\in[K]} \sum_{i\in \calI} [\br_k]_i \langle \bbu_k^{(i)} - \btu_k^{(i)}, \bw_k-\bw^\ast\rangle
          -\sum_{k\in[K]} \sum_{i\in \calI} [\br_k]_i \langle \btu_k^{(i)} - \bu_k^{(i)}, \bw_k-\bw^\ast\rangle\\
       & = & \sum_{k\in[K]} \sum_{i\in \calI} [\bp_k]_i (f_i(\bw_k) - f_i(\bw^\ast))\1_k 
          + \sum_{k\in[K]} \sum_{i\in \calI} [\bq_k]_i (g_i(\bw_k) - g_i(\bw^\ast))[1-\1_k]\\
       & &- \sum_{k\in[K]} \sum_{i\in \calI} [\bp_k]_i \frac{1}{E}\sum_{\tau\in[E]}
       \left[-D_{f_i}[\bw^\ast||\bw_k]
       + \langle \nabla f_i(\bw_k) - \nabla f_i(\bw_{k,\tau}^{(i)}), \bw_k-\bw^\ast\rangle\right] \1_k\\
       & &- \sum_{k\in[K]} \sum_{i\in \calI} [\bq_k]_i \frac{1}{E}\sum_{\tau\in[E]}
       \left[- D_{g_i}[\bw^\ast||\bw_k]
       + \langle \nabla g_i(\bw_k) - \nabla g_i(\bw_{k,\tau}^{(i)}), \bw_k-\bw^\ast\rangle\right][1-\1_k]\\
       & &- \sum_{k\in[K]} \sum_{i\in \calI} [\br_k]_i
       \langle \btu_k^{(i)} - \bu_k^{(i)}, \bw_k-\bw^\ast\rangle
    \end{eqnarray*}

    \textbf{1.3: lowerbound of 1st and 2nd terms in the decomposition of inner product \(\sum_{k\in[K]} \langle \mathbf{u}_k, \bw_k-\bw^\ast\rangle\)}
    
    Regarding the first and second terms in the above equation for the inner product,
    using the definition of \(F = \max_{i\in \calI} f_i\) and \(G = \max_{i\in \calI} g_i\), 
    from the definition of \(\bw^\ast\) such that \(G(\bw^\ast) \leq 0\) in~\cref{eq:opt_sol}.
    \[\sum_{i\in \calI} [\bp_k]_i f_i(\bw^\ast) \leq F(\bw^\ast),\quad 
    \sum_{i\in \calI} [\bq_k]_i g_i(\bw^\ast) \leq G(\bw^\ast) \leq 0\]

    Applying the deviation bound of softmax mean \cref{lemma:deviation_bound_softmax_mean} 
    with \(\alpha \geq \frac{2 \ln n}{\epsilon'}\) and substituting \(\bx \gets \bff(\bw_k), \bdelta \gets \bff(\bw_k, \bxi_k) - \bff(\bw_k)\)
    and \(\bx \gets \bg(\bw_k), \bdelta \gets \bg(\bw_k, \bxi_k) - \bg(\bw_k)\), then we have
    \[\sum_{i\in\calI} [\bp_k]_i f_i(\bw_k) \geq F(\bw_k) - 2\|\bff(\bw_k, \bxi_k) - \bff(\bw_k)\|_\infty - \frac{\epsilon'}{2}\] 
    \[\sum_{i\in\calI} [\bq_k]_i g_i(\bw_k) [1-\1_k] 
    \geq \left[G_k(\bw_k, \bxi_k) - \frac{\epsilon}{2}\right] [1-\1_k]
    + \left(\frac{\epsilon}{2} - \|\bg(\bw_k, \bxi_k) - \bg(\bw_k)\|_\infty\right) [1-\1_k]\]
    Combining the above, we have the following lower bound for the first and second terms in the equation for the inner product:
    \begin{eqnarray*}
        &   & \sum_{k\in[K]} \sum_{i\in \calI} [\bp_k]_i (f_i(\bw_k) - f_i(\bw^\ast))\1_k 
        + \sum_{k\in[K]} \sum_{i\in \calI} [\bq_k]_i (g_i(\bw_k) - g_i(\bw^\ast))[1-\1_k]\\
        & \geq & \sum_{k\in[K]} [F(\bw_k) - F(\bw^\ast)] \1_k 
        + \sum_{k\in[K]} \left[G_k(\bw_k, \bxi_k) - \frac{\epsilon}{2}\right] [1-\1_k]
        + \frac{\epsilon K}{2}
        - \frac{\epsilon' + \epsilon}{2} \sum_{k\in[K]} \1_k \\
        & & - 2 \sum_{k\in[K]} \|\bff(\bw_k, \bxi_k) - \bff(\bw_k)\|_\infty \1_k
        - \sum_{k\in[K]} \|\bg(\bw_k, \bxi_k) - \bg(\bw_k)\|_\infty [1-\1_k]
    \end{eqnarray*}
    \textbf{1.4: upperbound of 3rd and 4th terms in the decomposition of inner product \(\sum_{k\in[K]} \langle \mathbf{u}_k, \bw_k-\bw^\ast\rangle\)}
    
    Regarding the third term and fourth term in the equation for the inner product,
    we noting the three-point Bregman divergence identity \cref{lemma:three_point_bregman_divergence_identity} 
    by substituting \(\psi \gets f_i, g_i\), \(\bx \gets \bw_k\), \(\bx' \gets \bw^\ast\), \(\hat{\bx} \gets \bw_{k,\tau}^{(i)}\),
    and using the fact that \(D_{f_i}[\cdot||\cdot] \geq 0, D_{g_i}[\cdot||\cdot] \geq 0\) from the definition of Bregman divergence
    for convex functions \(f_i\) and \(g_i\) from~\cref{assumption:convexity}.
    \[
    -D_{f_i}[\bw^\ast||\bw_k] + \langle \nabla f_i(\bw_k) - \nabla f_i(\bw_{k,\tau}^{(i)}), \bw_{k,\tau}^{(i)}-\bw^\ast\rangle
    =
    - D_{f_i}[\bw^\ast||\bw_{k,\tau}^{(i)}] - D_{f_i}[\bw_{k,\tau}^{(i)}||\bw_k] \leq 0
    \]
    \[
    -D_{g_i}[\bw^\ast||\bw_k] + \langle \nabla g_i(\bw_k) - \nabla g_i(\bw_{k,\tau}^{(i)}), \bw_{k,\tau}^{(i)}-\bw^\ast\rangle
    =
    - D_{g_i}[\bw^\ast||\bw_{k,\tau}^{(i)}] - D_{g_i}[\bw_{k,\tau}^{(i)}||\bw_k] \leq 0
    \]
    Using \((\bw_k - \bw_{k,\tau}^{(i)})\1_k = \gamma \sum_{\tau'\in[\tau]} \nabla f_i(\bw_{k,\tau'}^{(i)}, \bzeta_{k,\tau'}^{(i)})\1_k\)
    and \((\bw_k - \bw_{k,\tau}^{(i)})[1-\1_k] = \gamma \sum_{\tau'\in[\tau]} \nabla g_i(\bw_{k,\tau'}^{(i)}, \bzeta_{k,\tau'}^{(i)})[1-\1_k]\) from the update rule, 
    and applying Abel's summation formula \(\sum_{\tau\in[E]}\langle \ba_\tau, \bB_\tau \rangle 
    = \langle \bA_E, \bB_{E-1} \rangle - \sum_{\tau\in[E-1]} \langle \bA_{\tau+1}, \bb_\tau \rangle\) 
    with \(\ba_0=\vec{0}, \bA_\tau := \sum_{\tau'\in[\tau]} \ba_{\tau'}, \bB_\tau := \sum_{\tau'\in[\tau]} \bb_{\tau'}\)
    and substituting \(\ba_\tau \gets \nabla f_i(\bw_k) -\nabla f_i(\bw_{k,\tau}^{(i)}), 
    \bb_\tau \gets \nabla f_i(\bw_{k,\tau}^{(i)}, \bzeta_{k,\tau}^{(i)}) - \nabla f_i(\bw_{k,\tau}^{(i)})\), then
    again applying Cauchy-Schwarz inequality and Lipschitz continuity~\cref{assumption:lipschitz_continuity}: 
    \begin{eqnarray*}
    & &\frac{1}{E}\sum_{\tau\in[E]} \langle \nabla f_i(\bw_k) - \nabla f_i(\bw_{k,\tau}^{(i)}),\bw_k - \bw_{k,\tau}^{(i)}\rangle \1_k \\
    &=& 2\gamma L^2 \frac{1}{E} \sum_{\tau \in [E]} \sum_{\tau'\in [\tau]} 
    \left\langle \frac{\nabla f_i(\bw_k) -\nabla f_i(\bw_{k,\tau}^{(i)})}{2L}, 
    \frac{\nabla f_i(\bw_{k,\tau'}^{(i)})}{L}\right\rangle \1_k\\
    & & + 4\gamma L(\sigma_g/\sqrt{B_g}) \left\langle 
    \frac{1}{E}\sum_{\tau \in[E]} \frac{\nabla f_i(\bw_k) - \nabla f_i(\bw_{k,\tau}^{(i)})}{2L},
    \sum_{\tau \in [E-1]} \frac{\nabla f_i(\bw_{k,\tau}^{(i)}, \bzeta_{k,\tau}^{(i)}) - \nabla f_i(\bw_{k,\tau}^{(i)})}{2\sigma_g/\sqrt{B_g}}
    \right\rangle \1_k \\
    & & - 4 \gamma L (\sigma_g/\sqrt{B_g}) \sum_{\tau \in[E-1]}
    \tau \left\langle
    \frac{1}{\tau} \sum_{\tau'\in[\tau+1]} \frac{\nabla f_i(\bw_k) - \nabla f_i(\bw_{k,\tau'}^{(i)})}{2L},
    \frac{\nabla f_i(\bw_{k,\tau}^{(i)}, \bzeta_{k,\tau}^{(i)}) - \nabla f_i(\bw_{k,\tau}^{(i)})}{2\sigma_g/\sqrt{B_g}}
    \right\rangle \1_k\\
    & \leq & 2\gamma L^2 (E-1)\1_k
    + 4 \gamma L (\sigma_g/\sqrt{B_g}) \left\| \sum_{\tau \in [E-1]} \frac{\nabla f_i(\bw_{k,\tau}^{(i)}, \bzeta_{k,\tau}^{(i)}) - \nabla f_i(\bw_{k,\tau}^{(i)})}{2\sigma_g/\sqrt{B_g}}\right\| \1_k\\
    & & + 4 \gamma L (\sigma_g/\sqrt{B_g}) \sum_{\tau\in[E-1]} \tau \left\langle \be_{k, \tau}^{\prime(i)}, 
    \frac{\nabla f_i(\bw_{k,\tau}^{(i)}, \bzeta_{k,\tau}^{(i)}) - \nabla f_i(\bw_{k,\tau}^{(i)})}{2\sigma_g/\sqrt{B_g}}
    \right\rangle\1_k
    \end{eqnarray*}
    where the intermediate vector \(\be_{k, \tau}^{\prime(i)} := -\frac{1}{\tau} \sum_{\tau'\in[\tau+1]} \frac{\nabla f_i(\bw_k) - \nabla f_i(\bw_{k,\tau'}^{(i)})}{2L}\) 
    satisfies \(\|\be_{k, \tau}^{\prime(i)}\| \leq 1\) by the assumption of Lipschitz continuity~\cref{assumption:lipschitz_continuity} 
    and hence \(\|\nabla f_i(\bw_k) - \nabla f_i(\bw_{k,\tau}^{(i)})\| \leq 2L\).
    Similarly, by introducing the intermediate vector \(\be_{k, \tau}^{\prime\prime(i)} := -\frac{1}{\tau} \sum_{\tau'\in[\tau+1]} \frac{\nabla g_i(\bw_k) - \nabla g_i(\bw_{k,\tau'}^{(i)})}{2L}\) 
    satisfies \(\|\be_{k, \tau}^{\prime\prime(i)}\| \leq 1\) by the assumption of Lipschitz continuity~\cref{assumption:lipschitz_continuity} 
    and hence \(\|\nabla g_i(\bw_k) - \nabla g_i(\bw_{k,\tau}^{(i)})\| \leq 2L\), then:
    \begin{eqnarray*}
    & &\frac{1}{E}\sum_{\tau\in[E]} \langle \nabla g_i(\bw_k) - \nabla g_i(\bw_{k,\tau}^{(i)}),\bw_k - \bw_{k,\tau}^{(i)}\rangle [1-\1_k] \\
    & \leq & 2\gamma L^2 (E-1)[1-\1_k]
    + 4 \gamma L (\sigma_g/\sqrt{B_g}) \left\| \sum_{\tau \in [E-1]} \frac{\nabla g_i(\bw_{k,\tau}^{(i)}, \bzeta_{k,\tau}^{(i)}) - \nabla g_i(\bw_{k,\tau}^{(i)})}{2\sigma_g/\sqrt{B_g}}\right\|[1-\1_k]\\
    & & + 4 \gamma L (\sigma_g/\sqrt{B_g}) \sum_{\tau\in[E-1]} \tau \left\langle \be_{k, \tau}^{\prime\prime(i)}, \frac{\nabla g_i(\bw_{k,\tau}^{(i)}, \bzeta_{k,\tau}^{(i)}) - \nabla g_i(\bw_{k,\tau}^{(i)})}{2\sigma_g/\sqrt{B_g}}\right\rangle[1-\1_k]
    \end{eqnarray*}
    We introduce the following notations, \(\be_{k, \tau}^{(i)} := \be_{k, \tau}^{\prime(i)} \1_k + \be_{k, \tau}^{\prime\prime(i)} [1-\1_k]\)
    and \(\bz_{k, \tau}^{(i)} :=
    \frac{\nabla f_i(\bw_{k,\tau}^{(i)}, \bzeta_{k,\tau}^{(i)}) - \nabla f_i(\bw_{k,\tau}^{(i)})}{2\sigma_g/\sqrt{B_g}}
    \1_k + 
    \frac{\nabla g_i(\bw_{k,\tau}^{(i)}, \bzeta_{k,\tau}^{(i)}) - \nabla g_i(\bw_{k,\tau}^{(i)})}{2\sigma_g/\sqrt{B_g}}
    [1-\1_k]\).
    Summing over \(\tau\in[E]\) and \(i\in \calI\)
    and noting \(\sum_{i\in\calI} [\bp_k]_i = 1\) and \(\sum_{i\in\calI} [\bq_k]_i = 1\), 
    the upperbound of the 3rd and 4th terms in the decomposition of inner product \(\sum_{k\in[K]} \langle \mathbf{u}_k, \bw_k-\bw^\ast\rangle\) is given by:
    \begin{eqnarray*}
    & &\sum_{k\in[K]} \sum_{i\in \calI} [\bp_k]_i \frac{1}{E}\sum_{\tau\in[E]} 
    \left[
        -D_{f_i}[\bw^\ast||\bw_k]
        + \langle \nabla f_i(\bw_k) - \nabla f_i(\bw_{k,\tau}^{(i)}), \bw_k-\bw^\ast\rangle
    \right] \1_k \\
    &+&
    \sum_{k\in[K]} \sum_{i\in \calI} [\bq_k]_i \frac{1}{E}\sum_{\tau\in[E]} 
    \left[
        -D_{g_i}[\bw^\ast||\bw_k]
        + \langle \nabla g_i(\bw_k) - \nabla g_i(\bw_{k,\tau}^{(i)}), \bw_k-\bw^\ast\rangle
    \right] [1-\1_k]\\
    & \leq & 2\gamma L^2 (E-1) K 
    + \frac{4 \gamma L\sigma_g}{\sqrt{B_g}} \sum_{k\in[K]} \sum_{i\in \calI} [\br_k]_i \left\| \sum_{\tau\in[E-1]} \bz_{k, \tau}^{(i)} \right\|
    + \frac{4 \gamma L\sigma_g}{\sqrt{B_g}} \sum_{k\in[K]} \sum_{i\in \calI} [\br_k]_i \sum_{\tau\in[E-1]} 
    \tau \left\langle \be_{k, \tau}^{(i)}, \bz_{k, \tau}^{(i)} \right\rangle
    \end{eqnarray*}

    \textbf{1.5: rewriting the 5th term in the decomposition of inner product \(\sum_{k\in[K]} \langle \mathbf{u}_k, \bw_k-\bw^\ast\rangle\)}
    
    We introduce the the direction vector \(\bd_k := \frac{\bw^\ast - \bw_k}{D}\) 
    which satisfies \(\|\bd_k\| \leq 1\) by the assumption of finite diameter \(D\) 
    of the parameter space \(\Theta\)~\cref{assumption:diameter_parameter_space}.
    From the definition of \(\bz_{k, \tau}^{(i)}\) and \(\btu_k^{(i)}, \bu_k^{(i)}\), we have:
    \[
    \btu_k^{(i)} - \bu_k^{(i)} = - \frac{2\sigma_g}{\sqrt{B_g}} \cdot \frac{1}{E}\sum_{\tau\in[E]} \bz_{k, \tau}^{(i)} 
    \]
    Therefore, the 5th term in the decomposition of inner product \(\sum_{k\in[K]} \langle \mathbf{u}_k, \bw_k-\bw^\ast\rangle\) 
    can be rewritten as:
    \[
    \sum_{k\in[K]}\sum_{i\in \calI} [\br_k]_i \langle \btu_k^{(i)} - \bu_k^{(i)}, \bw_k - \bw^\ast \rangle
    = \frac{2D \sigma_g}{\sqrt{B_g}}\cdot \frac{1}{E}
    \sum_{k\in[K]}\sum_{i\in \calI} [\br_k]_i 
    \sum_{\tau\in[E]} \left\langle \bd_k, \bz_{k, \tau}^{(i)}\right\rangle
    \]

    \textbf{1.6: rewriting \(\|\bu_k^{(i)}-\btu_k^{(i)}\|^2\) in the upper bound of \(\frac{\eta}{2} \sum_{k\in[K]} \|\bu_k\|^2\)}
    By using the above equation with \(\bz_{k, \tau}^{(i)}\) and \(\btu_k^{(i)}, \bu_k^{(i)}\), 
    the term in the upper bound of \(\frac{\eta}{2} \sum_{k\in[K]} \|\bu_k\|^2\) can be rewritten as:
    \[
    \eta \sum_{k\in[K]} \sum_{i\in\calI} [\br_k]_i \|\bu_k^{(i)}-\btu_k^{(i)}\|^2
    = \frac{4\eta\sigma^2_g}{B_g}\cdot \frac{1}{E^2}
    \sum_{k\in[K]}\sum_{i\in \calI} [\br_k]_i
    \left\| \sum_{\tau\in[E]} \bz_{k, \tau}^{(i)} \right\|^2
    \]

    \textbf{Step 2: Upperbound \(\sum\limits_{k\in[K]} [F(\bw_k) - F(\bw^\ast)]\1_k\) and \(\sum\limits_{k\in[K]} G(\bw_k)\1_k\)}

    We introduce a filtration \((\calF_t)_{t\in\mathbb{Z}_{\geq -1}}\) to track the information up to time
    \(t := \text{ind}(k, i, \tau) \equiv k\cdot n E + i\cdot E + \tau\) and 
    \(\calF_t := \sigma((\bzeta_{k',\tau'}^{(i')})_{\text{ind}(k', i', \tau') < t}, 
    (\bxi_{k'})_{\text{ind}(k', 0, 0) \leq t})\) with \(\calF_{-1} = \{\emptyset, \Omega\}\).
    These introduced notations satisfy the following: 

    1. the indicator \(\1_k\equiv \1_{G_k(\bw_k)\leq \frac{\epsilon}{2}}\) is \(\calF_t\)-measurable; 
    (from the definition of \(G_k(\bw_k, \bxi_k) = m(\bg(\bw_k, \bxi_k), \alpha)\))

    2. the softmax weights \(\br_k\) such that \(\sum_{i\in\calI} [\br_k]_i = 1\) 
     and \([\br_k]_i \geq 0\) and is \(\calF_t\)-measurable; (from the defintion of \(\br_k\))
     \[
     \br_k \equiv \bp_k \1_k + \bq_k[1-\1_k] = 
     \softmax(\alpha \bff(\bw_k, \bxi_k)) \1_k + \softmax(\alpha \bg(\bw_k, \bxi_k))[1-\1_k]
     \]
    3. the direction vectors \(\bd_k\) and \(\be_k^{(i)}\) such that \(\|\bd_k\| \leq 1\) 
    (from the definition of \(\bd_k\) and the assumption of finite diameter of the parameter space \(\Theta\)~\cref{assumption:diameter_parameter_space})
    and \(\|\be_k^{(i)}\| \leq 1\) (from the definition of \(\be_k^{(i)}\) and the assumption of Lipschitz continuity~\cref{assumption:lipschitz_continuity})
     and are \(\calF_t\)-measurable; (from the definition of \(\bd_k\) and \(\be_k^{(i)}\))
     \[
     \bd_k \equiv \frac{\bw^\ast - \bw_k}{D},\quad
     \be_k^{(i)} \equiv \frac{1}{\tau}\sum_{\tau'=1}^\tau 
     \frac{\nabla f_i(\bw_{k,\tau'}^{(i)}) - \nabla f_i(\bw_k)}{2L} \1_k
     + \frac{1}{\tau}\sum_{\tau'=1}^\tau \frac{\nabla g_i(\bw_{k,\tau'}^{(i)}) - \nabla g_i(\bw_k)}{2L} [1-\1_k]
     \]
    4. the condtional 1-subgaussian random variable \(\bz_{k,\tau}^{(i)}\) 
    such that \(\E[\bz_{k,\tau}^{(i)}\mid \calF_t] = 0\) 
    and \(\E[\exp(\|\bz_{k,\tau}^{(i)}\|^2)\mid \calF_t] \leq 2\)
    and therefore
    \(\ln\E[\exp(\lambda\langle \be, \bz_{k,\tau}^{(i)}\rangle) \mid \calF_t]
    \leq \ln\E[\exp(\lambda\|\bz_{k,\tau}^{(i)}\|)\mid \calF_t] \leq \frac{\lambda^2}{2}, \forall \be\in \mathbb{S}^{d-1}, \forall\lambda \in \mathbb{R}\),
    and is \(\calF_{t+1}\)-measurable. (from the definition of \(\bz_{k,\tau}^{(i)}\),
    independence of all \(\bzeta_{k,\tau}^{(i)}\) and \(\bxi_{k}\equiv (\bxi_k^{(i)})_{i\in\calI}\) 
    and the assumption of sub-Gaussianity of stochastic gradients~\cref{assumption:subgaussianity_stochastic_estimates},
    and \cref{lemma:subgaussianity} for subgaussianity of random variables,
    \cref{lemma:average_subgaussian_random_vectors} for subgaussianity of the average of subgaussian random vectors)
    \[
    \bz_{k,\tau}^{(i)} \equiv 
    \frac{\nabla f_i(\bw_{k,\tau}^{(i)},\bzeta_{k,\tau}^{(i)}) -\nabla f_i(\bw_{k,\tau}^{(i)})}{2\sigma_g/\sqrt{B_g}}\1_k
    + \frac{\nabla g_i(\bw_{k,\tau}^{(i)},\bzeta_{k,\tau}^{(i)}) -\nabla g_i(\bw_{k,\tau}^{(i)})}{2\sigma_g/\sqrt{B_g}}[1-\1_k]
    \]

    \textbf{2.1: terms in the upperbound of \(\sum_{k\in[K]} [F(\bw_k) - F(\bw^\ast)]\1_k\)}

    By using the polarization identity \cref{lemma:polarization} and 
    the established lower bound of \(\eta \sum_{k\in[K]} \|\bu_k\|^2\) and 
    the lowerbound of \(\sum_{k\in[K]} \langle \mathbf{u}_k, \bw_k-\bw^\ast\rangle\) in the previous step,
    dropping the nonnegative term \(\frac{\|\bw_K-\bw^\ast\|^2}{2\eta}\),
    noting the assumption of finite diameter of the parameter space \(\Theta\)~\cref{assumption:diameter_parameter_space}
    then \(\|\bw_0-\bw^\ast\| \leq D\), 
    we rearrange the terms and obtain the following upperbound of \(\sum_{k\in[K]} [F(\bw_k) - F(\bw^\ast)]\1_k\):
    \begin{eqnarray*}
    & & \sum_{k\in[K]} [F(\bw_k) - F(\bw^\ast)]\1_k\\
    & \leq &
    \frac{\epsilon'+\epsilon}{2} \sum_{k\in[K]} \1_k
    - \sum_{k\in[K]} \left[G_k(\bw_k, \bxi_k) - \frac{\epsilon}{2}\right] [1-\1_k]
    - \frac{\epsilon K}{2}
    + \frac{D^2}{2\eta}
    + \eta L^2 K + 2 \gamma L^2 (E-1)K \\
    &+& 2 \sum_{k\in[K]} \|\bff(\bw_k, \bxi_k) - \bff(\bw_k) \|_\infty \1_k
    + \sum_{k\in[K]} \|\bg(\bw_k, \bxi_k) - \bg(\bw_k) \|_\infty [1-\1_k]\\
    &+& \frac{4\eta \sigma^2_g}{B_g} \cdot \frac{1}{E^2} 
    \sum_{k\in[K]}\sum_{i\in \calI} [\br_k]_i \left\| \sum_{\tau\in[E]} \bz_{k, \tau}^{(i)} \right\|^2
    + \frac{4\gamma L\sigma_g}{\sqrt{B_g}} 
    \sum_{k\in[K]} \sum_{i\in \calI} [\br_k]_i \left\| \sum_{\tau\in[E-1]} \bz_{k, \tau}^{(i)} \right\|\\
    &+& \frac{2D \sigma_g}{\sqrt{B_g}}\cdot \frac{1}{E} 
    \sum_{k\in[K]}\sum_{i\in \calI} [\br_k]_i \sum_{\tau\in[E]} \left\langle \bd_k, \bz_{k, \tau}^{(i)}\right\rangle
    + \frac{4 \gamma L\sigma_g}{\sqrt{B_g}} 
    \sum_{k\in[K]} \sum_{i\in \calI} [\br_k]_i \sum_{\tau\in[E-1]} \tau \left\langle \be_{k, \tau}^{(i)}, \bz_{k, \tau}^{(i)} \right\rangle
    \end{eqnarray*}

    By choosing the local step size \(\gamma = \frac{\eta}{E}\), 
    and applying triangle inequality \( \|\bx-\by\| \leq \|\bx\| +\|\by\|\), then:
    \begin{eqnarray*}
        \frac{4\gamma L\sigma_g}{\sqrt{B_g}} \sum_{k\in[K]} \sum_{i\in \calI} [\br_k]_i 
        \left\| \sum_{\tau\in[E-1]} \bz_{k, \tau}^{(i)} \right\|
        \leq \frac{4 \eta L \sigma_g}{\sqrt{B_g}} \frac{1}{E} \sum_{k\in[K]} \sum_{i\in \calI} [\br_k]_i \left\| \sum_{\tau\in[E]} \bz_{k, \tau}^{(i)} \right\|
        + \frac{4 \eta L \sigma_g}{\sqrt{B_g}} \frac{1}{E} \sum_{k\in[K]} \sum_{i\in \calI} [\br_k]_i \| \bz_{k, E-1}^{(i)} \|
    \end{eqnarray*}

    Noting \(\E[X]^2 \leq \E[X^2]\) and \(2\sqrt{x y} \leq x + y\) for any \(x, y \geq 0\), then we have:
    \begin{eqnarray*}
        &  &\frac{4\eta L\sigma_g}{\sqrt{B_g}} \cdot \frac{1}{E} 
        \sum_{k\in[K]} \sum_{i\in \calI} [\br_k]_i \left\| \sum_{\tau\in[E-1]} \bz_{k, \tau}^{(i)} \right\| 
        = \sum_{k\in [K]} 2\sqrt{\eta L^2 \times \frac{4\eta \sigma^2_g}{B_g} \cdot \frac{1}{E^2}}
        \sum_{i\in \calI} [\br_k]_i \left\| \sum_{\tau\in[E-1]} \bz_{k, \tau}^{(i)} \right\|\\
        &\leq & \sum_{k\in [K]} 2\sqrt{\eta L^2 \times \frac{4\eta \sigma^2_g}{B_g} \cdot \frac{1}{E^2}
        \sum_{i\in \calI} [\br_k]_i \left\| \sum_{\tau\in[E-1]} \bz_{k, \tau}^{(i)} \right\|^2}
        \leq
        \eta L^2 K + \frac{4\eta \sigma^2_g}{B_g} \cdot \frac{1}{E^2} \sum_{k\in[K]}\sum_{i\in \calI} [\br_k]_i \left\| \sum_{\tau\in[E-1]} \bz_{k, \tau}^{(i)} \right\|^2
    \end{eqnarray*}
    Hence, we can upperbound \(\sum_{k\in[K]} [F(\bw_k) - F(\bw^\ast)]\1_k\) as:
    \begin{eqnarray*}
        & & \sum_{k\in[K]} [F(\bw_k) - F(\bw^\ast)]\1_k\\
        &\leq &
        \frac{\epsilon'+\epsilon}{2} \sum_{k\in[K]} \1_k
        - \sum_{k\in[K]} \left[G_k(\bw_k, \bxi_k) - \frac{\epsilon}{2}\right] [1-\1_k]
        - \frac{\epsilon K}{2}
        + \frac{D^2}{2\eta}
        + 4\eta L^2 K\left(1-\frac{1}{2E}\right)\\
        &+& 2\sum_{k\in[K]} \|\bff(\bw_k, \bxi_k) - \bff(\bw_k) \|_\infty \1_k 
        + \sum_{k\in[K]} \|\bg(\bw_k, \bxi_k) - \bg(\bw_k) \|_\infty [1-\1_k]\\
        &+& \frac{8\eta \sigma^2_g}{B_g} \cdot \frac{1}{E^2} \sum_{k\in[K]}\sum_{i\in \calI} [\br_k]_i \left\| \sum_{\tau\in[E]} \bz_{k, \tau}^{(i)} \right\|^2\\
        &+& \frac{2D \sigma_g}{\sqrt{B_g}} \cdot \frac{1}{E} 
        \sum_{k\in[K]}\sum_{i\in \calI} [\br_k]_i \sum_{\tau\in[E]} \left\langle \bd_k, \bz_{k, \tau}^{(i)}\right\rangle
        + \frac{4 \eta L\sigma_g}{\sqrt{B_g}} \cdot \frac{1}{E}
        \sum_{k\in[K]} \sum_{i\in \calI} [\br_k]_i \sum_{\tau\in[E-1]} \tau \left\langle \be_{k, \tau}^{(i)}, \bz_{k, \tau}^{(i)} \right\rangle\\
        &+& \frac{4 \eta L \sigma_g}{\sqrt{B_g}} \cdot \frac{1}{E} \sum_{k\in[K]} \sum_{i\in \calI} [\br_k]_i \| \bz_{k, E-1}^{(i)} \|
    \end{eqnarray*}

    \textbf{2.2: terms in the upperbound of \(\sum_{k\in[K]} G(\bw_k)\1_k\)}

    Noting \(G_k(\bw_k, \bxi_k)\1_k = G_k(\bw_k, \bxi_k)\1_{G_k(\bw_k, \bxi_k)\leq\frac{\epsilon}{2}} \leq \frac{\epsilon}{2}\1_k\),
    applying \cref{lemma:softmax_mean} of softmax mean \(m(\bx, \alpha)\) with softmax hyperparameter \(\alpha\geq \frac{2 \ln n}{\epsilon'}\)
    then \(G(\bw_k,\bxi_k) - G_k(\bw_k,\bxi_k)
    = \max_{i\in\calI} [\bg(\bw_k, \bxi_k)]_i - m(\bg(\bw_k, \bxi_k), \alpha) \leq \frac{\epsilon'}{2}\),
    using
    \(G(\bw)-G(\bw_k, \bxi_k)=\max_{i\in \calI} [\bg(\bw_k)]_i - \max_{i\in \calI} [\bg(\bw_k, \bxi_k)]_i \leq 
    \max_{i\in \calI} [\bg(\bw_k) - \bg(\bw_k, \bxi_k)]_i \leq \|\bg(\bw_k, \bxi_k) - \bg(\bw_k)\|_\infty\):
    \begin{eqnarray*}
    \sum_{k\in[K]} G(\bw_k)\1_k
    &=& \sum_{k\in[K]} \left[G_k(\bw_k, \bxi_k) + [G(\bw_k, \bxi_k) - G_k(\bw_k, \bxi_k)] + [G(\bw_k) - G(\bw_k, \bxi_k)]\right]\1_k\\
    &\leq& \frac{\epsilon + \epsilon'}{2} \sum_{k\in[K]} \1_k + \sum_{k\in[K]} \|\bg(\bw_k, \bxi_k) - \bg(\bw_k)\|_\infty \1_k 
    \end{eqnarray*}

    \textbf{2.3: upperbounds of 
    \(\sum_{k\in[K]} \|\bff(\bw_k, \bxi_k) - \bff(\bw_k) \|_\infty \1_k\) and 
    \(\sum_{k\in[K]} \|\bg(\bw_k, \bxi_k) - \bg(\bw_k) \|_\infty \1_k, [1-\1_k]\)}

    By applying the established \cref{lemma:subgaussianity,lemma:average_subgaussian_random_vectors} 
    for subgaussianity of random variables and subgaussianity of the average of subgaussian random vectors
    with \cref{assumption:subgaussianity_stochastic_estimates},
    and \cref{lemma:maximal_inequality_subgaussianity}
    of maximal inequality of subgaussian random variables 
    with \(\bz_k\gets \frac{\bff(\bw_k, \bxi_k)-\bff(\bw_k)}{2\sigma_\zeta/\sqrt{B_\zeta}},
    \frac{\bg_k(\bw_k, \bxi_k)-\bg_k(\bw_k)}{2\sigma_\zeta/\sqrt{B_\zeta}}\), \(\1_k \gets \1_k, [1-\1_k]\) and \(\delta \gets \frac{\delta}{8}\), 
    the following upperbounds hold with probability at least \(1- 3\times \frac{\delta}{6} = 1 - \frac{\delta}{2}\):
    \begin{eqnarray*}
        & &\sum_{k\in[K]} \|\bff(\bw_k, \bxi_k) - \bff(\bw_k) \|_\infty \1_k 
        \leq 2\sigma_\zeta \sqrt{\frac{2 \ln \frac{12Kn}{\delta}}{B_\zeta}} \sum_{k\in[K]} \1_k\\
        & &\sum_{k\in[K]} \|\bg(\bw_k, \bxi_k) - \bg(\bw_k) \|_\infty \1_k 
        \leq 2\sigma_\zeta \sqrt{\frac{2 \ln \frac{12Kn}{\delta}}{B_\zeta}} \sum_{k\in[K]} \1_k\\
        & &\sum_{k\in[K]} \|\bg(\bw_k, \bxi_k) - \bg(\bw_k) \|_\infty [1-\1_k] 
        \leq 2\sigma_\zeta \sqrt{\frac{2 \ln \frac{12Kn}{\delta}}{B_\zeta}} \sum_{k\in[K]} [1-\1_k]
    \end{eqnarray*}

    \textbf{2.4: upper bound of \(\sum_{k\in[K]} \sum_{i\in \calI} [\br_k]_i \left\| \sum_{\tau\in[E]} \bz_{k, \tau}^{(i)} \right\|^2\)}

    For brevity, 
    with the correspondence \(t := \text{ind}(k, i, \tau) \equiv k\cdot n E + i\cdot E + \tau\),
    we define such a filtration \((\calG_k)_{k\in\mathbb{Z}_{\geq -1}}\) by 
    \(\calG_{-1} = \{\emptyset, \Omega\}\) and 
    \(\calG_k =  \calF_{nE \lceil \frac{t}{nE} \rceil}\), 
    and \(\bz_{k}^{(i)} := \frac{1}{2\sqrt{E}}\sum_{\tau\in[E]} \bz_{k, \tau}^{(i)}\).
    Then from the definition and properties of \(\bz_{k, \tau}^{(i)}\) listed 
    at the beginning of the Step 2, we have:
    \[
    \E[\bz_{k}^{(i)} \mid \calG_k] = \vec{0}, \quad 
    \ln[\E[\exp(\lambda\|\bz_{k}^{(i)}\|) \mid \calG_k]]
    \leq \frac{\lambda^2}{2},
    \forall\lambda \in \mathbb{R},
    \]
    and \(\bz_{k}^{(i)}\) is \(\calG_{k+1}\)-measurable.
    Morevover, \(\br_k\) is \(\calG_k\)-measurable from definitions of \(\br_k\) and \(\calG_k\).
    We rewrite the summation as:
    \[
    \sum_{k\in[K]} \sum_{i\in \calI} [\br_k]_i \left\| \sum_{\tau\in[E]} \bz_{k, \tau}^{(i)} \right\|^2
    = 4E \sum_{k\in[K]} \sum_{i\in \calI} [\br_k]_i \| \bz_{k}^{(i)} \|^2 
    = 4E \sum_{k\in [K]} X_k
    \]
    where \(X_k := \sum_{i\in \calI} [\br_k]_i \| \bz_{k}^{(i)} \|^2\).
    Then using Jensen's inequality with \(\sum_{i\in \calI} [\br_k]_i =1, [\br_k]_i \geq 0\) 
    and \(\br_k\) is \(\calG_k\)-measurable,
    and the tower property of conditional expectation, we have the following inequality 
    when \(\lambda \in (0, 1)\):
    \[
    \E[\exp(\lambda X_k/2) \mid \calG_k] 
    \leq \sum_{i\in \calI} [\br_k]_i \E[\exp(\lambda \| \bz_{k}^{(i)} \|^2/2) \mid \calG_k]
    \leq \max_{i\in \calI} \E[\exp(\lambda \| \bz_{k}^{(i)} \|^2/2) \mid \calG_k] 
    \leq (1- \lambda)^{-1/2}
    \]
    since by introducing an independent standard Gaussian \(z \sim \mathcal{N}(0, 1)\),
    using \(\E_{z}[\exp(\lambda z^2/2)] = (1- \lambda)^{-1/2}\) for \(\lambda \in (0, 1)\):
    \[
    \E[\exp(\lambda\| \bz_{k}^{(i)} \|^2/2) \mid \calG_k] 
    = \E[\E_{z}[\exp(\sqrt{\lambda} \|\bz_{k}^{(i)}\| z)\mid \calG_k]]
    = \E_{z}[\E[\exp(\sqrt{\lambda} z \|\bz_{k}^{(i)}\|) \mid \calG_k, z]]
    \leq \E_{z}[\mathe^{\frac{\lambda z^2}{2}}]
    = (1- \lambda)^{-1/2}
    \]
    Therefore, noting \(X_k\) is \(\calG_{k+1}\)-measurable, for any \(\lambda \in (0, 1)\), we have:
    \[
    - K\frac{\lambda}{2}+
    \ln\E\left[\exp\left(\frac{\lambda}{2}\sum_{k\in[K]} X_k\right) \right]
    =-\lambda \frac{K}{2}
    +\sum_{k\in[K]} \ln\E\left[\exp\left(\frac{\lambda}{2} X_k\right) \mid \calG_k\right]
    \leq \frac{K}{2} \left[-\lambda-\ln(1-\lambda)\right]
    \leq \frac{K}{4} \frac{\lambda^2}{(1-\lambda)}
    \]
    By using the Chernoff bound, and introducing \(\psi^\ast(u) := \sup_{\lambda \in(0,1)} u \lambda - \frac{\lambda^2}{(1-\lambda)}\), we have:
    \[
    \ln \Pr\left(\sum_{k\in[K]} X_k \geq K(1+u/2)\right)
    \leq \inf_{\lambda> 0} -\frac{K}{4}(2+u)\lambda+ \ln \E\left[\exp\left(\frac{\lambda}{2} \sum_{k\in[K]} X_k\right)\right]
    \leq -\frac{K}{4}\psi^\ast(u)
    \]
    By the characterization of sub-gamma random variables, see also equation (2.5) in section 2.4 sub-gamma random variables, on page 29 of~\cite{boucheron2013concentration}.
    we have \(\psi^\ast(u) = (\sqrt{1+u}-1)^2\) with its inverse function 
    \(\psi^{\ast-1}(v) = 2\sqrt{v} + v\):
    \begin{eqnarray*}
    & &\Pr\left(\sum_{k\in[K]} X_k \geq K(1+\psi^{\ast-1}(v)/2)\right)
    = \Pr\left(\sum_{k\in[K]} X_k \geq K(1+\sqrt{v} + v/2)\right)
    \leq \exp\left(-\frac{K}{4}v\right)
    \end{eqnarray*}
    By selecting \(v = \frac{4\ln\frac{8}{\delta}}{K}\)
    and noting \(1+\sqrt{v} + v/2 \leq \frac{3}{2} + v\), with probability at least \(1-\frac{\delta}{8}\), we have:
    \[
    \sum_{k\in[K]} \sum_{i\in\calI} [\br_k]_i \left\| \sum_{\tau\in[E]} \bz_{k, \tau}^{(i)} \right\|^2 
    = 4E\sum_{k\in[K]} X_k 
    \leq 4E \cdot K(1+\sqrt{v} + v/2)
    \leq 4E \cdot K\left(\frac{3}{2}+v\right) 
    = 2E \left( 3 K + 8 \ln \frac{8}{\delta} \right)
    \]

    \textbf{2.5: upper bound of \(\sum_{k\in[K]} \sum_{i\in \calI} [\br_k]_i \sum_{\tau\in[E]} \left\langle \bd_k, \bz_{k, \tau}^{(i)}\right\rangle\)}

    With the correspondence \(t := \text{ind}(k, i, \tau) \equiv k\cdot n E + i\cdot E + \tau\),
    we define such intermediate quantities, 
    \(T= K\cdot n \cdot E\), 
    \(\bv_t := [\br_k]_i \bd_k\), \(\bz_t := \bz_{k, \tau}^{(i)}\),
    and \(S_t := \sum_{t'\in [t]} \langle \bv_{t'}, \bz_{t'} \rangle\),
    \(V_t := \sum_{t'\in [t]} \|\bv_{t'}\|^2\).
    Then, since \(\bz_t\) is \(\calF_{t+1}\)-measurable
    and \(\bv_t\) is \(\calF_t\)-measurable, then \(S_t\) is \(\calF_t\)-measurable,
    \(V_t\) is \(\calF_{t-1}\)-measurable.

    Then for \(M_t := \exp\left( \lambda S_t-\frac{\lambda^2}{2} V_t \right)\), 
    which is \(\calF_t\)-measurable, 
    we show it is a supermartingale 
    by using Theorem 4.2.4 on page 189 of~\cite{durrett2019probability} ans showing 
    that the following inequality holds:
    \[
    \E[M_{t+1}\mid \calF_t] 
    = M_t\cdot\E\left[\exp\left( \lambda \langle \bv_{t}, \bz_{t} \rangle-\frac{\lambda^2}{2} \|\bv_{t}\|^2 \right)\mid \calF_t\right]
    \leq M_t
    \]
    Therefore, we have \(\E[M_T] \leq M_0 = 1\) by the tower property of conditional expectation, 
    which implies:
    \[
    \E\left[\exp\left(\lambda \sum_{t\in[T]} \langle \bv_{t}, \bz_{t} \rangle\right)\right]
    \leq \E\left[\exp\left(\frac{\lambda^2}{2} \sum_{t\in[T]} \|\bv_{t}\|^2\right)\right]
    \leq \exp\left(\frac{\lambda^2}{2} KE\right)
    \]
    since we have the following fact using \(\|\bd_k\| \leq 1\) and \(\sum_{i\in[\calI]}[\br_k]_i^2 \leq
    \sum_{i\in \calI} [\br_k]_i = 1\):
    \[
    \sum_{t\in[T]} \|\bv_t\|^2
    =
    \sum_{k\in[K]} \sum_{i\in \calI} [\br_k]_i^2 \sum_{\tau\in[E]} \|\bd_k\|^2\leq KE
    \]
    By using the Chernoff bound, then for any \(u > 0\), we have:
    \[
    \Pr\left(\sum_{t\in[T]} \langle \bv_{t}, \bz_{t} \rangle \geq u\right)
    \leq \exp\left(\inf_{\lambda>0} -\lambda u + \frac{\lambda^2}{2} KE\right)
    \leq \exp\left(-\frac{u^2}{2KE}\right)
    \]
    By selecting \(u = \sqrt{2KE \ln\frac{8}{\delta}}\), with probability at least \(1-\frac{\delta}{8}\), we have:
    \[
    \sum_{k\in[K]} \sum_{i\in \calI} [\br_k]_i \sum_{\tau\in[E]} \left\langle \bd_k, \bz_{k, \tau}^{(i)}\right\rangle
    =\sum_{t\in[T]} \langle \bv_{t}, \bz_{t} \rangle \leq \sqrt{2KE \ln\frac{8}{\delta}}
    \]

    \textbf{2.6: upper bound of \(\sum_{k\in[K]} \sum_{i\in \calI} [\br_k]_i \sum_{\tau\in[E-1]} \tau \left\langle \be_{k, \tau}^{(i)}, \bz_{k, \tau}^{(i)} \right\rangle\)}

    With the correspondence \(t := \text{ind}(k, i, \tau) \equiv k\cdot n E + i\cdot E + \tau\),
    we define such intermediate quantities, 
    \(T= K\cdot n \cdot E\), 
    \(\bv_t := [\br_k]_i \tau \be_{k, \tau}^{(i)} \1_{\tau \in[E-1]}\), \(\bz_t := \bz_{k, \tau}^{(i)}\),
    then by the same procedure of showing supermartingale, we also have 
    \[
    \E\left[\exp\left(\lambda \sum_{t\in[T]} \langle \bv_{t}, \bz_{t} \rangle\right)\right]
    \leq \E\left[\exp\left(\frac{\lambda^2}{2} \sum_{t\in[T]} \|\bv_{t}\|^2\right)\right]
    \leq \exp\left(\frac{\lambda^2}{2} \frac{K}{3} (E-1)(E-3/2)(E-2)\right)
    \]
    since we have the following fact using \(\|\be_k\| \leq 1, \sum_{\tau\in[E-1]} \tau^2 = \frac{1}{3} (E-1)(E-3/2)(E-2)\) and \(\sum_{i\in[\calI]}[\br_k]_i^2 \leq
    \sum_{i\in \calI} [\br_k]_i = 1\):
    \[
    \sum_{t\in[T]} \|\bv_t\|^2
    =
    \sum_{k\in[K]} \sum_{i\in \calI} [\br_k]_i^2 \sum_{\tau\in[E-1]} \tau^2 \|\be_{k, \tau}^{(i)}\|^2
    \leq \frac{K}{3} (E-1)(E-3/2)(E-2)
    \]
    By the same procudure of taking Chernoff bound, 
    with probability at least \(1-\frac{\delta}{8}\), we have:
    \[
    \sum_{k\in[K]} \sum_{i\in \calI} [\br_k]_i \sum_{\tau\in[E-1]} \tau \left\langle \be_{k, \tau}^{(i)}, \bz_{k, \tau}^{(i)}\right\rangle
    =
    \sum_{t\in[T]} \langle \bv_{t}, \bz_{t} \rangle
    \leq \sqrt{\frac{2K}{3} (E-1)(E-3/2)(E-2) \ln\frac{8}{\delta}}
    \]

    \textbf{2.7: upper bound of \(\sum_{k\in[K]} \sum_{i\in \calI} [\br_k]_i \| \bz_{k, E-1}^{(i)} \|\)}
    With the correspondence \(t := \text{ind}(k, i, \tau) \equiv k\cdot n E + i\cdot E + \tau\),
    we define such intermediate quantities, 
    \(T= K\cdot n \cdot E\), 
    \(a_t := [\br_k]_i \1_{\tau = E-1}\), \(z_t := \|\bz_{k, \tau}^{(i)}\|\),
    then by the same procedure of showing supermartingale, we also have 
    \[
    \E\left[\exp\left(\sum_{t\in[T]} a_t z_t\right)\right]
    \leq \E\left[\exp\left(\frac{\lambda^2}{2} \sum_{t\in[T]} a_t^2\right)\right]
    \leq \exp\left(\frac{\lambda^2}{2} K\right)
    \]
    Since \(\sum_{i\in\calI} [\br_k]_i^2 \leq \sum_{i\in\calI} [\br_k]_i = 1\), we have:
    \[
    \sum_{t\in[T]} a_t^2
    =
    \sum_{k\in[K]} \sum_{i\in \calI} [\br_k]_i^2 \sum_{\tau\in[E]} \1_{\tau = E-1}^2
    =
    \sum_{k\in[K]} \sum_{i\in \calI} [\br_k]_i^2
    \leq
    K
    \]
    By the same procedure of taking Chernoff bound, 
    with probability at least \(1-\frac{\delta}{8}\), we have:
    \[
    \sum_{k\in[K]} \sum_{i\in \calI} [\br_k]_i \| \bz_{k, E-1}^{(i)} \|
    =
    \sum_{t\in[T]} a_t z_t
    \leq \sqrt{2K \ln\frac{8}{\delta}}
    \]

    \textbf{Step 3. Establish Final Bounds}

    \textbf{3.1: rearranging terms in final bounds}

    By selecting \(\epsilon' = \epsilon - 4\sigma_\zeta \sqrt{\frac{2 \ln \frac{12Kn}{\delta}}{B_\zeta}}\),
    and rearranging the terms with the established bounds in Step 2, we have:
    \begin{eqnarray*}
        \sum_{k\in[K]} [F(\bw_k) - F(\bw^\ast)] \1_k
        &\leq& \epsilon \sum_{k\in[K]} \1_k 
        - \sum_{k\in[K]} \left[G_k(\bw_k, \bxi_k) - \frac{\epsilon}{2}\right] [1-\1_k]
        - \frac{\epsilon K}{2}
        + \frac{D^2}{2\eta}
        + 4\eta L^2 K \left(1-\frac{1}{2E}\right)\\
        & &+ 2\sigma_\zeta \sqrt{\frac{2 \ln \frac{12Kn}{\delta}}{B_\zeta}} \cdot K 
        + \frac{16\eta \sigma_g^2}{B_g E}\left(3 + \frac{8\ln\frac{8}{\delta}}{K}\right)\cdot K\\
        &&+ 2D \sigma_g\sqrt{\frac{2 \ln \frac{8}{\delta}}{B_g KE}}\cdot K
        + 4\eta L \sigma_g
        \sqrt{\frac{2E}{3B_g K} \left(1-\frac{1}{E}\right)\left(1-\frac{3}{2E}\right)\left(1-\frac{2}{E}\right) \ln\frac{8}{\delta}} \cdot K\\
        &&+ 4\eta L \sigma_g \cdot \frac{1}{E} \sqrt{\frac{2\ln \frac{8}{\delta}}{B_g K}}\cdot K\\
        \sum_{k\in[K]} G(\bw_k) \1_k 
        & \leq& \epsilon \sum_{k\in[K]} \1_k 
    \end{eqnarray*}
    with probability at least \(1-\delta\) with 
    step sizes \(\eta, \gamma\) such that \(\gamma = \frac{\eta}{E}\), and
    softmax hyperparameter \(\alpha\geq \frac{2 \ln n}{\epsilon'}\).
    
    \textbf{3.2: selecting step sizes \(\eta, \gamma\), tolerance \(\epsilon\) and softmax hyperparameter \(\alpha\)}
    
    By balancing the terms \(\frac{D^2}{2\eta}, 4\eta L^2 K\), we selecting step sizes as:
    \[
    \eta = \frac{D}{L \sqrt{8K}}, \quad \gamma = \frac{D}{LE \sqrt{8K}}
    \]
    By upper bounding \(1-\frac{1}{2E}\leq 1, \sqrt{(1-\frac{1}{E})( 1-\frac{3}{2E})(1-\frac{2}{E})} \leq 1-\frac{9/4}{E+2}\), 
    ans substituting the global step size \(\eta\), we have:
    \begin{eqnarray*}
        \sum_{k\in[K]} [F(\bw_k) - F(\bw^\ast)] \1_k
        &\leq& \epsilon \sum_{k\in[K]} \1_k 
        - \sum_{k\in[K]} \left[G_k(\bw_k, \bxi_k) - \frac{\epsilon}{2}\right] [1-\1_k]
        - \frac{\epsilon K}{2}
        + \frac{DL}{\sqrt{K/8}}\cdot K\\
        & &+ 2\sigma_\zeta \sqrt{\frac{2 \ln \frac{12Kn}{\delta}}{B_\zeta}} \cdot K\\
        & &+ \frac{D \sigma_g^2}{LBE\sqrt{K/32}}\left(3 + \frac{8\ln\frac{8}{\delta}}{K}\right)\cdot K\\
        & &+ 2D \sigma_g \sqrt{\frac{2\ln \frac{8}{\delta}}{BKE}}
        \left(1
        + \frac{E}{\sqrt{6K}}\left(1-\frac{9/4}{E+2}\right)
        + \frac{1}{\sqrt{2KE}}\right) \cdot K
    \end{eqnarray*}

    Noting that for \(E=1\), we can show that the above bound still holds without \(\frac{E}{\sqrt{6K}}(1-\frac{9/4}{E+2})+\frac{1}{\sqrt{2KE}}\) in the last term 
    since terms with \(\sum_{\tau\in[E-1]}\) are 0 when \(E=1\). For \(E\geq2\), 
    we note that \(\frac{E}{\sqrt{6K}}(1-\frac{9/4}{E+2})+\frac{1}{\sqrt{2KE}} \leq \frac{E}{\sqrt{6K}}\) is valid.
    Combining these two cases, and introducing the ``effective'' gradient variance \(\bar{\sigma}_g^2 := \frac{\sigma_g^2/B_g}{L^2 E}\), we have:
    \begin{eqnarray*}
    & &\sum_{k\in[K]} [F(\bw_k) - F(\bw^\ast)] \1_k
    \leq \epsilon \sum_{k\in[K]} \1_k 
    - \sum_{k\in[K]} \left[G_k(\bw_k, \bxi_k) - \frac{\epsilon}{2}\right] [1-\1_k]\\
    &+& \frac{K}{2}
    \left[
    -\epsilon 
    + \frac{DL}{\sqrt{K/32}}\left[
        1 + 2 \bar{\sigma}_g^2 \left(3 + \frac{8\ln\frac{8}{\delta}}{K}\right)
        + \bar{\sigma}_g \sqrt{ \ln\frac{8}{\delta}} \left( 1 + \frac{E}{\sqrt{6K}}\right)
        \right]
        + 4\sigma_\zeta \sqrt{\frac{2 \ln \frac{12Kn}{\delta}}{B_\zeta}}
    \right]
    \end{eqnarray*}

    By letting the sum of constant terms to be 0, noting \(\epsilon' = \epsilon - 4\sigma_\zeta \sqrt{\frac{2 \ln \frac{12Kn}{\delta}}{B_\zeta}}\), we obtain the following tolerance:
    \[
    \epsilon' = \frac{DL}{\sqrt{K/32}} \left[
        1 + 2 \bar{\sigma}_g^2 \left(3 + \frac{8\ln\frac{8}{\delta}}{K}\right)
        + \bar{\sigma}_g \sqrt{ \ln\frac{8}{\delta}} \left( 1 + \frac{E}{\sqrt{6K}}\right)
        \right] 
    \]
    \[
        \epsilon
        = \frac{DL}{\sqrt{K/32}} \left[ 1 + 2 \bar{\sigma}_g^2 
        \left( 3 + \frac{8\ln\frac{8}{\delta}}{K} \right) 
        + \bar{\sigma}_g \sqrt{ \ln\frac{8}{\delta}} \left( 1 + \frac{E}{\sqrt{6K}} \right) \right]
        + 4\sigma_\zeta \sqrt{\frac{2 \ln \frac{12Kn}{\delta}}{B_\zeta}}
    \]

    Then, we obtain the following inequality with probability at least \(1-\delta\):
    \[
    \sum_{k\in[K]} \left[G_k(\bw_k, \bxi_k) - \frac{\epsilon}{2}\right] [1-\1_k]
    + \sum_{k\in[K]} \left[F(\bw_k)-F(\bw^\ast)\right]\1_k
    \leq \epsilon \sum_{k\in[K]} \1_k
    \]

    We show \(\sum_{k\in[K]} \1_k \neq 0\), otherwise, from the above inequlaity,
    and using the definition of \(\1_k \equiv \1_{G_k(\bw_k, \bxi_k)\leq\frac{\epsilon}{2}}\), we have
    \( G_{k}(\bw_k, \bxi_k) - \frac{\epsilon}{2} > 0, [1-\1_k] = 1, \1_k=0\) for all \(k\in[K]\), 
    which leads to a contradiction as follows:
    \[
    0 < \sum_{k\in[K]} \left[G_k(\bw_k, \bxi_k) - \frac{\epsilon}{2}\right] [1-\1_k] + 0 \leq \epsilon \cdot 0 =0
    \]
    Noting that \(0 \leq [G_k(\bw_k, \bxi_k)-\frac{\epsilon}{2}][1-\1_k]\), 
    we have the following inequality with \(\sum_{k\in[K]} \1_k \neq 0\):
    \[
    \sum_{k\in[K]} \left[F(\bw_k)-F(\bw^\ast)\right]\1_k \leq \epsilon \sum_{k\in[K]} \1_k
    \]
    By introducing a probability measure \(\pr_K\) on the set of iterations \([K]\) such that \(\pr_K(k) = \1_k/\sum_{k\in[K]} \1_k, \forall k\in[K]\), 
    and using the convexity of \(F, G\) (\cref{assumption:convexity} and \cref{lemma:properties_of_weighted_functions}) and defining \(\overline{\bw}_K := \E_{k\sim \pr_K} [\bw_k] = \sum_{k\in[K]} \bw_k \1_k/\sum_{k\in[K]} \1_k\), then:
    \[
    F(\overline{\bw}_K) - F(\bw^\ast) \leq
    \E_{k\sim \pr_K} \left[F(\bw_k) - F(\bw^\ast)\right] \leq  
    \epsilon 
    \]
    \[
    G(\overline{\bw}_K) \leq \E_{k\sim \pr_K} [G(\bw_k)] \leq 
    \epsilon 
    \]
    when the softmax hyperparameter \(\alpha \geq \frac{2 \ln n}{\epsilon'}\) is large enough.
\end{proof}

\newpage
\section{Proof for Result of Switching Strategy with Full Participation}\label{sup:switch}
\begin{proof}[Proof for \Cref{theorem:convergence_gd_softmax_lipschitz} of \Cref{alg:switching_gd_softmax}]
    When the number of local updates \(E=1\) of full participation case
    of \Cref{alg:switching_gd_softmax_fedfull}, 
    it becomes the simple version \Cref{alg:switching_gd_softmax}.
    The Step 1 and Step 2 in the proof for \Cref{alg:switching_gd_softmax} are the same as the Step 1 and Step 2 in the proof of 
    \Cref{alg:switching_gd_softmax_fedfull},
    except replacing \(\frac{\delta}{8}\) with \(\frac{\delta}{4}\)
    in steps 2.4-2.5 since the terms in 2.6-2.7 of \(\sum_{\tau=[E-1]}\) are 0 in the upper bound of \(\sum_{k\in[K]} [F(\bw_k) - F(\bw^\ast)]\1_k\) is 0
    when \(E=1\), 
    noting that in step 2.4 \(\|\bz_k^{(i)}\|\) is \(\frac{1}{4}\)-subgaussian when \(E=1\)
    (tighter than 1-subgaussian in general proof) therefore \(\frac{1}{4}\times\) ``original upper bound''
    is still a valid upper bound.

    \textbf{Step 3. Establish Final Bounds}

    \textbf{3.1: rearranging terms in final bounds}

    By selecting \(\epsilon' = \epsilon - 4\sigma_\zeta \sqrt{\frac{2 \ln \frac{12Kn}{\delta}}{B}}\),
    and rearranging the terms with the established bounds in Step 2 and letting \(E=1\):
    \begin{eqnarray*}
        \sum_{k\in[K]} [F(\bw_k) - F(\bw^\ast)] \1_k
        &\leq& \epsilon \sum_{k\in[K]} \1_k 
        - \sum_{k\in[K]} \left[G_k(\bw_k, \bxi_k)-\frac{\epsilon}{2}\right][1-\1_k]
        - \frac{\epsilon K}{2}
        + \frac{D^2}{2\eta}
        + 2\eta L^2 K\\
        & &+ 2\sigma_\zeta \sqrt{\frac{2 \ln \frac{12Kn}{\delta}}{B_\zeta}} \cdot K 
        + \frac{2\eta \sigma_g^2}{B_g}\left(3 + \frac{8\ln\frac{8}{\delta}}{K}\right)\cdot K
        + 2D \sigma_g\sqrt{\frac{2 \ln \frac{8}{\delta}}{B_g K}}\cdot K\\
        \sum_{k\in[K]} G(\bw_k) \1_k 
        & \leq& \epsilon \sum_{k\in[K]} \1_k 
    \end{eqnarray*}
    with probability at least \(1-\delta\) and
    softmax hyperparameter \(\alpha\geq \frac{2 \ln n}{\epsilon'}\).
    
    \textbf{3.2: selecting step sizes \(\eta, \gamma\), tolerance \(\epsilon\) and softmax hyperparameter \(\alpha\)}
    
    By balancing the terms \(\frac{D^2}{2\eta}, 2\eta L^2 K\), we selecting step sizes as:
    \(
    \eta = \frac{D}{2L \sqrt{K}}
    \).
    Substituting the global step size \(\eta\), 
    and introducing the ``effective'' gradient variance \(\bar{\sigma}_g^2 := \frac{\sigma_g^2/B_g}{L^2 \times 1}\), we have:
    \begin{eqnarray*}
        \sum_{k\in[K]} [F(\bw_k) - F(\bw^\ast)] \1_k
        &\leq& \epsilon \sum_{k\in[K]} \1_k 
        - \sum_{k\in[K]} \left[G_k(\bw_k, \bxi_k)-\frac{\epsilon}{2}\right][1-\1_k]\\
        & &+ \frac{K}{2}
        \left[
            -\epsilon
            + \frac{2DL}{\sqrt{K}} \left[
                1 + \bar{\sigma}_g^2 
                \left( 3 + \frac{8\ln\frac{8}{\delta}}{K} \right) 
                + \bar{\sigma}_g \sqrt{8 \ln\frac{8}{\delta}} 
            \right]
            + 4\sigma_\zeta \sqrt{\frac{2 \ln \frac{12Kn}{\delta}}{B_\zeta}}
        \right]
    \end{eqnarray*}
    By letting the sum of constant terms to be 0, we obtain the following tolerance:
    \[
    \epsilon
    = 
    \frac{2DL}{\sqrt{K}} \left[
        1 + \bar{\sigma}_g^2 
        \left( 3 + \frac{8\ln\frac{8}{\delta}}{K} \right) 
        + \bar{\sigma}_g \sqrt{8 \ln\frac{8}{\delta}} 
    \right]
    + 4\sigma_\zeta \sqrt{\frac{2 \ln \frac{12Kn}{\delta}}{B_\zeta}}
    \]
    \[
    \epsilon'
    = \epsilon - 4\sigma_\zeta \sqrt{\frac{2 \ln \frac{12Kn}{\delta}}{B_\zeta}}
    = \frac{2DL}{\sqrt{K}} \left[
        1 + \bar{\sigma}_g^2 
        \left( 3 + \frac{8\ln\frac{8}{\delta}}{K} \right) 
        + \bar{\sigma}_g \sqrt{8 \ln\frac{8}{\delta}} 
    \right]
    \]

    Then, we obtain the following inequality with probability at least \(1-\delta\):
    \[
    \sum_{k\in[K]} \left[G_k(\bw_k, \bxi_k)-\frac{\epsilon}{2}\right][1-\1_k]
    + \sum_{k\in[K]} \left[F(\bw_k)-F(\bw^\ast)\right]\1_k
    \leq \epsilon \sum_{k\in[K]} \1_k
    \]
    We show \(\sum_{k\in[K]} \1_k \neq 0\), otherwise, from the above inequlaity,
    and using the definition of \(\1_k \equiv \1_{G_k(\bw_k, \bxi_k)\leq\frac{\epsilon}{2}}\), we have
    \( G_{k}(\bw_k, \bxi_k) - \frac{\epsilon}{2} > 0, [1-\1_k] = 1, \1_k=0\) for all \(k\in[K]\), 
    which leads to a contradiction as follows:
    \[
    0 < \sum_{k\in[K]} \left[G_k(\bw_k, \bxi_k)-\frac{\epsilon}{2}\right][1-\1_k] + 0 \leq \epsilon \cdot 0 =0
    \]
    Noting that \(0 \leq [G_k(\bw_k, \bxi_k)-\frac{\epsilon}{2}][1-\1_k]\), 
    we have the following inequality with \(\sum_{k\in[K]} \1_k \neq 0\):
    \[
    \sum_{k\in[K]} \left[F(\bw_k)-F(\bw^\ast)\right]\1_k \leq \epsilon \sum_{k\in[K]} \1_k
    \]
    By introducing a probability measure \(\pr_K\) on the set of iterations \([K]\) such that \(\pr_K(k) = \1_k/\sum_{k\in[K]} \1_k, \forall k\in[K]\), 
    and using the convexity of \(F, G\) (\cref{assumption:convexity} and \cref{lemma:properties_of_weighted_functions}) and defining \(\overline{\bw}_K := \E_{k\sim \pr_K} [\bw_k] = \sum_{k\in[K]} \bw_k \1_k/\sum_{k\in[K]} \1_k\), then:
    \[
    F(\overline{\bw}_K) - F(\bw^\ast) \leq
    \E_{k\sim \pr_K} \left[F(\bw_k) - F(\bw^\ast)\right] \leq  
    \epsilon 
    \]
    \[
    G(\overline{\bw}_K) \leq \E_{k\sim \pr_K} [G(\bw_k)] \leq 
    \epsilon 
    \]
    when the softmax hyperparameter \(\alpha \geq \frac{2 \ln n}{\epsilon'}\) is large enough.
    
\end{proof}

\newpage
\section{Proof for Result of Federated Learning with Partial Participation}\label{sup:fedpart}
\begin{proof}[Proof for \Cref{theorem:convergence_gd_softmax_fedpartial_lipschitz} of \Cref{alg:switching_gd_softmax_fedpartial}]
    By applying \cref{lemma:polarization} to the update rule \(\bw_{k+1} = \bw_k - \eta \mathbf{u}_k\) with step size \(\eta>0\), we have
    \[
    \langle \mathbf{u}_k, \mathbf{w}_{k} - \mathbf{w}^\ast \rangle
         =  \frac{1}{2\eta} \left(\|\mathbf{w}_k - \mathbf{w}^\ast\|^2 - \|\mathbf{w}_{k+1} - \mathbf{w}^\ast\|^2 \right) 
         + \frac{\eta}{2} \|\bu_k\|^2
    \]
    For brevity, we write \(\1_k := \1_{G_k(\bw_k, \bxi_k) \leq \frac{\epsilon}{2}}\)
    with \(G_k(\bw_k, \bxi_k) = 
    m(\bg(\bw_k, \bxi_k), \alpha)\).
    The local direction \(\mathbf{u}_k\) is given by the following equation 
    and local updates are given by \(\bw_{k,\tau+1}^{(i)} = \bw_{k,\tau}^{(i)} - \gamma 
    \left[\nabla f_i (\bw_{k,\tau}^{(i)}, \bzeta_{k,\tau}^{(i)}) \1_k + \nabla g_i (\bw_{k,\tau}^{(i)}, \bzeta_{k,\tau}^{(i)}) [1-\1_k] \right]\):
    \begin{eqnarray*}
        \bu_k^{(i)}
        = \frac{\bw_{k, 0}^{(i)} - \bw_{k, E}^{(i)}}{\gamma E}
        = \frac{1}{E}\sum_{\tau\in [E]}
        \nabla f_i (\bw_{k,\tau}^{(i)}, \bzeta_{k,\tau}^{(i)})
        \1_k
        + \frac{1}{E}\sum_{\tau\in [E]}
        \nabla g_i (\bw_{k,\tau}^{(i)}, \bzeta_{k,\tau}^{(i)})
        [1-\1_k]
    \end{eqnarray*}
    where \(\bzeta_{k,\tau}^{(i)}\) is the sample at the \(\tau\)-th local update step of the \(k\)-th epoch for the \(i\)-th client,
    and \(\bw_{k, 0}^{(i)} = \bw_k\) is the initial local parameters for the \(i\)-th client.
    The direction \(\mathbf{u}_k\) is given by the following equation with 
    the brevity notations \(\bp_k := \softmax(\alpha\bff(\bw_k, \bxi_k))\) and \(\bq_k := \softmax(\alpha\bg(\bw_k, \bxi_k))\),
    and we introduce \(\br_k := \bp_k \1_k + \bq_k [1-\1_k]\):
    \begin{eqnarray*}
        \bu_k
        = \sum_{i\in \calI_k} ([\bp_k]_i \1_k + [\bq_k]_i [1-\1_k]) \bu_k^{(i)} = \sum_{i\in \calI_k} [\br_k]_i \bu_k^{(i)}
    \end{eqnarray*}
    To help the analysis, we introduce the following notations:
    \begin{eqnarray*}
        \btu_k^{(i)} = \frac{1}{E}\sum_{\tau\in [E]}
        \nabla f_i (\bw_{k,\tau}^{(i)})
        \1_k
        + \frac{1}{E}\sum_{\tau\in [E]}
        \nabla g_i (\bw_{k,\tau}^{(i)})
        [1-\1_k],\quad
        \bbu_k^{(i)} = \nabla f_i (\bw_k) \1_k + \nabla g_i (\bw_k) [1-\1_k]
    \end{eqnarray*}

    \textbf{Step 1: Upperbound \(\eta \sum\limits_{k\in[K]} \|\bu_k\|^2\) and Lowerbound \(\sum\limits_{k\in[K]} \langle \mathbf{u}_k, \bw_k-\bw^\ast\rangle\)}

    Then we can decompose the direction as \(\bu_k^{(i)} 
    = \btu_k^{(i)} + (\bu_k^{(i)} - \btu_k^{(i)})
    = \bbu_k^{(i)} - (\bbu_k^{(i)} - \btu_k^{(i)}) - (\btu_k^{(i)} - \bu_k^{(i)})\).

    \textbf{1.1: upperbound of \(\eta \sum_{k\in[K]} \|\bu_k\|^2\)}

    Then by using \(\frac{1}{2}\|\bx - \by\|^2 \leq \|\bx\|^2 + \|\by\|^2, \E[X]^2 \leq \E[X^2]\) 
    and the assumption of Lipschitz continuity~\cref{assumption:lipschitz_continuity} (\(\|\nabla f_i(\bw)\| \leq L, \|\nabla g_i(\bw)\| \leq L\),
    therefore \(\|\btu_k^{(i)}\| \leq L, \left\| \sum_{i\in \calI_k} [\br_k]_i \btu_k^{(i)} \right\| \leq L\) 
    using \(\sum_{i\in \calI_k} [\br_k]_i = 1\) and all \([\br_k]_i \geq 0\)):
    \begin{eqnarray*}
        \frac{\eta}{2} \sum_{k\in[K]} \|\bu_k\|^2 
        = \frac{\eta}{2} \sum_{k\in[K]} \left\|
            \sum_{i\in \calI_k} [\br_k]_i \btu_k^{(i)}
            + \sum_{i\in \calI_k} [\br_k]_i (\bu_k^{(i)} - \btu_k^{(i)})
         \right\|^2
        \leq \eta L^2 K
        + \eta \sum_{k\in [K]} \sum_{i\in \calI_k} [\br_k]_i \|\bu_k^{(i)} - \btu_k^{(i)}\|^2
    \end{eqnarray*}

    \textbf{1.2: decomposition of inner product \(\sum_{k\in[K]} \langle \mathbf{u}_k, \bw_k-\bw^\ast\rangle\)}
    
    We decompose the inner product by
    introducing the Bregman divergence
    \(D_{f_i}[\bw'||\bw]:= f_i(\bw')-f_i(\bw)-\langle \nabla f_i(\bw), \bw'-\bw \rangle\geq 0,
    D_{g_i}[\bw'||\bw]:= g_i(\bw')-g_i(\bw)-\langle \nabla g_i(\bw), \bw'-\bw \rangle \geq 0\)
    for convex functions \(f_i\) and \(g_i\) from~\cref{assumption:convexity},
    therefore \( \langle \mathbf{u}_k, \bw_k-\bw^\ast\rangle = (f_i(\bw_k)-f_i(\bw^\ast)+D_{f_i}[\bw^\ast||\bw_k])\1_k + (g_i(\bw_k)-g_i(\bw^\ast)+D_{g_i}[\bw^\ast||\bw_k])[1-\1_k]\):
    \begin{eqnarray*}
        & &\sum_{k\in[K]} \langle \mathbf{u}_k, \bw_k-\bw^\ast\rangle\\
        & = &
        \sum_{k\in[K]} \sum_{i\in \calI_k} [\br_k]_i \langle \bbu_k^{(i)}, \bw_k-\bw^\ast\rangle 
       -\sum_{k\in[K]} \sum_{i\in \calI_k} [\br_k]_i \langle \bbu_k^{(i)} - \btu_k^{(i)}, \bw_k-\bw^\ast\rangle
          -\sum_{k\in[K]} \sum_{i\in \calI_k} [\br_k]_i \langle \btu_k^{(i)} - \bu_k^{(i)}, \bw_k-\bw^\ast\rangle\\
       & = & \sum_{k\in[K]} \sum_{i\in \calI_k} [\bp_k]_i (f_i(\bw_k) - f_i(\bw^\ast))\1_k 
          + \sum_{k\in[K]} \sum_{i\in \calI_k} [\bq_k]_i (g_i(\bw_k) - g_i(\bw^\ast))[1-\1_k]\\
       & &- \sum_{k\in[K]} \sum_{i\in \calI_k} [\bp_k]_i \frac{1}{E}\sum_{\tau\in[E]}
       \left[-D_{f_i}[\bw^\ast||\bw_k]
       + \langle \nabla f_i(\bw_k) - \nabla f_i(\bw_{k,\tau}^{(i)}), \bw_k-\bw^\ast\rangle\right] \1_k\\
       & &- \sum_{k\in[K]} \sum_{i\in \calI_k} [\bq_k]_i \frac{1}{E}\sum_{\tau\in[E]}
       \left[- D_{g_i}[\bw^\ast||\bw_k]
       + \langle \nabla g_i(\bw_k) - \nabla g_i(\bw_{k,\tau}^{(i)}), \bw_k-\bw^\ast\rangle\right][1-\1_k]\\
       & &- \sum_{k\in[K]} \sum_{i\in \calI_k} [\br_k]_i
       \langle \btu_k^{(i)} - \bu_k^{(i)}, \bw_k-\bw^\ast\rangle
    \end{eqnarray*}

    \textbf{1.3: lowerbound of 1st and 2nd terms in the decomposition of inner product \(\sum_{k\in[K]} \langle \mathbf{u}_k, \bw_k-\bw^\ast\rangle\)}
    
    Regarding the first and second terms in the above equation for the inner product,
    using the definition of \(F(\bw; \calI_k) = \max_{i\in \calI_k} f_i(\bw; \calI_k)\) 
    and \(G(\bw; \calI_k) = \max_{i\in \calI_k} g_i(\bw; \calI_k)\), 
    from the definition of \(\bw^\ast\) such that \(G(\bw^\ast) \leq 0\) in~\cref{eq:opt_sol}.
    \[\sum_{i\in \calI_k} [\bp_k]_i f_i(\bw^\ast) 
    \leq F(\bw^\ast; \calI_k)
    \leq F(\bw^\ast),\quad 
    \sum_{i\in \calI_k} [\bq_k]_i g_i(\bw^\ast)
    \leq G(\bw^\ast; \calI_k)
    \leq G(\bw^\ast) = -|G(\bw^\ast)| \leq 0\]

    Applying the deviation bound of softmax mean \cref{lemma:deviation_bound_softmax_mean} 
    with \(\alpha \geq \frac{2 \ln m}{\epsilon'}\) 
    and substituting \(\bx \gets \bff(\bw_k)_{\calI_k}, \bdelta \gets \bff(\bw_k, \bxi_k)_{\calI_k} - \bff(\bw_k)_{\calI_k}\)
    and \(\bx \gets \bg(\bw_k)_{\calI_k}, \bdelta \gets \bg(\bw_k, \bxi_k)_{\calI_k} - \bg(\bw_k)_{\calI_k}\), then we have
    \begin{eqnarray*}
    \sum_{i\in\calI_k} [\bp_k]_i f_i(\bw_k) 
    & \geq &F(\bw_k; \calI_k) 
    - 2\|\bff(\bw_k, \bxi_k)_{\calI_k} - \bff(\bw_k)_{\calI_k}\|_\infty - \frac{\epsilon'}{2}\\
    & = & F(\bw_k) - [F(\bw_k)-F(\bw_k; \calI_k)] - 2\|\bff(\bw_k, \bxi_k)_{\calI_k} - \bff(\bw_k)_{\calI_k}\|_\infty - \frac{\epsilon'}{2}\\
    \sum_{i\in\calI_k} [\bq_k]_i g_i(\bw_k)_{\calI_k} [1-\1_k] 
    &\geq &\left[G_k(\bw_k, \bxi_k; \calI_k) - \|\bg(\bw_k, \bxi_k)_{\calI_k} - \bg(\bw_k)_{\calI_k}\|_\infty\right] \1_{G_k(\bw_k, \bxi_k; \calI_k) > \frac{\epsilon}{2}}\\
    &= &
    \left[G_k(\bw_k, \bxi_k;\calI_k)-\frac{\epsilon}{2}\right][1-\1_k]
    +\left(\frac{\epsilon}{2} - \|\bg(\bw_k, \bxi_k)_{\calI_k} - \bg(\bw_k)_{\calI_k}\|_\infty\right) [1-\1_k]
    \end{eqnarray*}
    Combining the above, we have the following lower bound for the first and second terms in the equation for the inner product:
    \begin{eqnarray*}
        &   & \sum_{k\in[K]} \sum_{i\in \calI_k} [\bp_k]_i (f_i(\bw_k) - f_i(\bw^\ast))\1_k 
        + \sum_{k\in[K]} \sum_{i\in \calI_k} [\bq_k]_i (g_i(\bw_k) - g_i(\bw^\ast))[1-\1_k]\\
        & \geq & \sum_{k\in[K]} [F(\bw_k) - F(\bw^\ast)] \1_k 
        + \sum_{k\in[K]} \left[G_k(\bw_k, \bxi_k;\calI_k) - \frac{\epsilon}{2}\right][1-\1_k]\\
        & & + |G(\bw^\ast)| \sum_{k\in[K]} [1-\1_k]
        + \frac{\epsilon K}{2}
        - \frac{\epsilon' + \epsilon}{2} \sum_{k\in[K]} \1_k \\
        & & - 2 \sum_{k\in[K]} \|\bff(\bw_k, \bxi_k) - \bff(\bw_k)\|_\infty \1_k
        - \sum_{k\in[K]} \|\bg(\bw_k, \bxi_k) - \bg(\bw_k)\|_\infty [1-\1_k]\\
        & & - \sum_{k\in[K]} [F(\bw_k)-F(\bw_k;\calI_k)] \1_k
    \end{eqnarray*}
    \textbf{1.4: upperbound of 3rd and 4th terms in the decomposition of inner product \(\sum_{k\in[K]} \langle \mathbf{u}_k, \bw_k-\bw^\ast\rangle\)}
    
    Regarding the third term and fourth term in the equation for the inner product,
    we noting the three-point Bregman divergence identity \cref{lemma:three_point_bregman_divergence_identity} 
    by substituting \(\psi \gets f_i, g_i\), \(\bx \gets \bw_k\), \(\bx' \gets \bw^\ast\), \(\hat{\bx} \gets \bw_{k,\tau}^{(i)}\),
    and using the fact that \(D_{f_i}[\cdot||\cdot] \geq 0, D_{g_i}[\cdot||\cdot] \geq 0\) from the definition of Bregman divergence
    for convex functions \(f_i\) and \(g_i\) from~\cref{assumption:convexity}.
    \[
    -D_{f_i}[\bw^\ast||\bw_k] + \langle \nabla f_i(\bw_k) - \nabla f_i(\bw_{k,\tau}^{(i)}), \bw_{k,\tau}^{(i)}-\bw^\ast\rangle
    =
    - D_{f_i}[\bw^\ast||\bw_{k,\tau}^{(i)}] - D_{f_i}[\bw_{k,\tau}^{(i)}||\bw_k] \leq 0
    \]
    \[
    -D_{g_i}[\bw^\ast||\bw_k] + \langle \nabla g_i(\bw_k) - \nabla g_i(\bw_{k,\tau}^{(i)}), \bw_{k,\tau}^{(i)}-\bw^\ast\rangle
    =
    - D_{g_i}[\bw^\ast||\bw_{k,\tau}^{(i)}] - D_{g_i}[\bw_{k,\tau}^{(i)}||\bw_k] \leq 0
    \]
    Using \((\bw_k - \bw_{k,\tau}^{(i)})\1_k = \gamma \sum_{\tau'\in[\tau]} \nabla f_i(\bw_{k,\tau'}^{(i)}, \bzeta_{k,\tau'}^{(i)})\1_k\)
    and \((\bw_k - \bw_{k,\tau}^{(i)})[1-\1_k] = \gamma \sum_{\tau'\in[\tau]} \nabla g_i(\bw_{k,\tau'}^{(i)}, \bzeta_{k,\tau'}^{(i)})[1-\1_k]\) from the update rule, 
    and applying Abel's summation formula \(\sum_{\tau\in[E]}\langle \ba_\tau, \bB_\tau \rangle 
    = \langle \bA_E, \bB_{E-1} \rangle - \sum_{\tau\in[E-1]} \langle \bA_{\tau+1}, \bb_\tau \rangle\) 
    with \(\ba_0=\vec{0}, \bA_\tau := \sum_{\tau'\in[\tau]} \ba_{\tau'}, \bB_\tau := \sum_{\tau'\in[\tau]} \bb_{\tau'}\)
    and substituting \(\ba_\tau \gets \nabla f_i(\bw_k) -\nabla f_i(\bw_{k,\tau}^{(i)}), 
    \bb_\tau \gets \nabla f_i(\bw_{k,\tau}^{(i)}, \bzeta_{k,\tau}^{(i)}) - \nabla f_i(\bw_{k,\tau}^{(i)})\), then
    again applying Cauchy-Schwarz inequality and Lipschitz continuity~\cref{assumption:lipschitz_continuity}: \\

    \begin{eqnarray*}
    & &\frac{1}{E}\sum_{\tau\in[E]} \langle \nabla f_i(\bw_k) - \nabla f_i(\bw_{k,\tau}^{(i)}),\bw_k - \bw_{k,\tau}^{(i)}\rangle \1_k \\
    &=& 2\gamma L^2 \frac{1}{E} \sum_{\tau \in [E]} \sum_{\tau'\in [\tau]} 
    \left\langle \frac{\nabla f_i(\bw_k) -\nabla f_i(\bw_{k,\tau}^{(i)})}{2L}, 
    \frac{\nabla f_i(\bw_{k,\tau'}^{(i)})}{L}\right\rangle \1_k\\
    & & + 4\gamma L(\sigma_g/\sqrt{B_g}) \left\langle 
    \frac{1}{E}\sum_{\tau \in[E]} \frac{\nabla f_i(\bw_k) - \nabla f_i(\bw_{k,\tau}^{(i)})}{2L},
    \sum_{\tau \in [E-1]} \frac{\nabla f_i(\bw_{k,\tau}^{(i)}, \bzeta_{k,\tau}^{(i)}) - \nabla f_i(\bw_{k,\tau}^{(i)})}{2\sigma_g/\sqrt{B_g}}
    \right\rangle \1_k \\
    & & - 4 \gamma L (\sigma_g/\sqrt{B_g}) \sum_{\tau \in[E-1]}
    \tau \left\langle
    \frac{1}{\tau} \sum_{\tau'\in[\tau+1]} \frac{\nabla f_i(\bw_k) - \nabla f_i(\bw_{k,\tau'}^{(i)})}{2L},
    \frac{\nabla f_i(\bw_{k,\tau}^{(i)}, \bzeta_{k,\tau}^{(i)}) - \nabla f_i(\bw_{k,\tau}^{(i)})}{2\sigma_g/\sqrt{B_g}}
    \right\rangle \1_k\\
    & \leq & 2\gamma L^2 (E-1)\1_k
    + 4 \gamma L (\sigma_g/\sqrt{B_g}) \left\| \sum_{\tau \in [E-1]} \frac{\nabla f_i(\bw_{k,\tau}^{(i)}, \bzeta_{k,\tau}^{(i)}) - \nabla f_i(\bw_{k,\tau}^{(i)})}{2\sigma_g/\sqrt{B_g}}\right\| \1_k\\
    & & + 4 \gamma L (\sigma_g/\sqrt{B_g}) \sum_{\tau\in[E-1]} \tau \left\langle \be_{k, \tau}^{\prime(i)}, 
    \frac{\nabla f_i(\bw_{k,\tau}^{(i)}, \bzeta_{k,\tau}^{(i)}) - \nabla f_i(\bw_{k,\tau}^{(i)})}{2\sigma_g/\sqrt{B_g}}
    \right\rangle\1_k
    \end{eqnarray*}
    where the intermediate vector \(\be_{k, \tau}^{\prime(i)} := -\frac{1}{\tau} \sum_{\tau'\in[\tau+1]} \frac{\nabla f_i(\bw_k) - \nabla f_i(\bw_{k,\tau'}^{(i)})}{2L}\) 
    satisfies \(\|\be_{k, \tau}^{\prime(i)}\| \leq 1\) by the assumption of Lipschitz continuity~\cref{assumption:lipschitz_continuity} 
    and hence \(\|\nabla f_i(\bw_k) - \nabla f_i(\bw_{k,\tau}^{(i)})\| \leq 2L\).
    Similarly, by introducing the intermediate vector \(\be_{k, \tau}^{\prime\prime(i)} := -\frac{1}{\tau} \sum_{\tau'\in[\tau+1]} \frac{\nabla g_i(\bw_k) - \nabla g_i(\bw_{k,\tau'}^{(i)})}{2L}\) 
    satisfies \(\|\be_{k, \tau}^{\prime\prime(i)}\| \leq 1\) by the assumption of Lipschitz continuity~\cref{assumption:lipschitz_continuity} 
    and hence \(\|\nabla g_i(\bw_k) - \nabla g_i(\bw_{k,\tau}^{(i)})\| \leq 2L\), then:
    \begin{eqnarray*}
    & &\frac{1}{E}\sum_{\tau\in[E]} \langle \nabla g_i(\bw_k) - \nabla g_i(\bw_{k,\tau}^{(i)}),\bw_k - \bw_{k,\tau}^{(i)}\rangle [1-\1_k] \\
    & \leq & 2\gamma L^2 (E-1)[1-\1_k]
    + 4 \gamma L (\sigma_g/\sqrt{B_g}) \left\| \sum_{\tau \in [E-1]} \frac{\nabla g_i(\bw_{k,\tau}^{(i)}, \bzeta_{k,\tau}^{(i)}) - \nabla g_i(\bw_{k,\tau}^{(i)})}{2\sigma_g/\sqrt{B_g}}\right\|[1-\1_k]\\
    & & + 4 \gamma L (\sigma_g/\sqrt{B_g}) \sum_{\tau\in[E-1]} \tau \left\langle \be_{k, \tau}^{\prime\prime(i)}, \frac{\nabla g_i(\bw_{k,\tau}^{(i)}, \bzeta_{k,\tau}^{(i)}) - \nabla g_i(\bw_{k,\tau}^{(i)})}{2\sigma_g/\sqrt{B_g}}\right\rangle[1-\1_k]
    \end{eqnarray*}
    We introduce the following notations, \(\be_{k, \tau}^{(i)} := \be_{k, \tau}^{\prime(i)} \1_k + \be_{k, \tau}^{\prime\prime(i)} [1-\1_k]\)
    and \(\bz_{k, \tau}^{(i)} :=
    \frac{\nabla f_i(\bw_{k,\tau}^{(i)}, \bzeta_{k,\tau}^{(i)}) - \nabla f_i(\bw_{k,\tau}^{(i)})}{2\sigma_g/\sqrt{B_g}}
    \1_k + 
    \frac{\nabla g_i(\bw_{k,\tau}^{(i)}, \bzeta_{k,\tau}^{(i)}) - \nabla g_i(\bw_{k,\tau}^{(i)})}{2\sigma_g/\sqrt{B_g}}
    [1-\1_k]\).
    Summing over \(\tau\in[E]\) and \(i\in \calI_k\)
    and noting \(\sum_{i\in\calI_k} [\bp_k]_i = 1\) and \(\sum_{i\in\calI_k} [\bq_k]_i = 1\), 
    the upperbound of the 3rd and 4th terms in the decomposition of inner product \(\sum_{k\in[K]} \langle \mathbf{u}_k, \bw_k-\bw^\ast\rangle\) is given by:
    \begin{eqnarray*}
    & &\sum_{k\in[K]} \sum_{i\in \calI_k} [\bp_k]_i \frac{1}{E}\sum_{\tau\in[E]} 
    \left[
        -D_{f_i}[\bw^\ast||\bw_k]
        + \langle \nabla f_i(\bw_k) - \nabla f_i(\bw_{k,\tau}^{(i)}), \bw_k-\bw^\ast\rangle
    \right] \1_k \\
    &+&
    \sum_{k\in[K]} \sum_{i\in \calI_k} [\bq_k]_i \frac{1}{E}\sum_{\tau\in[E]} 
    \left[
        -D_{g_i}[\bw^\ast||\bw_k]
        + \langle \nabla g_i(\bw_k) - \nabla g_i(\bw_{k,\tau}^{(i)}), \bw_k-\bw^\ast\rangle
    \right] [1-\1_k]\\
    & \leq & 2\gamma L^2 (E-1) K 
    + \frac{4 \gamma L\sigma_g}{\sqrt{B_g}} \sum_{k\in[K]} \sum_{i\in \calI_k} [\br_k]_i \left\| \sum_{\tau\in[E-1]} \bz_{k, \tau}^{(i)} \right\|
    + \frac{4 \gamma L\sigma_g}{\sqrt{B_g}} \sum_{k\in[K]} \sum_{i\in \calI_k} [\br_k]_i \sum_{\tau\in[E-1]} 
    \tau \left\langle \be_{k, \tau}^{(i)}, \bz_{k, \tau}^{(i)} \right\rangle
    \end{eqnarray*}

    \textbf{1.5: rewriting the 5th term in the decomposition of inner product \(\sum_{k\in[K]} \langle \mathbf{u}_k, \bw_k-\bw^\ast\rangle\)}
    
    We introduce the the direction vector \(\bd_k := \frac{\bw^\ast - \bw_k}{D}\) 
    which satisfies \(\|\bd_k\| \leq 1\) by the assumption of finite diameter \(D\) 
    of the parameter space \(\Theta\)~\cref{assumption:diameter_parameter_space}.
    From the definition of \(\bz_{k, \tau}^{(i)}\) and \(\btu_k^{(i)}, \bu_k^{(i)}\), we have:
    \[
    \btu_k^{(i)} - \bu_k^{(i)} = - \frac{2\sigma_g}{\sqrt{B_g}} \cdot \frac{1}{E}\sum_{\tau\in[E]} \bz_{k, \tau}^{(i)} 
    \]
    Therefore, the 5th term in the decomposition of inner product \(\sum_{k\in[K]} \langle \mathbf{u}_k, \bw_k-\bw^\ast\rangle\) 
    can be rewritten as:
    \[
    \sum_{k\in[K]}\sum_{i\in \calI_k} [\br_k]_i \langle \btu_k^{(i)} - \bu_k^{(i)}, \bw_k - \bw^\ast \rangle
    = \frac{2D \sigma_g}{\sqrt{B_g}}\cdot \frac{1}{E}
    \sum_{k\in[K]}\sum_{i\in \calI_k} [\br_k]_i 
    \sum_{\tau\in[E]} \left\langle \bd_k, \bz_{k, \tau}^{(i)}\right\rangle
    \]

    \textbf{1.6: rewriting \(\|\bu_k^{(i)}-\btu_k^{(i)}\|^2\) in the upper bound of \(\frac{\eta}{2} \sum_{k\in[K]} \|\bu_k\|^2\)}
    By using the above equation with \(\bz_{k, \tau}^{(i)}\) and \(\btu_k^{(i)}, \bu_k^{(i)}\), 
    the term in the upper bound of \(\frac{\eta}{2} \sum_{k\in[K]} \|\bu_k\|^2\) can be rewritten as:
    \[
    \eta \sum_{k\in[K]} \sum_{i\in\calI_k} [\br_k]_i \|\bu_k^{(i)}-\btu_k^{(i)}\|^2
    = \frac{4\eta\sigma^2_g}{B_g}\cdot \frac{1}{E^2}
    \sum_{k\in[K]}\sum_{i\in \calI_k} [\br_k]_i
    \left\| \sum_{\tau\in[E]} \bz_{k, \tau}^{(i)} \right\|^2
    \]

    \textbf{Step 2: Upperbound \(\sum\limits_{k\in[K]} [F(\bw_k) - F(\bw^\ast)]\1_k\) and \(\sum\limits_{k\in[K]} G(\bw_k)\1_k\)}

    We introduce a filtration \((\calF_t)_{t\in\mathbb{Z}_{\geq -1}}\) to track the information up to time
    \(t := \text{ind}(k, i, \tau) \equiv k\cdot n E + i\cdot E + \tau\) and 
    \(\calF_t := \sigma((\bzeta_{k',\tau'}^{(i')})_{i'\in\calI_{k'},\text{ind}(k', i', \tau') < t}, 
    (\bxi_{k'}^{(i')})_{i'\in\calI_{k'},\text{ind}(k', 0, 0) \leq t},
    (\calI_{k'})_{\text{ind}(k', 0, 0)\leq t})\) with \(\calF_{-1} = \{\emptyset, \Omega\}\).
    These introduced notations satisfy the following: 

    1. the indicator \(\1_k\equiv \1_{G_k(\bw_k)\leq \frac{\epsilon}{2}}\) is \(\calF_t\)-measurable; 
    (from the definition of \(G_k(\bw_k, \bxi_k) = m(\bg(\bw_k, \bxi_k), \alpha)\))

    2. the softmax weights \(\br_k\) such that \(\sum_{i\in\calI_k} [\br_k]_i = 1\) 
     and \([\br_k]_i \geq 0\) and is \(\calF_t\)-measurable; (from the defintion of \(\br_k\))
     \[
     \br_k \equiv \bp_k \1_k + \bq_k[1-\1_k] = 
     \softmax(\alpha \bff(\bw_k, \bxi_k)) \1_k + \softmax(\alpha \bg(\bw_k, \bxi_k))[1-\1_k]
     \]
    3. the direction vectors \(\bd_k\) and \(\be_k^{(i)}\) such that \(\|\bd_k\| \leq 1\) 
    (from the definition of \(\bd_k\) and the assumption of finite diameter of the parameter space \(\Theta\)~\cref{assumption:diameter_parameter_space})
    and \(\|\be_k^{(i)}\| \leq 1\) (from the definition of \(\be_k^{(i)}\) and the assumption of Lipschitz continuity~\cref{assumption:lipschitz_continuity})
     and are \(\calF_t\)-measurable; (from the definition of \(\bd_k\) and \(\be_k^{(i)}\))
     \[
     \bd_k \equiv \frac{\bw^\ast - \bw_k}{D},\quad
     \be_k^{(i)} \equiv \frac{1}{\tau}\sum_{\tau'=1}^\tau 
     \frac{\nabla f_i(\bw_{k,\tau'}^{(i)}) - \nabla f_i(\bw_k)}{2L} \1_k
     + \frac{1}{\tau}\sum_{\tau'=1}^\tau \frac{\nabla g_i(\bw_{k,\tau'}^{(i)}) - \nabla g_i(\bw_k)}{2L} [1-\1_k]
     \]
    4. the conditional 1-subgaussian random variable \(\bz_{k,\tau}^{(i)}\) 
    such that \(\E[\bz_{k,\tau}^{(i)}\mid \calF_t] = 0\) 
    and \(\E[\exp(\|\bz_{k,\tau}^{(i)}\|^2)\mid \calF_t] \leq 2\)
    and therefore
    \(\ln\E[\exp(\lambda\langle \be, \bz_{k,\tau}^{(i)}\rangle) \mid \calF_t]
    \leq \ln\E[\exp(\lambda\|\bz_{k,\tau}^{(i)}\|)\mid \calF_t] \leq \frac{\lambda^2}{2}, \forall \be\in \mathbb{S}^{d-1}, \forall\lambda \in \mathbb{R}\),
    and is \(\calF_{t+1}\)-measurable. (from the definition of \(\bz_{k,\tau}^{(i)}\),
    independence of all \(\bzeta_{k,\tau}^{(i)}\) and \(\bxi_{k}\equiv (\bxi_k^{(i)})_{i\in\calI_k}\) 
    and the assumption of sub-Gaussianity of stochastic gradients~\cref{assumption:subgaussianity_stochastic_estimates},
    and \cref{lemma:subgaussianity} for subgaussianity of random variables,
    \cref{lemma:average_subgaussian_random_vectors} for subgaussianity of the average of subgaussian random vectors)
    \[
    \bz_{k,\tau}^{(i)} \equiv 
    \frac{\nabla f_i(\bw_{k,\tau}^{(i)},\bzeta_{k,\tau}^{(i)}) -\nabla f_i(\bw_{k,\tau}^{(i)})}{2\sigma_g/\sqrt{B_g}}\1_k
    + \frac{\nabla g_i(\bw_{k,\tau}^{(i)},\bzeta_{k,\tau}^{(i)}) -\nabla g_i(\bw_{k,\tau}^{(i)})}{2\sigma_g/\sqrt{B_g}}[1-\1_k]
    \]

    \textbf{2.1: terms in the upperbound of \(\sum_{k\in[K]} [F(\bw_k) - F(\bw^\ast)]\1_k\)}

    By using the polarization identity \cref{lemma:polarization} and 
    the established lower bound of \(\eta \sum_{k\in[K]} \|\bu_k\|^2\) and 
    the lowerbound of \(\sum_{k\in[K]} \langle \mathbf{u}_k, \bw_k-\bw^\ast\rangle\) in the previous step,
    dropping the nonnegative term \(\frac{\|\bw_K-\bw^\ast\|^2}{2\eta}\),
    noting the assumption of finite diameter of the parameter space \(\Theta\)~\cref{assumption:diameter_parameter_space}
    then \(\|\bw_0-\bw^\ast\| \leq D\), 
    we rearrange the terms and obtain the following upperbound of \(\sum_{k\in[K]} [F(\bw_k) - F(\bw^\ast)]\1_k\):
    \begin{eqnarray*}
    & & \sum_{k\in[K]} [F(\bw_k) - F(\bw^\ast)]\1_k\\
    & \leq &
    \frac{\epsilon'+\epsilon}{2} \sum_{k\in[K]} \1_k
    - \sum_{k\in[K]} \left[G_k(\bw_k, \bxi_k;\calI_k)-\frac{\epsilon}{2}\right][1-\1_k]
    - |G(\bw^\ast)| \sum_{k\in[K]} [1-\1_k]\\
    &-& \frac{\epsilon K}{2}
    + \frac{D^2}{2\eta}
    + \eta L^2 K + 2 \gamma L^2 (E-1)K \\
    &+& 2 \sum_{k\in[K]} \|\bff(\bw_k, \bxi_k) - \bff(\bw_k) \|_\infty \1_k
    + \sum_{k\in[K]} \|\bg(\bw_k, \bxi_k) - \bg(\bw_k) \|_\infty [1-\1_k]\\
    &+& \frac{4\eta \sigma^2_g}{B_g} \cdot \frac{1}{E^2} 
    \sum_{k\in[K]}\sum_{i\in \calI_k} [\br_k]_i \left\| \sum_{\tau\in[E]} \bz_{k, \tau}^{(i)} \right\|^2
    + \frac{4\gamma L\sigma_g}{\sqrt{B_g}} 
    \sum_{k\in[K]} \sum_{i\in \calI_k} [\br_k]_i \left\| \sum_{\tau\in[E-1]} \bz_{k, \tau}^{(i)} \right\|\\
    &+& \frac{2D \sigma_g}{\sqrt{B_g}}\cdot \frac{1}{E} 
    \sum_{k\in[K]}\sum_{i\in \calI_k} [\br_k]_i \sum_{\tau\in[E]} \left\langle \bd_k, \bz_{k, \tau}^{(i)}\right\rangle
    + \frac{4 \gamma L\sigma_g}{\sqrt{B_g}} 
    \sum_{k\in[K]} \sum_{i\in \calI_k} [\br_k]_i \sum_{\tau\in[E-1]} \tau \left\langle \be_{k, \tau}^{(i)}, \bz_{k, \tau}^{(i)} \right\rangle\\
    &+& \sum_{k\in[K]} [F(\bw_k)-F(\bw_k;\calI_k)] \1_k
    \end{eqnarray*}
    By choosing the local step size \(\gamma = \frac{\eta}{E}\), 
    and applying triangle inequality \( \|\bx-\by\| \leq \|\bx\| +\|\by\|\), then:
    \begin{eqnarray*}
        \frac{4\gamma L\sigma_g}{\sqrt{B_g}} \sum_{k\in[K]} \sum_{i\in \calI_k} [\br_k]_i 
        \left\| \sum_{\tau\in[E-1]} \bz_{k, \tau}^{(i)} \right\|
        \leq \frac{4 \eta L \sigma_g}{\sqrt{B_g}} \frac{1}{E} \sum_{k\in[K]} \sum_{i\in \calI_k} [\br_k]_i \left\| \sum_{\tau\in[E]} \bz_{k, \tau}^{(i)} \right\|
        + \frac{4 \eta L \sigma_g}{\sqrt{B_g}} \frac{1}{E} \sum_{k\in[K]} \sum_{i\in \calI_k} [\br_k]_i \| \bz_{k, E-1}^{(i)} \|
    \end{eqnarray*}

    Noting \(\E[X]^2 \leq \E[X^2]\) and \(2\sqrt{x y} \leq x + y\) for any \(x, y \geq 0\), then we have:
    \begin{eqnarray*}
        &  &\frac{4\eta L\sigma_g}{\sqrt{B_g}} \cdot \frac{1}{E} 
        \sum_{k\in[K]} \sum_{i\in \calI_k} [\br_k]_i \left\| \sum_{\tau\in[E-1]} \bz_{k, \tau}^{(i)} \right\| 
        = \sum_{k\in [K]} 2\sqrt{\eta L^2 \times \frac{4\eta \sigma^2_g}{B_g} \cdot \frac{1}{E^2}}
        \sum_{i\in \calI_k} [\br_k]_i \left\| \sum_{\tau\in[E-1]} \bz_{k, \tau}^{(i)} \right\|\\
        &\leq & \sum_{k\in [K]} 2\sqrt{\eta L^2 \times \frac{4\eta \sigma^2_g}{B_g} \cdot \frac{1}{E^2}
        \sum_{i\in \calI_k} [\br_k]_i \left\| \sum_{\tau\in[E-1]} \bz_{k, \tau}^{(i)} \right\|^2}
        \leq
        \eta L^2 K + \frac{4\eta \sigma^2_g}{B_g} \cdot \frac{1}{E^2} \sum_{k\in[K]}\sum_{i\in \calI} [\br_k]_i \left\| \sum_{\tau\in[E-1]} \bz_{k, \tau}^{(i)} \right\|^2
    \end{eqnarray*}
    Hence, we can upperbound \(\sum_{k\in[K]} [F(\bw_k) - F(\bw^\ast)]\1_k\) as:
    \begin{eqnarray*}
        & & \sum_{k\in[K]} [F(\bw_k) - F(\bw^\ast)]\1_k\\
        &\leq &
        \frac{\epsilon'+\epsilon}{2} \sum_{k\in[K]} \1_k
        - \sum_{k\in[K]} \left[G_k(\bw_k, \bxi_k;\calI_k)-\frac{\epsilon}{2}\right][1-\1_k]\\
        & & - |G(\bw^\ast)| \sum_{k\in[K]} [1-\1_k]
        - \frac{\epsilon K}{2}
        + \frac{D^2}{2\eta}
        + 4\eta L^2 K\left(1-\frac{1}{2E}\right)\\
        &+& 2\sum_{k\in[K]} \|\bff(\bw_k, \bxi_k) - \bff(\bw_k) \|_\infty \1_k 
        + \sum_{k\in[K]} \|\bg(\bw_k, \bxi_k) - \bg(\bw_k) \|_\infty [1-\1_k]\\
        &+& \frac{8\eta \sigma^2_g}{B_g} \cdot \frac{1}{E^2} \sum_{k\in[K]}\sum_{i\in \calI_k} [\br_k]_i \left\| \sum_{\tau\in[E]} \bz_{k, \tau}^{(i)} \right\|^2\\
        &+& \frac{2D \sigma_g}{\sqrt{B_g}} \cdot \frac{1}{E} 
        \sum_{k\in[K]}\sum_{i\in \calI_k} [\br_k]_i \sum_{\tau\in[E]} \left\langle \bd_k, \bz_{k, \tau}^{(i)}\right\rangle
        + \frac{4 \eta L\sigma_g}{\sqrt{B_g}} \cdot \frac{1}{E}
        \sum_{k\in[K]} \sum_{i\in \calI_k} [\br_k]_i \sum_{\tau\in[E-1]} \tau \left\langle \be_{k, \tau}^{(i)}, \bz_{k, \tau}^{(i)} \right\rangle\\
        &+& \frac{4 \eta L\sigma_g}{\sqrt{B_g}} \cdot \frac{1}{E}
        \sum_{k\in[K]} \sum_{i\in \calI_k} [\br_k]_i \| \bz_{k, E-1}^{(i)} \|
        + \sum_{k\in[K]} [F(\bw_k)-F(\bw_k;\calI_k)] \1_k
    \end{eqnarray*}

    \textbf{2.2: terms in the upperbound of \(\sum_{k\in[K]} G(\bw_k)\1_k\)}

    Noting \(G_k(\bw_k, \bxi_k; \calI_k)\1_k = G_k(\bw_k, \bxi_k; \calI_k)\1_{G_k(\bw_k, \bxi_k; \calI_k)\leq\frac{\epsilon}{2}} \leq \frac{\epsilon}{2}\1_k\),
    applying \cref{lemma:softmax_mean} of softmax mean \(m(\bx, \alpha)\) with softmax hyperparameter \(\alpha\geq \frac{2 \ln m}{\epsilon'}\)
    then \(G(\bw_k,\bxi_k; \calI_k) - G_k(\bw_k,\bxi_k; \calI_k)
    = \max_{i\in\calI_k} [\bg(\bw_k, \bxi_k)]_i - m(\bg(\bw_k, \bxi_k)_{\calI_k}, \alpha) \leq \frac{\epsilon'}{2}\),
    using
    \(G(\bw; \calI_k)-G(\bw_k, \bxi_k; \calI_k)=\max_{i\in \calI_k} [\bg(\bw_k)]_i - \max_{i\in \calI_k} [\bg(\bw_k, \bxi_k)]_i \leq 
    \max_{i\in \calI_k} [\bg(\bw_k) - \bg(\bw_k, \bxi_k)]_i \leq \|\bg(\bw_k, \bxi_k)_{\calI_k} - \bg(\bw_k)_{\calI_k}\|_\infty\):
    \begin{eqnarray*}
    \sum_{k\in[K]} G(\bw_k)\1_k
    &=& \sum_{k\in[K]} \Big[ G_k(\bw_k, \bxi_k; \calI_k) 
    + [G(\bw_k, \bxi_k; \calI_k) - G_k(\bw_k, \bxi_k; \calI_k)]\\
    & & \quad+ [G(\bw_k; \calI_k) - G(\bw_k, \bxi_k; \calI_k)]
    + [G(\bw_k) - G(\bw_k; \calI_k)]\Big]\1_k\\
    &\leq& \frac{\epsilon + \epsilon'}{2} \sum_{k\in[K]} \1_k + \sum_{k\in[K]} \|\bg(\bw_k, \bxi_k)_{\calI_k} - \bg(\bw_k)_{\calI_k}\|_\infty \1_k 
    + \sum_{k\in[K]} [G(\bw_k) - G(\bw_k; \calI_k)] \1_k
    \end{eqnarray*}

    \textbf{2.3: upperbounds of 
    \(\sum_{k\in[K]} \|\bff(\bw_k, \bxi_k)_{\calI_k} - \bff(\bw_k)_{\calI_k} \|_\infty \1_k\) and 
    \(\sum_{k\in[K]} \|\bg(\bw_k, \bxi_k)_{\calI_k} - \bg(\bw_k)_{\calI_k} \|_\infty \1_k, [1-\1_k]\)}

    By applying the established \cref{lemma:subgaussianity,lemma:average_subgaussian_random_vectors} 
    for subgaussianity of random variables and subgaussianity of the average of subgaussian random vectors
    with \cref{assumption:subgaussianity_stochastic_estimates},
    and \cref{lemma:maximal_inequality_subgaussianity}
    of maximal inequality of subgaussian random variables 
    with \(\bz_k\gets \frac{\bff(\bw_k, \bxi_k)_{\calI_k}-\bff(\bw_k)_{\calI_k}}{2\sigma_\zeta/\sqrt{B_\zeta}},
    \frac{\bg_k(\bw_k, \bxi_k)_{\calI_k}-\bg_k(\bw_k)_{\calI_k}}{2\sigma_\zeta/\sqrt{B_\zeta}}\), \(\1_k \gets \1_k, [1-\1_k]\) and \(\delta \gets \frac{\delta}{8}\), 
    the following upperbounds hold with probability at least \(1- 3\times \frac{\delta}{12} = 1 - \frac{\delta}{4}\):
    \begin{eqnarray*}
        & &\sum_{k\in[K]} \|\bff(\bw_k, \bxi_k)_{\calI_k} - \bff(\bw_k)_{\calI_k} \|_\infty \1_k 
        \leq 2\sigma_\zeta \sqrt{\frac{2 \ln \frac{24Km}{\delta}}{B_\zeta}} \sum_{k\in[K]} \1_k\\
        & &\sum_{k\in[K]} \|\bg(\bw_k, \bxi_k)_{\calI_k} - \bg(\bw_k)_{\calI_k} \|_\infty \1_k 
        \leq 2\sigma_\zeta \sqrt{\frac{2 \ln \frac{24Km}{\delta}}{B_\zeta}} \sum_{k\in[K]} \1_k\\
        & &\sum_{k\in[K]} \|\bg(\bw_k, \bxi_k)_{\calI_k} - \bg(\bw_k)_{\calI_k} \|_\infty [1-\1_k] 
        \leq 2\sigma_\zeta \sqrt{\frac{2 \ln \frac{24Km}{\delta}}{B_\zeta}} \sum_{k\in[K]} [1-\1_k]
    \end{eqnarray*}

    \textbf{2.4: upper bound of \(\sum_{k\in[K]} \sum_{i\in \calI_k} [\br_k]_i \left\| \sum_{\tau\in[E]} \bz_{k, \tau}^{(i)} \right\|^2\)}

    For brevity, 
    with the correspondence \(t := \text{ind}(k, i, \tau) \equiv k\cdot n E + i\cdot E + \tau\),
    we define such a filtration \((\calG_k)_{k\in\mathbb{Z}_{\geq -1}}\) by 
    \(\calG_{-1} = \{\emptyset, \Omega\}\) and 
    \(\calG_k =  \calF_{nE \lceil \frac{t}{nE} \rceil}\), 
    and \(\bz_{k}^{(i)} := \frac{1}{2\sqrt{E}}\sum_{\tau\in[E]} \bz_{k, \tau}^{(i)}\).
    Then from the definition and properties of \(\bz_{k, \tau}^{(i)}\) listed 
    at the beginning of the Step 2, we have:
    \[
    \E[\bz_{k}^{(i)} \mid \calG_k] = \vec{0}, \quad 
    \ln[\E[\exp(\lambda\|\bz_{k}^{(i)}\|) \mid \calG_k]]
    \leq \frac{\lambda^2}{2},
    \forall\lambda \in \mathbb{R},
    \]
    and \(\bz_{k}^{(i)}\) is \(\calG_{k+1}\)-measurable.
    Moreover, \(\br_k\) is \(\calG_k\)-measurable from definitions of \(\br_k\) and \(\calG_k\).
    We rewrite the summation as:
    \[
    \sum_{k\in[K]} \sum_{i\in \calI_k} [\br_k]_i \left\| \sum_{\tau\in[E]} \bz_{k, \tau}^{(i)} \right\|^2
    = 4E \sum_{k\in[K]} \sum_{i\in \calI_k} [\br_k]_i \| \bz_{k}^{(i)} \|^2 
    = 4E \sum_{k\in [K]} X_k
    \]
    where \(X_k := \sum_{i\in \calI_k} [\br_k]_i \| \bz_{k}^{(i)} \|^2\).
    Then using Jensen's inequality with \(\sum_{i\in \calI_k} [\br_k]_i =1, [\br_k]_i \geq 0\) 
    and \(\br_k\) is \(\calG_k\)-measurable,
    and the tower property of conditional expectation, we have the following inequality 
    when \(\lambda \in (0, 1)\):
    \[
    \E[\exp(\lambda X_k/2) \mid \calG_k] 
    \leq \sum_{i\in \calI_k} [\br_k]_i \E[\exp(\lambda \| \bz_{k}^{(i)} \|^2/2) \mid \calG_k]
    \leq \max_{i\in \calI_k} \E[\exp(\lambda \| \bz_{k}^{(i)} \|^2/2) \mid \calG_k] 
    \leq (1- \lambda)^{-1/2}
    \]
    since by introducing an independent standard Gaussian \(z \sim \mathcal{N}(0, 1)\),
    using \(\E_{z}[\exp(\lambda z^2/2)] = (1- \lambda)^{-1/2}\) for \(\lambda \in (0, 1)\):
    \[
    \E[\exp(\lambda\| \bz_{k}^{(i)} \|^2/2) \mid \calG_k] 
    = \E[\E_{z}[\exp(\sqrt{\lambda} \|\bz_{k}^{(i)}\| z)\mid \calG_k]]
    = \E_{z}[\E[\exp(\sqrt{\lambda} z \|\bz_{k}^{(i)}\|) \mid \calG_k, z]]
    \leq \E_{z}[\mathe^{\frac{\lambda z^2}{2}}]
    = (1- \lambda)^{-1/2}
    \]
    Therefore, noting \(X_k\) is \(\calG_{k+1}\)-measurable, for any \(\lambda \in (0, 1)\), we have:
    \[
    - K\frac{\lambda}{2}+
    \ln\E\left[\exp\left(\frac{\lambda}{2}\sum_{k\in[K]} X_k\right) \right]
    =-\lambda \frac{K}{2}
    +\sum_{k\in[K]} \ln\E\left[\exp\left(\frac{\lambda}{2} X_k\right) \mid \calG_k\right]
    \leq \frac{K}{2} \left[-\lambda-\ln(1-\lambda)\right]
    \leq \frac{K}{4} \frac{\lambda^2}{(1-\lambda)}
    \]
    By using the Chernoff bound, and introducing \(\psi^\ast(u) := \sup_{\lambda \in(0,1)} u \lambda - \frac{\lambda^2}{(1-\lambda)}\), we have:
    \[
    \ln \Pr\left(\sum_{k\in[K]} X_k \geq K(1+u/2)\right)
    \leq \inf_{\lambda> 0} -\frac{K}{4}(2+u)\lambda+ \ln \E\left[\exp\left(\frac{\lambda}{2} \sum_{k\in[K]} X_k\right)\right]
    \leq -\frac{K}{4}\psi^\ast(u)
    \]
    By the characterization of sub-gamma random variables, see also equation (2.5) in section 2.4 sub-gamma random variables, on page 29 of~\cite{boucheron2013concentration}.
    we have \(\psi^\ast(u) = (\sqrt{1+u}-1)^2\) with its inverse function 
    \(\psi^{\ast-1}(v) = 2\sqrt{v} + v\):
    \begin{eqnarray*}
    & &\Pr\left(\sum_{k\in[K]} X_k \geq K(1+\psi^{\ast-1}(v)/2)\right)
    = \Pr\left(\sum_{k\in[K]} X_k \geq K(1+\sqrt{v} + v/2)\right)
    \leq \exp\left(-\frac{K}{4}v\right)
    \end{eqnarray*}
    By selecting \(v = \frac{4\ln\frac{8}{\delta}}{K}\)
    and noting \(1+\sqrt{v} + v/2 \leq \frac{3}{2} + v\), with probability at least \(1-\frac{\delta}{8}\), we have:
    \[
    \sum_{k\in[K]} \sum_{i\in\calI_k} [\br_k]_i \left\| \sum_{\tau\in[E]} \bz_{k, \tau}^{(i)} \right\|^2 
    = 4E\sum_{k\in[K]} X_k 
    \leq 4E \cdot K(1+\sqrt{v} + v/2)
    \leq 4E \cdot K\left(\frac{3}{2}+v\right) 
    = 2E \left( 3 K + 8 \ln \frac{8}{\delta} \right)
    \]

    \textbf{2.5: upper bound of \(\sum_{k\in[K]} \sum_{i\in \calI_k} [\br_k]_i \sum_{\tau\in[E]} \left\langle \bd_k, \bz_{k, \tau}^{(i)}\right\rangle\)}

    With the correspondence \(t := \text{ind}(k, i, \tau) \equiv k\cdot n E + i\cdot E + \tau\),
    we define such intermediate quantities, 
    \(T= K\cdot n \cdot E\), 
    \(\bv_t := [\br_k]_i \bd_k\), \(\bz_t := \bz_{k, \tau}^{(i)}\),
    and \(S_t := \sum_{t'\in [t]} \langle \bv_{t'}, \bz_{t'} \rangle\),
    \(V_t := \sum_{t'\in [t]} \|\bv_{t'}\|^2\).
    Then, since \(\bz_t\) is \(\calF_{t+1}\)-measurable
    and \(\bv_t\) is \(\calF_t\)-measurable, then \(S_t\) is \(\calF_t\)-measurable,
    \(V_t\) is \(\calF_{t-1}\)-measurable.

    Then for \(M_t := \exp\left( \lambda S_t-\frac{\lambda^2}{2} V_t \right)\), 
    which is \(\calF_t\)-measurable, 
    we show it is a supermartingale 
    by using Theorem 4.2.4 on page 189 of~\cite{durrett2019probability} and showing 
    that the following inequality holds:
    \[
    \E[M_{t+1}\mid \calF_t] 
    = M_t\cdot\E\left[\exp\left( \lambda \langle \bv_{t}, \bz_{t} \rangle-\frac{\lambda^2}{2} \|\bv_{t}\|^2 \right)\mid \calF_t\right]
    \leq M_t
    \]
    Therefore, we have \(\E[M_T] \leq M_0 = 1\) by the tower property of conditional expectation, 
    which implies:
    \[
    \E\left[\exp\left(\lambda \sum_{t\in[T]} \langle \bv_{t}, \bz_{t} \rangle\right)\right]
    \leq \E\left[\exp\left(\frac{\lambda^2}{2} \sum_{t\in[T]} \|\bv_{t}\|^2\right)\right]
    \leq \exp\left(\frac{\lambda^2}{2} KE\right)
    \]
    since we have the following fact using \(\|\bd_k\| \leq 1\) and \(\sum_{i\in[\calI_k]}[\br_k]_i^2 \leq
    \sum_{i\in \calI_k} [\br_k]_i = 1\):
    \[
    \sum_{t\in[T]} \|\bv_t\|^2
    =
    \sum_{k\in[K]} \sum_{i\in \calI_k} [\br_k]_i^2 \sum_{\tau\in[E]} \|\bd_k\|^2\leq KE
    \]
    By using the Chernoff bound, then for any \(u > 0\), we have:
    \[
    \Pr\left(\sum_{t\in[T]} \langle \bv_{t}, \bz_{t} \rangle \geq u\right)
    \leq \exp\left(\inf_{\lambda>0} -\lambda u + \frac{\lambda^2}{2} KE\right)
    \leq \exp\left(-\frac{u^2}{2KE}\right)
    \]
    By selecting \(u = \sqrt{2KE \ln\frac{8}{\delta}}\), with probability at least \(1-\frac{\delta}{8}\), we have:
    \[
    \sum_{k\in[K]} \sum_{i\in \calI_k} [\br_k]_i \sum_{\tau\in[E]} \left\langle \bd_k, \bz_{k, \tau}^{(i)}\right\rangle
    =\sum_{t\in[T]} \langle \bv_{t}, \bz_{t} \rangle \leq \sqrt{2KE \ln\frac{8}{\delta}}
    \]

    \textbf{2.6: upper bound of \(\sum_{k\in[K]} \sum_{i\in \calI_k} [\br_k]_i \sum_{\tau\in[E-1]} \tau \left\langle \be_{k, \tau}^{(i)}, \bz_{k, \tau}^{(i)} \right\rangle\)}

    With the correspondence \(t := \text{ind}(k, i, \tau) \equiv k\cdot n E + i\cdot E + \tau\),
    we define such intermediate quantities, 
    \(T= K\cdot n \cdot E\), 
    \(\bv_t := [\br_k]_i \tau \be_{k, \tau}^{(i)} \1_{\tau \in[E-1]}\), \(\bz_t := \bz_{k, \tau}^{(i)}\),
    then by the same procedure of showing supermartingale, we also have 
    \[
    \E\left[\exp\left(\lambda \sum_{t\in[T]} \langle \bv_{t}, \bz_{t} \rangle\right)\right]
    \leq \E\left[\exp\left(\frac{\lambda^2}{2} \sum_{t\in[T]} \|\bv_{t}\|^2\right)\right]
    \leq \exp\left(\frac{\lambda^2}{2} \frac{K}{3} (E-1)(E-3/2)(E-2)\right)
    \]
    since we have the following fact using \(\|\be_k\| \leq 1, \sum_{\tau\in[E-1]} \tau^2 = \frac{1}{3} (E-1)(E-3/2)(E-2)\) and \(\sum_{i\in[\calI_k]}[\br_k]_i^2 \leq
    \sum_{i\in \calI_k} [\br_k]_i = 1\):
    \[
    \sum_{t\in[T]} \|\bv_t\|^2
    =
    \sum_{k\in[K]} \sum_{i\in \calI_k} [\br_k]_i^2 \sum_{\tau\in[E-1]} \tau^2 \|\be_{k, \tau}^{(i)}\|^2
    \leq \frac{K}{3} (E-1)(E-3/2)(E-2)
    \]
    By the same procedure of taking Chernoff bound, 
    with probability at least \(1-\frac{\delta}{8}\), we have:
    \[
    \sum_{k\in[K]} \sum_{i\in \calI_k} [\br_k]_i \sum_{\tau\in[E-1]} \tau \left\langle \be_{k, \tau}^{(i)}, \bz_{k, \tau}^{(i)}\right\rangle
    =
    \sum_{t\in[T]} \langle \bv_{t}, \bz_{t} \rangle
    \leq \sqrt{\frac{2K}{3} (E-1)(E-3/2)(E-2) \ln\frac{8}{\delta}}
    \]

    \textbf{2.7: upper bound of \(\sum_{k\in[K]} \sum_{i\in \calI_k} [\br_k]_i \| \bz_{k, E-1}^{(i)} \|\)}
    With the correspondence \(t := \text{ind}(k, i, \tau) \equiv k\cdot n E + i\cdot E + \tau\),
    we define such intermediate quantities, 
    \(T= K\cdot n \cdot E\), 
    \(a_t := [\br_k]_i \1_{\tau = E-1}\), \(z_t := \|\bz_{k, \tau}^{(i)}\|\),
    then by the same procedure of showing supermartingale, we also have 
    \[
    \E\left[\exp\left(\sum_{t\in[T]} a_t z_t\right)\right]
    \leq \E\left[\exp\left(\frac{\lambda^2}{2} \sum_{t\in[T]} a_t^2\right)\right]
    \leq \exp\left(\frac{\lambda^2}{2} K\right)
    \]
    Since \(\sum_{i\in\calI_k} [\br_k]_i^2 \leq \sum_{i\in\calI_k} [\br_k]_i = 1\), we have:
    \[
    \sum_{t\in[T]} a_t^2
    =
    \sum_{k\in[K]} \sum_{i\in \calI} [\br_k]_i^2 \sum_{\tau\in[E]} \1_{\tau = E-1}^2
    =
    \sum_{k\in[K]} \sum_{i\in \calI} [\br_k]_i^2
    \leq
    K
    \]
    By the same procedure of taking Chernoff bound, 
    with probability at least \(1-\frac{\delta}{8}\), we have:
    \[
    \sum_{k\in[K]} \sum_{i\in \calI} [\br_k]_i \| \bz_{k, E-1}^{(i)} \|
    =
    \sum_{t\in[T]} a_t z_t
    \leq \sqrt{2K \ln\frac{8}{\delta}}
    \]

    \textbf{2.8: upper bounds of \(\sum_{k\in[K]} [F(\bw_k) - F(\bw_k; \calI_k)] \1_k\) and \(\sum_{k\in[K]} [G(\bw_k) - G(\bw_k; \calI_k)] \1_k\) }


    With the abuse of notation,
    we redefine a filtration \((\calG_k)_{k\in \mathbb{Z}_{\geq 0}}\) 
    by \(\calG_0 = \{\emptyset, \Omega\}\) and \(\calG_k = \sigma(\calG_{k-1}, \calI_{k-1}, \calB_{k-1})\),
    where a sequence of subsets \((\calI_k)_{k\in \mathbb{Z}_{\geq 0}} \stackrel{\text{i.i.d.}}{\sim} \text{Unif}(\mathcal{C}_m(\mathcal{I})) \) 
    with \( \calC_m(\calI) = \{A \subseteq \calI \mid |A| = m\} \) for some \(0 < m \leq n\) and \(\calI \equiv \{1, \cdots, n\}\), 
    are independently and identically sampled from \(\calC_m(\calI)\), and are independent of 
    the collection of random sample batches \(\calB_k \equiv \left((\bxi_k^{(i)})_{i\in\calI_k}, (\bzeta_{k, \tau}^{(i)})_{i\in\calI_k,\tau\in[E]}\right)\) at any time \(k\).
    Since \(\bw_k\)is \(\calG_k\)-measurable, \(\calI_k, \bxi_k\) are \(\calG_{k+1}\)-measurable, 
    we show that \(F(\bw_k)-F(\bw_k;\calI_k), G(\bw_k)-G(\bw_k;\calI_k), \1_k\equiv \1_{G_k(\bw_k, \bxi_k;\calI_k)\leq\frac{\epsilon}{2}}\) are \(\calG_{k+1}\)-measurable,
    and satisfy the following exponential tail bounds with \(C := \sigma/([-\ln(1-m/n)]n)\) for any \(t\geq0\) 
    by using \cref{lemma:conditional_exponential_tail_bound_with_uniform_cdf} under \cref{assumption:uniform_cdf_bound}.
    \[
    \Pr(F(\bw_k)-F(\bw_k;\calI_k) \geq t \mid \calG_k) \leq \exp\left(- \frac{t}{C}\right),\quad
    \Pr(G(\bw_k)-G(\bw_k;\calI_k) \geq t \mid \calG_k) \leq \exp\left(- \frac{t}{C}\right)
    \]

    Keeping \(\calG_k, \1_k\) unchanged,
    and substituting \(Y_k \gets F(\bw_k)-F(\bw_k;\calI_k), G(\bw_k)-G(\bw_k;\calI_k)\) 
    and \(\delta \gets \frac{\delta}{8},C \gets \sigma/([-\ln(1-m/n)]n)\) in the remark for \cref{lemma:upper_bound_average_rv_conditional_exponential_tail}, we have:
    \[
    \sum_{k\in[K]} \left[F(\bw_k)-F(\bw_k;\calI_k)\right]\1_k
    \leq \frac{2\sigma}{[-\ln(1-\frac{m}{n})]n} \ln\frac{32}{\delta} \sum_{k\in[K]} \1_k
    + \frac{2\sigma}{[-\ln(1-\frac{m}{n})]n} \ln\frac{32}{\delta} \cdot K
    \]
    \[
    \sum_{k\in[K]} \left[G(\bw_k)-G(\bw_k;\calI_k)\right]\1_k
    \leq \frac{2\sigma}{[-\ln(1-\frac{m}{n})]n} \ln\frac{32}{\delta} \sum_{k\in[K]} \1_k
    + \frac{2\sigma}{[-\ln(1-\frac{m}{n})]n} \ln\frac{1}{2\kappa} \sum_{k\in[K]} \1_k
    \]
    where \(\kappa := \frac{|\calS|}{K} \in [0, 1]\) 
    is the constraint-satisfied ratio and with such a convention of \(\ln\frac{1}{|\calS|}\cdot \sum_{k\in[K]} \1_k = 0\) when \(\kappa = 0\).

    \textbf{Step 3. Establish Final Bounds}

    \textbf{3.1: rearranging terms in final bounds}

    By selecting \(\epsilon' = \epsilon 
    - 4\sigma_\zeta \sqrt{\frac{2 \ln \frac{24Km}{\delta}}{B_\zeta}}
    - \frac{4 \sigma}{\left[-\ln\left(1-\frac{m}{n}\right)\right]n}
    \ln \frac{32}{\delta}\),
    and rearranging the terms with the established bounds in Step 2: 
    \begin{eqnarray*}
        \sum_{k\in[K]} [F(\bw_k) - F(\bw^\ast)] \1_k
        &\leq& \epsilon \sum_{k\in[K]} \1_k 
        - \sum_{k\in[K]} \left[G_k(\bw_k, \bxi_k;\calI_k)-\frac{\epsilon}{2}\right][1-\1_k]
        - |G(\bw^\ast)| \sum_{k\in[K]} [1-\1_k]\\
        & &- \frac{\epsilon K}{2}
        + \frac{D^2}{2\eta}
        + 4\eta L^2 K \left(1-\frac{1}{2E}\right)\\
        & &+ 2\sigma_\zeta \sqrt{\frac{2 \ln \frac{24Km}{\delta}}{B_\zeta}} \cdot K 
        + \frac{16\eta \sigma_g^2}{B_g E}\left(3 + \frac{8\ln\frac{8}{\delta}}{K}\right)\cdot K\\
        &&+ 2D \sigma_g\sqrt{\frac{2 \ln \frac{8}{\delta}}{B_g KE}}\cdot K
        + 4\eta L \sigma_g
        \sqrt{\frac{2E}{3B_g K} \left(1-\frac{1}{E}\right)\left(1-\frac{3}{2E}\right)\left(1-\frac{2}{E}\right) \ln\frac{8}{\delta}} \cdot K\\
        &&+ 4\eta L \sigma_g \cdot \frac{1}{E} \sqrt{\frac{2\ln \frac{8}{\delta}}{B_g K}}\cdot K
        + \frac{2\sigma}{\left[-\ln\left(1-\frac{m}{n}\right)\right]n}\ln \frac{32}{\delta} \cdot K\\
        \sum_{k\in[K]} G(\bw_k) \1_k 
        & \leq& \left(\epsilon + \frac{2\sigma}{\left[-\ln\left(1-\frac{m}{n}\right)\right]n} \ln \frac{1}{2\kappa}\right) \sum_{k\in[K]} \1_k 
    \end{eqnarray*}
    with probability at least \(1-\delta\) with 
    step sizes \(\eta, \gamma\) such that \(\gamma = \frac{\eta}{E}\), and
    softmax hyperparameter \(\alpha\geq \frac{2 \ln m}{\epsilon'}\).
    
    \textbf{3.2: selecting step sizes \(\eta, \gamma\), tolerance \(\epsilon\) and softmax hyperparameter \(\alpha\)}
    
    By balancing the terms \(\frac{D^2}{2\eta}, 4\eta L^2 K\), we selecting step sizes as:
    \[
    \eta = \frac{D}{L \sqrt{8K}}, \quad \gamma = \frac{D}{LE \sqrt{8K}}
    \]
    By upper bounding \(1-\frac{1}{2E}\leq 1, \sqrt{(1-\frac{1}{E})( 1-\frac{3}{2E})(1-\frac{2}{E})} \leq 1-\frac{9/4}{E+2}\), 
    ans substituting the global step size \(\eta\), we have:
    \begin{eqnarray*}
        \sum_{k\in[K]} [F(\bw_k) - F(\bw^\ast)] \1_k
        &\leq& \epsilon \sum_{k\in[K]} \1_k 
        - \sum_{k\in[K]} \left[G_k(\bw_k, \bxi_k;\calI_k)-\frac{\epsilon}{2}\right][1-\1_k]
        - |G(\bw^\ast)| \sum_{k\in[K]} [1-\1_k]\\
        & &- \frac{\epsilon K}{2}
        + \frac{DL}{\sqrt{K/8}}\cdot K
        + 2\sigma_\zeta \sqrt{\frac{2 \ln \frac{24Km}{\delta}}{B_\zeta}} \cdot K\\
        & &+ \frac{D \sigma_g^2}{LBE\sqrt{K/32}}\left(3 + \frac{8\ln\frac{8}{\delta}}{K}\right)\cdot K\\
        & &+ 2D \sigma_g \sqrt{\frac{2\ln \frac{8}{\delta}}{BKE}}
        \left(1
        + \frac{E}{\sqrt{6K}}\left(1-\frac{9/4}{E+2}\right)
        + \frac{1}{\sqrt{2KE}}\right) \cdot K\\
        &&+ \frac{2\sigma}{\left[-\ln\left(1-\frac{m}{n}\right)\right]n}\ln \frac{32}{\delta} \cdot K
    \end{eqnarray*}

    Noting that for \(E=1\), we can show that the above bound still holds without \(\frac{E}{\sqrt{6K}}(1-\frac{9/4}{E+2})+\frac{1}{\sqrt{2KE}}\) in the last term 
    since terms with \(\sum_{\tau\in[E-1]}\) are 0 when \(E=1\). For \(E\geq2\), 
    we note that \(\frac{E}{\sqrt{6K}}(1-\frac{9/4}{E+2})+\frac{1}{\sqrt{2KE}} \leq \frac{E}{\sqrt{6K}}\) is valid.
    Combining these two cases, and introducing the ``effective'' gradient variance \(\bar{\sigma}_g^2 := \frac{\sigma_g^2/B_g}{L^2 E}\), we have:
    \begin{eqnarray*}
    & &\sum_{k\in[K]} [F(\bw_k) - F(\bw^\ast)] \1_k
    \leq \epsilon \sum_{k\in[K]} \1_k 
    - \sum_{k\in[K]} \left[G_k(\bw_k, \bxi_k;\calI_k)-\frac{\epsilon}{2}\right][1-\1_k]
    - |G(\bw^\ast)| \sum_{k\in[K]} [1-\1_k]\\
    &+& \frac{K}{2}
    \Big[
    -\epsilon 
    + \frac{DL}{\sqrt{K/32}}\left[
        1 + 2 \bar{\sigma}_g^2 \left(3 + \frac{8\ln\frac{8}{\delta}}{K}\right)
        + \bar{\sigma}_g \sqrt{ \ln\frac{8}{\delta}} \left( 1 + \frac{E}{\sqrt{6K}}\right)
        \right]
        + 4\sigma_\zeta \sqrt{\frac{2 \ln \frac{24Km}{\delta}}{B_\zeta}}\\
        &&\quad + \frac{4\sigma}{\left[-\ln\left(1-\frac{m}{n}\right)\right]n} \ln \frac{32}{\delta}
    \Big]
    \end{eqnarray*}

    By letting the sum of constant terms to be 0, noting 
    \(\epsilon' = \epsilon 
    - 4\sigma_\zeta \sqrt{\frac{2 \ln \frac{24Km}{\delta}}{B_\zeta}}
    - \frac{4 \sigma}{\left[-\ln\left(1-\frac{m}{n}\right)\right]n}
    \ln \frac{32}{\delta}\), 
    we obtain the following tolerance:
    \[
    \epsilon' = \frac{DL}{\sqrt{K/32}} \left[
        1 + 2 \bar{\sigma}_g^2 \left(3 + \frac{8\ln\frac{8}{\delta}}{K}\right)
        + \bar{\sigma}_g \sqrt{ \ln\frac{8}{\delta}} \left( 1 + \frac{E}{\sqrt{6K}}\right)
        \right] 
    \]
    \[
        \epsilon
        = \frac{DL}{\sqrt{K/32}} \left[ 1 + 2 \bar{\sigma}_g^2 
        \left( 3 + \frac{8\ln\frac{8}{\delta}}{K} \right) 
        + \bar{\sigma}_g \sqrt{ \ln\frac{8}{\delta}} \left( 1 + \frac{E}{\sqrt{6K}} \right) \right]
        + 4\sigma_\zeta \sqrt{\frac{2 \ln \frac{24Km}{\delta}}{B_\zeta}}
        + \frac{4\sigma}{\left[-\ln\left(1-\frac{m}{n}\right)\right]n} \ln \frac{32}{\delta}
    \]
    Then, we obtain the following inequality with probability at least \(1-\delta\):
    \[
    \sum_{k\in[K]} \left[G_k(\bw_k, \bxi_k;\calI_k)-\frac{\epsilon}{2}\right][1-\1_k]
    + |G(\bw^\ast)| \sum_{k\in[K]} [1-\1_k]
    + \sum_{k\in[K]} \left[F(\bw_k)-F(\bw_k^\ast)\right]\1_k
    \leq \epsilon \sum_{k\in[K]} \1_k
    \]
    We show \(\sum_{k\in[K]} \1_k \neq 0\), otherwise, from the above inequlaity,
    and using the definition of \(\1_k \equiv \1_{G_k(\bw_k, \bxi_k;\calI_k)\leq\frac{\epsilon}{2}}\), we have
    \( G_{k}(\bw_k, \bxi_k;\calI_k) - \frac{\epsilon}{2} > 0, [1-\1_k] = 1, \1_k=0\) for all \(k\in[K]\), 
    which leads to a contradiction as follows:
    \[
    0 < \sum_{k\in[K]} \left[G_k(\bw_k, \bxi_k;\calI_k)-\frac{\epsilon}{2}\right][1-\1_k]
    + |G(\bw^\ast)| \sum_{k\in[K]} [1-\1_k] + 0 \leq \epsilon \cdot 0 =0
    \]
    Noting that \(0 \leq [G_k(\bw_k, \bxi_k;\calI_k)-\frac{\epsilon}{2}][1-\1_k]\), 
    we have the following inequality with \(\sum_{k\in[K]} \1_k \neq 0\):
    \[
    \sum_{k\in[K]} \left[F(\bw_k)-F(\bw_k^\ast)\right]\1_k \leq
    |G(\bw^\ast)| \sum_{k\in[K]} [1-\1_k] + \sum_{k\in[K]} \left[F(\bw_k)-F(\bw_k^\ast)\right]\1_k \leq
    \epsilon \sum_{k\in[K]} \1_k
    \]
    By introducing a probability measure \(\pr_K\) on the set of iterations \([K]\) such that \(\pr_K(k) = \1_k/\sum_{k\in[K]} \1_k, \forall k\in[K]\), 
    and using the convexity of \(F, G\) (\cref{assumption:convexity} and \cref{lemma:properties_of_weighted_functions}) and defining \(\overline{\bw}_K := \E_{k\sim \pr_K} [\bw_k] = \sum_{k\in[K]} \bw_k \1_k/\sum_{k\in[K]} \1_k\), then:
    \[
    F(\overline{\bw}_K) - F(\bw^\ast) \leq
    \E_{k\sim \pr_K} \left[F(\bw_k) - F(\bw^\ast)\right] \leq  
    \epsilon 
    \]
    \[
    G(\overline{\bw}_K) \leq \E_{k\sim \pr_K} [G(\bw_k)] \leq 
    \epsilon + \frac{2\sigma}{[-\ln(1-\frac{m}{n})]n} \ln\frac{1}{2\kappa}
    \]
    when the softmax hyperparameter \(\alpha \geq \frac{2 \ln m}{\epsilon'}\) is large enough.

    \textbf{3.3: analyzing the constraint-satisfied ratio \(\kappa\)}

    Since \(|\calS| = \sum_{k\in[K]} \1_k >0\) has been shown, 
    we have the constraint-satisfied ratio \(\kappa = \frac{|\calS|}{K} \in (0, 1]\) 
    and therefore \(\kappa \geq \frac{1}{K}\) in the analysis of the worst case.
    By rearranging the terms, and noting that \(\sum_{k\in[K]} \1_k = \kappa\cdot K\) and \(\sum_{k\in[K]} [1-\1_k] = (1-\kappa)\cdot K\):
    \[
    |G(\bw^\ast)| (1-\kappa) \leq \left(F(\bw^\ast)-F(\overline{\bw}_K) + \epsilon\right) \kappa
    \]
    Therefore, we can establish the upper bound of \(\frac{1}{\kappa}\) when \(-G(\bw^\ast) = |G(\bw^\ast)| > 0\):
    \[
    \frac{1}{\kappa} \leq 1 + \frac{F(\bw^\ast)-F(\overline{\bw}_K) + \epsilon}{|G(\bw^\ast)|}
    \]
    Suppose that Slater's condition holds, i.e., there exists \(\bw^\ast \in \Theta\) and some \(\nu >0\) 
    such that \(G(\bw^\ast)\leq -\nu < 0\), then:
    \[
    \ln \frac{1}{2\kappa} \leq \ln \min\left(\frac{K}{2}, \frac{1}{2}+ \frac{F(\bw^\ast)-F(\overline{\bw}_K) + \epsilon}{2|G(\bw^\ast)|}\right)
    \leq \ln\min\left( \frac{K}{2}, \frac{1}{2} + \frac{DL + \epsilon}{2\nu}\right)
    \]
\end{proof}

\newpage
\section{General Upper Bound for Sampling Error and Threshold Selection for Constraint Criterion}\label{sup:general}
\subsection{Discussion on the Limitations of Assumptions}\label{supsub:limitations}

Consider a simple setting where functions $f_i$ (and similarly $g_i$) are formed from a base function $f(\mathbf{w})$ corrupted by noise, such that $\{e_i\}_{i\in \mathcal{I}} \stackrel{\text{i.i.d.}}{\sim} \mathcal{P}_e$:
$$
f_i(\mathbf{w}) \equiv f(\mathbf{w}) + e_i.
$$

\textbf{Bounded variance and sub-Gaussianity cannot guarantee scalability.}

Suppose $\mathcal{P}_e = \mathcal{N}(0, \sigma^2)$. Classic assumptions on heterogeneity are trivially satisfied, including bounded variance ($\mathbb{E}[|f_i - f|^2] \le \sigma^2$) and sub-Gaussianity ($\mathbb{E}[\exp(|f_i-f|^2/(4\sigma^2))] \le \sqrt{2} < 2$).

However, for the maximum objective $F := \max_{i\in\mathcal{I}} f_i$ across $\mathcal{I}:=\{1, \dots, n\}$ clients, extreme value theory dictates the expected gap grows unboundedly as $n = |\mathcal{I}|$ increases. The expectation of the maximum of $n$ standard Gaussians scales asymptotically (noting $\lim_{n\to \infty} \mathbb{E}\left[\max_{j\in\mathcal{I}} e_j\right]/(\sigma\sqrt{2 \ln n})=1$; see \citet{boucheron2013concentration}, Exercise~2.17, p.~49):
$$
\mathbb{E}[F - f_i] = \mathbb{E}\left[\max_{j\in\mathcal{I}} f_j - f\right] - \mathbb{E}[f_i - f] = \mathbb{E}\left[\max_{j\in\mathcal{I}} e_j\right] \approx \sigma\sqrt{2 \ln n}.
$$
This unbounded growth ($\sigma \sqrt{\ln n}\to \infty$) makes gap-dependent bounds vacuous for large-scale federated learning ($n \to \infty$). Because the gap's probability mass shifts continuously to the right, the limiting distribution does not exist, making it impossible to define a fixed random variable $U$ that stochastically dominates $F - f_i$ for arbitrarily large $n$.

\textbf{With upper-bounded noise, the limiting distribution exists (validating \Cref{assumption:uniform_cdf_bound}).}

To ensure scalable convergence, we situate our analysis where the noise distribution has strictly upper-bounded support. We define $M$ as the essential supremum:
$$
M := \operatorname{esssup}(e) = \inf\{m \in \mathbb{R} : \mathbb{P}(e > m) = 0\} < \infty.
$$
By standard order statistics~\citep[Lemma on p.~286]{david2003order}, because $M$ is the tightest almost-sure upper bound, $\max_{j\in\mathcal{I}} e_j=F-f$ converges in probability exactly to $M$. Specifically, for any $\varepsilon>0$, $\mathbb{P}(e \leq M-\varepsilon)<1$, giving us
$$
\mathbb{P}(|(F-f)-M|>\varepsilon)=\mathbb{P}(|M-\max_{j\in\mathcal{I}}e_j|>\varepsilon)=\mathbb{P}(\max_{j\in\mathcal{I}}e_j<M-\varepsilon)=\mathbb{P}(e \leq M-\varepsilon)^n\stackrel{n\to\infty}{\to} 0.
$$

Because $F-f\xrightarrow{\mathbb{P}}M$, and $f_i-f=e_i$ implies $f-f_i\xrightarrow{d}-e_i$, we can invoke Slutsky's theorem~\citep[Lemmas~2.7--2.8, pp.~10--11]{van1998asymptotic}. This guarantees the gap converges in distribution as $n \to \infty$:
$$
F - f_i =(F-f) + (f - f_i) \xrightarrow{d} M - e_i.
$$

This essential supremum allows us to construct a universal stochastic upper bound. By defining the nonnegative random variable $U := M - e$ (where $e \sim \mathcal{P}_e$), we observe the almost-sure pointwise inequality for any finite $n$:
$$
F - f_i = \max_{j\in\mathcal{I}} e_j - e_i \le M - e_i \stackrel{d}{=} U.
$$
Pointwise domination implies first-order stochastic dominance, meaning $U$ stochastically dominates the gap for any $n$, validating \Cref{assumption:uniform_cdf_bound}.

\textbf{Limitations of \Cref{assumption:uniform_cdf_bound} and extensions via extreme value theory.}

While \Cref{assumption:uniform_cdf_bound} requires bounded noise, this limitation can be resolved using generalized extreme value theory (EVT). The Fisher--Tippett--Gnedenko theorem dictates that normalizing sequences $a_n > 0$ and $b_n$ must exist such that the standardized maximum $(\max_{j\in\mathcal{I}} e_j - b_n) / a_n$ converges to a Gumbel, Fr\'echet, or Weibull distribution~\citep[Sec.~10.5, p.~296]{david2003order}. Dynamically standardizing the objective gap with these asymptotic sequences could theoretically encompass all possible noise scenarios in a scale-invariant framework. We leave this integration to future work.

\subsection{General Upper Bound for Sampling Error under Relaxed Assumption}\label{supsub:general_bound}

We consider a relaxed version of \Cref{assumption:uniform_cdf_bound} that does not require $U \sim \mathrm{Unif}[0,\sigma]$.

The relative gaps $F-f_i=\max_{j\in\mathcal{I}} f_j - f_i$ and $G-g_i=\max_{j\in\mathcal{I}} g_j - g_i$ in \Cref{assumption:uniform_cdf_bound} are inherently nonnegative. Consequently, any stochastic upper bound should also be restricted to the nonnegative domain. Under \Cref{assumption:uniform_cdf_bound}, a nonnegative random variable $U$ is stochastically superior to the relative gaps for $F$ and $G$. While we initially assumed that $U$ follows a uniform distribution, our analysis holds for any nonnegative random variable $U$. This allows us to bound the tail probabilities of the empirical gap $F(\bw_k) - F(\bw_k; \calI_k)$ (and similarly for $G$) at each iteration $k$, and to further establish a high-probability bound for the sampling error in federated learning. Let $\calI_k \subseteq \calI=\{1, \dots, n\}$ be the sampled index subset with size $|\calI_k|=m \leq n$ and participation ratio $r := m/n$. Following an analysis similar to \Cref{lemma:conditional_exponential_tail_bound,lemma:conditional_exponential_tail_bound_with_uniform_cdf}, we obtain the following tail bound for $t\geq0$:
$$
\mathbb{P}(F(\bw_k)-F(\bw_k;\calI_k)\geq t) \leq \exp\left(-n|\ln(1-r)|\mathcal{P}_r(U<t)\right),
$$
where
$$
\mathcal{P}_r(U<t)=\begin{cases}
\mathbb{P}(U< t) & \text{if } \mathbb{P}(U< t) \leq 1-r,\\
+\infty & \text{otherwise}.
\end{cases}
$$

To standardize our analysis, we normalize $U$ and the threshold $t$ by a scale $\sigma > 0$, defining $Z := U/\sigma$ and $z := t/\sigma \geq 0$, which yields $\mathbb{P}(U<t) = \mathbb{P}(Z<z)$. We then define the nondecreasing quantile function for a probability threshold $\rho \in [0, 1)$ as
$$
Q_Z(\rho) := \inf\{z : \mathbb{P}(Z < z) > \rho\}.
$$

Let $\calS \subseteq [K]$ denote the constraint-satisfying set. By applying a maximal inequality and taking a union bound over the $K$ iterations to enforce a failure probability of $\delta/K$, we can upper bound the average sampling error over $\calS$. With probability at least $1-\delta$, we have
$$
\frac{1}{|\calS|}\sum_{k\in \calS} \left( F(\bw_k)-F(\bw_k;\calI_k) \right) \leq \sigma \cdot Q_Z\left(\min\left\{1-r, \frac{\ln\frac{K}{\delta}}{n |\ln(1-r)|}\right\}\right).
$$

To illustrate the dependency of the sampling error, consider the simplest case where $U=\sigma Z$ with $Z \sim \mathrm{Unif}[0, 1]$. Since $Q_Z(\rho)=\rho$, the bound on the sampling error evaluates exactly to $\frac{\sigma}{n|\ln(1-r)|}\ln\frac{K}{\delta}$. As shown in \Cref{theorem:convergence_gd_softmax_fedpartial_lipschitz}, this can be further improved to $\frac{\sigma}{n|\ln(1-r)|}\ln\frac{1}{\delta}$ using advanced martingale techniques.

Crucially, because we have established a general upper bound for the sampling error using an arbitrary nonnegative random variable $U=\sigma Z$, our analysis naturally generalizes to a broad class of tail behaviors beyond the uniform distribution. We provide the following concrete examples
(see quantile-function expressions in~\citet{johnson1994continuous} and special functions in~\citet{olver2010nist}):

\textbf{(1) Half-Gaussian.} For $Z = |X|$ where $X \sim \mathcal{N}(0, 1)$, the quantile function is defined via the inverse error function:
$$
Q_Z(\rho) = \sqrt{2}\, \operatorname{erf}^{-1}(\rho) \leq \frac{\sqrt{2\pi}}{2}\times\frac{\rho}{1-\rho}.
$$

\textbf{(2) Exponential.} For $Z \sim \mathrm{Exp}(1)$, the quantile function is
$$
Q_Z(\rho) = |\ln(1-\rho)| \leq \frac{\rho}{1-\rho}.
$$

\textbf{(3) Half-Cauchy.} For $Z = |X|$ where $X \sim \mathrm{Cauchy}(0, 1)$, the quantile function is derived from the inverse arctangent:
$$
Q_Z(\rho) = \tan\left(\frac{\pi \rho}{2}\right) \leq \frac{\pi}{2} \times \frac{\rho}{1-\rho}.
$$

\textbf{(4) Chi-square distribution.} For $Z\sim \chi^2_k$ with $k\in\mathbb{Z}_+$ degrees of freedom, the quantile function is defined via the inverse regularized lower incomplete gamma function:
$$
Q_Z(\rho) =  2P^{-1}\left(\frac{k}{2}, \rho\right) \leq \frac{8k}{\mathrm{e}}\times \left(\frac{\rho}{1-\rho}\right)^{2/k}.
$$

\textbf{(5) Beta distribution.} For $Z \sim \mathrm{Beta}(a, b)$ with parameters $a, b>0$, the quantile function is expressed using the inverse regularized incomplete beta function:
$$
Q_Z(\rho) = I^{-1}_\rho(a, b) \leq \min(1, C_{a, b}\rho^{1/a}),
$$
where the coefficient $C_{a, b}$ is given by the beta function $B(a,b)$:
$$
C_{a,b} = \begin{cases}
1 & \text{if }b>1,\\
\big(a B(a, b)\big)^{1/a} & \text{otherwise}.
\end{cases}
$$

\textbf{Derivation of the tail bounds under relaxed assumption.} 
Under the relaxed version of \Cref{assumption:uniform_cdf_bound} 
such that \(D_{\bff(\bw)} := F(\bw) - f_i(\bw) \preceq_{st} U\), 
\(D_{\bg(\bw)} := G(\bw) - g_i(\bw) \preceq_{st} U\), 
for some nonnegative random variable \(U\) and any \(\bw \in \Theta\), 
where \(i\) is chosen uniformly at random from \(\calI \equiv \{1, \dots, n\}\),
then for \(\bw_k\) and the sampled index subset \(\calI_k\) with size \(|\calI_k|=m \leq n\) at iteration \(k\) and participation ratio \(r := m/n\), we have the following tail bounds for \(t\geq0\):
\[
\mathbb{P}(F(\bw)-F(\bw;\calI_k)\geq t),\,\mathbb{P}(G(\bw)-G(\bw;\calI_k)\geq t) \leq \exp\left(-n|\ln(1-r)|\mathcal{P}_r(U<t)\right).
\]
where
\[
\mathcal{P}_r(U<t)=\begin{cases}
\mathbb{P}(U< t) & \text{if } \mathbb{P}(U< t) \leq 1-r,\\
+\infty & \text{otherwise}.
\end{cases}
\]
The tail bounds under relaxed assumption are derived 
by applying the following lemma and substituting \(\bx \gets \bff(\bw_k), \bg(\bw_k)\), \(\calI' \gets \calI_k, \max_{i\in\calI} [\bx]_i \gets F(\bw_k), G(\bw_k), \max_{i\in\calI'} [\bx]_i \gets F(\bw_k;\calI_k), G(\bw_k;\calI_k)\) (similar to the proof for \Cref{lemma:conditional_exponential_tail_bound_with_uniform_cdf}).
\begin{lemma}[Tail bound under relaxed assumption, similar to \Cref{lemma:conditional_exponential_tail_bound}]
\label{lemma:tail_bound_relaxed_assumption}
Let \(\bx= (x_1, \ldots, x_n)\in \R^n\) and \(X := x_i \equiv [\bx]_i\) \(i\) is chosen uniformly at random from \(\calI \equiv \{1, \dots, n\}\), 
and \(D := \max_{i'\in\calI} [\bx]_{i'} - [\bx]_i \preceq_{st} U\) for some nonnegative random variable \(U\) and any fixed value of \(\bx\). Namely, we have the following tail bound for \(t\geq0\):
\[
\frac{1}{n} \sum_{i\in \calI} \1\{\max_{i'\in\calI} [\bx]_{i'} - [\bx]_i \geq t\} \equiv \Pr(D \geq t \mid \bx) \leq \Pr(U \geq t).
\]
A subset \(\calI'\) with fixed cardinality \(|\calI'| = m \in \mathbb{Z}_+\) is selected from \(\calI\) uniformly at random, which is independent of \(\bx\), namely \(\calI' \sim \text{Unif}(\calC_{m}(\calI))\) with \(\calC_{m}(\calI) := \{A \subseteq \calI: |A| = m\}\), and 
\[
\Pr(\calI' = A)= \Pr(\calI' = A\mid \bx) = \frac{1}{|\calC_{m}(\calI)|} = \frac{1}{\binom{n}{m}}, \quad \forall A \in \calC_{m}(\calI)
\]
Then the difference between the maximum \(\max_{i\in \calI} [\bx]_i\) over the entire set \(\calI\), and the maximum \(\max_{i\in \calI'} [\bx]_i\) over the subset \(\calI'\), is bounded as follows
when \(r:=\frac{m}{n} \neq 1\):
\[
    \Pr\left(\max_{i\in \calI} [\bx]_i - \max_{i\in \calI'} [\bx]_i\geq t \mid \bx \right)
    \leq \exp\left(-n|\ln(1-r)|\mathcal{P}_r(U<t)\right).
\]
where
\[
\mathcal{P}_r(U<t)=\begin{cases}
\mathbb{P}(U< t) & \text{if } \mathbb{P}(U< t) \leq 1-r,\\
+\infty & \text{otherwise}.
\end{cases}
\]
Consequently, we also have \(\mathbb{P}(D \geq t) = \mathbb{E}[\mathbb{P}(D \geq t \mid \bx)] \leq \exp\left(-n|\ln(1-r)|\mathcal{P}_r(U<t)\right)\).
\end{lemma}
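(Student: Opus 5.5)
We follow the proof of \Cref{lemma:conditional_exponential_tail_bound} almost verbatim. The only change is that the linear bound $(1-t/\sigma)_+$ on the fraction of ``far'' indices is replaced by the general survival function $\Pr(U\geq t)$.

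Fix $\bx$ and $t\ge 0$, and set
\[
\calI_t(\bx):=\{i\in\calI:\ \max_{i'\in\calI}[\bx]_{i'}-[\bx]_i\ge t\}.
\]
Since $\max_{i\in\calI'}[\bx]_i$ is attained at some index of $\calI'$, the event
\[
\Big\{\max_{i\in\calI}[\bx]_i-\max_{i\in\calI'}[\bx]_i\ge t\Big\}
\]
is exactly the event $\{\calI'\subseteq\calI_t(\bx)\}$. Because $\calI'$ is uniform on $\calC_m(\calI)$ and independent of $\bx$, counting subsets gives
\[
\Pr\Big(\max_{i\in\calI}[\bx]_i-\max_{i\in\calI'}[\bx]_i\ge t \;\Big|\; \bx\Big)=\frac{\binom{n'}{m}}{\binom{n}{m}}\1\{m\le n'\},\qquad n':=|\calI_t(\bx)|.
\]
The stochastic dominance hypothesis says precisely that $n'/n=\Pr(D\ge t\mid\bx)\le\Pr(U\ge t)$. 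Equivalently, $1-n'/n\ge \Pr(U<t)$.

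There are two cases.

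If $\Pr(U<t)>1-r$, then $n'/n\le\Pr(U\ge t)<r=m/n$, so $n'<m$. The indicator vanishes and the probability is $0$. This matches the convention $\mathcal{P}_r(U<t)=+\infty$, which makes the right-hand side $\exp(-\infty)=0$.

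If $\Pr(U<t)\le 1-r$, we only need the bound when $n'\ge m$ (otherwise the left side is again $0$). In that case \Cref{lemma:upperbound_binom} applies and gives
\[
\frac{\binom{n'}{m}}{\binom{n}{m}}\le(1-r)^{n(1-n'/n)}=\exp\big(-n|\ln(1-r)|(1-n'/n)\big).
\]
Since $|\ln(1-r)|>0$ for $r\neq1$, the right-hand side is decreasing in $1-n'/n$. Substituting $1-n'/n\ge\Pr(U<t)$ therefore yields $\exp(-n|\ln(1-r)|\Pr(U<t))$, which is the claimed bound.

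For the unconditional statement, take expectation over $\bx$ (tower property). The bound is uniform in $\bx$, so it passes through unchanged.

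The step needing the most care is the case split. We must check that the threshold condition $\Pr(U<t)\le 1-r$ lines up exactly with the indicator $\1\{m\le n'\}$, including the boundary case. We must also verify the monotonicity direction: the exponent decreases as $n'$ shrinks, so replacing $n'$ by its upper bound $n\Pr(U\ge t)$ is legitimate. One further subtlety is that \Cref{lemma:upperbound_binom} requires $n\ge n'\ge m$, which is why the case $n'<m$ is handled separately through the indicator.
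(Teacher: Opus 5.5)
Your proof is correct and follows the paper's argument essentially step for step. Both use the set $\calI_t(\bx)$ of far indices, identify the event with $\{\calI'\subseteq\calI_t(\bx)\}$, compute the exact ratio $\binom{n'}{m}/\binom{n}{m}$, apply \Cref{lemma:upperbound_binom}, and substitute $n'/n\le\Pr(U\ge t)$ monotonically. Your explicit case split is a more careful version of the paper's one-line step $\1\{r\le n'/n\}\le\1\{r\le\Pr(U\ge t)\}$. You are also right to read the final unconditional claim as being about the subset-maximum gap averaged over $\bx$, since for the single-index gap $D$ as literally written it can fail.
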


\begin{proof}[Proof of \Cref{lemma:tail_bound_relaxed_assumption}]
    Let's define the following set \(\calI_t\) for some \(t\geq 0\), then the condition can be rewritten as:
    \[
    \calI_t(\bx) := \{i\in \calI' \mid \max_{i'\in \calI} [\bx]_{i'} - [\bx]_i \geq t\}, \qquad 
    \frac{|\calI_t(\bx)|}{n} \equiv \frac{1}{n} \sum_{i\in \calI} \1\{\max_{i'\in\calI} [\bx]_{i'} - [\bx]_i \geq t\} \leq \Pr(U\geq t)
    \]
    Noting that the event \(\max_{i\in \calI} [\bx]_i - \max_{i\in \calI} [\bx]_i\geq t\) is equivalent to 
    \(\max_{i'\in \calI} [\bx]_{i'} -[\bx]_i \geq t, \forall i\in \calI\), namely \(\calI \subseteq \calI_{t}(\bx)\).
    Using \(\Pr(\calI = A\mid \bx) = \frac{1}{|\calC_{m}(\calI)|}\), 
    \(\Pr(A\subseteq \calI_t(\bx)\mid \calI=A;\bx) = \1\{A\subseteq \calI_t(\bx)\}\), 
    letting \(\calC_{m}(\calI_t(\bx)) := \{A \subseteq \calI_t(\bx) \mid |A|=m \}\).
    \begin{eqnarray*}
        & &\Pr\left(\max_{i\in \calI} [\bx]_i - \max_{i\in \calI} [\bx]_i\geq t \mid \bx \right)
        = \Pr\left(\calI\subseteq \calI_t(\bx) \mid \bx\right)
        = \sum_{A\in \calC_m(\calI)}\Pr(A\subseteq \calI_t(\bx)\mid \calI=A;\bx) \Pr(\calI=A\mid\bx)\\
        &=& \frac{\sum_{A\in \calC_m(\calI)} \1\{A\subseteq \calI_t(\bx)\}}{|\calC_{m}(\calI)|} 
        = \frac{|\{A \subseteq \calI_t(\bx) \mid |A|=m \}|}{|\calC_{m}(\calI)|}
        = \frac{|\calC_{m}(\calI_t(\bx))|}{|\calC_{m}(\calI)|}
        = \frac{\binom{|\calI_t(\bx)|}{m}}{\binom{n}{m}}\1_{m\leq |\calI_t(\bx)|}
    \end{eqnarray*}
    Noting \(|\calI'|=m\), applying \cref{lemma:upperbound_binom} to upperbound the ratio of two binomial coefficients 
    by substituting \(n'\gets |\calI_t(\bx)|\),
    and using the condition such that \(\frac{|\calI_t(\bx)|}{n}=\frac{|\calI_t(\bx)|}{|\calI|} \leq \Pr(U\geq t)\) 
    and using \(\Pr(U<t) = 1- \Pr(U\geq t), r \equiv m/n = \frac{|\calI'|}{|\calI|}\)
    \[
    \begin{aligned}
        \Pr\left(\max_{i\in \calI} [\bx]_i - \max_{i\in \calI'} [\bx]_i\geq t \mid \bx \right)
        &= \frac{\binom{|\calI_t(\bx)|}{|\calI'|}}{\binom{|\calI|}{|\calI'|}}\1_{|\calI'|\leq |\calI_t(\bx)|}
        \leq \left(1- \frac{|\calI'|}{|\calI|} \right)^{n\left[1-\frac{|\calI_t(\bx)|}{|\calI|}\right]} \1\left\{\frac{|\calI'|}{|\calI|}\leq  \frac{|\calI_t(\bx)|}{|\calI|}\right\}\\
        &\leq (1-r)^{n [1-\Pr(U\geq t)]}\1\left\{r\leq \Pr(U\geq t)\right\}
        =(1-r)^{n \Pr(U < t)} \1\left\{\Pr(U < t)\leq 1-r\right\}\\
    \end{aligned}
    \]
    Therefore, by using \((1-r)^n = \exp(-n|\ln(1-r)|)\) and introducing \(\mathcal{P}_r(U<t)\) as defined in the lemma, we have
    \[
        \Pr\left(\max_{i\in \calI} [\bx]_i - \max_{i\in \calI'} [\bx]_i\geq t \mid \bx \right)
        \leq \exp\left(-n|\ln(1-r)|\mathcal{P}_r(U<t)\right)
    \]
\end{proof}

\subsection{Selection of Threshold}\label{supsub:selection_lambda}

We define $\lambda$ as the user-prescribed threshold for the constraint criterion in the switching gradient method, and $\epsilon$ as the feasibility tolerance for constraint violations. Let $\Delta$ represent the theoretical intrinsic sum of errors (optimization, estimation, and sampling), as detailed in the error decomposition in \Cref{box:err_decompose}. Denoting the value of the constraint criterion at iteration $k$ as $\tilde{G}_k$, we define the set of constraint-satisfying iterations as $\mathcal{S} := \{k \in [K] \mid \tilde{G}_k \leq \lambda\}$, with the corresponding satisfaction ratio $\kappa := |\mathcal{S}|/K$. Based on our theoretical analysis in Appendix~\ref{sup:fedfull} 
but replacing the specific selection of $\lambda \gets \frac{\epsilon}{2}$ with more general selection of $\lambda$, we guarantee that $\mathcal{S} \neq \emptyset$ (thus $\kappa \in (0, 1]$) provided the threshold is set such that $\lambda \geq \frac{\Delta}{2}$. Consequently, for the averaged solution $\overline{\mathbf{w}}_K := \frac{1}{|\mathcal{S}|}\sum_{k\in\mathcal{S}}\mathbf{w}_k$, we have
$$
F(\overline{\mathbf{w}}_K) - F(\mathbf{w}^*) \leq \left(\frac{\Delta}{2}+\lambda\right) - \frac{1}{\kappa}\left(\lambda - \frac{\Delta}{2}\right),\qquad
G(\overline{\mathbf{w}}_K) \leq \frac{\Delta}{2}+\lambda.
$$
In particular, when the threshold $\lambda$ is set exactly to $\frac{\Delta}{2}$ (while ensuring $\mathcal{S} \neq \emptyset$ to output a valid average $\overline{\mathbf{w}}_K$), we recover the strict theoretical bounds $F(\overline{\mathbf{w}}_K) - F(\mathbf{w}^*) \leq \epsilon$ and $G(\overline{\mathbf{w}}_K) \leq \epsilon$ with $\epsilon=\Delta$, as shown in \Cref{theorem:convergence_gd_softmax_lipschitz}. As seen in \Cref{alg:switching_gd_softmax_fedpartial}, this threshold is implicitly selected in our theoretical analysis as $\lambda \gets \frac{\epsilon}{2}$, which aligns with the analysis under $\lambda = \frac{\Delta}{2}$ and feasibility tolerance $\epsilon = \Delta$.

In practice, it is difficult to determine the exact numerical value of the theoretical $\Delta$ because intrinsic constants within the error terms (such as $\bar{\sigma}_g$, $\sigma_\zeta$, etc.) are intractable to estimate exactly. Conversely, the threshold $\lambda$ and tolerance $\epsilon$ are user-prescribed hyperparameters.

To bridge this gap, we can select a threshold $\lambda$ that is sufficiently large relative to the underlying theoretical error scale---such as $\lambda \geq 10 \times \frac{\Delta}{2}$---to ensure a valid average $\overline{\mathbf{w}}_K$ and a constraint-satisfying ratio $\kappa \geq (\lambda -\frac{\Delta}{2})/(\lambda + \frac{\Delta}{2}) \approx 1$ when $F(\overline{\mathbf{w}}_K) \geq F(\mathbf{w}^*)$. Then, the inequality relaxes to
$$
G(\overline{\mathbf{w}}_K) \leq \frac{\Delta}{2}+\lambda \leq \left(\frac{1}{10}+1\right)\lambda = 1.1 \lambda.
$$
Therefore, to reliably enforce the empirical feasibility tolerance $G(\overline{\mathbf{w}}_K) \leq \epsilon$ (for example, $\epsilon=0.1$ in our Neyman--Pearson classification experiment), the threshold $\lambda$ should be set to the target value divided by $1.1$ (e.g., $\lambda = \epsilon/1.1 = 0.1/1.1$). This guarantees constraint satisfaction provided that $\lambda$ remains sufficiently large relative to the underlying error scale $\Delta$; when needed, it is tractable to reduce $\Delta$ by running more iterations, increasing the sample batch size, and so on.

Under this generalized setting such that $\lambda \gets \frac{\epsilon}{1+A^{-1}}$ for any constant $A \geq 1$, the theoretical guarantees $F(\overline{\mathbf{w}}_K)-F(\mathbf{w}^\ast) \leq \epsilon$ and $G(\overline{\mathbf{w}}_K)\leq \epsilon$ strictly hold provided that $\lambda \geq A \frac{\Delta}{2}$ (or equivalently, $\epsilon \geq (A+1)\frac{\Delta}{2}$), the theoretical guarantees $F(\overline{\mathbf{w}}_K)-F(\mathbf{w}^\ast) \leq \epsilon$ and $G(\overline{\mathbf{w}}_K)\leq \epsilon$ strictly hold provided that $\lambda \geq A \frac{\Delta}{2}$ (or equivalently, $\epsilon \geq (A+1)\frac{\Delta}{2}$).

\newpage
\section{Trade-off between Adaptivity and Stability}\label{sup:tradeoff}
\textbf{Tradeoff of $\alpha$: Is a larger $\alpha$ always better?}

\begin{itemize}
    \item \textbf{Large $\alpha$ (high adaptivity, low stability).}
    A large $\alpha$ yields high sensitivity. The softmax approaches a hard $\operatorname{argmax}$, allowing the algorithm to rapidly adapt and strictly prioritize the worst-performing objective. However, this high sensitivity creates a steep loss landscape: as it approaches the hard maximum, the objective becomes non-differentiable at the intersections where two or more losses are equal. Minor parameter updates can cause the weights to flip abruptly, leading to severe instability and permanent oscillations (limit cycles).
    \item \textbf{Small $\alpha$ (high stability, low adaptivity).}
    A small $\alpha$ yields low sensitivity. The softmax weights become nearly uniform, smoothing the loss landscape and guaranteeing stable convergence. However, the tradeoff is a loss of adaptivity: the algorithm becomes unresponsive, losing the ability to actively isolate and penalize the worst-case objective, potentially leading to suboptimal solutions.
\end{itemize}

\textbf{Example for the Selection of $\alpha$}

To build intuition, consider a simplified setting with two objectives, $f_1(\mathbf{w}) = \mathbf{w}$ and $f_2(\mathbf{w}) = -\mathbf{w}$, and two dummy constraints, $g_1(\mathbf{w})=g_2(\mathbf{w})\equiv -1<0$, over the domain $\Theta = [-1,1]$. Assuming no stochasticity ($\sigma_g = \sigma_\zeta=0$) and full participation ($\mathcal{I}_k=\mathcal{I}=\{1, 2\}$, with participation ratio $r:=m/n=1$), all required theoretical assumptions are satisfied (convexity, $L$-continuity with $L=1$, bounded parameter space with diameter $D=2$, and a bounded relative gap).

The global objective is $F(\mathbf{w})=\max\{f_1(\mathbf{w}), f_2(\mathbf{w})\}=|\mathbf{w}|$, and the global constraint is $G(\mathbf{w})\equiv -1 <0$. Because constraints are strictly satisfied everywhere, the algorithm minimizes the parameters using only the objective gradients:
$$
\min_{\mathbf{w}\in\Theta} F(\mathbf{w})=\min_{\mathbf{w}\in[-1, 1]} |\mathbf{w}| \quad \text{s.t.} \quad G(\mathbf{w})\equiv -1<0.
$$
Trivially, the optimal solution is $\mathbf{w}^*=0$.

Assuming local updates $E=1$, full participation $m=n=2$, and a deterministic setup, the algorithm simplifies to
$$
\mathbf{w}_{k+1} = \mathbf{w}_k - \eta (\nabla f_1(\mathbf{w}_k), \nabla f_2(\mathbf{w}_k)) \cdot \mathsf{softmax}(\alpha (f_1(\mathbf{w}_k),f_2(\mathbf{w}_k))^\top).
$$
Noting that $(\nabla f_1(\mathbf{w}_k), \nabla f_2(\mathbf{w}_k)) = (+1, -1)$, we can explicitly evaluate the dot product:
$$
(+1, -1)\cdot\mathsf{softmax}(\alpha (f_1(\mathbf{w}_k),f_2(\mathbf{w}_k))^\top) = \frac{\exp(\alpha \mathbf{w}_k)-\exp(-\alpha \mathbf{w}_k)}{\exp(\alpha \mathbf{w}_k)+\exp(-\alpha \mathbf{w}_k)}
= \tanh(\alpha \mathbf{w}_k).
$$
Thus, the update rule reduces precisely to
$$
\mathbf{w}_{k+1} = \mathbf{w}_k - \eta  \tanh(\alpha \mathbf{w}_k).
$$

To simplify the analysis, we introduce the scaled parameter $\alpha_\eta := \alpha \eta$ and the normalized state $x_k := \mathbf{w}_k / \eta$, yielding the recurrence relation
$$
x_{k+1} = x_k - \tanh(\alpha_\eta x_k).
$$

The behavior of this sequence depends entirely on $\alpha_\eta$, exhibiting three distinct regimes:
\begin{enumerate}
    \item \textbf{Smooth convergence ($\alpha_\eta \leq 1$).}
    The sequence $x_k$ converges strictly monotonically to the optimal solution $x^* = 0$.
    \item \textbf{Damped oscillation ($\alpha_\eta \in (1, 2]$).}
    The sequence converges to $x^* = 0$, but the trajectory alternates signs, exhibiting damped oscillations near the origin.
    \item \textbf{Divergent limit cycles ($\alpha_\eta > 2$).}
    The sequence fails to converge, becoming permanently trapped in an oscillating period-2 limit cycle.
\end{enumerate}

To ensure strict stability without oscillations, we require $\alpha_\eta = \alpha \eta \leq 1$. Conversely, to maintain sufficient adaptability, we must satisfy the lower bound established in \Cref{theorem:convergence_gd_softmax_lipschitz}. Substituting the constants $D=2$ and $L=1$ yields the adaptability requirement $\alpha \geq \frac{\ln 2}{2 \eta} \approx \frac{0.35}{\eta}$.

Balancing this fundamental trade-off identifies the optimal operational window for $\alpha_\eta \equiv \alpha \eta$:
$$
0.35 \leq \alpha_\eta \leq 1.
$$
We provide rigorous proofs for these three regimes below.

\begin{figure}[!t]
    \centering
    \includegraphics[width=\linewidth]{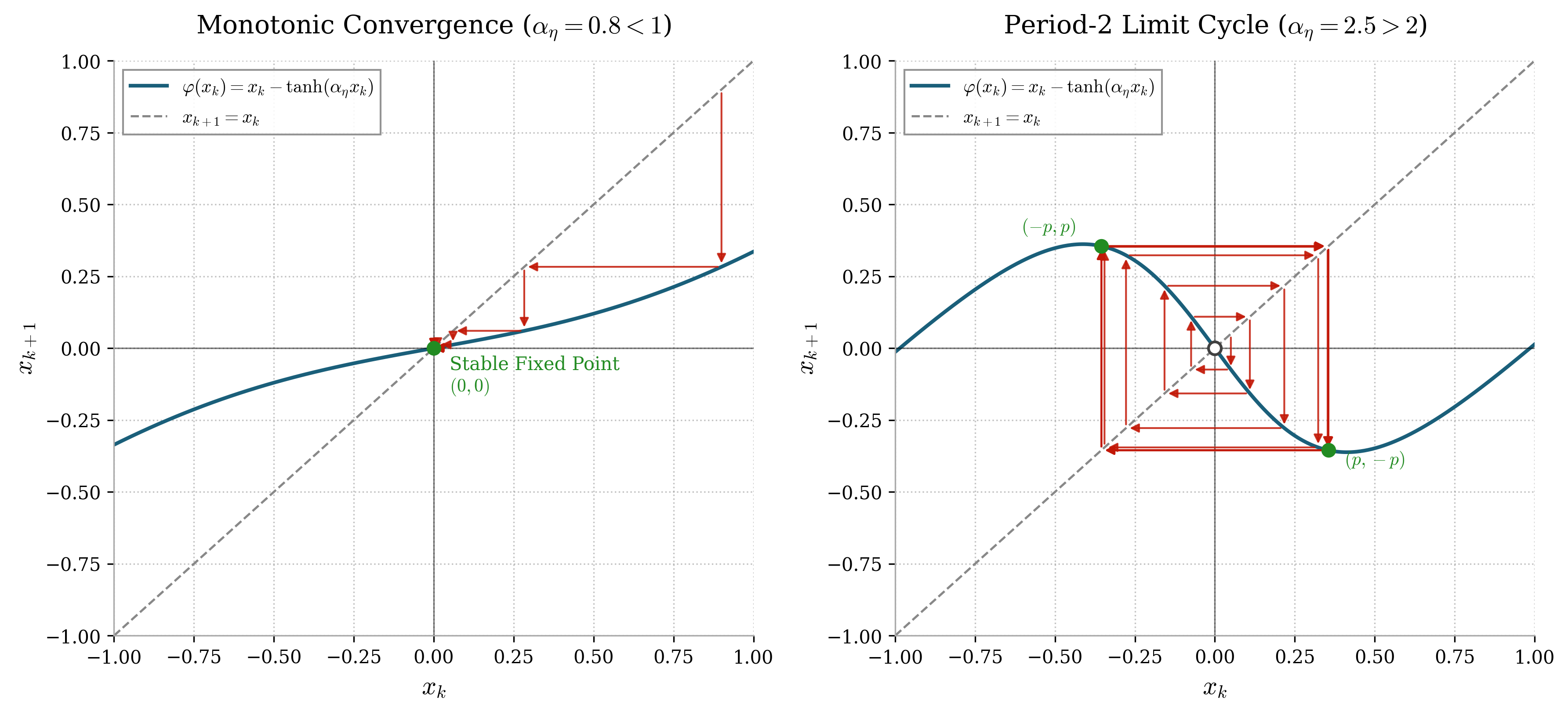}
    \caption{Cobweb plot for trajectories across different $\alpha_\eta$ values.}
    \label{fig:alpha_tradeoff}
\end{figure}

\textbf{(i) Smooth convergence ($\alpha_\eta \leq 1$).}

If $\alpha_\eta \leq 1$, then $x_{k+1}$ and $x_k$ share the same sign ($x_{k+1} x_k \geq 0$). Assuming $x_0 > 0$ without loss of generality, the sequence is strictly monotonically decreasing and bounded below by zero, satisfying $0 \leq x_{k+1} < x_k$. By the Monotone Convergence Theorem~\citep[Theorem~3.14]{rudin1976}, the sequence must converge to a finite limit, which evaluates uniquely to $0$, and thus $\mathbf{w}_k = \eta x_k \to 0$. For sufficiently small $x_k$ and $\alpha_\eta < 1$, the first-order Taylor approximation yields $\tanh(\alpha_\eta x_k) \approx \alpha_\eta x_k$, giving
$$
x_{k+1} \approx (1-\alpha_\eta) x_k.
$$
Thus, the sequence converges smoothly to $0$.

\textbf{(ii) Damped oscillation ($\alpha_\eta \in (1, 2]$).}

If $\alpha_\eta \in (1, 2]$, the sequence is no longer strictly monotonic in value, but its magnitude strictly contracts: $0 \leq |x_{k+1}| < |x_k|$ when $x_0\neq 0$. Applying the Monotone Convergence Theorem to the absolute values guarantees that $|x_k| \to 0$, and thus $\mathbf{w}_k = \eta x_k \to 0$. For sufficiently small $x_k$, the linear approximation applies, yielding
$$
x_{k+1} \approx (1 - \alpha_\eta) x_k = -(\alpha_\eta - 1) x_k.
$$
Because the multiplier $-(\alpha_\eta - 1)$ lies in the interval $[-1, 0)$, the sequence exhibits a strictly damped, alternating decay across the origin toward $0$.

\textbf{(iii) Limit cycles ($\alpha_\eta > 2$).}

If $\alpha_\eta > 2$, the sequence $\mathbf{w}_k = \eta x_k$ fails to converge to the optimal solution $\mathbf{w}^* = 0$. The sequence overshoots the origin and the magnitudes diverge outward from zero until they stabilize. The iterations become permanently trapped in a period-2 limit cycle, oscillating such that $\mathbf{w}_{2k} \to +\eta p$ and $\mathbf{w}_{2k+1} \to -\eta p$, where the fixed amplitude $p$ lies in the interval
$$
p \in \left( \frac{1}{2}\sqrt{1-2/\alpha_\eta}, \sqrt{3(\alpha_\eta-2)/8} \right).
$$

Let $\varphi(x) := x - \tanh(\alpha_\eta x)$. First, the local derivative at the origin is $|\varphi'(0)|=|1 - \alpha_\eta| > 1$. Therefore, the origin is a repelling fixed point~\citep[Proposition and Remarks on pages 33-35]{devaney2022introduction}, and the sequence $x_k$ diverges away from $0$.

Next, we establish the existence of a symmetric period-2 cycle $\{p, -p\}$, which requires a positive point $p > 0$ such that $\varphi(p) = -p$. Let $h(x) := -\varphi(x) - x = \tanh(\alpha_\eta x) - 2x$. A straightforward analysis of $h$---noting that $h(0) = 0$, $h(\infty)=-\infty$, its initial slope is positive ($h'(0) = \alpha_\eta - 2 > 0$), and it is strictly concave for $x > 0$ ($h''(x) < 0$)---shows there exists one unique root $p > 0$ such that $h(p)=0$. Thus, exactly one symmetric period-2 orbit exists.

Finally, we determine its stability by evaluating whether the multiplier satisfies $|\varphi'(p)\varphi'(-p)| < 1$~\citep[Example~10.3.3]{strogatz2024nonlinear}. Since $\varphi'(x)$ is an even function, this simplifies to $|\varphi'(p)|^2$. Because $h(0) = h(p) = 0$ and $h$ is strictly concave, its derivative at the rightmost root must be strictly negative, $h'(p) = \alpha_\eta \operatorname{sech}^2(\alpha_\eta p) - 2< 0$. Expanding this yields the exact bound
$$
|\varphi'(p)|^2 = \left|1 - \alpha_\eta \operatorname{sech}^2(\alpha_\eta p)\right|^2 < 1.
$$
Consequently, the period-2 limit cycle $\{p, -p\}$ is strictly attracting. In addition, we can bound $p$ by solving $h(p) = \tanh(\alpha_\eta p) -2p= 0$ and using $2p/(1-4p^2)>\tanh^{-1}(2p)>2p+8p^3/8$:
$$
\frac{1}{2} \sqrt{1 - \frac{2}{\alpha_\eta}} < p < \sqrt{\frac{3}{8} (\alpha_\eta - 2)}.
$$
\newpage
\section{Experimental Details}\label{sup:exp}
\subsection{Neyman Pearson Classification}
We consider the constrained minimax optimization problem fomulated in \eqref{eq:opt_constrained}, where the objective is to minimize the empirical loss on the majority class while ensuring the minority class loss remains below a prescribed tolerance. For each client $i$, the local objective and constraint are defined as
$$f_i(\mathbf{w}):=\frac{1}{m_{i,0}}\sum_{x\in\mathcal{D}_i^{(0)}}\phi(\mathbf{w};(x,0)), \qquad g_i(\mathbf{w}):=\frac{1}{m_{i,1}}\sum_{x\in\mathcal{D}_i^{(1)}}\phi(\mathbf{w};(x,1)),$$
where $\mathcal{D}_i^{(0)}$ and $\mathcal{D}_i^{(1)}$ denote the local datasets for class-0 and class-1, respectively, and $m_{i,0}$ and $m_{i,1}$ are their corresponding cardinalities. The function $\phi$ represents the binary logistic loss:
$$\phi(\mathbf{w};(x,y)) = -y \, \mathbf{w}^\top x + \log\!\left(1 + e^{\mathbf{w}^\top x}\right), \qquad y \in \{0,1\}.$$
We will distinguish the Neyman-Pearson (NP) classification as our theory-aligned convex application, leaving Adult fair-classification as a nonconvex stress test. In NP, the linear logistic loss $\phi(\mathbf{w}; (x, y)) = -y\mathbf{w}^{\top}x + \log(1+\exp(\mathbf{w}^{\top}x))$ is inherently convex in $\mathbf{w}$. Since client objectives $f_i(\mathbf{w})$ and constraints $g_i(\mathbf{w})$ are empirical averages of this convex loss, the application strictly adheres to the convex structure.

This formulation captures the NP paradigm: $f_i(\textbf{w})$ drives performance on the majority class, while the constraint $g_i(\mathbf{w}) \leq \epsilon$ ensures that the minority class loss does not exceed the predefined tolerance.

We evaluate our approach using the Breast Cancer dataset~\citep{breast_cancer_wisconsin_dataset}, which contains 569 samples with 30 features. We allocate 80\% of the data for training and reserve the remaining 20\% for testing. The training data is distributed in an independent and identically distributed (IID) manner across $n=20$ clients, ensuring each client receives an equal number of samples and an identical class distribution.

Algorithm performance is measured by tracking the majority-class objective loss $f_i(\mathbf{w})$, the minority-class constraint loss $g_i(\mathbf{w})$, and the number of rounds where the feasibility condition $g_i(\mathbf{w}) \leq \epsilon$ is violated. As illustrated in \Cref{fig:NP_classification_main}, our algorithm successfully achieves convergence of the objective while satisfying the constraint. 

\begin{table}[!ht]
    \caption{Detailed setting of NP classification experiment}
    \label{tab:detail_NP}
    \centering
    \begin{tabular}{lc|lc}
        \toprule
        Hyperparameter & Value & Hyperparameter & Value \\
        \midrule
        Number of runs & 5 & Global rounds (K) & 1000 \\
        Step size ($\eta$, Softmax SGM) & 0.5 & Softmax temperature ($\alpha$) & 6400 \\
        Local epochs (E) & 5 & Batch size & 32 \\
        Total number of clients ($n$) & 20 & Participation rate ($m/n$) & 0.5 \\
        Step size ($\eta$, Baseline) & 0.1 & Penalty parameter ($\rho$) & 2.5 \\
        Dual parameter ($\lambda_0$) & 2.5 & Dual step size ($\eta_d$) & 0.01 \\
        Tolerance ($\epsilon$) & 0.1 & Switching criteria & $\epsilon/1.1$ \\
        \bottomrule
    \end{tabular}
\end{table}

Detailed hyperparameters are provided in \Cref{tab:detail_NP}. Global step sizes were selected from the set $\{0.1, 0.2, 0.3, 0.5, 1.0, 1.5\}$. 
With regards to the softmax parameter $\alpha$, the 
For centralized training, all training data is aggregated locally and trained for $K$ rounds, which is equivalent to the federated setting with a single local epoch ($E=1$) and full client participation.


\subsection{Additional NP Classification Result}

\begin{figure}[!t]
    \centering
    \includegraphics[width=\linewidth]{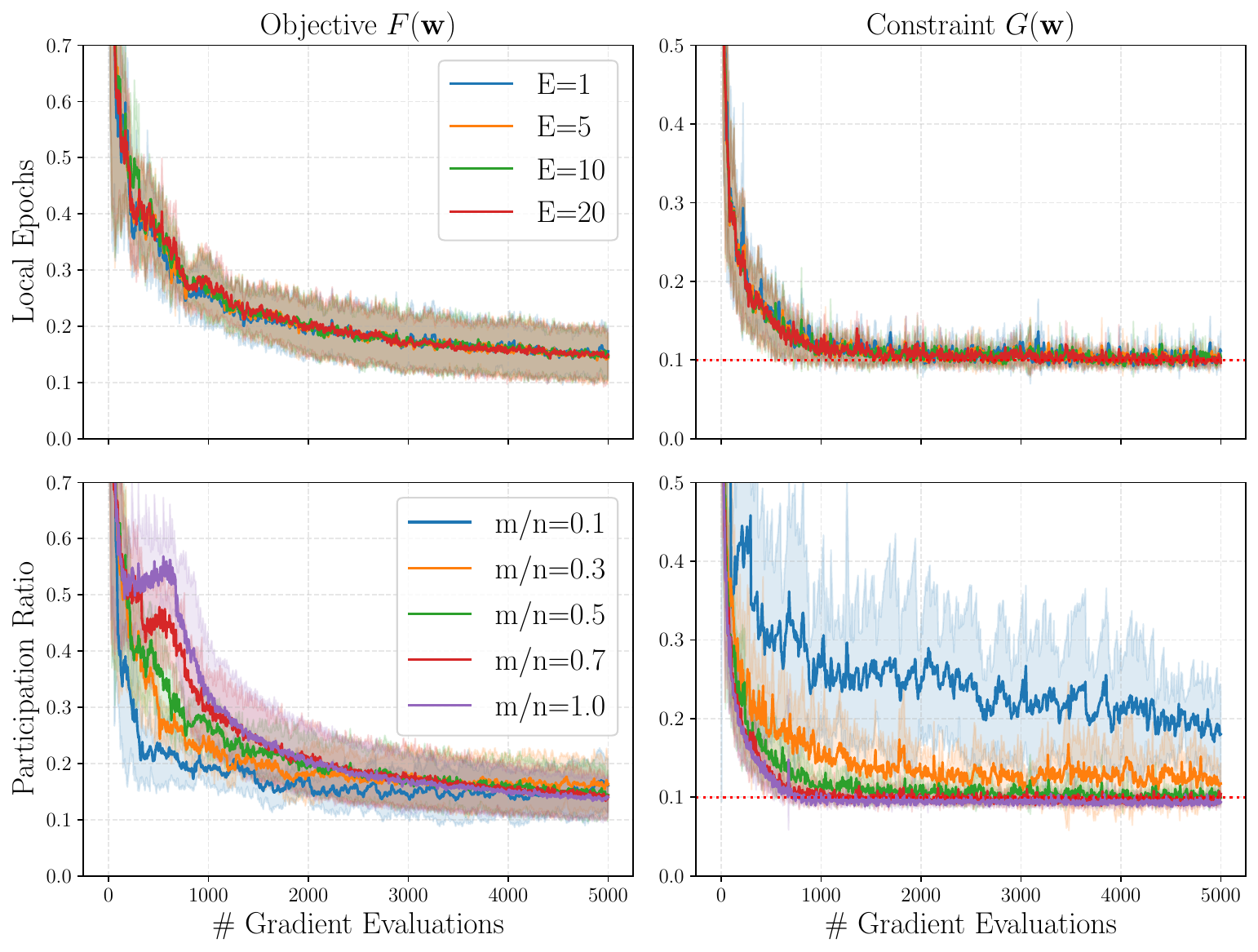}
    \caption{\textbf{Federated learning settings.} Impact of the number of local epochs $E$ (top row) and the client participation ratio $m/n$ (bottom row).}
    \label{fig:fed_settings}
\end{figure}

As shown in \Cref{fig:fed_settings} (top row), the algorithm demonstrates strong robustness to the number of local epochs $E$. This stability arises because the method effectively controls the scale of the update vector by averaging over the local steps. Moreover, higher client participation rates consistently accelerate convergence, while lower participation rates hinder strict constraint satisfaction. Because the softmax operation on a randomly sampled subset effectively evaluates an expectation, diluting the true worst-case value. This dilution biases the switching mechanism heavily toward objective minimization, which corroborates the behavior observed with smaller $\alpha$ values in \Cref{fig:alpha_sensitivity}.


\textbf{Algorithm Comparison (Softmax-SGM vs. Standard SGM)}
The \Cref{fig:sgm_comparison} shows the worst-case and average objective and constraint violation for Softmax-SGM (solving robust minimax problem) and standard SGM (solving average/expected value problem) on the Breast Cancer dataset. Softmax SGM shows consistent constraint satisfaction, while constantly improving objective performance. Although standard SGM satisfies constraint for the average case, it consistently suffers from worst-case constraint violation. This means that for high-stakes scenarios, such as medical or financial applications, standard SGM leaves some clients vulnerable to safety violations. 

\begin{figure}[!t]
    \centering
    \includegraphics[width=\linewidth]{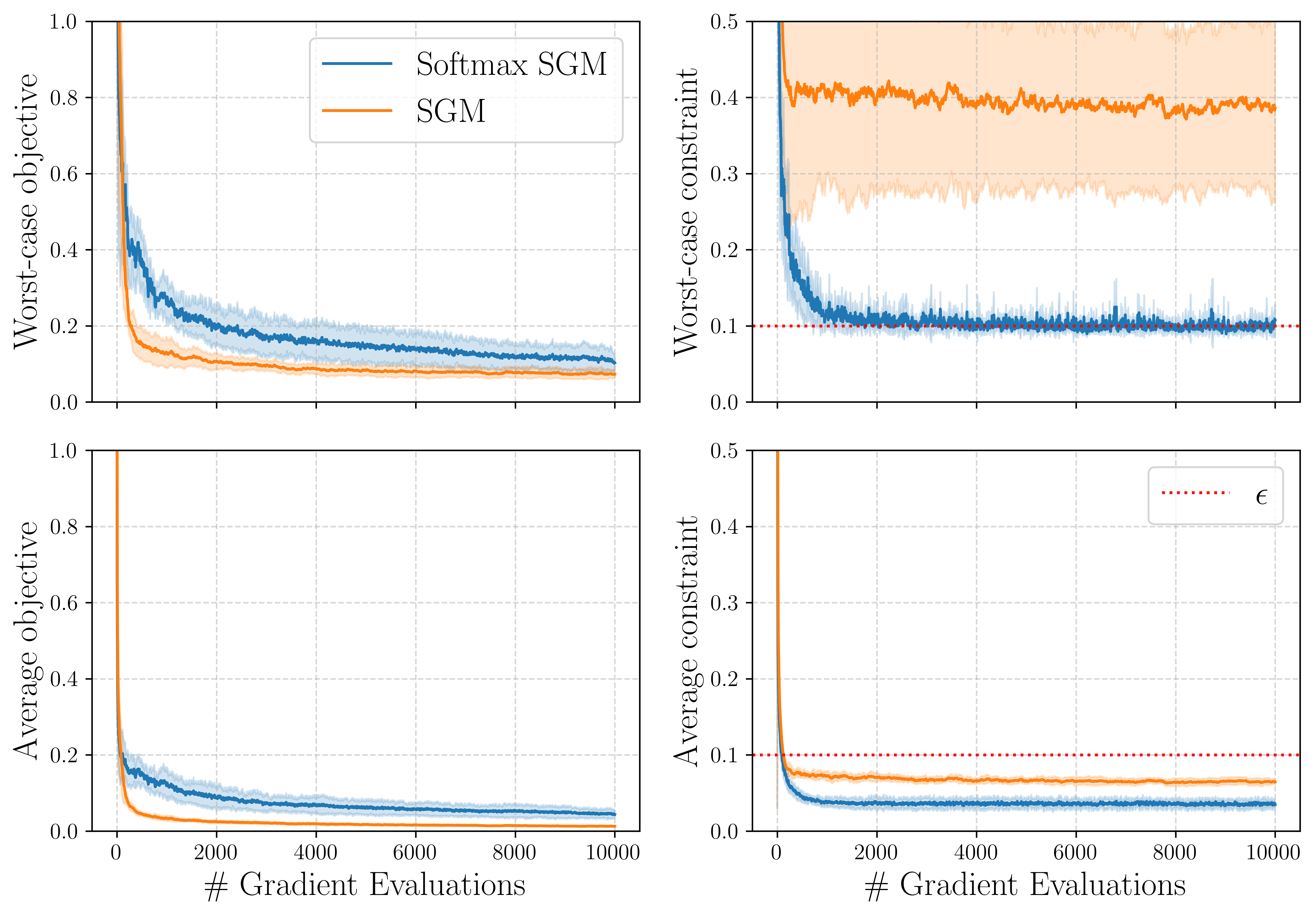}
    \caption{Worst-case and average objective and constraint violation for Softmax-SGM (solving robust minimax problem) and standard SGM (solving average/expected value problem) on the Breast Cancer dataset.}
    \label{fig:sgm_comparison}
\end{figure}

\subsection{Fair Classification}
\begin{table}[!ht]
    \caption{Detailed setting of fair classification experiment on Adult dataset}
    \label{tab:detail_fairness}
    \centering
    \begin{tabular}{lc|lc}
        \toprule
        Hyperparameter & Value & Hyperparameter & Value \\
        \midrule
        Number of runs & 5 & Global rounds (K) & 500 \\
        Local epochs (E) & 2 & Step size ($\eta$) & 0.001 \\
        Tolerance ($\epsilon$) & 0.05 & Neural network dimension ($d$) & 6,529 \\
        Total number of clients ($n$) & 10 & Participation rate & 0.5 \\
        Batch size per client & 128 & Softmax parameter ($\alpha$) & 1.0 \\
        Penalty paramter ($\rho$) & 10 & Dual parameter ($\lambda_0$) & 10 \\
        Dual step size ($\eta_d$) & 0.01 \\
        \bottomrule
    \end{tabular}
\end{table}

Fair classification task is formulated by the local objective of binary cross entropy loss and constraint of demographic parity: $$f_i(\mathbf{w}) := \frac{1}{m_i}\sum_{(x,y)\in\mathcal{D}_i}\left[y\log(\pi(x;\mathbf{w})) + (1-y)\log(1-\pi(x;\mathbf{w}))\right]$$
$$g_i(\mathbf{w}) := \left|\frac{1}{m_{i,p}}\sum_{x\in\mathcal{D}_{i,p}} \pi(x;\mathbf{w}) - \frac{1}{m_{i,u}}\sum_{x\in\mathcal{D}_{i,u}} \pi(x;\mathbf{w})\right|,$$
where the subscript $p$ and $u$ represent protected and unprotected groups respectively. Since the demographic parity is defined as the absolute difference between the average logits of protected and unprotected groups, the aggregation at the server is treated with extra care. At the server, the average logits were aggregated instead of the final constraint value to recalculated the weighted average of logits for the global constraint calculation:
$$g(\mathbf{w}) := \left|\frac{1}{m_{\mathcal{S},p}}\sum_{x\in\mathcal{D}_{\mathcal{S},p}} \pi(x;\mathbf{w}) - \frac{1}{m_{\mathcal{S},u}}\sum_{x\in\mathcal{D}_{\mathcal{S},u}} \pi(x;\mathbf{w})\right|,$$
where the subscript $\mathcal{S}$ denote all sampled clients. Here, we explore the setting of stochastic data sampling and deep neural network, making the problem highly non-convex, stochastic, and non-smooth. The experiments were conducted on Adult dataset \citep{kohavi1996adult}. For the penalty-based methods, the penalty parameter was chosen from $\rho\in[0.1, 1.0, 10.0, 100.0]$ and only the best results are presented. The detailed hyperparameters are found in \Cref{tab:detail_fairness}.

\subsection{Federated Safe RL (Constrained MDP)}

For broader evaluation, we introduce a new \textit{Federated Safe Reinforcement Learning} experiment.
To demonstrate this setting beyond tabular datasets, we introduce a heterogeneous Constrained Markov Decision Process (CMDP) CartPole experiment.

\textbf{Problem Formulation}

Each of $n=10$ clients interacts with a unique CMDP \citep{altman2021constrained} instance with a client-specific safety budget $d_i$. For a policy $\mathbf{w}$, the local objective to maximize is

$$
f_i(\mathbf{w}) = \mathbb{E}_{\mathbf{w},P}\left[\sum_{\tau=0}^{T-1}\gamma^\tau r(s_\tau,a_\tau,s_{\tau+1})\right],
$$

and the local constraint is

$$
g_i(\mathbf{w}) = \mathbb{E}_{\mathbf{w},P}\left[\sum_{\tau=0}^{T-1}\gamma^\tau c(s_\tau,a_\tau,s_{\tau+1})\right] - d_i \le 0.
$$

We optimize the worst-case global objective $F(\mathbf{w}) = \max_i f_i(\mathbf{w})$ subject to the worst-case global constraint $G(\mathbf{w}) = \max_i g_i(\mathbf{w}) \le 0$ using softmax-weighted aggregation.

\textbf{Environment and Heterogeneity}

At each timestep, an agent incurs a cost of 1 whenever one of the following safety conditions is violated:

\begin{enumerate}
    \item entering one of the five prohibited positional regions
    $[-2.4,-2.2]$, $[-1.3,-1.1]$, $[-0.1,0.1]$, $[1.1,1.3]$, and $[2.2,2.4]$; or
    \item the pole angle exceeding 6 degrees.
\end{enumerate}

To introduce severe heterogeneity beyond typical CMDP benchmarks, clients operate under different physics and distinct tolerances drawn from $[-5,5]$, yielding budgets $d_i \in [25, 35]$. The clients are distributed as follows:

\begin{itemize}
    \item 4 Earth clients with gravity 9.8, mass 0.1, and pole length 0.5;
    \item 3 Mars clients with gravity 3.7, mass 0.2, and pole length 0.8;
    \item 3 Jupiter clients with gravity 20.0, mass 0.5, and pole length 0.2.
\end{itemize}

Because data is sampled from distributions shifted by varying physics and policies, standard averaging fails to guarantee system-wide safety.

\textbf{Algorithm and Baselines}

We build on TRPO \citep{schulman2015trust} and compare Softmax SGM against Parallel CRPO \citep{xu2021crpo}, which solves the average-case constrained problem. Softmax SGM controls conservatism via the softmax parameter $\alpha$; a larger $\alpha$ more strongly enforces robust worst-case optimization.

Averaged over 5 seeds, we track the worst-case objective (Mean Return) and worst-case constraint violations.

\begin{table}[!ht]
    \caption{Worst-case performance in the federated safe RL experiment}
    \label{tab:fed_safe_rl_results}
    \centering
    \begin{tabular}{lcc}
        \toprule
        Algorithm & Mean Violations $\pm$ Std & Mean Return $\pm$ Std \\
        \midrule
        Softmax SGM (Ours) & 306.60 $\pm$ 24.55 & 69.73 $\pm$ 67.42 \\
        Parallel CRPO & 379.20 $\pm$ 24.59 & 60.99 $\pm$ 69.52 \\
        \bottomrule
    \end{tabular}
\end{table}

As shown in \Cref{fig:federated_safe_rl}, Softmax SGM outperforms Parallel CRPO in return maximization and constraint satisfaction. By targeting the maximum value problem, our approach ensures the most vulnerable clients remain within their safety envelopes.

\newpage
\section{Related Work}\label{sup:related}
\textbf{Federated Learning.}

Since the introduction of decentralized learning by \citet{mcmahan2017communication}, Federated Learning (FL) has emerged as a cornerstone of privacy-preserving AI. Its impact spans high-stakes sectors, from healthcare diagnostics~\citep{rieke2020future, peng2024depth} and financial modeling~\citep{wen2023survey,chatterjee2023federated} to battery management systems~\citep{wang2024adaptive, zhu2024collaborative}. While most FL research targets unconstrained objectives, real-world deployments must often satisfy fairness requirements, safety protocols, or resource budgets~\citep{du2021fairness, huang2024federated, yang2022federated}. This drives the development of constrained FL frameworks that integrate these requirements directly into the training loop to ensure reliable and compliant learning.
 
There have been many variants of FedAvg~\citep{mcmahan2017communication} to tackle different aspects of FL, such as data heterogeneity~\citep{karimireddy2020scaffold,seo2024relaxed,morafah2024towards}, system heterogeneity~\citep{gong2022fedadmm,li2020federated,wu2024fedbiot}, and fairness~\citep{li2021ditto,badar2024fairtrade}. Beyond FedAvg, ADMM-based FL algorithms have been explored to tackle heterogeneity in FL~\citep{acar2021federated,gong2022fedadmm,zhou2023federated}. Despite this extensive body of research, the literature is heavily skewed toward unconstrained scenarios. This leaves a significant gap in our understanding of how to effectively integrate rigorous functional constraints into the federated training process.
 
A common way to handle this setting is to cast the problem as a saddle-point formulation and apply primal-dual optimization methods~\citep{nemirovski2004prox,chambolle2011first,bertsekas2014constrained,hamedani2021primal,zhang2022solving,hounie2023resilient,boob2024optimal}. Although these techniques are well-supported by theory, they often depend heavily on careful tuning of the dual variables and, in many cases, assume access to a known upper bound on the optimal dual multiplier. In addition, such methods frequently require projecting both the primal and dual iterates onto bounded domains, which are rarely known a priori in practical applications and can complicate their implementation.
 
An alternative primal-only line of work is the switching (sub-)gradient method (SGM), originally introduced by \citet{polyak1967general}, which has since been explored in a variety of optimization settings. Subsequent studies, such as \citet{titov2018mirror,stonyakin2019mirror,alkousa2020modification}, have combined SGM with mirror descent to improve its flexibility and performance. Later, \citet{lan2020algorithms,lan2020algorithmsstochastic} extended SGM to stochastic settings and established iteration complexity guarantees under both convex and strongly convex assumptions. More recently, the theoretical analysis has been pushed further to cover weakly convex objectives, as shown in works like \citet{huang2023oracle,liu2025single,ma2020quadratically,jia2025first}. In the federated learning literature, \citet{islamov2025safe,upadhyay2026fedsgm} developed an important variant of SGM that accounts for communication compression.

Handling statistical heterogeneity across local clients is a foundational challenge in distributed optimization. 
From a statistical perspective, this non-IID landscape closely resembles learning from complex mixture distributions. 
While prior works have extensively studied the theoretical dynamics and non-asymptotic guarantees of iterative weighting algorithms
(such as Expectation-Maximization) in mixed linear regression \citep{luo2024unveiling, luo2025structural, luo2026characterizing}, 
our framework shifts the focus. 
Rather than aiming for maximum likelihood estimation, 
we utilize smoothing techniques~\citep{beck2012smoothing} such as
dynamic softmax weighting~\citep{wang2023task} to enforce worst-case distributionally robust optimization under strict operational constraints.

\textbf{Minimax Optimization.}

The mathematical foundations of minimax optimization were established by the seminal work of \citet{neumann1928theorie}, which proved the existence of saddle-point equilibria in zero-sum games. Early first-order approaches, such as the primal-dual methods of Arrow and Hurwicz, laid the groundwork for Gradient Descent-Ascent (GDA). However, GDA often fails to converge even in simple bilinear settings, a challenge that was later addressed by \citet{korpelevich1976extragradient} through the introduction of the \textit{Extragradient Method}, which remains the gold standard for smooth saddle-point problems.

In the era of large-scale machine learning, research has shifted toward stochastic regimes. \citet{nemirovski2009robust} and \citet{juditsky2011solving} established the optimal \(\calO(\epsilon^{-2})\) stochastic oracle complexity for convex-concave problems, which characterizes the fundamental limits of first-order stochastic optimization. Recently, this framework has been adapted to decentralized environments through \textit{Agnostic Federated Learning} \citep{mohri2019agnostic}, which employs a minimax objective to ensure model robustness across heterogeneous clients. Similarly, \citet{li2020fair} proposed the $q$-FFL framework to achieve fair resource allocation using weighted optimization objectives.

Our formulation is closely related to \textit{Group Distributionally Robust Optimization} (Group-DRO), where the objective is to minimize the loss under the worst-case distribution among a set of pre-defined groups. While traditional DRO solvers often rely on dual reformulations that can be sensitive to noise in federated settings, our switching mechanism provides a robust, primal-only alternative that maintains stability even under heavy client heterogeneity and partial participation.

To contextualize our work within the broader optimization literature, we observe that our problem formulation is fundamentally a specialized instance of \textit{Group Distributionally Robust Optimization} (Group-DRO), similar to the frameworks discussed by \citet{sagawa2020distributionally}, where the uncertainty set is defined over the convex hull of discrete client distributions. Our objective is further aligned with the \textit{Agnostic Federated Learning} (AFL) paradigm introduced by \citet{mohri2019agnostic}, which seeks to minimize the loss of the worst-performing client distribution to ensure global model robustness. However, while AFL and traditional DRO solvers often rely on primal-dual updates~\citep{li2021rate,huang2025stochastic} that can become unstable under the ``dual drift'' inherent in partial participation, our approach adopts the logic-based \textit{Switching Gradient} mechanism established by \citet{lan2020algorithms,lan2020algorithmsstochastic}.

The current state-of-the-art for handling expectation-constrained stochastic problems is the \textit{Switching Gradient} mechanism developed by \citet{lan2020algorithms,lan2020algorithmsstochastic}. This approach avoids the inherent instability of dual-variable drift by alternating updates between the objective and the constraint based on a feasibility trigger. Our work represents the latest advancement in this lineage; by integrating a softmax-weighted smoothing technique, we extend the switching framework to the partial participation regime. Notably, we improve the high-probability complexity from the \(\calO(\log^2\frac{1}{\delta})\) found in existing literature to a sharper \(\calO(\log\frac{1}{\delta})\) guarantee, providing the most robust theoretical bound for constrained federated minimax problems to date.


\end{document}